\crefname{appendix}{App.}{Apps.}
\crefname{proposition}{Prop.}{Props.}
\crefname{section}{Sec.}{Secs.}
\crefname{theorem}{Thm.}{Thms.}
\crefname{definition}{Def.}{Defs.}
\crefname{assumption}{Assumption}{Assumptions}
\crefname{equation}{}{}
\crefname{talign}{}{}
\def\balign#1\ealign{\begin{align}#1\end{align}}
\def\baligns#1\ealigns{\begin{align*}#1\end{align*}}
\def\balignat#1\ealign{\begin{alignat}#1\end{alignat}}
\def\balignats#1\ealigns{\begin{alignat*}#1\end{alignat*}}
\def\bitemize#1\eitemize{\begin{itemize}#1\end{itemize}}
\def\benumerate#1\eenumerate{\begin{enumerate}#1\end{enumerate}}
\newenvironment{talign*}
 {\csname align*\endcsname}
 {\endalign}
\newenvironment{talign}
 {\csname align\endcsname}
 {\endalign}
\def\balignst#1\ealignst{\begin{talign*}#1\end{talign*}}
\def\balignt#1\ealignt{\begin{talign}#1\end{talign}}
\newcommand{\qtext}[1]{\quad\text{#1}\quad} 
\let\originalleft\left
\let\originalright\right
\renewcommand{\left}{\mathopen{}\mathclose\bgroup\originalleft}
\renewcommand{\right}{\aftergroup\egroup\originalright}
\def\tinycitep*#1{{\tiny\citep*{#1}}}
\def\tinycitealt*#1{{\tiny\citealt*{#1}}}
\def\tinycite*#1{{\tiny\cite*{#1}}}
\def\smallcitep*#1{{\scriptsize\citep*{#1}}}
\def\smallcitealt*#1{{\scriptsize\citealt*{#1}}}
\def\smallcite*#1{{\scriptsize\cite*{#1}}}
\def\mbb#1{\mathbb{#1}}
\def\mc#1{\mathcal{#1}}
\def\mrm#1{\mathrm{#1}}
\def\tbf#1{\textbf{#1}}
\def\reals{\mathbb{R}} %
\def\R{\mathbb{R}}
\def\naturals{\mathbb{N}} %
\def\N{\mathbb{N}}
\def\<{\left\langle} %
\def\>{\right\rangle}
\def\iff{\Leftrightarrow}
\def\defeq{\triangleq} %
\def\half{\frac{1}{2}}
\newcommand{\textfrac}[2]{{\textstyle\frac{#1}{#2}}}
\newcommand{\inner}[2]{\langle{#1},{#2}\rangle} %
\def\indic#1{\mathbbm{1}\left[{#1}\right]} %
\def\staticindic#1{\mathbbm{1}[{#1}]} %
\def\E{\mbb{E}} %
\def\Earg#1{\E\left[{#1}\right]}
\def\Esub#1{\E_{#1}}
\def\Esubarg#1#2{\E_{#1}\left[{#2}\right]}
\def\P{\mbb{P}} %
\def\Var{\mrm{Var}} %
\def\Varsubarg#1#2{\Var_{#1}\left({#2}\right)}
\newcommand{\Ber}{\textnormal{Ber}}
\newcommand{\todist}{\stackrel{d}{\to}}
\newcommand{\toprob}{\stackrel{p}{\to}}
\newcommand{\toL}[1]{\stackrel{L^{#1}}{\to}}
\newcommand{\iid}{\textrm{i.i.d.}\xspace}
\providecommand{\sign}{\mathop\mathrm{sign}}
\newtheorem{theorem}{Theorem}
\newtheorem{lemma}{Lemma}
\newtheorem{definition}{Definition}
\newtheorem{example}{Example}
\renewenvironment{proof}{\noindent\textbf{Proof}\hspace*{1em}}{\qed\\}
\newenvironment{proof-sketch}{\noindent\textbf{Proof Sketch}
  \hspace*{1em}}{\qed\bigskip\\}
\newenvironment{proof-idea}{\noindent\textbf{Proof Idea}
  \hspace*{1em}}{\qed\bigskip\\}
\newenvironment{proof-of-lemma}[1][{}]{\noindent\textbf{Proof of Lemma {#1}}
  \hspace*{1em}}{\qed\\}
\newenvironment{proof-of-theorem}[1][{}]{\noindent\textbf{Proof of Theorem {#1}}
  \hspace*{1em}}{\qed\\}
\newenvironment{proof-attempt}{\noindent\textbf{Proof Attempt}
  \hspace*{1em}}{\qed\bigskip\\}
\newtheorem{proposition}{Proposition}
\newcommand{\var}{\ensuremath{\operatorname{Var}}}
\def\R{\mathbb{R}}
\renewcommand{\N}{\mathcal{N}}
\newcommand{\Zset}{\mathcal{Z}}
\newcommand{\D}{\mathcal{D}}
\newcommand{\Rhat}{\hat{R}_n}
\newcommand{\Rhatn}{\hat{R}_n}
\newcommand{\Rcondcv}{R_n}
\newcommand{\Rcondcvn}{R_n}
\newcommand{\lstab}{\gamma_{loss}} %
\newcommand{\msstab}{\gamma_{ms}} %
\newcommand{\sig}{\sigma}
\newcommand{\sigin}{\hat{\sigma}_{n,in}} %
\newcommand{\siginapprox}{\hat{\sig}_{n,in,approx}}
\newcommand{\sigout}{\hat{\sigma}_{n,out}}
\newcommand{\sigoutapprox}{\hat{\sig}_{n,out,approx,1}}
\newcommand{\sigoutapproxx}{\hat{\sig}_{n,out,approx,2}}
\newcommand{\ntrain}{m} %
\newcommand{\err}[2]{\text{Err}(\alg_{#1}) < \text{Err}(\alg_{#2})} %
\newtheorem{remark}{Remark}
\numberwithin{equation}{section}
\title{Cross-validation Confidence Intervals for Test Error}
\author{Pierre Bayle\thanks{Equal contribution}\\
Princeton University\\
\texttt{pbayle@princeton.edu}\\
\And  
Alexandre Bayle\footnotemark[1]\\
Harvard University\\
\texttt{alexandre\_bayle@g.harvard.edu}\\
\AND
Lucas Janson\\
Harvard University\\
\texttt{ljanson@fas.harvard.edu}\\
\And
Lester Mackey\\
Microsoft Research New England\\
\texttt{lmackey@microsoft.com}
}
\begin{document}

\maketitle
\begin{abstract}
    This work develops central limit theorems for cross-validation and consistent estimators of its asymptotic variance under weak stability conditions on the learning algorithm. Together, these results provide practical, asymptotically-exact confidence intervals for $k$-fold test error and valid, powerful hypothesis tests of whether one learning algorithm has smaller $k$-fold test error than another.
    These results are also the first of their kind for the popular choice of leave-one-out cross-validation. 
    In our real-data experiments with 
    diverse 
    learning algorithms, the resulting intervals and tests outperform the most popular alternative methods from the literature.

\end{abstract}

\section{Introduction}\label{sec:intro}
Cross-validation (CV) \citep{Stone1974,geisser1975predictive} is a de facto standard for estimating the test error of a prediction rule.
By partitioning a dataset into $k$ equal-sized validation sets, fitting a prediction rule with each validation set held out, evaluating each prediction rule on its corresponding held-out set, and averaging the $k$ error estimates, CV produces an unbiased estimate of the test error with lower variance than a single train-validation split could provide.
However, these properties alone are insufficient for high-stakes applications in which the uncertainty of an error estimate impacts decision-making.
In predictive cancer prognosis and mortality prediction for instance, scientists and clinicians rely on \emph{test error confidence intervals (CIs)} based on CV and other repeated sample splitting estimators to avoid spurious findings and improve reproducibility \citep{michiels-etal:2005,pirracchio2015mortality}.  
Unfortunately, the CIs most often used have no correctness guarantees and can be severely misleading \citep{jiang-etal:2008}.
The difficulty comes from the dependence across the $k$ averaged error estimates: if the estimates were independent, one could derive an \emph{asymptotically-exact CI} (i.e., a CI with coverage converging exactly to the target level) for test error using a standard central limit theorem.
However, the error estimates are seldom independent, due to the overlap amongst training sets and between different training and validation sets.
Thus, new tools are needed to develop valid, informative CIs based on CV.

The same uncertainty considerations are relevant when comparing two machine learning methods: before selecting a prediction rule for deployment, one would like to be confident that its test error is better than a baseline or an available alternative.
The standard practice amongst both method developers and consumers is to conduct a formal \emph{hypothesis test} for a difference in test error between two prediction rules \cite{dietterich:1998,lim-etal:2000,nadeau-bengio:2003,bouckaert-frank:2004,demsar:2006}.
Unfortunately, the most popular tests from the literature like the cross-validated $t$-test \citep{dietterich:1998}, the repeated train-validation $t$-test \citep{nadeau-bengio:2003}, and the $5 \times 2$ CV test \citep{dietterich:1998} have no correctness guarantees and hence can produce misleading conclusions.
The difficulty parallels that of the confidence interval setting: standard tests assume independence and do not appropriately account for the dependencies across CV error estimates. 
Therefore, new tools are also needed to develop valid, powerful tests for test error improvement based on CV.

\paragraph{Our contributions}
To meet these needs, we characterize the asymptotic distribution of CV error and develop consistent estimates of its variance under weak stability conditions on the learning algorithm.
Together, these results provide practical, asymptotically-exact confidence intervals for test error as well as valid and powerful hypothesis tests of whether one learning algorithm has smaller test error than another.
In more detail, we prove in \cref{sec:clt} that $k$-fold CV error is asymptotically normal around its test error under an abstract asymptotic linearity condition.
We then give in \cref{sec:suff} two different stability conditions that hold for large classes of learning algorithms and losses and that  individually imply the asymptotic linearity condition. 
In \cref{sec:var-est-and-CI}, we propose two estimators of the asymptotic variance of CV and prove them to be consistent under similar stability conditions; our second estimator accommodates any choice of $k$ and appears to be the first consistent variance estimator for leave-one-out CV.  
To validate our theory in \cref{sec:experiments}, we apply our intervals and tests to a diverse collection of classification and regression methods on particle physics and flight delay data and observe consistent improvements in width and power over the most popular alternative methods from the literature.
\paragraph{Related work}
Despite the ubiquity of CV, 
we are only aware of three prior efforts to 
characterize the precise distribution of cross-validation error.
The CV central limit theorem (CLT) of \citet{SD-MvdL:2005} requires considerably stronger assumptions than our own and is not paired with the consistent estimate of variance needed to construct a valid confidence interval or test. 
\citet{ledell2015computationally} derive both a CLT and a consistent estimate of variance for CV, but these apply only to the area under the ROC curve (AUC) performance measure.
Finally, in very recent work, \citet{MA-WZ:2020} derive a CLT and a consistent estimate of variance for CV under more stringent assumptions than our own.
We compare our results with each of these works in detail in \cref{sec:comp-lit}.
We note also that another work \citep{lei2019cross} aims to test the difference in test error between two learning algorithms using cross-validation but only proves the validity of their procedure for a single train-validation split rather than for CV.
Many other works have studied the problem of bounding or estimating the variance of the cross-validation error \citep{Blum-ea:1999,nadeau-bengio:2003,bengio-grandvalet:2004,markatou:2005,SK-RK-SV:2011,RK-ea:2013,celisse-guedj:2016,pmlr-v98-abou-moustafa19a,karim_csaba_aaai19}, but none have established the consistency of their variance estimators. Among these, \citet{SK-RK-SV:2011,RK-ea:2013,celisse-guedj:2016} introduce relevant notions of algorithmic stability to which we link our results in \cref{sec:loss-stability}.
Moreover, non-asymptotic CIs can be derived from the CV concentration inequalities of \citep{bousquet-elisseeff:2002,cornec:2010,celisse-guedj:2016,pmlr-v98-abou-moustafa19a,karim_csaba_aaai19}, but these CIs are more difficult to deploy as they require (1) stronger stability assumptions, (2) a known upper bound on stability, and (3) either a known upper bound on the loss or a known uniform bound on the covariates and a known sub-Gaussianity constant for the response variable.
In addition, the reliance on somewhat loose inequalities typically leads to overly large, relatively uninformative CIs.
For example, we implemented the ridge regression CI of \citep[Thm.\ 3]{celisse-guedj:2016} for the regression experiment of \cref{sec:ci-experiment} (see \cref{sec:concentration-inequalities}).
When the features are standardized, the narrowest concentration-based interval is 91 times wider than our widest CLT interval in \cref{fig:test-error-CI-reg}. Without standardization, the narrowest concentration-based interval is $5\times 10^{14}$ times wider.

\paragraph{Notation}
Let $\todist$, $\toprob$, and $\toL{q}$ for $q > 0$, denote convergence in distribution, in probability, and in $L^q$ norm (i.e., $X_n \toL{q} X \iff \E[|X_n - X|^q] \to 0$), respectively.
For each $m,n\in\naturals$ with $m\le n$, we define the set $[n] \defeq \{1,\dots, n\}$ and the vector $m \colon \hspace{-0.08cm} n\defeq (m,\dots,n)$.
When considering independent random elements $(X,Y)$, we use $\Esub{X}$ and $\Var_X$ to indicate expectation and variance only over $X$, respectively; that is,  $\Esubarg{X}{f(X,Y)} \defeq \E[f(X,Y)\mid Y]$ and $\Var_X (f(X,Y)) \defeq \Var(f(X,Y)\mid Y)$. %
We will refer to the Euclidean norm of a vector as the $\ell^2$ norm in the context of $\ell^2$ regularization.

\section{A Central Limit Theorem for Cross-validation}\label{sec:clt}
In this section, we present a new central limit theorem for $k$-fold cross-validation. Throughout, any asymptotic statement will take $n\to\infty$, and while we allow the number of folds $k_n$ to depend on the sample size $n$ (e.g., $k_n=n$ for leave-one-out cross-validation), we will write $k$ in place of $k_n$ to simplify our notation. We will also present our main results assuming that $k$ evenly divides $n$, but we address the indivisible setting in the appendix. %

Hereafter, we will refer to a sequence $(Z_i)_{i\geq 1}$ of random datapoints taking values in a set $\Zset$. Notably, $(Z_i)_{i\geq 1}$ need not be independent or identically distributed. We let $Z_{1:n}$ designate the first $n$ points, and, for any vector $B$ of indices in $[n]$, we let $Z_B$ denote the subvector of $Z_{1:n}$ corresponding to ordered indices in $B$.
We will also refer to \emph{train-validation splits} $(B, B')$. These are vectors of indices in $[n]$ representing the ordered points assigned to the training set and validation set.\footnote{We keep track of index order to support asymmetric learning algorithms like stochastic gradient descent.} As is typical in CV, we will assume that $B$ and $B'$ partition $[n]$, so that every datapoint is either in the training or validation set.

Given a scalar loss function $h_n(Z_i,Z_B)$ and a set of $k$ train-validation splits
$\{(B_j,B_j')\}_{j=1}^k$ with validation indices $\{B_j'\}_{j=1}^k$ partitioning $[n]$ into $k$ folds, 
we will use the \emph{$k$-fold cross-validation error}
\begin{talign}\label{eq:cv-err}
\Rhat 
    \defeq 
    \frac{1}{n}
    \sum_{j=1}^k
    \sum_{i\in B_j'} h_n(Z_i, Z_{B_j})
\end{talign}
to draw inferences about the \emph{$k$-fold test error} 
\begin{talign}\label{eq:cond-cv-err}
    \Rcondcv
    \defeq
    \frac{1}{n}
    \sum_{j=1}^k
    \sum_{i\in B_j'}
    \E[h_n(Z_i, Z_{B_j})\mid Z_{B_j}].
\end{talign}
A prototypical example of $h_n$ is squared error or 0-1 loss, 
\[
h_n(Z_i, Z_{B}) = (Y_i - \hat{f}(X_i; Z_{B}))^2
\qtext{or}
h_n(Z_i, Z_{B}) = \staticindic{Y_i \neq \hat{f}(X_i; Z_{B})},
\]
composed with an algorithm for fitting a prediction rule $\hat{f}(\cdot; Z_{B})$ to training data $Z_{B}$ and predicting the response value of a test point $Z_i = (X_i, Y_i)$.\footnote{For randomized learning algorithms (such as random forests or stochastic gradient descent), all statements in this paper should be treated as holding conditional on the external source of randomness.}
In this setting, the $k$-fold test error is a standard inferential target \citep{Blum-ea:1999, SD-MvdL:2005, SK-RK-SV:2011, RK-ea:2013, MA-WZ:2020} and represents the average test error of the $k$ prediction rules $\hat{f}(\cdot; Z_{B_j})$.
When comparing the performance of two algorithms in \cref{sec:var-est-and-CI,sec:experiments}, we will choose $h_n$ to be the difference between the losses of two prediction rules.

\subsection{Asymptotic linearity of cross-validation}
The key to our central limit theorem is establishing that the $k$-fold CV error asymptotically behaves like the $k$-fold test error plus an average of functions applied to single datapoints. %
The following proposition provides a convenient characterization of this \emph{asymptotic linearity} property.

\begin{proposition}[Asymptotic linearity of $k$-fold CV]\label{cv-asymp-linear}
For any sequence of datapoints $(Z_i)_{i\geq 1}$,
\balignt
\frac{\sqrt{n}}{\sigma_n}(\Rhat-\Rcondcv) 
- \frac{1}{\sigma_n\sqrt{n}}\sum_{i=1}^n \left(\bar{h}_n(Z_i) - \E\left[\bar{h}_n(Z_i)\right]\right)
\toprob \Big(\text{resp.\,} \toL{q}\hspace{-0.08cm}\Big)\; 0
\ealignt
for a function $\bar{h}_n$ with $\sigma_n^2 \defeq \frac{1}{n}\Var(\sum_{i=1}^n \bar{h}_n(Z_i))$ if and only if
\balignt\label{eq:asylincond}
\textfrac{1}{\sigma_n\sqrt{n}}
\sum_{j=1}^k
\sum_{i\in B_j'} 
&\Big(h_n\left(Z_i,Z_{B_j}\right) - \E\left[h_n\left(Z_i,Z_{B_j}\right)\mid Z_{B_j}\right]\\
&- \left(\bar{h}_n(Z_i)
-\E\left[\bar{h}_n(Z_i)\right]\right)\Big)
\toprob \Big(\text{resp.\,}\toL{q}\hspace{-0.08cm}\Big)\; 0,\;
\ealignt
where the parenthetical convergence indicates that the same statement holds when both convergences in probability are replaced with convergences in $L^q$ for the same $q > 0$.
\end{proposition}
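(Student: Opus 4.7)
The plan is to show that the two displayed quantities are in fact \emph{identically equal} as random variables, so the claimed equivalence is really just a tautology built from a single algebraic rearrangement. Given that, both the ``probability'' and the ``$L^q$'' versions of the equivalence follow with no extra work.

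First I would rewrite the leading term using the definitions of $\Rhat$ and $\Rcondcv$:
\begin{talign*}
\frac{\sqrt{n}}{\sigma_n}\bigl(\Rhat-\Rcondcv\bigr)
&= \frac{1}{\sigma_n\sqrt{n}}\sum_{j=1}^k\sum_{i\in B_j'}\Bigl(h_n(Z_i,Z_{B_j}) - \E[h_n(Z_i,Z_{B_j})\mid Z_{B_j}]\Bigr).
\end{talign*}
Next, since the validation index sets $\{B_j'\}_{j=1}^k$ partition $[n]$, every index $i\in[n]$ appears in exactly one $B_j'$, so
\begin{talign*}
\frac{1}{\sigma_n\sqrt{n}}\sum_{i=1}^n\bigl(\bar{h}_n(Z_i)-\E[\bar{h}_n(Z_i)]\bigr)
= \frac{1}{\sigma_n\sqrt{n}}\sum_{j=1}^k\sum_{i\in B_j'}\bigl(\bar{h}_n(Z_i)-\E[\bar{h}_n(Z_i)]\bigr).
\end{talign*}
Subtracting the second display from the first yields exactly the left-hand side of \eqref{eq:asylincond}. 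Therefore the two quantities whose convergence is compared in the proposition coincide term-by-term as random variables for every $n$.

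Once this identity is established, the equivalence is immediate: convergence in probability of a random variable to $0$ is a property of the random variable itself, so two equal random variables converge (in probability, or in $L^q$ for any $q>0$) to $0$ under exactly the same conditions. I do not anticipate any genuine obstacle here; the only thing to double-check is that the partition property of $\{B_j'\}_{j=1}^k$ is invoked correctly (so that the single sum over $i\in[n]$ coincides with the double sum over $j$ and $i\in B_j'$), which is guaranteed by the standing assumption on CV splits stated just before \eqref{eq:cv-err}.
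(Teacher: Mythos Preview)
Your proof is correct and takes essentially the same approach as the paper: both arguments observe that the two displayed quantities are identically equal via the algebraic rearrangement $\frac{\sqrt{n}}{\sigma_n}(\Rhat-\Rcondcv) = \text{(linear term)} + \text{(expression in \eqref{eq:asylincond})}$, making the biconditional immediate. The only cosmetic difference is that the paper first states and proves a more general version for arbitrary repeated sample-splitting estimators (\cref{cv-asymp-linear-general}) and then specializes to $k$-fold CV, whereas you work directly in the $k$-fold setting.
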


Typically, one will choose $\bar{h}_n(z) = \E[h_n(z, Z_{1:n(1-1/k)})]$.
With this choice, we see that the difference of differences in \cref{eq:asylincond} is small whenever $h_n(Z_i,Z_{B_j})$ is close to \emph{either} its expectation given $Z_i$ \emph{or} its expectation given $Z_{B_j}$, but it need not be close to both. 
As the asymptotic linearity condition \cref{eq:asylincond} is still quite abstract, we devote all of \cref{sec:suff} to establishing sufficient conditions for \cref{eq:asylincond} that are interpretable, broadly applicable, and simple to verify.
\cref{cv-asymp-linear} follows from a more general asymptotic linearity characterization for repeated sample-splitting estimators proved in \cref{sec:proof-cv-asymp-linear-general}. 

\subsection{From asymptotic linearity to asymptotic normality}
So far, we have assumed nothing about the dependencies amongst the datapoints $Z_i$.
If we additionally assume that the datapoints are \iid, the average $\frac{1}{\sigma_n\sqrt{n}}\sum_{i=1}^n \left(\bar{h}_n(Z_i) - \E\left[\bar{h}_n(Z_i)\right]\right)$ converges to a standard normal under a mild integrability condition, and we obtain the following CLT for CV.

\begin{theorem}[Asymptotic normality of $k$-fold CV with i.i.d.\ data]\label{iid-cv-normal}
In the notation of \cref{cv-asymp-linear},
suppose that the datapoints  $(Z_i)_{i\geq 1}$ are i.i.d.\ copies of a random element $Z_0$ and that the sequence of
$(\bar{h}_n(Z_0)-\E[\bar{h}_n(Z_0)])^2/\sigma_n^2$ with $\sigma_n^2 = \var(\bar{h}_n(Z_0))$
is uniformly integrable (UI).
If the asymptotic linearity condition \cref{eq:asylincond} holds in probability
then
\balignt \frac{\sqrt{n}}{\sigma_n}(\Rhat-\Rcondcv)\todist \N(0,1).\ealignt
\end{theorem}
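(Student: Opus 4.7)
The plan is to combine the asymptotic linearity guarantee of \cref{cv-asymp-linear} with a Lindeberg--Feller CLT applied to the linearized sum. By assumption, \cref{eq:asylincond} holds in probability, so \cref{cv-asymp-linear} immediately yields
\balignst
\frac{\sqrt{n}}{\sigma_n}(\Rhat - \Rcondcv) - \frac{1}{\sigma_n\sqrt{n}}\sum_{i=1}^n \bigl(\bar{h}_n(Z_i) - \E[\bar{h}_n(Z_i)]\bigr) \toprob 0.
\ealignst
By Slutsky's theorem, it therefore suffices to prove that
\balignst
S_n \defeq \frac{1}{\sigma_n\sqrt{n}}\sum_{i=1}^n \bigl(\bar{h}_n(Z_i) - \E[\bar{h}_n(Z_i)]\bigr) \todist \mathcal{N}(0,1).
\ealignst

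Since the $Z_i$ are i.i.d.\ copies of $Z_0$, the summands form a triangular array of i.i.d.\ centered random variables with row-wise variance exactly $\sigma_n^2$, so that $\Var(S_n) = 1$ for every $n$. The remaining step is to verify the Lindeberg condition: for every $\varepsilon > 0$,
\balignst
L_n(\varepsilon) \defeq \frac{1}{\sigma_n^2}\, \E\!\left[\bigl(\bar{h}_n(Z_0)-\E[\bar{h}_n(Z_0)]\bigr)^2\, \mathbbm{1}\!\left[|\bar{h}_n(Z_0)-\E[\bar{h}_n(Z_0)]| > \varepsilon \sigma_n \sqrt{n}\right]\right] \longrightarrow 0.
\ealignst
Writing $W_n \defeq (\bar{h}_n(Z_0)-\E[\bar{h}_n(Z_0)])^2/\sigma_n^2$, this is exactly $L_n(\varepsilon) = \E[W_n\, \mathbbm{1}[W_n > n\varepsilon^2]]$. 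The uniform integrability hypothesis on $(W_n)_{n\geq 1}$ states that $\sup_n \E[W_n\, \mathbbm{1}[W_n > M]] \to 0$ as $M\to\infty$, and since $n\varepsilon^2\to\infty$, we conclude $L_n(\varepsilon)\to 0$. Lindeberg--Feller then gives $S_n\todist\mathcal{N}(0,1)$, and combining with the Slutsky step above completes the proof.

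The main (and essentially only) subtlety is recognizing that the hypothesized uniform integrability of the rescaled summands is precisely the right condition to activate Lindeberg--Feller: weaker conditions such as bounded second moments do not suffice when $\sigma_n$ is allowed to shift with $n$, while stronger Lyapunov-type conditions would be more restrictive than needed. Everything else reduces to invoking \cref{cv-asymp-linear} and standard triangular-array CLT machinery.
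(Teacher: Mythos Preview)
Your proof is correct and follows essentially the same approach as the paper: invoke \cref{cv-asymp-linear} for asymptotic linearity, reduce via Slutsky to a triangular-array CLT for the linearized sum, and verify Lindeberg's condition from the uniform integrability hypothesis exactly as you do. The paper routes this through a slightly more general intermediate result for independent (not necessarily identically distributed) data before specializing to the i.i.d.\ case, but the argument and key steps are the same.
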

\cref{iid-cv-normal} is a special case of a more general result, proved in \cref{sec:proof-iid-cv-normal},  that applies when the datapoints are independent but not necessarily identically distributed.
A simple sufficient condition for the required uniform integrability is that $\sup_n \E[|(\bar{h}_n(Z_0)-\E[\bar{h}_n(Z_0)])/\sigma_n|^\alpha] < \infty$ for some $\alpha>2$.
This holds, for example, whenever
$\bar{h}_n(Z_0)$ has uniformly bounded $\alpha$ moments (e.g., the 0-1 loss has all moments uniformly bounded) and does not converge to a degenerate distribution.
We now turn our attention to the asymptotic linearity condition.

\section{Sufficient Conditions for Asymptotic Linearity}\label{sec:suff}
\subsection{Asymptotic linearity from loss stability} \label{sec:loss-stability}
Our first result relates the asymptotic linearity of CV to a specific notion of algorithmic stability, termed \emph{loss stability}.
\begin{definition}[Mean-square stability and loss stability]
\label{def:msstab}
For $\ntrain > 0$, let $Z_0$ and $Z_0', Z_1, \ldots, Z_{\ntrain}$ be \iid test and training points with $Z^{\backslash i}_{1:\ntrain}$ representing $Z_{1:\ntrain}$ with $Z_i$ replaced by $Z_0'$. For any function $h : \Zset \times \Zset^{\ntrain} \to \reals$, the \emph{mean-square stability} \citep{SK-RK-SV:2011} is defined as
\begin{talign}\label{eq:ms_stab}
\msstab(h) \defeq \frac{1}{\ntrain}\sum_{i=1}^\ntrain \mathbb{E}[(h(Z_0,Z_{1:\ntrain})-h(Z_0,Z^{\backslash i}_{1:\ntrain}))^2] 
\end{talign}
and the \emph{loss stability} \citep{RK-ea:2013} as $\lstab(h) \defeq \msstab(h')$, where 
\balignt\label{eq:hprime}
h'(Z_0,Z_{1:\ntrain}) \defeq h(Z_0,Z_{1:\ntrain})-\E\left[h(Z_0,Z_{1:\ntrain})\mid Z_{1:\ntrain}\right].
\ealignt
\end{definition}
\citet{RK-ea:2013} introduced loss stability 
to bound the variance of CV in terms of the variance of a single hold-out set estimate.
Here we show that a suitable decay in loss stability is also sufficient for $L^2$ asymptotic linearity, via a non-asymptotic bound on the departure from linearity.

\begin{theorem}[Approximate linearity from loss stability]\label{asymp-from-lstability}
In the notation of \cref{cv-asymp-linear} and \cref{def:msstab}, suppose that the datapoints $(Z_i)_{i\geq 1}$ are i.i.d.\ copies of a random element $Z_0$.
Then
\begin{talign}\label{eq:suffcondforasylincond}
\Var(\frac{1}{\sqrt{n}}\sum_{j=1}^k\sum_{i\in B_j'}
(h'_n(Z_i,Z_{B_j})-\E[h'_n(Z_i,Z_{B_j})\mid Z_i]))
\leq \frac{3}{2}n\left(1-\frac{1}{k}\right)\lstab(h_n).
\end{talign}
Hence the $L^2$ asymptotic linearity condition \cref{eq:asylincond} holds with $\bar{h}_n(z)=\E[h_n(z,Z_{1:n(1-1/k)})]$
if the loss stability satisfies $\lstab(h_n) = o(\sigma_n^2/n)$.
\end{theorem}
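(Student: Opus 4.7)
The result splits into two parts: deriving asymptotic linearity from the variance bound, and proving the variance bound itself. For the first part, a short computation (using $\bar{h}_n(z)=\E[h_n(z,Z_{1:m})]$ with $m=n(1-1/k)$, independence of $Z_i$ and $Z_{B_j}$, and i.i.d.\ data) shows that $g_n(Z_i,Z_{B_j}):= h'_n(Z_i,Z_{B_j})-\E[h'_n(Z_i,Z_{B_j})\mid Z_i]$ coincides with the summand $h_n(Z_i,Z_{B_j})-\E[h_n(Z_i,Z_{B_j})\mid Z_{B_j}]-(\bar{h}_n(Z_i)-\E[\bar{h}_n(Z_i)])$ appearing in \cref{eq:asylincond}. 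Thus, dividing \cref{eq:suffcondforasylincond} by $\sigma_n^2$ bounds the squared $L^2$ norm of the LHS of \cref{eq:asylincond} by $\tfrac{3}{2}(1-1/k)\, n\lstab(h_n)/\sigma_n^2$, which is $o(1)$ whenever $\lstab(h_n)=o(\sigma_n^2/n)$, giving the ``Hence'' conclusion.

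For the variance bound itself, set $S=\sum_{j=1}^k T_j$ with $T_j=\sum_{i\in B_j'}g_n(Z_i,Z_{B_j})$, and observe that $\E[g_n\mid Z_i]=0$ by construction, while $\E[g_n\mid Z_{B_j}]=0$ because $\E[h'_n\mid Z_{B_j}]=0$ and $\E[h'_n\mid Z_i]$ is a mean-zero function of $Z_i$, which is independent of $Z_{B_j}$. Expand $\E[S^2]$ into three groups. (i)~Within-fold off-diagonal terms ($j=l$, $i\ne i'$) vanish after conditioning on $Z_{B_j}$ and using $\E[g_n\mid Z_{B_j}]=0$. (ii)~The diagonal contribution equals $n\,\E[g_n^2]=n\,\E[\Var(h'_n(Z_0,Z_{1:m})\mid Z_0)]$; a conditional Efron--Stein inequality applied to $h'_n(Z_0,\cdot)$ (with $Z_0$ held fixed) bounds $\Var(h'_n\mid Z_0)$ by $\tfrac{1}{2}\sum_{t=1}^m\E[(h'_n-h'^{(t)}_n)^2\mid Z_0]$, which averaged gives $n\,\E[g_n^2]\le \tfrac{n^2(1-1/k)}{2}\lstab(h_n)$ -- the $\tfrac{1}{2}$ in the target constant $\tfrac{3}{2}$. (iii)~The remaining cross-fold terms ($j\ne l$) must be bounded by $n^2(1-1/k)\lstab(h_n)$.

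The main difficulty is this cross-fold bound. For a representative quadruple $j\ne l$, $i\in B_j'$, $i'\in B_l'$ (so $i'\in B_j$ and $i\in B_l$), define $\tilde{Z}_{B_j}$ and $\tilde{Z}_{B_l}$ by replacing $Z_{i'}\in Z_{B_j}$ and $Z_i\in Z_{B_l}$ respectively with fresh i.i.d.\ copies. A short conditioning argument -- integrating out $Z_i$, which is now independent of every other variable in sight and appears in $g_n(Z_i,\tilde{Z}_{B_j})$ only as the test-point argument -- gives $\E[g_n(Z_i,\tilde{Z}_{B_j})g_n(Z_{i'},\tilde{Z}_{B_l})]=0$. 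Telescoping via $ab-a'b'=a(b-b')+(a-a')b'$ then represents each cross-fold expectation as two terms involving a single-coordinate increment $g_n-g_n^{\text{swap}}$, whose mean square equals $\lstab(h_n)$ by the definition of loss stability. The subtle point is that a na\"ive per-term Cauchy--Schwarz using $\E[g_n^2]\le \tfrac{m}{2}\lstab(h_n)$ introduces a spurious $\sqrt{m}$ factor in the sum; the tight bound is obtained by handling the cross-fold sum collectively -- e.g., by noting via the Hoeffding expansion of $g_n$ that summands with mismatched variable supports are pairwise orthogonal -- giving the required $\sum_{j\ne l}\Cov(T_j,T_l)\le n^2(1-1/k)\lstab(h_n)$. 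Combining (ii) and (iii) then yields $\E[S^2]\le \tfrac{3n^2(1-1/k)}{2}\lstab(h_n)$, which is \cref{eq:suffcondforasylincond} after dividing by $n$.
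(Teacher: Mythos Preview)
Your overall structure matches the paper's: the identification of $g_n$ with the summand in \cref{eq:asylincond}, the vanishing of within-fold off-diagonal terms, and the diagonal Efron--Stein bound are exactly as in the paper. The gap is in step~(iii). You establish only one of three available vanishing expectations. Writing $a=g_n(Z_i,Z_{B_j})$, $a'=g_n(Z_i,Z_{B_j}^{\backslash i'})$, $b=g_n(Z_{i'},Z_{B_l})$, $b'=g_n(Z_{i'},Z_{B_l}^{\backslash i})$, the same conditioning argument you use for $\E[a'b']=0$ also yields $\E[ab']=0$ (integrate out $Z_i$, which does not appear in $b'$, and use $\E_{Z_i}[a]=0$) and $\E[a'b]=0$ (integrate out $Z_{i'}$, which does not appear in $a'$, and use $\E_{Z_{i'}}[b]=0$). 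With all three identities one has directly $\E[ab]=\E[(a-a')(b-b')]$, and a single Cauchy--Schwarz gives $\E[ab]\le\sqrt{\E[(a-a')^2]}\,\sqrt{\E[(b-b')^2]}$, each factor a single-swap second moment. Averaging over the $n^2(1-1/k)$ cross-fold quadruples (with a Jensen step to pass the average inside the square roots) delivers exactly the $n(1-1/k)\lstab(h_n)$ piece of \cref{eq:suffcondforasylincond}, with no spurious $\sqrt{m}$ factor and no Hoeffding expansion.

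Your telescoping $ab-a'b'=a(b-b')+(a-a')b'$ uses only $\E[a'b']=0$ and leaves a full $g_n$ factor in each term, which is precisely what forces the $\sqrt{m}$ loss under Cauchy--Schwarz. The Hoeffding-expansion fix you gesture at is not obviously sufficient: the nonvanishing Hoeffding components of $g_n(Z_i,Z_{B_j})$ and $g_n(Z_{i'},Z_{B_l})$ can share index sets of arbitrary size contained in $\{i,i'\}\cup(B_j\cap B_l)$, and orthogonality alone does not explain how to bound their total contribution by $\lstab(h_n)$ rather than by some uncontrolled sum of higher-order component norms. The two extra vanishing identities are the missing ingredient.
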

The proof of \cref{asymp-from-lstability} is given in \cref{sec:proof-asymp-from-lstability}.
Recall that in a typical learning context, we have $h_n(Z_0, Z_{1:m}) =  \ell(Y_0, \hat{f}(X_0; Z_{1:m}))$ for a fixed loss $\ell$, a learned prediction rule $\hat{f}(\cdot ; Z_{1:m})$,  a test point $Z_0 = (X_0, Y_0)$, and $\ntrain=n\left(1-1/k\right)$.
When $\hat{f}(\cdot; Z_{1:m})$ converges to an imperfect prediction rule, we will commonly have $\sigma_n^2 = \Var(\E\left[h_n(Z_0,Z_{1:m})\mid Z_0\right]) = \Omega(1)$ so that $\lstab(h_n)=o(1/n)$ loss stability is sufficient.
However, \cref{asymp-from-lstability} also accommodates the cases of non-convergent $\hat{f}(\cdot; Z_{1:m})$ and of $\hat{f}(\cdot; Z_{1:m})$ converging to a perfect prediction rule, so that
$\sigma_n^2 = o(1)$.

Many learning algorithms are known to enjoy decaying loss stability \citep{bousquet-elisseeff:2002,elisseeff-etal:2005,hardt-etal:2016,celisse-guedj:2016,arsov-etal:2019}, in part because loss stability is upper-bounded by a variety of algorithmic stability notions studied in the literature.
For example, stochastic gradient descent on convex and non-convex objectives \citep{hardt-etal:2016} and the empirical risk minimization of a strongly convex and Lipschitz objective both have $O(1/n)$ \emph{uniform stability} \citep{bousquet-elisseeff:2002} which implies a loss stability of $O(1/n^2)=o(1/n)$ by \citep[Lem. 1]{SK-RK-SV:2011} and \citep[Lem. 2]{RK-ea:2013}.
However, we emphasize that the loss $h_n$ need not be convex and need not coincide with a loss function used to train a learning method.
Indeed, our stability assumptions also cover $k$-nearest neighbor methods \citep{devroye-wagner:1979a}, decision tree methods \citep{arsov-etal:2019}, and ensemble methods \citep{elisseeff-etal:2005} %
and can even hold when training error is a poor proxy for test error due to overfitting (e.g., 1-nearest neighbor has training error 0 but is still suitably stable \citep{devroye-wagner:1979a}).
In addition, for any loss function, loss stability is upper-bounded by mean-square stability \citep{SK-RK-SV:2011} and all $L^q$ stabilities \citep{celisse-guedj:2016} for $q\geq 2$.
For bounded loss functions such as the 0-1 loss, loss stability is also weaker than hypothesis stability (also called $L^1$ stability) \citep{devroye-wagner:1979a, kearns-ron:1999}, weak-hypothesis stability \citep{devroye-wagner:1979b}, and weak-$L^1$ stability \citep{kutin-niyogi:2002}. 

\subsection{Asymptotic linearity from conditional variance convergence}\label{sec:asymp-from-cond-var}
We can also guarantee asymptotic linearity under weaker moment conditions than \cref{asymp-from-lstability} at the expense of stronger requirements on the number of folds $k$.
\begin{theorem}[Asymptotic linearity from conditional variance convergence]\label{asymp-from-cond-var}
In the notation of \cref{cv-asymp-linear}, suppose that the datapoints $(Z_i)_{i\geq 1}$ are i.i.d.\ copies of a random element $Z_0$.
If %
\balignt\label{eq:variance_convergence_assump_k}
\max(k^{q/2}, k^{1-q/2})
    \E\Big[\big(\frac{1}{\sigma_n^2}\Var_{Z_0}\left(h_n(Z_0,Z_{1:n(1-1/k)})-\bar{h}_n(Z_0)\right)\big)^{q/2}\Big] 
    \to 0
\ealignt
for a function $\bar{h}_n$ and $q \in (0,2]$, then $\bar{h}_n$ satisfies the $L^q$ asymptotic linearity condition \cref{eq:asylincond}.
If%
\balignt\label{eq:variance_convergence_assump_prob_k}
\Earg{\min\Big(k, \frac{\sqrt{k}}{\sigma_n}
    \sqrt{\Var_{Z_0}\left(h_n(Z_0,Z_{1:n(1-1/k)})-\bar{h}_n(Z_0)\right)}\Big)} 
    \to 0.
\ealignt
for a function $\bar{h}_n$, then $\bar{h}_n$ satisfies the in-probability asymptotic linearity condition \cref{eq:asylincond}.

\end{theorem}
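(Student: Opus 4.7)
The plan is to exploit the fact that once we condition on the training set $Z_{B_j}$ in the $j$th fold, the $n/k$ summands indexed by $i \in B_j'$ become conditionally iid, centered functions of a single validation point. First I would define the centered fold function
\[
g_j(z) \defeq h_n(z, Z_{B_j}) - \Esub{Z_0}[h_n(Z_0, Z_{B_j})] - \bar{h}_n(z) + \E[\bar{h}_n(Z_0)],
\]
so that $\E[g_j(Z_0) \mid Z_{B_j}] = 0$ and the asymptotic linearity sum equals $S \defeq \sum_{j=1}^k T_j$ with $T_j \defeq \sum_{i \in B_j'} g_j(Z_i)$. Within a single fold, the $g_j(Z_i)$ are conditionally iid given $Z_{B_j}$, so an orthogonality calculation yields $\E[T_j^2 \mid Z_{B_j}] = (n/k) V_j$, where $V_j \defeq \Var_{Z_0}(h_n(Z_0, Z_{B_j}) - \bar{h}_n(Z_0))$ has the same distribution as $V$ since $Z_{B_j} \eqd Z_{1:n(1-1/k)}$.

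For the $L^q$ statement with $q \in (0, 2]$, I would first apply conditional Jensen to get the fold-level moment bound $\E[|T_j|^q \mid Z_{B_j}] \leq ((n/k) V_j)^{q/2}$, hence $\E[|T_j|^q] \leq (n/k)^{q/2} \E[V^{q/2}]$. I would then aggregate across folds in two different ways depending on $q$: Minkowski's inequality for $q \in [1, 2]$ gives $\E[|S|^q] \leq k^{q/2} n^{q/2} \E[V^{q/2}]$, while the subadditivity $|x+y|^q \leq |x|^q + |y|^q$ for $q \in (0, 1]$ gives $\E[|S|^q] \leq k^{1-q/2} n^{q/2} \E[V^{q/2}]$. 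Normalizing by $(\sigma_n \sqrt{n})^q$ in each case yields the bound $\max(k^{q/2}, k^{1-q/2}) \E[(V/\sigma_n^2)^{q/2}]$ stated in the hypothesis.

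For the in-probability statement I would first combine the Markov-type bound $\P(|X| > \epsilon) \leq \E[\min(1, |X|/\epsilon)]$ with the subadditivity $\min(1, \sum_j a_j) \leq \sum_j \min(1, a_j)$ for nonnegative $a_j$, yielding
\[
\P\!\left(|S|/(\sigma_n \sqrt{n}) > \epsilon\right) \leq \sum_{j=1}^k \E\!\left[\min\!\left(1, |T_j|/(\epsilon \sigma_n \sqrt{n})\right)\right].
\]
Conditioning on $Z_{B_j}$ and applying Jensen to the concave map $x \mapsto \min(1, x)$, together with $\E[|T_j| \mid Z_{B_j}] \leq \sqrt{(n/k) V_j}$, bounds the right-hand side by $\E[\min(k, \sqrt{k V/\sigma_n^2}/\epsilon)]$. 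The elementary inequality $\min(k, x/\epsilon) \leq \epsilon^{-1} \min(k, x)$ for $\epsilon \leq 1$ then reduces this to $\epsilon^{-1}\E[\min(k, \sqrt{k V/\sigma_n^2})]$, which vanishes by hypothesis.

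The main obstacle is the overlap of training sets across folds, which precludes any cross-fold independence argument; the strategy above sidesteps this by using triangle-type inequalities that only require within-fold independence. The resulting $k$-dependent factors in the stated conditions are exactly the price of this loss, and the two regimes of the $L^q$ hypothesis (Minkowski versus $q \leq 1$ subadditivity) meeting at $q=1$ is what produces the $\max$ over two different powers of $k$.
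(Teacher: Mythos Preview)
Your proposal is correct and follows essentially the same approach as the paper: condition on the training set to make the fold summands conditionally iid, bound the $q$-th moment within a fold by the $q/2$-power of the conditional variance via Jensen, and aggregate across folds using either the subadditivity of $x\mapsto x^q$ for $q\le 1$ or convexity/Minkowski for $q\in(1,2]$. The only cosmetic difference is that the paper unifies the $q\le 1$ and in-probability cases by working with a generic nondecreasing concave $\psi$ (taking $\psi(x)=x^q$ or $\psi(x)=\min(1,x)$) and invoking the equivalence $X_n\toprob 0 \iff \E[\min(1,|X_n|)]\to 0$, whereas you treat the cases separately and handle the in-probability part via the Markov-type bound with an explicit $\epsilon^{-1}$ factor.
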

\begin{remark}
When $k=O(1)$, %
\cref{eq:variance_convergence_assump_prob_k} holds $\iff$ $\frac{1}{\sigma_n}\sqrt{\Var_{Z_0}\left(h_n(Z_0,Z_{1:n(1-1/k)})-\bar{h}_n(Z_0)\right)}
    \toprob 0$.
\end{remark}
\cref{asymp-from-cond-var} follows from a more general statement proved in \cref{sec:proof-asymp-from-cond-var}.
When $k$ is bounded, as in $10$-fold CV, the conditions of \cref{asymp-from-cond-var} are considerably weaker than those of \cref{asymp-from-lstability} (see \cref{sec:proof-cond-var-conv-from-loss-stability}), granting asymptotic linearity whenever the conditional variance converges in probability rather than in $L^2$. Indeed in \cref{sec:proof-loss-vs-mean-square-stability}, we detail a simple learning problem in which the loss stability is infinite but \cref{asymp-from-cond-var,iid-cv-normal} together provide a valid CLT with convergent variance $\sigma_n^2$.

\subsection{Comparison with prior work}\label{sec:comp-lit}

Our sufficient conditions for asymptotic normality are significantly less restrictive and more broadly applicable than the three prior distributional characterizations of CV error \citep{SD-MvdL:2005,ledell2015computationally,MA-WZ:2020}. 
In particular, the CLT of \citet[Thm.\ 3]{SD-MvdL:2005} assumes a bounded loss function, excludes the popular case of leave-one-out cross-validation,  and requires the prediction rule to be loss-consistent for a risk-minimizing prediction rule. Similarly, the CLT of \citet[Thm.\ 4.1]{ledell2015computationally} applies only to AUC loss, requires the prediction rule to be loss-consistent for a deterministic prediction rule, and requires a bounded number of folds.

Moreover, in our notation, the recent CLT of \citet[Thm.\ 1]{MA-WZ:2020} restricts focus to learning algorithms that treat all training points symmetrically, assumes that its variance parameter
\begin{align}\label{eq:MAWZ-variance}
\tilde{\sigma}_n^2 \defeq \E[\Var(h_n(Z_0, Z_{1:m}) \mid Z_{1:m})]
\end{align}
converges to a non-zero limit, requires mean-square stability $\msstab(h_n)=o(1/n)$, and places a $o(1/n^2)$ constraint on the second-order mean-square stability
\balignt\label{second-order-mss}
\mathbb{E}[((h_n(Z_0,Z_{1:\ntrain})-h_n(Z_0,Z^{\backslash 1}_{1:\ntrain})) - (h_n(Z_0,Z^{\backslash 2}_{1:\ntrain})-h_n(Z_0,Z^{\backslash 1,2}_{1:\ntrain})))^2]
= o(1/n^2),
\ealignt
where $Z^{\backslash 1,2}_{1:\ntrain}$ represents $Z_{1:\ntrain}$ with $Z_1$, $Z_2$ replaced by \iid\ copies $Z'_1$, $Z'_2$. 
\citet{RK-ea:2013} showed that the mean-square stability is always an upper bound for the loss stability required by our \cref{asymp-from-lstability}, and in \cref{sec:proof-loss-vs-mean-square-stability-excess,sec:proof-loss-vs-mean-square-stability} we exhibit two simple learning tasks in which $\lstab(h_n)=O(1/n^2)$ but $\msstab(h_n) = \infty$. 
Furthermore, when $k$ is constant, as in $10$-fold CV, our conditional variance assumptions in \cref{sec:asymp-from-cond-var} are weaker still and hold even for algorithms with infinite loss stability (see \cref{sec:proof-loss-vs-mean-square-stability}).
In addition, our results allow for asymmetric learning algorithms (like stochastic gradient descent), accommodate growing, vanishing, and non-convergent variance parameters $\sigma_n^2$, and do not require the second-order mean-square stability condition \cref{second-order-mss}.

Finally, we note that the asymptotic variance parameter $\sig_n^2$ appearing in \cref{iid-cv-normal} is never larger and sometimes smaller than the variance parameter $\tilde{\sigma}_n^2$ in \citep[Thm.\ 1]{MA-WZ:2020}.
\begin{proposition}[Variance comparison]\label{var-comp}
Let $\sigma_n^2 = \Var(\E[h_n(Z_0, Z_{1:m})\mid Z_0])$ be the variance appearing in \cref{iid-cv-normal}, with the choice $\bar h_n(z) = \E[h_n(z,Z_{1:\ntrain})]$, and $\tilde{\sigma}_n^2 = \E[\Var(h_n(Z_0, Z_{1:m}) \mid Z_{1:m})]$
be the variance parameter of \citep[Eq. (15)]{MA-WZ:2020} for $m = n\left(1 - 1/k\right)$. Then
\balignt
\sigma_n^2\leq \tilde{\sigma}_n^2 \leq \sigma_n^2 + \frac{m}{2} \lstab(h_n),
\ealignt
and the first inequality is strict whenever 
$h(Z_0,Z_{1:\ntrain})-\E\left[h(Z_0,Z_{1:\ntrain})\mid Z_{1:\ntrain}\right]$
depends on $Z_{1:m}$.
\end{proposition}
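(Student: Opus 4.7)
The plan is to reduce both sides of the comparison to quantities involving the centered loss $h'$ defined in \cref{eq:hprime}, apply the law of total variance for the lower bound (and strictness claim), and invoke the Efron--Stein inequality for the upper bound. Everything hinges on the observation that $\E[h' \mid Z_{1:m}] = 0$ by construction, so $h'$ is already centered given the training data.

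I would first rewrite both variance parameters through $h'$. Because $\E[h' \mid Z_{1:m}] = 0$, we have $\Var(h_n(Z_0,Z_{1:m}) \mid Z_{1:m}) = \E[h'^2 \mid Z_{1:m}]$, and $\E[h'] = \E[\E[h'\mid Z_{1:m}]] = 0$, giving
\balignt
\tilde\sigma_n^2 = \E[h'^2] = \Var(h').
\ealignt
On the other hand, the independence of $Z_0$ and $Z_{1:m}$ forces $\E[\E[h_n(Z_0,Z_{1:m}) \mid Z_{1:m}] \mid Z_0] = \E[h_n(Z_0,Z_{1:m})]$ to be a constant in $Z_0$, so $\E[h_n(Z_0,Z_{1:m}) \mid Z_0] = \E[h' \mid Z_0] + \text{const}$ and thus $\sigma_n^2 = \Var(\E[h' \mid Z_0])$. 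Applying the law of total variance to $h'$ conditional on $Z_0$ then yields
\balignt
\tilde\sigma_n^2 = \Var(h') = \E[\Var(h'\mid Z_0)] + \Var(\E[h'\mid Z_0]) = \E[\Var(h'\mid Z_0)] + \sigma_n^2,
\ealignt
which immediately delivers $\sigma_n^2 \leq \tilde\sigma_n^2$ and shows the inequality is strict exactly when $\E[\Var(h'\mid Z_0)] > 0$, i.e.\ when $h'$ is not almost surely a function of $Z_0$ alone, which is precisely the claimed condition that $h'$ depends on $Z_{1:m}$.

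For the upper bound, I would apply the Efron--Stein inequality pointwise to $z_{1:m} \mapsto h'(z_0, z_{1:m})$ for each fixed $z_0$, obtaining
\balignt
\Var_{Z_{1:m}}(h'(z_0, Z_{1:m})) \leq \frac{1}{2} \sum_{i=1}^m \E[(h'(z_0, Z_{1:m}) - h'(z_0, Z^{\backslash i}_{1:m}))^2],
\ealignt
and then integrate over $Z_0$ to get $\E[\Var(h'\mid Z_0)] \leq \frac{m}{2}\msstab(h') = \frac{m}{2}\lstab(h_n)$ by \cref{def:msstab}. Combined with the identity $\tilde\sigma_n^2 - \sigma_n^2 = \E[\Var(h'\mid Z_0)]$ from the previous step, this gives the upper bound. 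The only step that requires a little care is the bookkeeping in translating expectations and variances of $h_n$ into those of $h'$ via the independence of $Z_0$ and $Z_{1:m}$; once that identification is made, the rest is a clean two-line invocation of the law of total variance and Efron--Stein.
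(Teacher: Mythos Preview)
Your proposal is correct and follows essentially the same approach as the paper: both arguments rewrite $\tilde\sigma_n^2$ and $\sigma_n^2$ in terms of the centered loss $h'_n$, apply the law of total variance (conditioning on $Z_0$) to obtain $\tilde\sigma_n^2 - \sigma_n^2 = \E[\Var(h'_n\mid Z_0)]$, and then invoke the conditional Efron--Stein inequality (\cref{efron-stein}) to bound this difference by $\tfrac{m}{2}\lstab(h_n)$. The strictness analysis is also identical.
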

The proof of \cref{var-comp} can be found in \cref{sec:proof-var-comp}.
In \cref{sec:proof-loss-vs-mean-square-stability}, we present a simple learning task for which our central limit theorem provably holds with $\sigma_n^2$ converging to a non-zero constant, but the CLT in \citep[Eq. (15)]{MA-WZ:2020} is inapplicable because the variance parameter $\tilde{\sigma}_n^2$ is infinite.

\section{
Confidence Intervals and Tests for $k$-fold Test Error
}\label{sec:var-est-and-CI}
A primary application of our central limit theorems is the construction of asymptotically-exact confidence intervals for the unknown $k$-fold test error.
For example, under the assumptions and notation of \cref{iid-cv-normal}, any sample statistic $\hat{\sigma}_n^2$ satisfying relative error consistency, 
$\hat{\sigma}_n^2/\sigma_n^2\toprob 1$, gives rise to an asymptotically-exact $(1-\alpha)$-confidence interval, 
\balignt\label{eq:ci}
C_\alpha \defeq \Rhat \pm q_{1-\alpha/2} \hat{\sigma}_n/\sqrt{n}
\qtext{satisfying}
\lim_{n\to\infty} \P(\Rcondcv \in C_\alpha) = 1 - \alpha,
\ealignt
where $q_{1-\alpha/2}$ is the $(1-\alpha/2)$-quantile of a standard normal distribution.

\newcommand{\alg}{\mc{A}}
A second, related application of our central limit theorems is testing whether, given a dataset $Z_{1:n}$, a $k$-fold partition $\{B_j'\}_{j=1}^k$, and two algorithms $\mc{A}_1$, $\mc{A}_2$ for fitting prediction rules, $\mc{A}_2$ has larger $k$-fold test error than $\mc{A}_1$.
In this circumstance, we may define
\balignt
h_n(Z_0, Z_{B}) = \ell(Y_0, \hat{f}_1(X_0; Z_{B})) - \ell(Y_0, \hat{f}_2(X_0; Z_{B}))
\ealignt
to be the difference of the loss functions of two prediction rules trained on $Z_{B}$ and tested on $Z_0=(X_0,Y_0)$. Our aim is to test whether $\mc{A}_1$ improves upon $\mc{A}_2$ on the fold partition, that is to test the null $H_0: \Rcondcv \geq 0$ against the alternative hypothesis $H_1: \Rcondcv < 0$.
Under the assumptions and notation of \cref{iid-cv-normal}, an asymptotically-exact level-$\alpha$ test is given by\footnote{The test \eqref{eq:one-sided-test} is equivalent to rejecting when the one-sided interval $(-\infty,\Rhat - q_{\alpha} \hat{\sigma}_n/\sqrt{n}\,]$ excludes $0$.} 
\balignt
\label{eq:one-sided-test}
\textsc{reject } H_0 \iff
\Rhat < q_{\alpha} \hat{\sigma}_n/\sqrt{n} 
\ealignt
where $q_{\alpha}$ is the $\alpha$-quantile of a standard normal distribution and $\hat{\sigma}_n^2$ is any variance estimator satisfying relative error consistency, 
$\hat{\sigma}_n^2/\sigma_n^2 \toprob 1$.
Fortunately, our next theorem describes how to compute such a consistent estimate of $\sigma_n^2$ under weak conditions.

\begin{theorem}[Consistent within-fold estimate of asymptotic variance]
\label{consistent-variance-est-in}
In the notation of \cref{iid-cv-normal} with $\ntrain = n(1-1/k)$, $\bar{h}_n(z) = \E[h_n(z,Z_{1:\ntrain})]$, and $k < n$, define the within-fold variance estimator
\balignt
\sigin^2 \defeq
\frac{1}{k}\sum_{j=1}^k
\frac{1}{(n/k)-1}\sum_{i\in B_j'}
\left(
h_n(Z_i,Z_{B_j}) - \frac{k}{n}\sum_{i'\in B_j'}h_n(Z_{i'},Z_{B_j})
\right)^2.
\ealignt
Suppose $(Z_i)_{i\geq 1}$ are \iid\ copies of a random element $Z_0$.
Then
$\sigin^2/\sigma_n^2 \toL{1} 1$
whenever $\lstab(h_n) = o(\sigma_n^2/n)$ and the sequence of
$(\bar{h}_n(Z_0)-\E[\bar{h}_n(Z_0)])^2/\sigma_n^2$
is uniformly integrable (UI). 
Moreover, $\sigin^2/\sigma_n^2 \toL{2} 1$
whenever $\E[((\bar{h}_n(Z_0)-\E[\bar{h}_n(Z_0)])/\sigma_n)^4] = o(n)$ and the \emph{fourth-moment loss stability} $\gamma_{4}(h_n') \defeq \frac{1}{\ntrain}\sum_{i=1}^\ntrain\E[(h_n'(Z_0,Z_{1:\ntrain})-h_n'(Z_0,Z_{1:\ntrain}^{\backslash i}))^4] = o(\sigma_n^4/n^2)$.
Here, 
$Z^{\backslash i}_{1:\ntrain}$ denotes $Z_{1:\ntrain}$ with $Z_i$ replaced by an identically distributed copy independent of $Z_{0:\ntrain}$.
\end{theorem}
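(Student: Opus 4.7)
The plan is to establish the chain $\sigin^2 \approx \tilde{\sigma}_n^2 \approx \sigma_n^2$, where $\tilde{\sigma}_n^2 \defeq \E[\Var(h_n(Z_0, Z_{1:\ntrain})|Z_{1:\ntrain})]$ is the intermediate variance parameter from \cref{var-comp}. First I would show unbiasedness: conditioning on the training set $Z_{B_j}$, the validation points $(Z_i)_{i \in B_j'}$ are iid, so the within-fold sample variance $S_j^2$ is an unbiased estimator of the conditional variance $V(Z_{B_j}) \defeq \Var(h_n(Z_0, Z_{B_j})|Z_{B_j})$, and averaging over folds gives $\E[\sigin^2] = \tilde{\sigma}_n^2$. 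Combining this with \cref{var-comp} and the loss-stability hypothesis $\lstab(h_n) = o(\sigma_n^2/n)$ then delivers the deterministic bias bound $|\tilde{\sigma}_n^2/\sigma_n^2 - 1| \to 0$.

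Next, to control the stochastic fluctuation $\sigin^2 - \tilde{\sigma}_n^2$, I would introduce an ``oracle'' variance estimator $\hat{\sigma}_{\mathrm{or}}^2$ obtained by substituting $\bar{h}_n(Z_i)$ for $h_n(Z_i, Z_{B_j})$ throughout the formula for $\sigin^2$. The key observation is that $\hat{\sigma}_{\mathrm{or}}^2$ is an average of $k$ within-fold sample variances $\tilde{S}_j^2$ that depend only on the disjoint iid validation sets, hence are mutually \emph{independent}, each unbiased for $\sigma_n^2$. Writing the algebraic identity $\sigin^2 - \hat{\sigma}_{\mathrm{or}}^2 = \mathrm{Cross} + \mathrm{Res}$, where $\mathrm{Res} \defeq \frac{1}{k}\sum_j S_{G,j}^2$ is the average within-fold sample variance of $G_{ij} \defeq h_n(Z_i, Z_{B_j}) - \bar{h}_n(Z_i)$ and $\mathrm{Cross}$ collects the cross-product terms, reduces the problem to showing each of the three pieces $\hat{\sigma}_{\mathrm{or}}^2 - \sigma_n^2$, $\mathrm{Res}$, and $\mathrm{Cross}$ is $o_{L^q}(\sigma_n^2)$.

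For the oracle piece, in the $L^1$ case each $\tilde{S}_j^2/\sigma_n^2$ is dominated by a convex combination of $((\bar{h}_n(Z_0) - \E\bar{h}_n(Z_0))/\sigma_n)^2$ terms, hence uniformly integrable, so $\hat{\sigma}_{\mathrm{or}}^2/\sigma_n^2 \toL{1} 1$ follows from the $L^1$ WLLN for UI averages. In the $L^2$ case, fold independence gives $\Var(\hat{\sigma}_{\mathrm{or}}^2) = \Var(\tilde{S}_1^2)/k \leq C\E[(\bar{h}_n(Z_0) - \E\bar{h}_n(Z_0))^4]/n = o(\sigma_n^4)$ by the standard variance-of-sample-variance bound together with the hypothesis $\E[((\bar{h}_n - \E\bar{h}_n)/\sigma_n)^4] = o(n)$. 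For the residual, an ANOVA-style decomposition of $h_n - \bar{h}_n$ identifies $\E[\mathrm{Res}] = \tilde{\sigma}_n^2 - \sigma_n^2 = o(\sigma_n^2)$, giving $L^1$ smallness immediately since $\mathrm{Res} \geq 0$. Finally, pointwise and fold-wise Cauchy-Schwarz yields $|\mathrm{Cross}| \leq 2\sqrt{\hat{\sigma}_{\mathrm{or}}^2 \cdot \mathrm{Res}}$, so a last Cauchy-Schwarz gives $\E|\mathrm{Cross}|/\sigma_n^2 \leq 2\sqrt{\E[\hat{\sigma}_{\mathrm{or}}^2]\E[\mathrm{Res}]}/\sigma_n^2 = o(1)$.

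The principal technical difficulty lies in the $L^2$ bound on the residual: while $\E[\mathrm{Res}] = o(\sigma_n^2)$ is immediate, controlling $\E[\mathrm{Res}^2]$ requires a useful upper bound on the fourth moment $\E[G_{ij}^4]$, which is not directly supplied by the hypotheses. The remedy I would pursue is to apply a fourth-moment Efron-Stein inequality to the conditional variance $V(Z_{B_j})$ viewed as a function of the training set, invoking $\gamma_4(h_n') = o(\sigma_n^4/n^2)$ to control single-coordinate fluctuations and combining with the $L^4$ moment bound on $\bar{h}_n$ to handle base-case terms; the $L^2$ control of the cross term then follows by iterating the Cauchy-Schwarz argument above.
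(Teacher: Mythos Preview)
Your proposal is correct and essentially mirrors the paper's proof: the paper also introduces your oracle estimator (there denoted $\siginapprox^2$), decomposes $\sigin^2-\siginapprox^2$ via Cauchy--Schwarz into your $\mathrm{Res}$ (the paper's $\Delta$) plus a cross term, bounds $\E[\Delta]$ by conditional Efron--Stein and $\E[\Delta^2]$ by the Boucheron--Lugosi--Massart fourth-moment inequality, and handles the oracle piece via a triangular-array weak law (Durrett) plus de la Vall\'ee Poussin. One small correction: the fourth-moment Efron--Stein should be applied to $h_n'(Z_i,Z_{B_j})$ as a function of $Z_{B_j}$ conditionally on $Z_i$ (yielding $\E[G_{ij}^4]\le C m^2\gamma_4(h_n')$ directly), not to $V(Z_{B_j})$, and no ``base-case'' terms involving the $L^4$ moment of $\bar h_n$ arise in that step.
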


\cref{consistent-variance-est-in} follows from explicit error bounds proved in \cref{sec:proof-consistent-variance-est-in}.
A notable take-away is that the same two conditions---loss stability $\lstab(h_n) = o(\sigma_n^2/n)$ and a UI sequence of
$(\bar{h}_n(Z_0)-\E[\bar{h}_n(Z_0)])^2/\sigma_n^2$---grant both a central limit theorem for CV (by \cref{iid-cv-normal,asymp-from-lstability}) and an $L^1$-consistent estimate of $\sig_n^2$ (by \cref{consistent-variance-est-in}).
Moreover, the $L^2$-consistency bound of \cref{sec:proof-consistent-variance-est-in} can be viewed as a strengthening of the consistency result of \citep[Prop.~1]{MA-WZ:2020} which analyzes the same variance estimator under more stringent assumptions.
In our notation, to establish $L^2$ consistency, \citep[Prop.~1]{MA-WZ:2020} additionally requires $h_n$ symmetric in its training points, convergence of the variance parameter  $\tilde{\sigma}_n^2$ \cref{eq:MAWZ-variance} to a non-zero constant, control over a fourth-moment analogue of mean-square stability $\gamma_4(h_n) = o(\sig_n^4/n^2)$ instead of the smaller fourth-moment loss stability $\gamma_4(h_n')$, and the more restrictive fourth-moment condition $\E[(h_n(Z_0,Z_{1:\ntrain})/\sig_n)^4]=O(1)$.\footnote{The result \citep[Prop.~1]{MA-WZ:2020} also assumes a fourth moment second-order stability condition similar to \cref{second-order-mss}, but this appears to not be used in the proof.}
By \cref{var-comp}, their assumptions further imply that $\sigma_n^2$ converges to a non-zero constant.
In contrast, \cref{consistent-variance-est-in} accommodates growing, vanishing, and non-convergent variance parameters $\sigma_n^2$ and a wider variety of learning procedures and losses.

Since \cref{consistent-variance-est-in} necessarily excludes the case of leave-one-out CV ($k = n$),
we propose a second estimator
with consistency guarantees for any $k$ and only slightly stronger stability conditions than \cref{consistent-variance-est-in} when $k=\Omega(n)$.
Notably, \citet{MA-WZ:2020} do not provide a consistent variance estimator for $k=n$, and \citet{SD-MvdL:2005} do not establish the consistency of any variance estimator.

\begin{theorem}[Consistent all-pairs estimate of asymptotic variance]
\label{consistent-variance-est-out}
Under the notation of \cref{iid-cv-normal} with $\ntrain = n(1-1/k)$, and $\bar{h}_n(z) = \E[h_n(z,Z_{1:\ntrain})]$, define the all-pairs variance estimator
\balignt
\sigout^2 \defeq
\frac{1}{k}\sum_{j=1}^k
\frac{k}{n}\sum_{i\in B_j'}
(
h_n(Z_i,Z_{B_j}) - \Rhat
)^2.
\ealignt
If $(Z_i)_{i\geq 1}$ are \iid copies of a random element $Z_0$,
then
$\sigout^2/\sigma_n^2 \toL{1} 1$
whenever $\lstab(h_n) = o(\sigma_n^2/n)$, $\msstab(h_n) = o(k\sigma_n^2/n)$, and the sequence of
$(\bar{h}_n(Z_0)-\E[\bar{h}_n(Z_0)])^2/\sigma_n^2$
is UI.
\end{theorem}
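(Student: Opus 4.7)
The plan is to decompose $\sigout^2$ around the ideal law-of-large-numbers term $B \defeq \frac{1}{n}\sum_{i=1}^n(\bar h_n(Z_i) - \mu_n)^2$, where $\mu_n \defeq \E[\bar h_n(Z_0)]$, and then bound each residual using the stability hypotheses. Setting $\epsilon_{ij} \defeq h_n(Z_i,Z_{B_j}) - \bar h_n(Z_i)$, expanding $(h_n(Z_i,Z_{B_j}) - \Rhat)^2 = (\epsilon_{ij} + (\bar h_n(Z_i) - \mu_n) - (\Rhat - \mu_n))^2$, and using the identity $\frac{1}{n}\sum_{j,i\in B_j'}(h_n(Z_i,Z_{B_j}) - \mu_n) = \Rhat - \mu_n$ yields
\begin{talign*}
\sigout^2 \;=\; A + B - C + D,
\end{talign*}
where $A \defeq \frac{1}{n}\sum_{j,i\in B_j'}\epsilon_{ij}^2$, $C \defeq (\Rhat - \mu_n)^2$, and $D \defeq \frac{2}{n}\sum_{j,i\in B_j'}\epsilon_{ij}(\bar h_n(Z_i)-\mu_n)$. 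The goal then becomes to show $B/\sigma_n^2 \toL{1} 1$ and $A/\sigma_n^2, C/\sigma_n^2, D/\sigma_n^2 \toL{1} 0$.

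The LLN piece $B/\sigma_n^2 \toL{1} 1$ is immediate from the weak law applied to the i.i.d.\ sequence $(\bar h_n(Z_i)-\mu_n)^2/\sigma_n^2$ under the UI hypothesis. For the fluctuation term $C$, the assumption $\lstab(h_n) = o(\sigma_n^2/n)$ lets one invoke \cref{iid-cv-normal} together with \cref{asymp-from-lstability} to write $\Rhat - \Rcondcv = \frac{1}{n}\sum_i(\bar h_n(Z_i) - \mu_n) + o_{L^2}(\sigma_n/\sqrt n)$, and Efron-Stein applied to $\Rcondcv = \frac{1}{k}\sum_j \E[h_n(Z_0,Z_{B_j})\mid Z_{B_j}]$ yields $\Var(\Rcondcv) \le \frac{\ntrain}{2}\msstab(h_n)$; combining these under the two stability assumptions controls $\E[C]/\sigma_n^2 \to 0$. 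For $A$, an Efron-Stein bound applied to $h_n(z,\cdot)$ conditionally on $z = Z_0$ gives $\E[\epsilon_{ij}^2] = \E[\Var(h_n(Z_0,Z_{1:\ntrain})\mid Z_0)] \le \frac{\ntrain}{2}\msstab(h_n)$. The cross term $D$ is then handled by Cauchy-Schwarz, bounding $\E|D| \le 2\sqrt{\E[A]\,\E[B]}$ and reducing it to the previous cases. I would upgrade each expectation bound to an $L^1$-consistency statement by computing the variances of $A$ and $D$ via the same covariance-decomposition template underlying the proof of \cref{consistent-variance-est-in}.

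The main obstacle is the $k = \Omega(n)$ regime, notably the leave-one-out case $k = n$ excluded by \cref{consistent-variance-est-in}. The Efron-Stein bounds sketched above yield $\E[A], \E[C] \le O(\ntrain \msstab(h_n)) = o((k-1)\sigma_n^2)$, which is only $o(k\sigma_n^2)$, not the required $o(\sigma_n^2)$. Overcoming this requires using the cross-fold dependence structure more carefully: a perturbation of a single datapoint $Z_l$ simultaneously affects $k-1$ of the summands (through the shared training sets $Z_{B_j}$ containing $l$) in a strongly positively correlated way, producing cancellations beyond what pointwise Efron-Stein captures. The factor $k$ in the assumption $\msstab(h_n) = o(k\sigma_n^2/n)$ is precisely calibrated so that, when combined with the stronger loss-stability bound $\lstab(h_n) = o(\sigma_n^2/n)$---which is used to separate the conditional-mean part $\E[h_n\mid Z_{1:\ntrain}]$ of each $h_n$ from its residual $h_n'$ and bound each piece individually---the refined estimate collapses to $o(\sigma_n^2)$ uniformly in $k$, recovering the LOO case seamlessly even though $\sigin^2$ itself is undefined there.
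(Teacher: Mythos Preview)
Your decomposition works when $k = O(1)$, but for growing $k$—the case that distinguishes this theorem from \cref{consistent-variance-est-in}—the gap you identify is real, and your final paragraph does not close it. Writing $\epsilon_{ij} = (h_n'(Z_i,Z_{B_j}) - \bar h_n'(Z_i)) + (r_j - \mu_n)$ with $r_j \defeq \E[h_n(Z_0,Z_{B_j})\mid Z_{B_j}]$, the split into $h_n'$ and conditional-mean parts that you suggest handles the first piece via loss stability, but the second contributes $\frac{1}{k}\sum_j(r_j - \mu_n)^2$ to $A$, whose expectation is $\Var(r_1)$. Efron--Stein gives only $\Var(r_1) \le \frac{\ntrain}{2}\msstab(h_n) = o(k\sigma_n^2)$, and nothing sharper is available, since $r_j$ carries no loss-stability information. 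The same obstruction appears in $\E[C]$ through $\Var(\Rcondcv)$. Bounding the two pieces ``individually'' therefore cannot succeed, and the ``cancellations'' you invoke do not occur within $A$ (a sum of squares) or within $C$; they must occur \emph{between} $A$ and $C$, which your plan of treating them separately forecloses.

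Concretely, the $(r_j - \mu_n)^2$ contribution to $A$ cancels against the $(\bar r - \mu_n)^2$ piece of $C$ (where $\bar r = \Rcondcv$), leaving $\frac{1}{k}\sum_j(r_j - \bar r)^2 = \frac{1}{2k^2}\sum_{j,j'}(r_j - r_{j'})^2$. Because $Z_{B_j}$ and $Z_{B_{j'}}$ differ in only $n/k$ coordinates, conditional Efron--Stein gives $\E[(r_j - r_{j'})^2] \le \frac{n}{k}\msstab(h_n)$, so this residual is $O(\frac{n}{k}\msstab(h_n)) = o(\sigma_n^2)$—this is exactly where the factor $k$ in the hypothesis enters. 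The paper implements the same cancellation more directly: it rewrites $\sigout^2 = \frac{1}{2n^2}\sum_{i,i'}(h_n(Z_i,Z_{B_j}) - h_n(Z_{i'},Z_{B_{j'}}))^2$ and, for each pair $(i,i')$, substitutes a \emph{common} training set $Z_{B_j}^{\backslash i'}$, so that the conditional mean drops out of every squared difference. The substitution costs only $(1+\frac{n}{k})\msstab(h_n)$ in expectation (at most $n/k$ points change between folds), and the remainder is then controlled by loss stability alone.
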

\cref{consistent-variance-est-out} follows from an explicit error bound proved in \cref{sec:proof-consistent-variance-est-out} and differs from the $L^1$-consistency result of \cref{consistent-variance-est-in} only in the added requirement $\msstab(h_n) = o(k\sigma_n^2/n)$.  
This mean-square stability condition is especially mild when $k = \Omega(n)$ (as in the case of leave-one-out CV) and ensures that two training sets differing in only $n/k$ points produce prediction rules with comparable test losses. 
Importantly, both $\sigin^2$ and $\sigout^2$ can be computed in $O(n)$ time using just the individual datapoint losses $h_n(Z_i,Z_{B_j})$ outputted by a run of $k$-fold cross-validation.
Moreover, when $h_n$ is binary, as in the case of 0-1 loss, one can compute $\sigout^2 = \Rhat(1-\Rhat)$ in  $O(1)$ time given access to the overall cross-validation error $\Rhat$ and $\sigin^2 = \frac{1}{k}\sum_{j=1}^k
\frac{(n/k)}{(n/k)-1} \hat{R}_{n,j}(1-\hat{R}_{n,j})$ in $O(k)$ time given access to the $k$ average fold errors $\hat{R}_{n,j} \defeq \frac{k}{n}\sum_{i\in B_j'}h_n(Z_i,Z_{B_j})$.
\section{Numerical Experiments}\label{sec:experiments}

In this section, we compare our test error confidence intervals \cref{eq:ci} and tests for algorithm improvement \cref{eq:one-sided-test} with the most popular alternatives from the literature: the hold-out test described in \citep[Eq. (17)]{MA-WZ:2020} based on a single train-validation split, the cross-validated $t$-test \citep{dietterich:1998}, the repeated train-validation $t$-test \citep{nadeau-bengio:2003} (with and without correction), and the $5\times 2$-fold CV test \citep{dietterich:1998}.\footnote{We exclude McNemar's test \citep{mcnemar:1947} and the difference-of-proportions test which \citet{dietterich:1998} found to be less powerful than $5\times2$-fold CV and the conservative $Z$-test which \citet{nadeau-bengio:2003} found less powerful and more expensive than  corrected repeated train-validation splitting.} These procedures are commonly used and admit both two-sided CIs and one-sided tests, but, unlike our proposals, none except the hold-out method are known to be valid.
Our aim is to verify whether our proposed procedures outperform these popular heuristics across a diversity of settings encountered in real learning problems.
We fix $k=10$, use $90$-$10$ train-validation splits for all tests save $5\times2$-fold CV, and report our results using $\sigout^2$ (as $\sigin^2$ results are nearly identical).

Evaluating the quality of CIs and tests requires knowledge of the target test error.%
\footnote{Generalizing the notion of $k$-fold test error \cref{eq:cond-cv-err}, we define the target test error %
for each testing procedure to be the average test error of the learned prediction rules; see \cref{sec:proc-list} for more details.}
In each experiment, we use points subsampled from a large real dataset to form a surrogate ground-truth estimate of 
the test error. %
Then, we evaluate the CIs and tests constructed from $500$ training sets of sample sizes $n$ ranging from $700$ to $11,000$ subsampled from the same dataset.
Each mean width estimate is displayed with a $\pm$ 2 standard error confidence band. The surrounding confidence bands for the coverage, size, and power estimates are $95\%$ Wilson intervals \citep{wilson:1927}, which are known to provide more accurate coverage for binomial proportions than a $\pm$ 2 standard error interval \citep{brown-etal:2001}.
We use the \texttt{Higgs} dataset of \cite{baldi:2014,dataset:higgs} to study the classification error of random forest, neural network, and $\ell^2$-penalized logistic regression classifiers and the Kaggle \texttt{FlightDelays} dataset of \cite{dataset:flight-delays} to study the mean-squared regression error of random forest, neural network, and ridge regression.
In each case, we focus on stable settings of these learning algorithms with sufficiently strong $\ell^2$ regularization for the neural network, logistic, and ridge learners and small depths for the random forest trees.
Complete experimental details are available in \cref{sec:gen-exper-setup}, and code replicating all experiments can be found at \url{https://github.com/alexandre-bayle/cvci}.

\subsection{Confidence intervals for test error}
\label{sec:ci-experiment}
In \cref{sec:additional-results-CI}, we compare the coverage and width of each procedure's $95\%$ CI for each of the described algorithms, datasets, and training set sizes.
Two representative examples---logistic regression classification and random forest regression---are displayed in 
\cref{fig:test-error-CI}. 
While the repeated train-validation CI significantly undercovers in all cases, all remaining CIs have coverage near the $95\%$ target, even for the smallest training set size of $n=700$.
The hold-out CI, while valid, is substantially wider and less informative than the other intervals as it is based on only a single train-validation split.
Meanwhile, our CLT-based CI delivers the smallest width\footnote{All widths in \cref{fig:test-error-CI} are displayed with $\pm 2$ standard error bars, but some bars are too small to be visible.} (and hence greatest precision) for both learning tasks and every dataset size.

\newcommand{\subfigfracout}{0.5}
\newcommand{\subfigfracin}{0.49}
\newcommand{\imgspace}{.01}
\newcommand{\imgfrac}{1}
\newcommand{\vgap}{.01}
\begin{figure}[tb!]
\centering
    \begin{subfigure}{\subfigfracin\linewidth}
        \includegraphics[width=\imgfrac\linewidth]{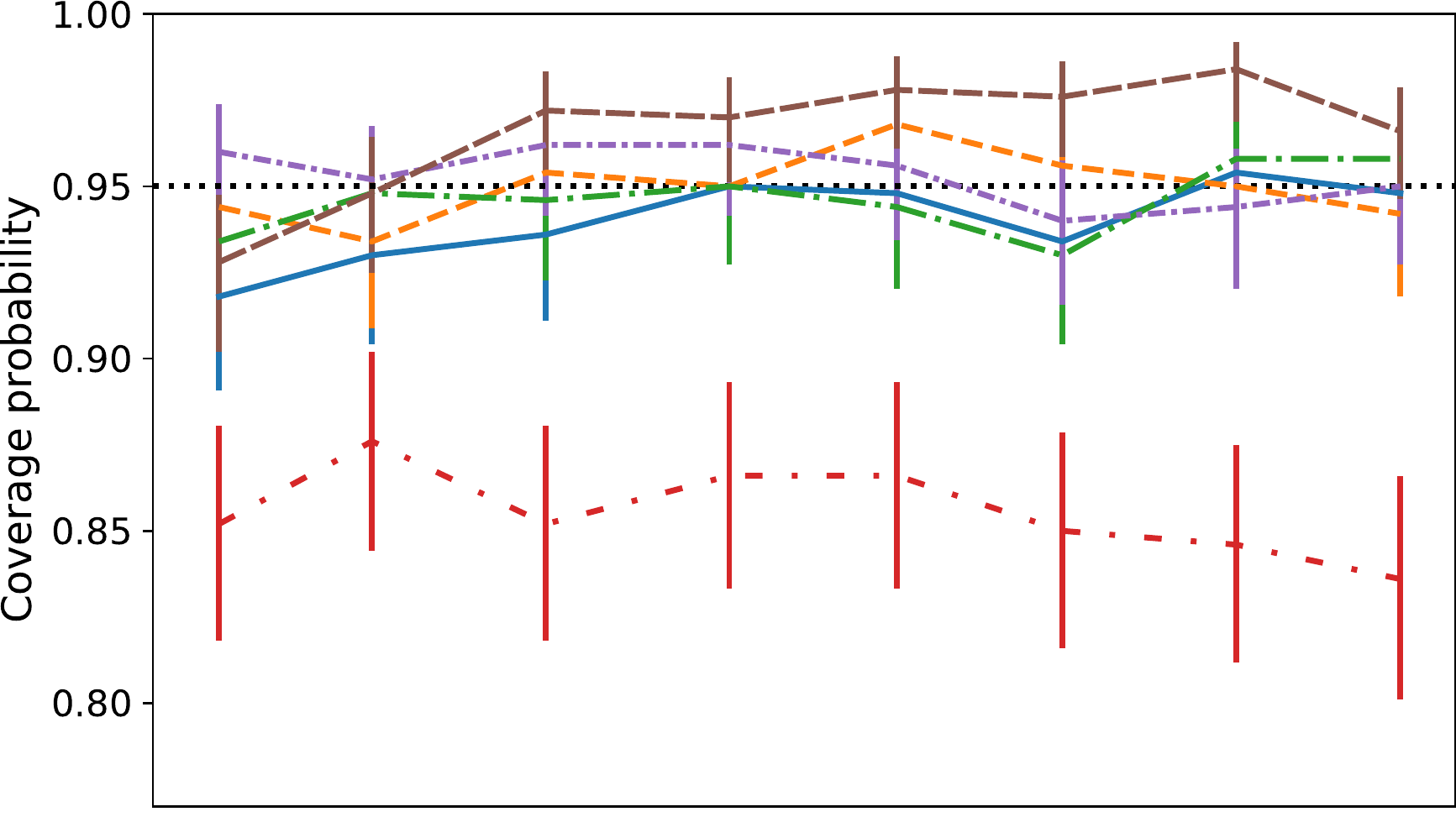}
    \end{subfigure}\hspace{\imgspace\linewidth}%
     \begin{subfigure}{\subfigfracin\linewidth}
             \includegraphics[width=\imgfrac\linewidth]{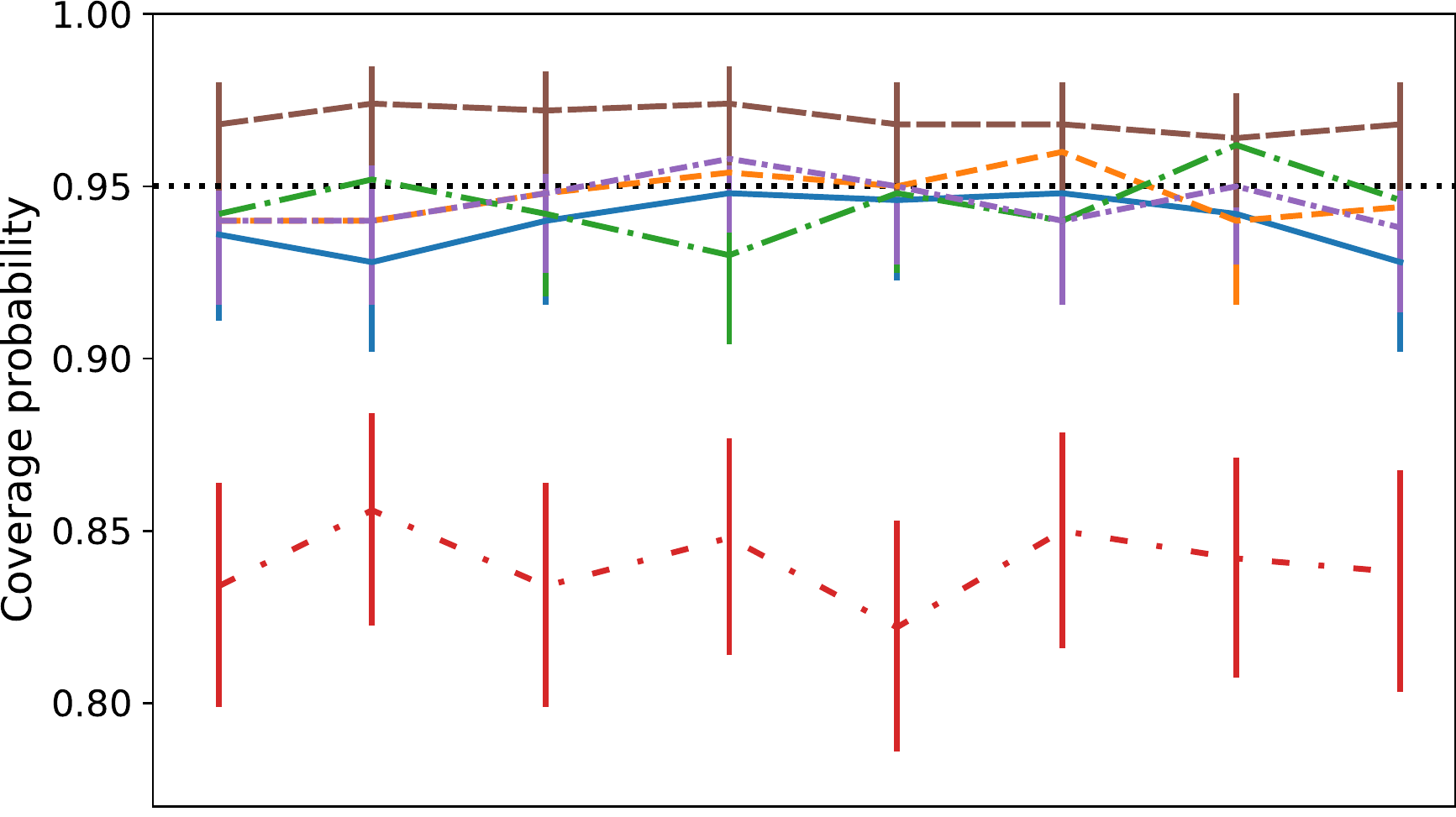}
    \end{subfigure}

    \vspace{\vgap\linewidth}
    \begin{subfigure}{\subfigfracin\linewidth}
        \includegraphics[width=\imgfrac\linewidth]{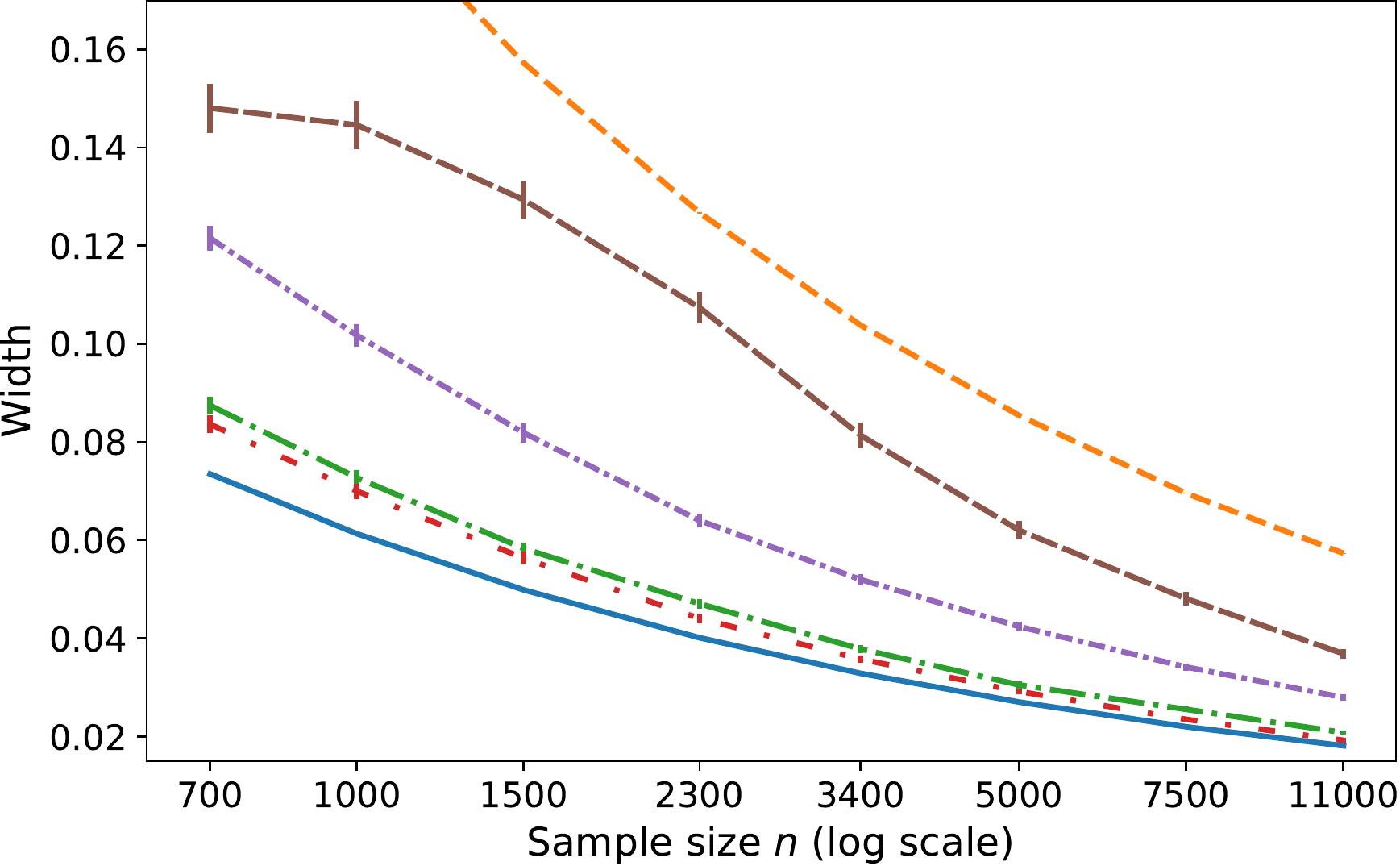}
    \end{subfigure}\hspace{\imgspace\linewidth}%
    \begin{subfigure}{\subfigfracin\linewidth}
        \includegraphics[width=\imgfrac\linewidth]{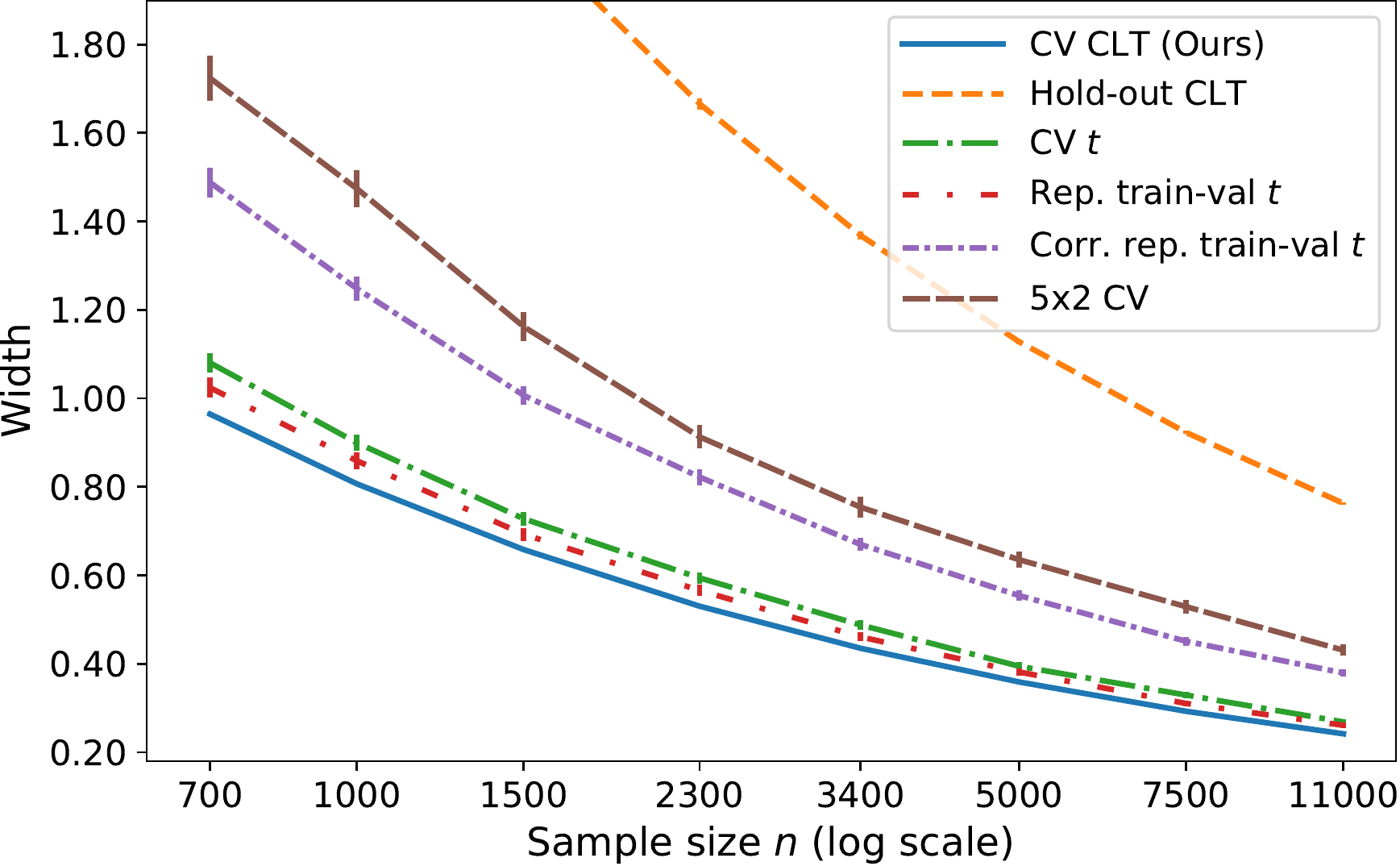}
    \end{subfigure}
    \caption{Test error coverage (top) and width (bottom) of $95\%$ confidence intervals (see \cref{sec:ci-experiment}). \tbf{Left:} $\ell^2$-regularized logistic regression classifier. \tbf{Right:} Random forest regression.}
    \label{fig:test-error-CI}
\end{figure}

\begin{figure}[tb!]
\centering
    \begin{subfigure}{\subfigfracin\linewidth}
        \includegraphics[width=\imgfrac\linewidth]{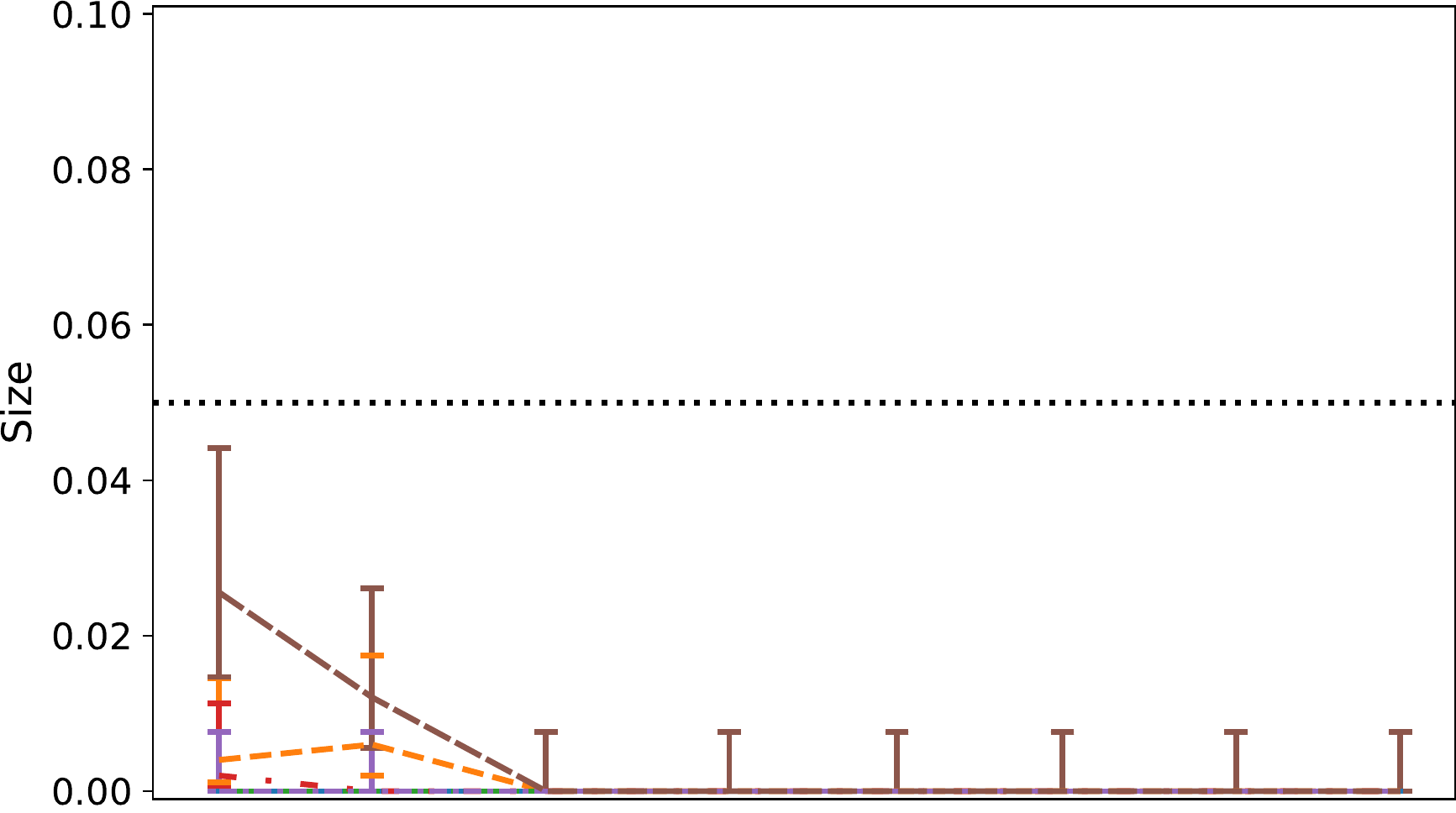}
    \end{subfigure}\hspace{\imgspace\linewidth}%
     \begin{subfigure}{\subfigfracin\linewidth}
        \includegraphics[width=\imgfrac\linewidth]{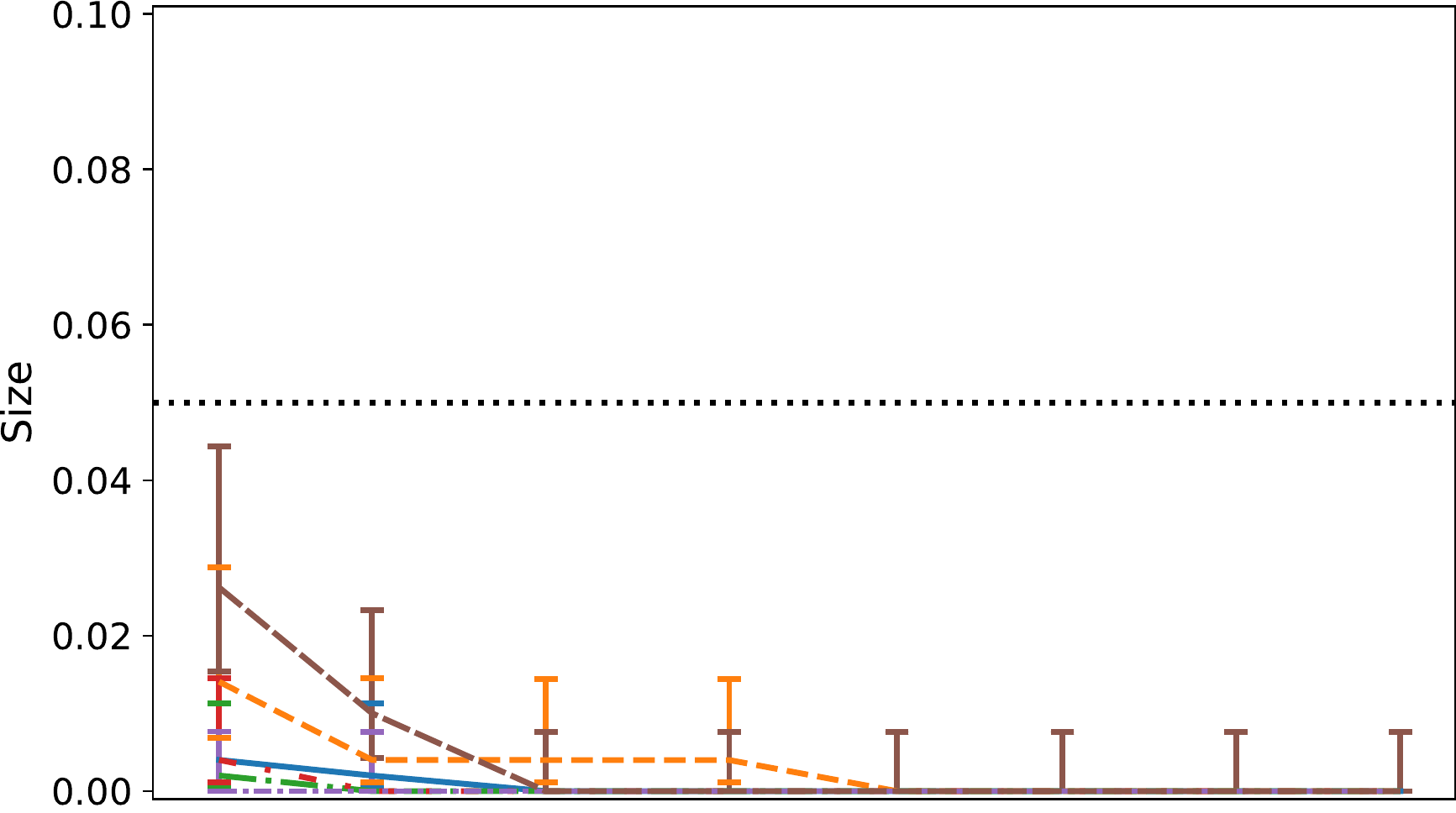}
    \end{subfigure}
    
    \vspace{\vgap\linewidth}
    \begin{subfigure}{\subfigfracin\linewidth}
        \includegraphics[width=\imgfrac\linewidth]{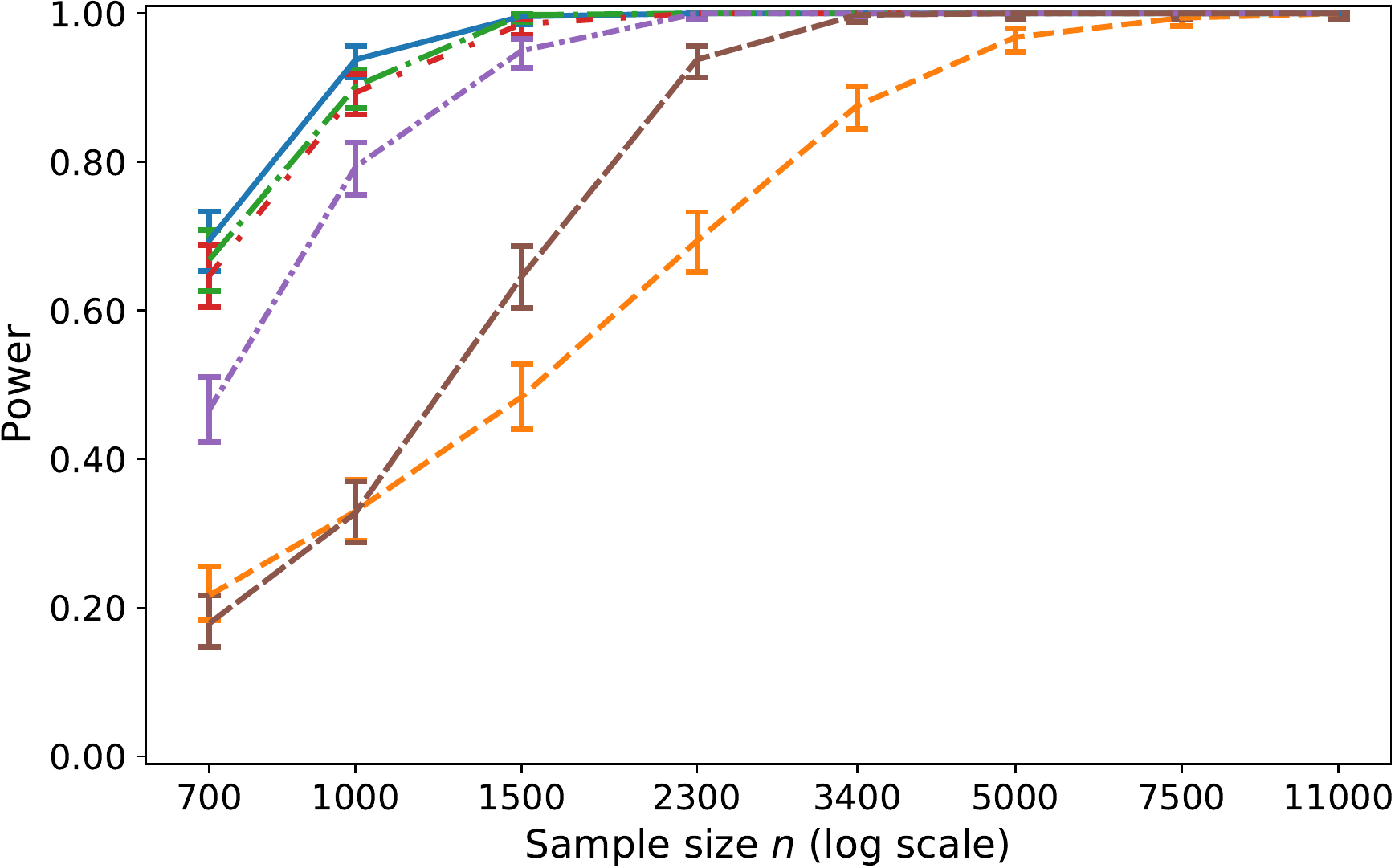}
    \end{subfigure}\hspace{\imgspace\linewidth}%
    \begin{subfigure}{\subfigfracin\linewidth}
        \includegraphics[width=\imgfrac\linewidth]{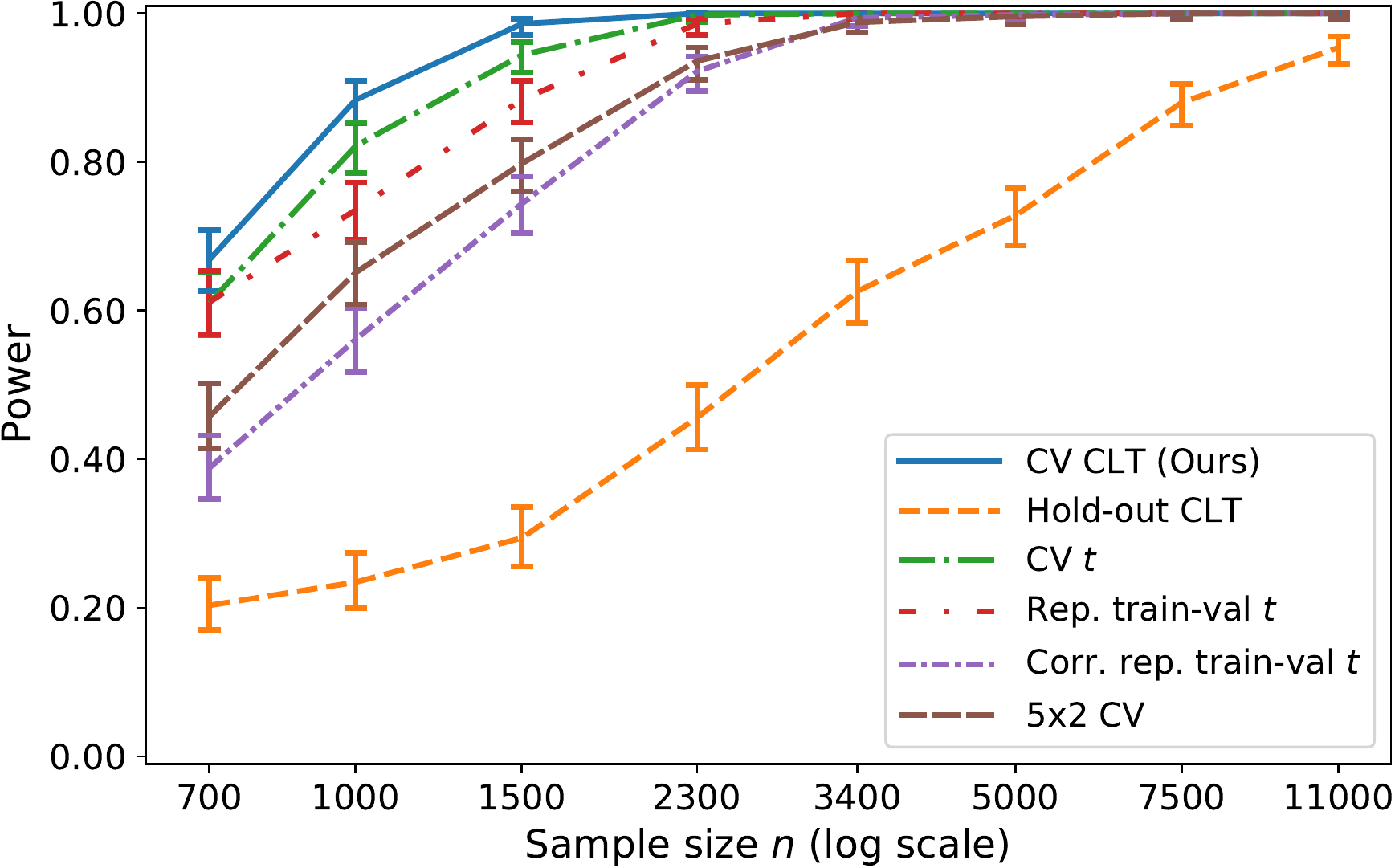}
    \end{subfigure}
    \caption{
    Size when testing $H_1: \err{1}{2}$ (top) and power when testing $H_1: \err{2}{1}$ (bottom) of level-$0.05$ tests for improved test error (see \cref{sec:sim:test}).
    \tbf{Left}: $\alg_1 =$ $\ell^2$-regularized logistic regression, $\alg_2=$ neural network classification. \tbf{Right}: $\alg_1 =$ random forest, $\alg_2=$ ridge regression.}
    \label{fig:test-error-improvement}
\end{figure}

\subsection{Testing for improved algorithm performance}\label{sec:sim:test}
Let us write $\err{1}{2}$ to signify that the test error of $\alg_1$ is smaller than that of $\alg_2$. 
In \cref{sec:additional-results-test}, for each testing procedure, dataset, and pair of algorithms $(\alg_{1},\alg_2)$, we display the size and power of level $\alpha=0.05$ one-sided tests \cref{eq:one-sided-test} of $H_1: \err{1}{2}$.  
In each case, we report size estimates for experiments with at least 25 replications under the null and power estimates for experiments with at least 25 replications under the alternative.
Here, for representative algorithm pairs, we identify the algorithm $\alg_1$ that more often has smaller test error across our simulations and display
both the power of the level $\alpha=0.05$ test of $H_1: \err{1}{2}$
and the size of the level $\alpha=0.05$ test of $H_1: \err{2}{1}$.
\cref{fig:test-error-improvement} displays these results for $(\alg_1,\alg_2)$ = ($\ell^2$-regularized logistic regression, neural network) classification on the left and $(\alg_1,\alg_2)$ = (random forest, ridge) regression on the right.
The sizes of all testing procedures are below the nominal level of $0.05$, and our test is consistently the most powerful for both classification and regression.
The hold-out test, while also valid, is significantly less powerful due to its reliance on a single train-validation split.
In \cref{sec:synthetic-test}, we observe analogous results when labels are synthetically generated.

\subsection{The importance of stability}
\label{sec:import-stab}
To illustrate the impact of algorithmic instability on testing procedures, we additionally compare a less stable neural network (with substantially reduced $\ell^2$ regularization strength) and a less stable random forest regressor (with larger-depth trees).
In \cref{fig:bad-plots} in \cref{sec:appendix-importance-stability}, we observe that the size of every test save the hold-out test
rises above the nominal level.
In the case of our test, the cause of this size violation is clear.
\cref{fig:comp-var-to-one} in \cref{sec:appendix-importance-stability} demonstrates that the variance of $\frac{\sqrt{n}}{\sigma_n}(\Rhat-\Rcondcv)$ in \cref{iid-cv-normal} is much larger than $1$ for this experiment, and \cref{asymp-from-lstability} implies this can only occur when the loss stability $\lstab(h_n)$ is large.
Meanwhile, the variance of the same quantity is close to $1$ for the original stable settings of the neural network and random forest regressors.
We suspect that instability is also the cause of the other tests' size violations; however, it is difficult to be certain, as these alternative tests have no correctness guarantees.
Interestingly, the same destabilized algorithms produce high-quality confidence intervals and relatively stable $h_n$ in the context of single algorithm assessment (see \cref{fig:unstable-single-algos-plots,fig:comp-var-to-one-single} in \cref{sec:appendix-importance-stability}), as the variance parameter $\sigma_n^2 = \Var(\bar{h}_n(Z_0))$ is significantly larger for single algorithms. 
This finding highlights an important feature of our results: it suffices for the loss stability to be negligible relative to the noise level $\sigma_n^2/n$.

\subsection{Leave-one-out cross-validation}\label{sec:loocv}
Leave-one-out cross-validation (LOOCV) is often viewed as prohibitive for large datasets, due to the expense of refitting a prediction rule $n$ times.
However, for ridge regression, a well-known shortcut based on 
the Sherman--Morrison--Woodbury formula allows one to carry out LOOCV exactly in the time required to fit a small number of base ridge regressions (see \cref{sec:appendix-LOOCV} for a derivation of this result). Moreover, recent work shows that, for many learning procedures, LOOCV estimates can be efficiently approximated with only $O(1/n^2)$ error \citep{beirami-etal:2017,giordano-etal:2019,weikoh-etal:2019,wilson-etal:2020} (see also \citep{rad-maleki:2020,stephenson-broderick:2020,ghosh-etal:2020} for related guarantees). %
The $O(1/n^2)$ precision of these inexpensive approximations coupled with the LOOCV consistency of $\sigout^2$ (see \cref{consistent-variance-est-out}) allows us to efficiently construct asymptotically-valid CIs and tests for LOOCV, even when $n$ is large.
As a simple demonstration, we construct $95\%$ CIs for ridge regression test error based on our LOOCV CLT and compare their coverage and width with those of the procedures described in \cref{sec:ci-experiment}. 
In \cref{fig:test-error-CI-RR-LOOCV} in \cref{sec:appendix-LOOCV}, we see that, like the 10-fold CV CLT intervals, the LOOCV intervals provide coverage near the nominal level and widths smaller than the popular alternatives from the literature; in fact, the 10-fold CV CLT curves are obscured by the nearly identical LOOCV CLT curves.
Complete experimental details can be found in \cref{sec:appendix-LOOCV}.

\section{Conclusion and Future Work}
Our central limit theorems and consistent variance estimators provide new, valid tools for testing algorithm improvement and generating test error intervals under algorithmic stability.
An important open question is whether practical valid tests and intervals are also available when our stability conditions are violated.
Another promising direction for future work is developing analogous tools for the \emph{expected} test error $\E[\Rcondcv]$ instead of the $k$-fold test error $\Rcondcv$; \citet{MA-WZ:2020} provide significant progress in this direction, but more work, particularly on variance estimation, is needed. %

\section*{Broader Impact}
This work will benefit both users and developers of machine learning methods who want to rigorously assess or compare learning algorithms. Failure of the methods we discuss (which can only happen when the assumptions we state are not satisfied) may lead to the over- or under-estimation of the performance of a learning algorithm on a particular dataset.

\begin{ack}
We would like to thank Jianqing Fan, Mykhaylo Shkolnikov, Miklos Racz, and Morgane Austern for helpful discussions.
\end{ack}

\bibliography{references}

\begin{thebibliography}{}

\bibitem[dat, 2015]{dataset:flight-delays}
 (2015).
\newblock \texttt{FlightDelays} dataset.
\newblock \url{https://www.kaggle.com/usdot/flight-delays}.

\bibitem[Abou-Moustafa and Szepesv\'ari, 2019a]{pmlr-v98-abou-moustafa19a}
Abou-Moustafa, K. and Szepesv\'ari, C. (2019a).
\newblock An exponential tail bound for {Lq} stable learning rules.
\newblock In Garivier, A. and Kale, S., editors, {\em Proceedings of the 30th
  International Conference on Algorithmic Learning Theory}, volume~98 of {\em
  Proceedings of Machine Learning Research}, pages 31--63, Chicago, Illinois.
  PMLR.

\bibitem[Abou-Moustafa and Szepesv\'ari, 2019b]{karim_csaba_aaai19}
Abou-Moustafa, K. and Szepesv\'ari, C. (2019b).
\newblock An exponential tail bound for the deleted estimate.
\newblock In {\em Proceedings of the Thirty-Third {AAAI} Conference on
  Artificial Intelligence}, pages 42--50.

\bibitem[{Arsov} et~al., 2019]{arsov-etal:2019}
{Arsov}, N., {Pavlovski}, M., and {Kocarev}, L. (2019).
\newblock {Stability of decision trees and logistic regression}.
\newblock {\em arXiv preprint arXiv:1903.00816v1}.

\bibitem[{Austern} and {Zhou}, 2020]{MA-WZ:2020}
{Austern}, M. and {Zhou}, W. (2020).
\newblock {Asymptotics of Cross-Validation}.
\newblock {\em arXiv preprint arXiv:2001.11111v2}.

\bibitem[Baldi et~al., 2014a]{baldi:2014}
Baldi, P., Sadowski, P., and Whiteson, D. (2014a).
\newblock Searching for exotic particles in high-energy physics with deep
  learning.
\newblock {\em Nature Communications}, 5.

\bibitem[Baldi et~al., 2014b]{dataset:higgs}
Baldi, P., Sadowski, P., and Whiteson, D. (2014b).
\newblock \texttt{Higgs} dataset.
\newblock \url{https://archive.ics.uci.edu/ml/datasets/HIGGS}.

\bibitem[Beirami et~al., 2017]{beirami-etal:2017}
Beirami, A., Razaviyayn, M., Shahrampour, S., and Tarokh, V. (2017).
\newblock On optimal generalizability in parametric learning.
\newblock In {\em Proceedings of the 31st International Conference on Neural
  Information Processing Systems}, NIPS’17, pages 3455--3465, Red Hook, NY,
  USA. Curran Associates Inc.

\bibitem[Bengio and Grandvalet, 2004]{bengio-grandvalet:2004}
Bengio, Y. and Grandvalet, Y. (2004).
\newblock No unbiased estimator of the variance of $k$-fold cross validation.
\newblock {\em Journal of Machine Learning Research}, 5:1089--1105.

\bibitem[Billingsley, 1995]{billingsley-book:1995}
Billingsley, P. (1995).
\newblock {\em Probability and Measure, Third Edition}.

\bibitem[Blum et~al., 1999]{Blum-ea:1999}
Blum, A., Kalai, A., and Langford, J. (1999).
\newblock Beating the hold-out: Bounds for $k$-fold and progressive
  cross-validation.
\newblock In {\em Proc. COLT}, pages 203--208.

\bibitem[Boucheron et~al., 2005]{boucheron-etal:2005}
Boucheron, S., Bousquet, O., Lugosi, G., and Massart, P. (2005).
\newblock Moment inequalities for functions of independent random variables.
\newblock {\em Annals of Probability}, 33(2):514--560.

\bibitem[Bouckaert and Frank, 2004]{bouckaert-frank:2004}
Bouckaert, R.~R. and Frank, E. (2004).
\newblock Evaluating the replicability of significance tests for comparing
  learning algorithms.
\newblock In {\em PAKDD}, pages 3--12. Springer.

\bibitem[Bousquet and Elisseeff, 2002]{bousquet-elisseeff:2002}
Bousquet, O. and Elisseeff, A. (2002).
\newblock Stability and generalization.
\newblock {\em Journal of Machine Learning Research}, 2:499--526.

\bibitem[Brown et~al., 2001]{brown-etal:2001}
Brown, L.~D., Cai, T.~T., and DasGupta, A. (2001).
\newblock Interval estimation for a binomial proportion.
\newblock {\em Statistical Science}, 16(2):101--133.

\bibitem[{Celisse} and {Guedj}, 2016]{celisse-guedj:2016}
{Celisse}, A. and {Guedj}, B. (2016).
\newblock {Stability revisited: new generalisation bounds for the
  Leave-one-Out}.
\newblock {\em arXiv preprint arXiv:1608.06412v1}.

\bibitem[Chen and Guestrin, 2016]{xgboost}
Chen, T. and Guestrin, C. (2016).
\newblock Xgboost: A scalable tree boosting system.
\newblock In {\em Proceedings of the 22nd ACM SIGKDD International Conference
  on Knowledge Discovery and Data Mining}, KDD ’16, pages 785--794, New York,
  NY, USA. Association for Computing Machinery.

\bibitem[{Cornec}, 2010]{cornec:2010}
{Cornec}, M. (2010).
\newblock {Concentration inequalities of the cross-validation estimate for
  stable predictors}.
\newblock {\em arXiv preprint arXiv:1011.5133v1}.

\bibitem[Dem\v{s}ar, 2006]{demsar:2006}
Dem\v{s}ar, J. (2006).
\newblock Statistical comparisons of classifiers over multiple data sets.
\newblock {\em Journal of Machine Learning Research}, 7:1--30.

\bibitem[{Devroye} and {Wagner}, 1979a]{devroye-wagner:1979a}
{Devroye}, L. and {Wagner}, T. (1979a).
\newblock Distribution-free inequalities for the deleted and holdout error
  estimates.
\newblock {\em IEEE Transactions on Information Theory}, 25(2):202--207.

\bibitem[{Devroye} and {Wagner}, 1979b]{devroye-wagner:1979b}
{Devroye}, L. and {Wagner}, T. (1979b).
\newblock Distribution-free performance bounds for potential function rules.
\newblock {\em IEEE Transactions on Information Theory}, 25(5):601--604.

\bibitem[Dietterich, 1998]{dietterich:1998}
Dietterich, T.~G. (1998).
\newblock Approximate statistical tests for comparing supervised classification
  learning algorithms.
\newblock {\em Neural Computation}, 10(7):1895--1923.

\bibitem[Dudoit and van~der Laan, 2005]{SD-MvdL:2005}
Dudoit, S. and van~der Laan, M.~J. (2005).
\newblock Asymptotics of cross-validated risk estimation in estimator selection
  and performance assessment.
\newblock {\em Statistical Methodology}, 2(2):131--154.

\bibitem[Durrett, 2019]{durrett-book:2019}
Durrett, R. (2019).
\newblock {\em Probability: Theory and Examples, Version 5}.

\bibitem[Elisseeff et~al., 2005]{elisseeff-etal:2005}
Elisseeff, A., Evgeniou, T., and Pontil, M. (2005).
\newblock Stability of randomized learning algorithms.
\newblock {\em Journal of Machine Learning Research}, 6:55--79.

\bibitem[Geisser, 1975]{geisser1975predictive}
Geisser, S. (1975).
\newblock The predictive sample reuse method with applications.
\newblock {\em Journal of the American Statistical Association},
  70(350):320--328.

\bibitem[{Ghosh} et~al., 2020]{ghosh-etal:2020}
{Ghosh}, S., {Stephenson}, W.~T., {Nguyen}, T.~D., {Deshpande}, S.~K., and
  {Broderick}, T. (2020).
\newblock {Approximate Cross-Validation for Structured Models}.
\newblock {\em arXiv preprint arXiv:2006.12669v1}.

\bibitem[Giordano et~al., 2019]{giordano-etal:2019}
Giordano, R., Stephenson, W., Liu, R., Jordan, M., and Broderick, T. (2019).
\newblock A swiss army infinitesimal jackknife.
\newblock In Chaudhuri, K. and Sugiyama, M., editors, {\em Proceedings of
  Machine Learning Research}, volume~89 of {\em Proceedings of Machine Learning
  Research}, pages 1139--1147. PMLR.

\bibitem[Hardt et~al., 2016]{hardt-etal:2016}
Hardt, M., Recht, B., and Singer, Y. (2016).
\newblock Train faster, generalize better: Stability of stochastic gradient
  descent.
\newblock In {\em Proceedings of the 33rd International Conference on Machine
  Learning - Volume 48}, ICML’16, pages 1225--1234. JMLR.org.

\bibitem[Jiang et~al., 2008]{jiang-etal:2008}
Jiang, W., Varma, S., and Simon, R. (2008).
\newblock {Calculating Confidence Intervals for Prediction Error in Microarray
  Classification Using Resampling}.
\newblock {\em Statistical Applications in Genetics and Molecular Biology},
  7(1).

\bibitem[Kale et~al., 2011]{SK-RK-SV:2011}
Kale, S., Kumar, R., and Vassilvitskii, S. (2011).
\newblock Cross-validation and mean-square stability.
\newblock In {\em Proceedings of the Second Symposium on Innovations in
  Computer Science (ICS2011)}. Citeseer.

\bibitem[Kearns and Ron, 1999]{kearns-ron:1999}
Kearns, M. and Ron, D. (1999).
\newblock Algorithmic stability and sanity-check bounds for leave-one-out
  cross-validation.
\newblock {\em Neural Computation}, 11(6):1427--1453.

\bibitem[{Koh} et~al., 2019]{weikoh-etal:2019}
{Koh}, P.~W., {Ang}, K.-S., {Teo}, H. H.~K., and {Liang}, P. (2019).
\newblock {On the Accuracy of Influence Functions for Measuring Group Effects}.
\newblock In {\em Proceedings of the 32nd International Conference on Neural
  Information Processing Systems}, NIPS’19, pages 5254--5264.

\bibitem[Kumar et~al., 2013]{RK-ea:2013}
Kumar, R., Lokshtanov, D., Vassilvitskii, S., and Vattani, A. (2013).
\newblock Near-optimal bounds for cross-validation via loss stability.
\newblock In {\em International Conference on Machine Learning}, pages 27--35.

\bibitem[Kutin and Niyogi, 2002]{kutin-niyogi:2002}
Kutin, S. and Niyogi, P. (2002).
\newblock Almost-everywhere algorithmic stability and generalization error.
\newblock In {\em Proceedings of the Eighteenth Conference on Uncertainty in
  Artificial Intelligence}, UAI’02, pages 275--282, San Francisco, CA, USA.
  Morgan Kaufmann Publishers Inc.

\bibitem[LeDell et~al., 2015]{ledell2015computationally}
LeDell, E., Petersen, M., and van~der Laan, M. (2015).
\newblock Computationally efficient confidence intervals for cross-validated
  area under the roc curve estimates.
\newblock {\em Electronic Journal of Statistics}, 9(1):1583--1607.

\bibitem[Lei, 2019]{lei2019cross}
Lei, J. (2019).
\newblock Cross-validation with confidence.
\newblock {\em Journal of the American Statistical Association}, pages 1--20.

\bibitem[Lim et~al., 2000]{lim-etal:2000}
Lim, T.-S., Loh, W.-Y., and Shih, Y.-S. (2000).
\newblock A comparison of prediction accuracy, complexity, and training time of
  thirty-three old and new classification algorithms.
\newblock {\em Mach. Learn.}, 40(3):203–228.

\bibitem[Markatou et~al., 2005]{markatou:2005}
Markatou, M., Tian, H., Biswas, S., and Hripcsak, G. (2005).
\newblock Analysis of variance of cross-validation estimators of the
  generalization error.
\newblock {\em Journal of Machine Learning Research}, 6:1127--1168.

\bibitem[McNemar, 1947]{mcnemar:1947}
McNemar, Q. (1947).
\newblock Note on the sampling error of the difference between correlated
  proportions or percentages.
\newblock {\em Psychometrika}, 12:153--157.

\bibitem[Meyer, 1966]{meyer-book:1966}
Meyer, P.-A. (1966).
\newblock {\em Probability and Potentials}.
\newblock Blaisdell Publishing Co, N.Y.

\bibitem[Michiels et~al., 2005]{michiels-etal:2005}
Michiels, S., Koscielny, S., and Hill, C. (2005).
\newblock Prediction of cancer outcome with microarrays: a multiple random
  validation strategy.
\newblock {\em The Lancet}, 365(9458):488--492.

\bibitem[Nadeau and Bengio, 2003]{nadeau-bengio:2003}
Nadeau, C. and Bengio, Y. (2003).
\newblock Inference for the generalization error.
\newblock {\em Machine Learning}, 52(3):239--281.

\bibitem[Pedregosa et~al., 2011]{scikit-learn}
Pedregosa, F., Varoquaux, G., Gramfort, A., Michel, V., Thirion, B., Grisel,
  O., Blondel, M., Prettenhofer, P., Weiss, R., Dubourg, V., Vanderplas, J.,
  Passos, A., Cournapeau, D., Brucher, M., Perrot, M., and Duchesnay, E.
  (2011).
\newblock Scikit-learn: Machine learning in {P}ython.
\newblock {\em Journal of Machine Learning Research}, 12:2825--2830.

\bibitem[Pirracchio et~al., 2015]{pirracchio2015mortality}
Pirracchio, R., Petersen, M.~L., Carone, M., Rigon, M.~R., Chevret, S., and
  van~der Laan, M.~J. (2015).
\newblock {Mortality prediction in intensive care units with the Super ICU
  Learner Algorithm (SICULA): a population-based study}.
\newblock {\em The Lancet Respiratory Medicine}, 3(1):42--52.

\bibitem[Rad and Maleki, 2020]{rad-maleki:2020}
Rad, K.~R. and Maleki, A. (2020).
\newblock A scalable estimate of the out-of-sample prediction error via
  approximate leave-one-out cross-validation.
\newblock {\em Journal of the Royal Statistical Society: Series B (Statistical
  Methodology)}.

\bibitem[Steele, 1986]{steele:1986}
Steele, J.~M. (1986).
\newblock An {Efron-Stein} inequality for nonsymmetric statistics.
\newblock {\em Annals of Statistics}, 14(2):753--758.

\bibitem[Stephenson and Broderick, 2020]{stephenson-broderick:2020}
Stephenson, W. and Broderick, T. (2020).
\newblock Approximate cross-validation in high dimensions with guarantees.
\newblock In Chiappa, S. and Calandra, R., editors, {\em Proceedings of the
  Twenty Third International Conference on Artificial Intelligence and
  Statistics}, volume 108 of {\em Proceedings of Machine Learning Research},
  pages 2424--2434, Online. PMLR.

\bibitem[Stone, 1974]{Stone1974}
Stone, M. (1974).
\newblock Cross-validatory choice and assessment of statistical predictions.
\newblock {\em Journal of the Royal Statistical Society. Series B
  (Methodological)}, 36(2):111--147.

\bibitem[Wilson et~al., 2020]{wilson-etal:2020}
Wilson, A., Kasy, M., and Mackey, L. (2020).
\newblock Approximate cross-validation: Guarantees for model assessment and
  selection.
\newblock In Chiappa, S. and Calandra, R., editors, {\em Proceedings of the
  Twenty Third International Conference on Artificial Intelligence and
  Statistics}, volume 108 of {\em Proceedings of Machine Learning Research},
  pages 4530--4540, Online. PMLR.

\bibitem[Wilson, 1927]{wilson:1927}
Wilson, E.~B. (1927).
\newblock Probable inference, the law of succession, and statistical inference.
\newblock {\em Journal of the American Statistical Association},
  22(158):209--212.

\end{thebibliography}
\bibliographystyle{apalike}
\newpage
\appendix
\section{Proof of \cref{cv-asymp-linear}: Asymptotic linearity of $k$-fold CV}
\label{sec:proof-cv-asymp-linear-general}
We first prove a general asymptotic linearity result for repeated sample-splitting estimators.
Given a collection $A_n = \{(B_j, B'_j)\}_{j \in [J]}$ of index vector pairs such that for any pair $(B_j,B'_j)$ in $A_n$, $B_j$ and $B'_j$ are disjoint, and a scalar loss function $\rho_{n,j}(Z_{B_j'},Z_{B_j})$, define the 
\emph{cross-validation error} as
\begin{talign}
\Rhatn
    = 
    \frac{1}{J}
    \sum_{j=1}^J
    \rho_{n,j}(Z_{B_j'}, Z_{B_j})
\end{talign} 
and the \emph{multi-fold test error}
\begin{talign}
    \Rcondcvn
    = 
    \frac{1}{J}
    \sum_{j=1}^J
    \E[\rho_{n,j}(Z_{B_j'}, Z_{B_j})\mid Z_{B_j}].
\end{talign}
Note that similarly to the number of folds $k$ in cross-validation, $J$ can depend on the sample size $n$, but we write $J$ in place of $J_n$ to simplify our notation.
\begin{proposition}[Asymptotic linearity of CV]\label{cv-asymp-linear-general}
For any sequence of datapoints $(Z_i)_{i\geq 1}$,
\begin{talign}\label{eq:asylin-general}
\frac{\sqrt{n}}{\sigma_n}\left(\Rhatn-\Rcondcvn\right) 
- \frac{\sqrt{n}}{\sigma_n J}
\sum_{j=1}^J
    (\bar{\rho}_{n,j}(Z_{B_j'})
    - \E[\bar{\rho}_{n,j}(Z_{B_j'})])
    \toprob \Big(\text{resp.\,} \toL{q}\hspace{-0.08cm}\Big)\; 0
\end{talign}
for functions $\bar{\rho}_{n,1},\dots,\bar{\rho}_{n,J}$ with $\sigma_n^2\defeq \frac{1}{J}\Var(\sum_{j=1}^J \bar{\rho}_{n,j}(Z_{B_j'}))$ if and only if
\balignt\label{eq:asylincond-general}
\frac{\sqrt{n}}{\sigma_n J}
\sum_{j=1}^J
&\Big(
\rho_{n,j}(Z_{B_j'},Z_{B_j})
- \E\left[\rho_{n,j}(Z_{B_j'},Z_{B_j})\mid Z_{B_j}\right]\\
&-\left(\bar{\rho}_{n,j}(Z_{B_j'})
-\E\left[\bar{\rho}_{n,j}(Z_{B_j'})\right]\right)
\Big)
\toprob \Big(\text{resp.\,} \toL{q}\hspace{-0.08cm}\Big)\; 0
\ealignt
where the parenthetical convergence indicates that the same statement holds when both convergences in probability are replaced with convergences in $L^q$ for the same $q > 0$.
\end{proposition}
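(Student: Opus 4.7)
The plan is to observe that the two quantities whose convergence is in question---the one on the left-hand side of \eqref{eq:asylin-general} and the one on the left-hand side of \eqref{eq:asylincond-general}---are in fact \emph{identical} as random variables, so the ``if and only if'' collapses into an algebraic identity rather than any genuine limiting argument.

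First I would expand $\Rhatn - \Rcondcvn$ directly from the definitions:
\begin{talign*}
\Rhatn - \Rcondcvn
= \frac{1}{J}\sum_{j=1}^J \bigl(\rho_{n,j}(Z_{B_j'}, Z_{B_j}) - \E[\rho_{n,j}(Z_{B_j'}, Z_{B_j})\mid Z_{B_j}]\bigr).
\end{talign*}
Multiplying through by $\sqrt{n}/\sigma_n$ and subtracting the linear term $\frac{\sqrt{n}}{\sigma_n J}\sum_{j=1}^J(\bar{\rho}_{n,j}(Z_{B_j'})-\E[\bar{\rho}_{n,j}(Z_{B_j'})])$, I would then combine the two sums over $j$ into a single sum of the bracketed quantity
\begin{talign*}
\bigl(\rho_{n,j}(Z_{B_j'},Z_{B_j}) - \E[\rho_{n,j}(Z_{B_j'},Z_{B_j})\mid Z_{B_j}]\bigr)
- \bigl(\bar{\rho}_{n,j}(Z_{B_j'}) - \E[\bar{\rho}_{n,j}(Z_{B_j'})]\bigr),
\end{talign*}
which is exactly the summand in \eqref{eq:asylincond-general}. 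This establishes pointwise (hence distributional) equality of the two random variables.

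Once equality is in hand, the equivalence is immediate: convergence of any random variable to $0$ in probability (resp.\ in $L^q$) depends only on its distribution, so \eqref{eq:asylin-general} holds in one mode if and only if \eqref{eq:asylincond-general} holds in the same mode, and the parenthetical $L^q$ version is preserved for every $q>0$. There is essentially no obstacle to overcome; the only subtlety is being careful about which expectations are conditional on $Z_{B_j}$ versus unconditional, and verifying that the telescoping of the two centering terms matches exactly. Finally, \cref{cv-asymp-linear} is recovered as the special case $J = k$, $\rho_{n,j}(Z_{B_j'},Z_{B_j}) = \frac{k}{n}\sum_{i\in B_j'} h_n(Z_i,Z_{B_j})$, and $\bar{\rho}_{n,j}(Z_{B_j'}) = \frac{k}{n}\sum_{i\in B_j'}\bar{h}_n(Z_i)$, for which $\frac{1}{J}\Var(\sum_j \bar{\rho}_{n,j}(Z_{B_j'})) = \frac{1}{n}\Var(\sum_{i=1}^n \bar{h}_n(Z_i))$ by independence across folds.
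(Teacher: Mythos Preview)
Your proof of the proposition itself is correct and essentially identical to the paper's: both simply observe that the left-hand sides of \eqref{eq:asylin-general} and \eqref{eq:asylincond-general} are literally the same random variable, by expanding $\Rhatn-\Rcondcvn$ and rearranging.

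Your closing remark about recovering \cref{cv-asymp-linear}, however, contains an arithmetic slip. With $J=k$ and $\bar{\rho}_{n,j}(Z_{B_j'})=\frac{k}{n}\sum_{i\in B_j'}\bar h_n(Z_i)$, one has $\sum_{j}\bar{\rho}_{n,j}(Z_{B_j'})=\frac{k}{n}\sum_{i=1}^n \bar h_n(Z_i)$, so
\[
\textfrac{1}{J}\Var\!\Big(\sum_{j}\bar{\rho}_{n,j}(Z_{B_j'})\Big)=\textfrac{k}{n^2}\Var\!\Big(\sum_{i=1}^n \bar h_n(Z_i)\Big),
\]
not $\frac{1}{n}\Var(\sum_{i=1}^n \bar h_n(Z_i))$; this is pure algebra and has nothing to do with independence across folds (and indeed \cref{cv-asymp-linear} imposes no independence). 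The paper instead specializes with $J=n$, indexing by all pairs $(\ell,i)$ with $i\in B_\ell'$, so that each $B_j'$ is a single validation index and $\rho_{n,j}=h_n$, $\bar\rho_{n,j}=\bar h_n$; then $\sigma_n^2=\frac{1}{n}\Var(\sum_{i=1}^n \bar h_n(Z_i))$ matches \cref{cv-asymp-linear} exactly.
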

\begin{proof}
For each $(B_j,B'_j)\in A_n$, let 
\balignt
L_{j} = 
\rho_{n,j}(Z_{B_j'},Z_{B_j})
- \E[\rho_{n,j}(Z_{B_j'},Z_{B_j})\mid Z_{B_j}]
-\left(\bar{\rho}_{n,j}(Z_{B_j'})
-\E[\bar{\rho}_{n,j}(Z_{B_j'})]\right).
\ealignt
Then
\begin{talign}
\frac{\sqrt{n}}{\sigma_n}\left(\Rhatn-\Rcondcvn\right) 
&= 
\frac{\sqrt{n}}{\sigma_n J}
\sum_{j=1}^J
    (\rho_{n,j}(Z_{B_j'},Z_{B_j})
    - \E[\rho_{n,j}(Z_{B_j'},Z_{B_j})\mid Z_{B_j}]) \\
&= 
\frac{\sqrt{n}}{\sigma_n J}
\sum_{j=1}^J
    L_{j}
+
\frac{\sqrt{n}}{\sigma_n J}
\sum_{j=1}^J
    (\bar{\rho}_{n,j}(Z_{B_j'})
    - \E[\bar{\rho}_{n,j}(Z_{B_j'})]).
\end{talign}
The result now follows from the assumption that $\frac{\sqrt{n}}{\sigma_n J}
\sum_{j=1}^J
    L_{j}
    \toprob \Big(\text{resp.\,} \toL{q}\hspace{-0.08cm}\Big)\; 0$.
\end{proof}

\cref{cv-asymp-linear} now follows directly from \cref{cv-asymp-linear-general} with the choices:
\begin{itemize}
    \item $A_n=\{(B_{\ell},i):\ell\in [k],i\in B_{\ell}'\}$,
    \item for all $j \in [J]$,  $\rho_{n,j}(Z_{i}, Z_{B_{\ell}})=h_n(Z_i,Z_{B_{\ell}})$ and $\bar{\rho}_{n,j}(Z_{i})=\bar{h}_n(Z_i)$ for the associated $\ell\in [k]$ and $i\in B_{\ell}'$.
\end{itemize}
Note that for these choices, we have $J=|A_n|=\sum_{\ell=1}^k|B_{\ell}'|=n$.
\section{Proof of \cref{iid-cv-normal}: Asymptotic normality of $k$-fold CV with i.i.d.\ data}\label{sec:proof-iid-cv-normal}
\cref{iid-cv-normal} follows from the %
next more general result, which establishes the asymptotic normality of $k$-fold CV with independent (not necessarily identically distributed) data.
\begin{theorem}[Asymptotic normality of $k$-fold CV with independent data]\label{independent-cv-normal}

Under the notation of \cref{cv-asymp-linear}, suppose that the datapoints $(Z_i)_{i\geq 1}$ are independent.
If the triangular array $\left(\bar{h}_n(Z_i)-\E\left[\bar{h}_n(Z_i)\right]\right)_{n,i}$ satisfies \emph{Lindeberg's condition},
\begin{talign}\label{eq:Lindeberg}
\forall \varepsilon>0,
\frac{1}{n \sigma_n^2} \sum_{i=1}^n \E\left[\left(\bar{h}_n(Z_i)-\E\left[\bar{h}_n(Z_i)\right]\right)^2 \indic{|\bar{h}_n(Z_i)-\E\left[\bar{h}_n(Z_i)\right]|>\varepsilon\, \sigma_n \sqrt{n}}\right] \to 0,
\end{talign}
then
\begin{talign}
\frac{1}{\sigma_n \sqrt{n}} \sum_{i=1}^n \left(\bar{h}_n(Z_i)-\E\left[\bar{h}_n(Z_i)\right]\right)
\todist \N(0,1).
\end{talign}
Additionally, if \cref{eq:asylincond} holds in probability, then
\begin{talign}
\frac{\sqrt{n}}{\sigma_n}\left(\Rhat-\Rcondcv\right)
\todist \N(0,1).
\end{talign}

\end{theorem}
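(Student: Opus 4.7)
\begin{proof-sketch}
The plan is to derive the first convergence via the Lindeberg--Feller central limit theorem for triangular arrays and then obtain the second via \cref{cv-asymp-linear} together with Slutsky's theorem.

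First, I would set $X_{n,i} \defeq (\bar{h}_n(Z_i) - \E[\bar{h}_n(Z_i)])/(\sigma_n\sqrt{n})$ for $i \in [n]$. Since $(Z_i)_{i \geq 1}$ are independent, for each fixed $n$ the variables $X_{n,1},\dots,X_{n,n}$ are independent and centered. By the definition $\sigma_n^2 = \frac{1}{n}\Var(\sum_{i=1}^n \bar{h}_n(Z_i)) = \frac{1}{n}\sum_{i=1}^n \Var(\bar{h}_n(Z_i))$, where the second equality uses independence, we have $\sum_{i=1}^n \Var(X_{n,i}) = 1$. The hypothesized condition \cref{eq:Lindeberg} is exactly Lindeberg's condition for the triangular array $(X_{n,i})_{n,i}$, so the Lindeberg--Feller CLT yields
\balignt
\frac{1}{\sigma_n\sqrt{n}}\sum_{i=1}^n\left(\bar{h}_n(Z_i)-\E[\bar{h}_n(Z_i)]\right) = \sum_{i=1}^n X_{n,i} \todist \N(0,1).
\ealignt

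For the second claim, by \cref{cv-asymp-linear}, the in-probability asymptotic linearity condition \cref{eq:asylincond} is equivalent to
\balignt
\frac{\sqrt{n}}{\sigma_n}(\Rhat-\Rcondcv) - \frac{1}{\sigma_n\sqrt{n}}\sum_{i=1}^n\left(\bar{h}_n(Z_i)-\E[\bar{h}_n(Z_i)]\right) \toprob 0.
\ealignt
Combining this with the CLT established above and applying Slutsky's theorem (convergence in distribution plus convergence in probability to a constant preserves weak convergence under addition) gives $\frac{\sqrt{n}}{\sigma_n}(\Rhat-\Rcondcv) \todist \N(0,1)$.

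There is no real obstacle here beyond bookkeeping: the only subtle point is that one must invoke the independence of the $Z_i$ (not \iid) to rewrite $\sigma_n^2$ as an average of individual variances, which is what forces the Lindeberg variance-sum normalization to equal exactly $1$. Once that is noted, the remainder of the argument is a direct application of Lindeberg--Feller followed by Slutsky, with \cref{cv-asymp-linear} serving as the bridge between the CV error and the sum of independent single-datapoint functionals.
\end{proof-sketch}
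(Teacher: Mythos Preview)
Your proposal is correct and follows essentially the same approach as the paper: apply the Lindeberg--Feller CLT to the triangular array (using independence to ensure the row variances sum to $1$), then combine \cref{cv-asymp-linear} with Slutsky's theorem for the second claim. The paper's proof is more terse but the logical structure is identical.
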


\begin{proof}
By independence of the datapoints $(Z_i)_{i \geq 1}$, $(\bar{h}_n(Z_i))_{n,i}$ are independent, and $n \sigma_n^2=\Var(\sum_{i=1}^n \bar{h}_n(Z_i))$. 
Under Lindeberg's condition, we get the first convergence result thanks to Lindeberg's Central Limit Theorem (see \citep[Thm.\ 27.2]{billingsley-book:1995}). Additionally, if assumption \cref{eq:asylincond} holds, we apply \cref{cv-asymp-linear} and Slutsky's theorem to get the second convergence result.
\end{proof}

If the $(Z_i)_{i \geq 1}$ are \iid, then $\sigma_n^2=\frac{1}{n}\Var(\sum_{i=1}^n \bar{h}_n(Z_i))=\Var(\bar{h}_n(Z_0))$, and Lindeberg's condition \cref{eq:Lindeberg} reduces to
\begin{talign}
\forall \varepsilon>0,
\frac{1}{\sigma_n^2} \E\left[\left(\bar{h}_n(Z_0)-\E\left[\bar{h}_n(Z_0)\right]\right)^2 \indic{|\bar{h}_n(Z_0)-\E\left[\bar{h}_n(Z_0)\right]|>\varepsilon \, \sigma_n \sqrt{n}}\right] \to 0.
\end{talign}
We will show that this follows from the assumed uniform integrability of the sequence $X_n = (\bar{h}_n(Z_0)-\E[\bar{h}_n(Z_0)])^2/\sigma_n^2$.  
Indeed, for any $\varepsilon>0$ and all $n$,
\begin{talign}
\E[X_n \indic{X_n>n \varepsilon^2}]
&\leq \sup_m \E[X_m \indic{X_m>n \varepsilon^2}] 
\to 0,
\end{talign}
as $n \to \infty$ by the uniform integrability of the sequence of $X_n$.
\cref{iid-cv-normal} therefore follows from \cref{independent-cv-normal}.

\section{Proof of \cref{asymp-from-lstability}: Approximate linearity from loss stability}\label{sec:proof-asymp-from-lstability}
\cref{asymp-from-lstability} will follow from the following more general result.

\begin{theorem}[Approximate linearity from loss stability]
\label{asymp-from-lstability-gen}
Under the notation of \cref{sec:proof-cv-asymp-linear-general}, with $\{(B_j, B'_j)\}_{j \in [J]}$ a collection of disjoint index vector pairs where $(B_j')_{j \in [J]}$ is a pairwise disjoint family, and $\rho_{n,j}(Z_{B_j'},Z_{B_j}) \defeq \frac{1}{|B_j'|} \sum_{i\in B_j'} h_{n,j}(Z_i,Z_{B_j})$, suppose that the datapoints $(Z_i)_{i\geq 1}$ are \iid copies of a random element $Z_0$. Define $\rho_{n,j}'(Z_{B_j'},Z_{B_j}) \defeq \rho_{n,j}(Z_i,Z_{B_j}) - \E[\rho_{n,j}(Z_i,Z_{B_j})\mid Z_{B_j}]$ and $\rho_{n,j}''(Z_{B_j'},Z_{B_j}) \defeq \rho_{n,j}'(Z_i,Z_{B_j}) - \E[\rho_{n,j}'(Z_i,Z_{B_j})\mid Z_i]$.
Then
\balignt
\E[
(
\frac{1}{J}\sum_{j=1}^J \rho_{n,j}''(Z_{B_j'},Z_{B_j})
)^2
]
\leq
\frac{1}{J^2}
&\Big(
\sum_{j\neq j'}
\sqrt{
\lstab(h_{n,j})
\lstab(h_{n,j'})
} \\
&+
\sum_{j=1}^J
\frac{1}{|B_j'|}
\half |B_{j}| \lstab(h_{n,j})
\Big).
\label{eq:boundforasylincondgen}
\ealignt
\end{theorem}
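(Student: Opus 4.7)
My plan is to split the square into diagonal and off-diagonal pieces,
$\E[(\frac{1}{J}\sum_j\rho''_{n,j})^2] = \frac{1}{J^2}\sum_j\E[(\rho''_{n,j})^2] + \frac{1}{J^2}\sum_{j\neq j'}\E[\rho''_{n,j}\rho''_{n,j'}]$,
and to bound each separately. The whole argument hinges on two centering identities for the doubly-centered loss $h''_{n,j}$: namely $\E_{Z_0}[h''_{n,j}(Z_0,z)]=0$ for every fixed training value $z$, and $\E_{Z_{B_j}}[h''_{n,j}(z_0,Z_{B_j})]=0$ for every fixed test value $z_0$, both of which follow directly from the definitions of $h'_{n,j}$ and $h''_{n,j}$ together with the independence of $Z_0$ from $Z_{B_j}$.

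For each diagonal term, I would expand $\E[(\rho''_{n,j})^2]$ as a double sum over $(i,i')\in B'_j\times B'_j$ and observe that, conditional on $Z_{B_j}$, the variables $Z_i$ and $Z_{i'}$ are independent and each factor has conditional mean zero; hence all $i\neq i'$ cross-terms drop out, reducing $\E[(\rho''_{n,j})^2]$ to $\frac{1}{|B'_j|}\E[(h''_{n,j}(Z_0,Z_{B_j}))^2]$. Recognizing this last quantity as $\frac{1}{|B'_j|}\E[\Var(h'_{n,j}(Z_0,Z_{B_j})\mid Z_0)]$ and applying the Efron--Stein inequality conditional on $Z_0$ to $h'_{n,j}(Z_0,\cdot)$ viewed as a function of the $|B_j|$ iid coordinates of $Z_{B_j}$ then produces the bound $\frac{|B_j|}{2}\msstab(h'_{n,j})=\frac{|B_j|}{2}\lstab(h_{n,j})$, giving the desired diagonal bound $\frac{|B_j|}{2|B'_j|}\lstab(h_{n,j})$.

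The off-diagonal bound is the main obstacle, since a naive Cauchy--Schwarz applied to $\E[\rho''_{n,j}\rho''_{n,j'}]$ leaves an unwanted ratio $\sqrt{|B_j||B_{j'}|/(|B'_j||B'_{j'}|)}$ that the stated bound does not carry. To remove it, I would expand $\E[\rho''_{n,j}\rho''_{n,j'}]$ as a double sum over $(i,i')\in B'_j\times B'_{j'}$ and do a case analysis on whether $i\in B_{j'}$ and $i'\in B_j$. If either inclusion fails, conditioning on everything except the ``missing'' index and invoking the corresponding centering identity annihilates the summand. When both $i\in B_{j'}$ and $i'\in B_j$, introduce independent copies $Z'_i,Z'_{i'}$ and set $\tilde A = h''_{n,j}(Z_i,Z_{B_j}^{\backslash i'})$ and $\tilde B = h''_{n,j'}(Z_{i'},Z_{B_{j'}}^{\backslash i})$. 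Expanding $AB=\tilde A\tilde B + \tilde A(B-\tilde B) + (A-\tilde A)\tilde B + (A-\tilde A)(B-\tilde B)$, the same centering identities, now applied after conditioning on everything except $Z_i$ or $Z_{i'}$ (or both), force the three mixed cross terms to vanish. Because the $\E[\cdot\mid Z_0]$ centering baked into $h''$ is unaffected by resampling any training coordinate, $A-\tilde A$ collapses to the stability increment $h'_{n,j}(Z_i,Z_{B_j})-h'_{n,j}(Z_i,Z_{B_j}^{\backslash i'})$, whose mean square equals $\lstab(h_{n,j})$ by iid-ness, and symmetrically $\E[(B-\tilde B)^2]=\lstab(h_{n,j'})$. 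Cauchy--Schwarz on the surviving term $\E[(A-\tilde A)(B-\tilde B)]$ then yields $|\E[h''_{n,j}(Z_i,Z_{B_j})h''_{n,j'}(Z_{i'},Z_{B_{j'}})]|\le\sqrt{\lstab(h_{n,j})\lstab(h_{n,j'})}$; summing over the at most $|B'_j||B'_{j'}|$ surviving pairs and dividing by $|B'_j||B'_{j'}|$ gives the claimed off-diagonal bound, which combined with the diagonal estimate completes the proof.
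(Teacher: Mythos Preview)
Your proposal is correct and is essentially the paper's proof: the paper establishes $\E[A\tilde B]=\E[\tilde A B]=\E[\tilde A\tilde B]=0$ via the same centering identities and subtracts them to obtain $\E[AB]=\E[(A-\tilde A)(B-\tilde B)]$, which is algebraically your four-term expansion with three vanishing pieces, while the diagonal treatment (kill $i\neq i'$ crosses by conditional independence given $Z_{B_j}$, then apply Efron--Stein conditional on $Z_0$) is identical. One small imprecision worth tightening: for asymmetric $h_{n,j}$ the single-position increment $\E[(A-\tilde A)^2]$ need not equal $\lstab(h_{n,j})$---only its average over swap positions does---so rather than bounding each $(i,i')$ term by $\sqrt{\lstab(h_{n,j})\lstab(h_{n,j'})}$ and summing, the paper applies a second Cauchy--Schwarz/Jensen step to the double sum over $(i,i')$ before identifying the resulting averages with $\msstab(h'_{n,j})=\lstab(h_{n,j})$.
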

\begin{proof}

Define $h_{n,j}'$ and $h_{n,j}''$ as:

$h_{n,j}'(Z_i,Z_{B_j}) \defeq h_{n,j}(Z_i,Z_{B_j}) - \E[h_{n,j}(Z_i,Z_{B_j})\mid Z_{B_j}]$,

$h_{n,j}''(Z_i,Z_{B_j}) \defeq h_{n,j}'(Z_i,Z_{B_j}) - \E[h_{n,j}'(Z_i,Z_{B_j})\mid Z_i]$.

Therefore, we have $\rho_{n,j}'(Z_{B_j'},Z_{B_j}) = \frac{1}{|B_j'|} \sum_{i\in B_j'} h_{n,j}'(Z_i,Z_{B_j})$ and $\rho_{n,j}''(Z_{B_j'},Z_{B_j}) = \frac{1}{|B_j'|} \sum_{i\in B_j'} h_{n,j}''(Z_i,Z_{B_j})$.

Thus
\balignt
(
\frac{1}{J}\sum_{j=1}^J \rho_{n,j}''(Z_{B_j'},Z_{B_j})
)^2
&=
\frac{1}{J^2}\sum_{j,j'=1}^J \rho_{n,j}''(Z_{B_{j}'},Z_{B_{j}})
\rho_{n,j'}''(Z_{B_{j'}'},Z_{B_{j'}}) \\
&=
\frac{1}{J^2}\sum_{j,j'=1}^J
\frac{1}{|B_{j}'|}\frac{1}{|B_{j'}'|}
\sum_{i\in B_{j}'}\sum_{{i'}\in B_{j'}'}
h_{n,j}''(Z_{i},Z_{B_{j}})
h_{n,j'}''(Z_{i'},Z_{B_{j'}}).
\ealignt

In what follows, $Z_{B_{j'}}^{\backslash{i}}$ is $Z_{B_{j'}}$ with $Z_i$ replaced by $Z_0'$, an \iid copy of $Z_0$, independent of $(Z_i)_{i\geq 1}$. Note that if $i \notin B_{j'}$, $Z_{B_{j'}}^{\backslash{i}}$ is just $Z_{B_{j'}}$. We similarly define $Z_{B_{j}}^{\backslash{i'}}$.

If $j\neq j'$, we have $\E_{Z_i}[\sum_{i\in B_j'}\sum_{i' \in B_{j'}'} h_{n,j}''(Z_{i},Z_{B_{j}}) h_{n,j'}''(Z_{i'},Z_{B_{j'}}^{\backslash{i}})] = 0$, because (i) $h_{n,j}''(Z_{i},Z_{B_{j}})$ and $h_{n,j'}''(Z_{i'},Z_{B_{j'}}^{\backslash{i}})$ are conditionally independent given everything but $Z_i$, and (ii) $\E_{Z_i}[h_{n,j}''(Z_{i},Z_{B_{j}})] = 0$.

Similarly, if $j\neq j'$,
\balignt
\E_{Z_{i'}}[\sum_{i\in B_j'}\sum_{i' \in B_{j'}'} h_{n,j}''(Z_{i},Z_{B_{j}}^{\backslash{i'}}) h_{n,j'}''(Z_{i'},Z_{B_{j'}})]
= 0,
\ealignt
\balignt
\E_{Z_{i'}}[\sum_{i\in B_j'}\sum_{i' \in B_{j'}'} h_{n,j}''(Z_{i},Z_{B_{j}}^{\backslash{i'}}) h_{n,j'}''(Z_{i'},Z_{B_{j'}}^{\backslash{i}})]
= 0.
\ealignt

Therefore, if $j\neq j'$,
\balignt
&\E[
\frac{1}{|B_{j}'|}\frac{1}{|B_{j'}'|}
\sum_{i\in B_j'}\sum_{i' \in B_{j'}'} h_{n,j}''(Z_{i},Z_{B_{j}})
h_{n,j'}''(Z_{i'},Z_{B_{j'}})] \\
&=
\E[
\frac{1}{|B_{j}'|}\frac{1}{|B_{j'}'|}
\sum_{i\in B_j'}\sum_{i' \in B_{j'}'}
\big((
h_{n,j}''(Z_{i},Z_{B_{j}})
- h_{n,j}''(Z_{i},Z_{B_{j}}^{\backslash {i'}})
) \\
& \hspace{4.35cm} \times
(
h_{n,j'}''(Z_{i'},Z_{B_{j'}})
- h_{n,j'}''(Z_{i'},Z_{B_{j'}}^{\backslash{i}})
)\big)
] \\
&=
\frac{1}{|B_{j}'|}\frac{1}{|B_{j'}'|}
\sum_{i\in B_j'}\sum_{i' \in B_{j'}'}
\E\big[
\big((
h_{n,j}''(Z_{i},Z_{B_{j}})
- h_{n,j}''(Z_{i},Z_{B_{j}}^{\backslash {i'}})
) \\
& \hspace{4.35cm} \times
(
h_{n,j'}''(Z_{i'},Z_{B_{j'}})
- h_{n,j'}''(Z_{i'},Z_{B_{j'}}^{\backslash{i}})
)\big)
\big]\\
&\leq
\frac{1}{|B_{j}'|}\frac{1}{|B_{j'}'|}
\sum_{i\in B_j'}\sum_{i' \in B_{j'}'}
\sqrt{\E\left[
(
h_{n,j}''(Z_{i},Z_{B_{j}})
- h_{n,j}''(Z_{i},Z_{B_{j}}^{\backslash {i'}})
)^2\right]} \\
& \hspace{4.35cm} \times
\sqrt{\E\left[
(
h_{n,j'}''(Z_{i'},Z_{B_{j'}})
- h_{n,j'}''(Z_{i'},Z_{B_{j'}}^{\backslash {i}})
)^2\right]}\\
&\leq
\Big(
\frac{1}{|B_{j}'|}\frac{1}{|B_{j'}'|}
\sum_{i\in B_j'}\sum_{i' \in B_{j'}'}
\E\left[
(
h_{n,j}''(Z_{i},Z_{B_{j}})
- h_{n,j}''(Z_{i},Z_{B_{j}}^{\backslash {i'}})
)^2\right] \\
& \hspace{4.35cm} \times
\E\left[
(
h_{n,j'}''(Z_{i'},Z_{B_{j'}})
- h_{n,j'}''(Z_{i'},Z_{B_{j'}}^{\backslash {i}})
)^2\right]
\Big)^{1/2}\\
&=
\Big(
\frac{1}{|B_{j}'|}\frac{1}{|B_{j'}'|}
\sum_{i\in B_j'}\sum_{i' \in B_{j'}'}
\E\left[
(
h_{n,j}''(Z_{0},Z_{B_{j}})
- h_{n,j}''(Z_{0},Z_{B_{j}}^{\backslash {i'}})
)^2\right] \\
& \hspace{4.35cm} \times
\E\left[
(
h_{n,j'}''(Z_{0},Z_{B_{j'}})
- h_{n,j'}''(Z_{0},Z_{B_{j'}}^{\backslash {i}})
)^2\right]
\Big)^{1/2}\\
&=
\Big(
\frac{1}{|B_{j'}'|}
\sum_{i'\in B_{j'}'}
\E\left[
(
h_{n,j}''(Z_{0},Z_{B_{j}})
- h_{n,j}''(Z_{0},Z_{B_{j}}^{\backslash {i'}})
)^2\right]
\Big)^{1/2}\\
& \hspace{4.35cm} \times
\Big(
\frac{1}{|B_{j}'|}
\sum_{i\in B_{j}'}
\E\left[
(
h_{n,j'}''(Z_{0},Z_{B_{j'}})
- h_{n,j'}''(Z_{0},Z_{B_{j'}}^{\backslash {i}})
)^2\right]
\Big)^{1/2}\\
&= \sqrt{\msstab(h_{n,j}'')\msstab(h_{n,j'}'')}
= \sqrt{\msstab(h_{n,j}')\msstab(h_{n,j'}')}
=\sqrt{\lstab(h_{n,j})\lstab(h_{n,j'})},
\ealignt
where we have applied Cauchy--Schwarz inequality and Jensen's inequality, used that the datapoints are \iid copies of $Z_0$ and applied the definitions of mean-square stability and loss stability.
If $j=j'$ and $i\neq i'$, then $\E_{Z_i}[h_{n,j}''(Z_{i},Z_{B_{j}})
h_{n,j}''(Z_{i'},Z_{B_{j}})] = 0$.

If $j=j'$ and $i=i'$, then $\E[h_{n,j}''(Z_{i},Z_{B_{j}})^2] = \E[\Var(h_{n,j}'(Z_{i},Z_{B_{j}})\mid Z_{i})].$

We now state a conditional application of a version of the Efron--Stein inequality due to \citet{steele:1986}.
\begin{lemma}[Conditional Efron--Stein inequality]\label{efron-stein}
Suppose that, given $W$, the random vectors $X_{1:m}$ and $X'_{1:m}$ are conditionally independent and identically distributed and that the components of $X_{1:m}$ are conditionally independent given $W$.  Then, for any suitably
measurable function $f$
\balignt
\half \E[(f(X_{1:m},W) - f(X'_{1:m},W))^2\mid W]
&= \Var(f(X_{1:m},W) \mid W)\\
&\leq \half \sum_{i=1}^m \E[(f(X_{1:m},W) - f(X^{\backslash i}_{1:m},W))^2\mid W]
\ealignt
where, for each $i\in[m]$, $X^{\backslash i}_{1:m}$ represents $X_{1:m}$ with $X_i$ replaced with $X_i'$.
\end{lemma}

Using \cref{efron-stein}, we get $\E[\Var(h_{n,j}'(Z_{i},Z_{B_{j}})\mid Z_{i})] \leq \half |B_{j}| \msstab(h_{n,j}') = \half |B_{j}| \lstab(h_{n,j})$.

Combining everything, we get
\balignt
\E[
(
\frac{1}{J}\sum_{j=1}^J \rho_{n,j}''(Z_{B_j'},Z_{B_j})
)^2
]
\leq
\frac{1}{J^2}
&\Big(
\sum_{j\neq j'}
\sqrt{
\lstab(h_{n,j})
\lstab(h_{n,j'})
} \\
&+
\sum_{j=1}^J
\frac{1}{|B_j'|}
\half |B_{j}| \lstab(h_{n,j})
\Big).
\ealignt
\end{proof}

In the case of $k$-fold cross-validation with equal-sized folds and \iid data, the left-hand side of \cref{eq:boundforasylincondgen} becomes
\balignt
\Var\left(\frac{1}{n}\sum_{j=1}^k\sum_{i\in B_j'} (h'_n(Z_i,Z_{B_j})-\E\left[h'_n(Z_i,Z_{B_j})\mid Z_i\right])\right),
\ealignt
and its right-hand side simplifies to
\balignt
\frac{1}{k^2}
\Big(
k(k-1)
\sqrt{
\lstab(h_n)^2}
+
k
\frac{k}{n} \half n(1-\frac{1}{k}) \lstab(h_n)
\Big)
= \frac{3}{2}(1-\frac{1}{k}) \lstab(h_n).
\ealignt

Hence,
\begin{talign}
&\frac{1}{n}\Var\left(\frac{1}{\sqrt{n}}\sum_{j=1}^k\sum_{i\in B_j'} (h'_n(Z_i,Z_{B_j})-\E\left[h'_n(Z_i,Z_{B_j})\mid Z_i\right])\right) \le \frac{3}{2}\left(1-\frac{1}{k}\right)\lstab(h_n).
\end{talign}

We then note that the asymptotic linearity condition \cref{eq:asylincond} in $L^2$-norm with the choice $\bar{h}_n(z)=\E\left[h_n(z,Z_{1:n(1-1/k)})\right]$ can be written as 
\balignt \frac{1}{\sigma_n\sqrt{n}}\sum_{j=1}^k\sum_{i\in B_j'} (h'_n(Z_i,Z_{B_j})-\E\left[h'_n(Z_i,Z_{B_j})\mid Z_i\right]) \toL{2} 0,\ealignt
which is implied by \cref{eq:suffcondforasylincond} when $\lstab(h_n)=o(\sigma_n^2/n)$.
Therefore, \cref{asymp-from-lstability} follows from \cref{asymp-from-lstability-gen}.
\section{Proof of \cref{asymp-from-cond-var}: Asymptotic linearity from conditional variance convergence}\label{sec:proof-asymp-from-cond-var}
\cref{asymp-from-cond-var} will follow from the following more general statement. 
\begin{theorem}[Asymptotic linearity from conditional variance convergence]\label{asymp-from-cond-var-uneven}
Under the notation of \cref{cv-asymp-linear}, suppose that the datapoints $(Z_i)_{i\geq 1}$ are i.i.d.\ copies of a random element $Z_0$.
If a function $\bar{h}_n$ satisfies
\balignt\label{eq:variance_convergence_assump}
\max(k^{q-1},1)\sum_{j=1}^k
    \Earg{\Big(\frac{|B_j'|}{n \sigma_n^2}
    \Var_{Z_0}\left(h_n(Z_0,Z_{B_j})-\bar{h}_n(Z_0)\right)\Big)^{q/2}} \to 0
\ealignt
for some $q \in (0,2]$, then $\bar{h}_n$ satisfies the $L^q$ asymptotic linearity condition \cref{eq:asylincond}.
If a function $\bar{h}_n$ satisfies
\balignt\label{eq:variance_convergence_assump_prob}
\sum_{j=1}^k \Earg{\min\Bigg(1, 
    \frac{\sqrt{|B_j'|}}{\sigma_n\sqrt{n}}
    \sqrt{\Var_{Z_0}\left(h_n(Z_0,Z_{B_j})-\bar{h}_n(Z_0)\right)}
    \Bigg)} 
    \to 0,
\ealignt
then $\bar{h}_n$ satisfies the in-probability asymptotic linearity condition \cref{eq:asylincond}.

\end{theorem}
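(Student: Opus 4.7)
The plan is to condition on each training set $Z_{B_j}$ and exploit the fact that, under the iid assumption, the validation datapoints $\{Z_i : i \in B_j'\}$ are iid copies of $Z_0$ that are independent of $Z_{B_j}$. Let me denote by $T_j \defeq \sum_{i \in B_j'} [h_n(Z_i, Z_{B_j}) - \E[h_n(Z_i, Z_{B_j}) \mid Z_{B_j}] - (\bar{h}_n(Z_i) - \E[\bar{h}_n(Z_i)])]$ the inner fold sum and $V_j \defeq \Var_{Z_0}(h_n(Z_0, Z_{B_j}) - \bar{h}_n(Z_0))$ the variance appearing in the hypothesis. The key observation is that conditionally on $Z_{B_j}$, the summands making up $T_j$ are iid and centered with variance $V_j$, so $\E[T_j \mid Z_{B_j}] = 0$ and $\E[T_j^2 \mid Z_{B_j}] = |B_j'| V_j$. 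Let $S_n \defeq \frac{1}{\sigma_n \sqrt{n}} \sum_{j=1}^k T_j$ be the quantity we want to push to zero.

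For the $L^q$ statement, I would bound $\E[|S_n|^q]$ in two steps. First, to handle the outer sum over $j$, I would invoke $|\cdot|^q$-subadditivity when $q \le 1$ and the power-mean inequality $(\sum_j a_j)^q \le k^{q-1} \sum_j a_j^q$ when $q \ge 1$ to obtain the unified bound $\E[|S_n|^q] \le \frac{\max(k^{q-1}, 1)}{\sigma_n^q n^{q/2}} \sum_{j=1}^k \E[|T_j|^q]$. Next, since $q/2 \le 1$, the function $x \mapsto x^{q/2}$ is concave on $[0,\infty)$, and conditional Jensen gives $\E[|T_j|^q \mid Z_{B_j}] \le (\E[T_j^2 \mid Z_{B_j}])^{q/2} = (|B_j'| V_j)^{q/2}$. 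Taking expectations and combining yields $\E[|S_n|^q] \le \max(k^{q-1}, 1) \sum_{j=1}^k \E[(|B_j'| V_j/(n \sigma_n^2))^{q/2}]$, which is exactly the hypothesis \cref{eq:variance_convergence_assump} and therefore tends to zero.

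For the in-probability version I would use the standard equivalence $X_n \toprob 0 \iff \E[\min(|X_n|, 1)] \to 0$ together with the subadditivity $\min(\sum_j a_j, 1) \le \sum_j \min(a_j, 1)$ (valid for $a_j \ge 0$) to reduce to per-fold bounds: $\E[\min(|S_n|, 1)] \le \sum_j \E[\min(|T_j|/(\sigma_n\sqrt{n}), 1)]$. Applying conditional Jensen's inequality to the concave function $t \mapsto \min(t, 1)$ and then the concavity of $\sqrt{\cdot}$, I would bound $\E[\min(|T_j|/(\sigma_n\sqrt{n}), 1) \mid Z_{B_j}] \le \min(\E[|T_j| \mid Z_{B_j}]/(\sigma_n\sqrt{n}), 1) \le \min(\sqrt{|B_j'| V_j/(n\sigma_n^2)}, 1)$, which upon taking expectations matches hypothesis \cref{eq:variance_convergence_assump_prob} term by term.

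The only subtlety is calibrating the outer-sum inequality to each regime of $q$ so that the prefactor $\max(k^{q-1}, 1)$ emerges cleanly rather than through a wasteful bound; once that is done, the per-fold calculation is essentially a second-moment computation made available by conditioning on $Z_{B_j}$, which is exactly why this route tolerates much weaker moment assumptions than the loss-stability route of \cref{asymp-from-lstability}.
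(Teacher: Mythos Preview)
Your proposal is correct and follows essentially the same approach as the paper: bound the outer sum over folds via subadditivity (for $q\le 1$ and the $\min(\cdot,1)$ case) or the power-mean inequality (for $q\in(1,2]$), then apply conditional Jensen with the concavity of $x\mapsto x^{q/2}$ (resp.\ $x\mapsto\min(x,1)$) to reduce each fold term to the conditional second moment $|B_j'|V_j$. The paper packages the $q\le 1$ and in-probability cases together via a generic non-decreasing concave subadditive $\psi$, while you organize the $L^q$ argument by splitting on $q\le 1$ versus $q>1$; the content is the same.
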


\begin{proof}
In the notation of \cref{cv-asymp-linear},
for each $j\in [k]$, let 
\balignt
L_{j} = 
\frac{1}{|B_j'|} 
\sum_{i\in B_j'} 
(h_n\left(Z_i,Z_{B_j}\right)
-\bar{h}_n(Z_i))
- \E_{Z_0}[h_n\left(Z_0,Z_{B_j}\right)
-\bar{h}_n(Z_0)].
\ealignt
We first note that for any non-decreasing concave $\psi$ satisfying the triangle inequality, we have
\balignt
\Earg{
\psi\Bigg(\Bigg|
\frac{1}{\sigma_n\sqrt{n}}
\sum_{j=1}^k
    |B_j'| L_{j}
    \Bigg|\Bigg)}
&\leq
\Earg{
\psi\Bigg(
\frac{1}{\sigma_n\sqrt{n}}
\sum_{j=1}^k
    |B_j'| |L_{j}|
\Bigg)} \\
&\leq
\sum_{j=1}^k 
\Earg{
\psi\Bigg(
\frac{1}{\sigma_n\sqrt{n}}
    |B_j'| |L_{j}|
\Bigg)} \\
&=
\sum_{j=1}^k 
\Earg{
\Esubarg{Z_{B_j'}}{
\psi\Bigg(
\frac{1}{\sigma_n\sqrt{n}}
    |B_j'| |L_{j}|
\Bigg)}} \\
&\leq
\sum_{j=1}^k 
\Earg{
\psi\Bigg(
\frac{1}{\sigma_n\sqrt{n}}
    |B_j'| \Esubarg{Z_{B_j'}}{|L_{j}|}
\Bigg)} \\
&\leq
\sum_{j=1}^k 
\Earg{
\psi\Bigg(
\frac{1}{\sigma_n\sqrt{n}}
    |B_j'| \sqrt{\Varsubarg{Z_{B_j'}}{L_{j}}}
\Bigg)} \\
&=
\sum_{j=1}^k 
\Earg{
\psi\Bigg(
\frac{\sqrt{|B_j'|}}{\sigma_n\sqrt{n}}
     \sqrt{\Var_{Z_0}\left(h_n(Z_0,Z_{B_j})-\bar{h}_n(Z_0)\right)}
\Bigg)},
\ealignt
where we have applied the triangle inequality twice,
the tower property once, and Jensen's inequality twice.
The advertised $L^q$ result for $q \in (0,1]$ now follows by taking $\psi(x) = x^q$, and the in-probability result follows by taking $\psi(x) = \min(1, x)$ and invoking the following lemma.

\begin{lemma}\label{dudley}
For any sequence of random variables $(X_n)_{n\geq 1}$, $X_n\toprob 0$ if and only if $\E[\psi(|X_n|)]\to 0$, where $\psi(x)=\min(1,x)$.
\end{lemma}
\begin{proof}
If $X_n\toprob 0$, then as $X_n\todist 0$ and $\psi$ is bounded and continuous for nonnegative $x$, $\E[\psi(|X_n|)]\to 0$.
Now suppose $\E[\psi(|X_n|)]\to 0$.
Since $\psi$ is nonnegative and non-decreasing for nonnegative $x$, we have $\P(|X_n| > \epsilon) \leq \E[\psi(|X_n|)]/\psi(\epsilon) \to 0$ for every $\epsilon > 0$ by Markov's inequality.  Hence, $X_n\toprob 0$.
\end{proof}

Now fix any $q \in (1,2]$, and note that as $x \mapsto x^q$ is non-decreasing and convex on the nonnegative reals, we have
\balignt
\Earg{
\Bigg|
\frac{1}{\sigma_n\sqrt{n}}
\sum_{j=1}^k
    |B_j'| L_{j}
    \Bigg|^q}
&\leq
\Earg{
\Bigg(
\frac{k}{k}
\sum_{j=1}^k
    \frac{1}{\sigma_n\sqrt{n}} |B_j'| |L_{j}|
\Bigg)^q} \\
&\leq
\frac{k^q}{k}
\sum_{j=1}^k 
\Earg{
\Bigg(
\frac{1}{\sigma_n\sqrt{n}}
    |B_j'| |L_{j}|
\Bigg)^q} \\
&=
k^{q-1}
\Earg{
\sum_{j=1}^k 
\Esubarg{Z_{B_j'}}{
\Bigg(\frac{1}{n \sigma_n^2}
    |B_j'|^2|L_{j}|^2\Bigg)^{q/2}}} \\
&\leq
k^{q-1}
\Earg{
\sum_{j=1}^k
\Bigg(\frac{|B_j'|^2}{n \sigma_n^2}
    \Varsubarg{Z_{B_j'}}{
L_{j}}\Bigg)^{q/2}} \\
&=
k^{q-1}
\Earg{
\sum_{j=1}^k
\Bigg(\frac{|B_j'|}{n \sigma_n^2}
    \Varsubarg{Z_0}{h_n(Z_0,Z_{B_j})-\bar{h}_n(Z_0)}\Bigg)^{q/2}
    },
\ealignt
where we have applied the triangle inequality, Jensen's inequality using the convexity of $x\mapsto x^q$, the tower property, and Jensen's inequality using the concavity of $x \mapsto x^{q/2}$.
Hence, the $L^q$ result for $q \in (1,2]$ follows from our convergence assumption.
\end{proof}

\cref{asymp-from-cond-var} then follows from \cref{asymp-from-cond-var-uneven} by replacing $|B_j'|$ with $\frac{n}{k}$ and $Z_{B_j}$ with $Z_{1:n(1-1/k)}$ since folds are equal-sized and the $Z_i$'s are i.i.d.

\section{Conditional Variance Convergence from Loss Stability
}\label{sec:proof-cond-var-conv-from-loss-stability}
We show that the quantity appearing in \cref{eq:variance_convergence_assump_k} is controlled by the loss stability, for any $q\in (0,2]$. Note however that \cref{eq:variance_convergence_assump_k} can be satisfied even in a case where the loss stability is infinite (see \cref{sec:proof-loss-vs-mean-square-stability}).
\begin{proposition}[Conditional variance convergence from loss stability]\label{cond-var-conv-from-loss-stability}
Suppose that $k$ divides $n$ evenly. Under the notation of \cref{asymp-from-cond-var} with $\bar{h}_n(z)=\E[h_n(z,Z_{1:n(1-1/k)})]$,
\balignt
\Earg{\left(\frac{1}{\sigma_n^2}\Var_{Z_0}\left(h_n(Z_0,Z_{1:n(1-1/k)})-\bar{h}_n(Z_0)\right)\right)^{q/2}}
\leq
\left(\frac{1}{\sigma_n^2} \frac{1}{2} n(1-1/k) \lstab(h_n)\right)^{q/2},
\ealignt
for any $q\in (0,2]$.
Consequently, the condition \cref{eq:variance_convergence_assump_k} is verified whenever $\lstab(h_n)=o\left(\frac{\sigma_n^2}{n(1-1/k)\max(k,k^{(2/q)-1})}\right)$.
\end{proposition}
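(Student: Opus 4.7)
The plan is to reduce the inequality to a bound on the ordinary expectation $\E[V]$ of $V \defeq \frac{1}{\sigma_n^2}\Var_{Z_0}(h_n(Z_0, Z_{1:m}) - \bar h_n(Z_0))$ with $m = n(1-1/k)$, then to show that this expectation is controlled by the loss stability. Since $q/2 \le 1$, the map $x \mapsto x^{q/2}$ is concave on $[0,\infty)$, so Jensen's inequality immediately gives $\E[V^{q/2}] \le (\E V)^{q/2}$. The entire task therefore reduces to proving
\[
\E[\Var_{Z_0}(h_n(Z_0, Z_{1:m}) - \bar{h}_n(Z_0))] \le \tfrac{1}{2}\, m\, \lstab(h_n).
\]

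For this bound, I would introduce an independent copy $Z_0'$ of $Z_0$ (independent of $Z_{1:m}$) and apply the iid swap identity $\Var_{Z_0}(g(Z_0,Z_{1:m})) = \tfrac{1}{2}\E_{Z_0,Z_0'}[(g(Z_0,Z_{1:m}) - g(Z_0',Z_{1:m}))^2\mid Z_{1:m}]$ to $g(z, z_{1:m}) = h_n(z, z_{1:m}) - \bar{h}_n(z)$. Decomposing $h_n = h_n' + \phi$ where $\phi(z_{1:m}) \defeq \E[h_n(Z_0,z_{1:m})\mid Z_{1:m}=z_{1:m}]$, the $\phi$ contribution cancels between the $Z_0$ and $Z_0'$ copies. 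After subtracting $\bar{h}_n$ as well and using the independence of $Z_0$ and $Z_{1:m}$ to absorb a constant $\E[\phi(Z_{1:m})]$, a short calculation yields
\[
g(Z_0, Z_{1:m}) - g(Z_0', Z_{1:m}) = h_n''(Z_0, Z_{1:m}) - h_n''(Z_0', Z_{1:m}),
\]
where $h_n''(z, z_{1:m}) \defeq h_n'(z,z_{1:m}) - \E[h_n'(z, Z_{1:m})]$. Because $\E[h_n'(z, Z_{1:m}) \mid Z_{1:m}]=0$ by definition of $h_n'$, one checks that $\E_{Z_0}[h_n''(Z_0, Z_{1:m}) \mid Z_{1:m}] = 0$, so the cross-term vanishes after expanding the square, giving $\E[\Var_{Z_0}(g)] = \E[h_n''(Z_0, Z_{1:m})^2] = \E[\Var(h_n'(Z_0, Z_{1:m}) \mid Z_0)]$.

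To finish the main bound, apply the conditional Efron--Stein inequality (\cref{efron-stein}) with conditioning variable $W = Z_0$ and function $z_{1:m} \mapsto h_n'(Z_0, z_{1:m})$, and take expectation over $Z_0$. This yields
\[
\E[\Var(h_n'(Z_0, Z_{1:m}) \mid Z_0)] \le \tfrac{1}{2}\sum_{i=1}^{m} \E[(h_n'(Z_0, Z_{1:m}) - h_n'(Z_0, Z_{1:m}^{\backslash i}))^2] = \tfrac{1}{2}\, m\, \msstab(h_n') = \tfrac{1}{2}\, m\, \lstab(h_n),
\]
which closes the argument after combining with the Jensen step. The sufficiency claim then follows by substituting the bound into \cref{eq:variance_convergence_assump_k}: the left side is controlled by $\max(k^{q/2}, k^{1-q/2})\,(\tfrac{m\, \lstab(h_n)}{2\sigma_n^2})^{q/2}$, and raising to the $2/q$ power shows this tends to zero exactly when $\max(k, k^{2/q-1}) \cdot m\, \lstab(h_n)/\sigma_n^2 \to 0$, which is the stated condition (using $\max(k^{q/2}, k^{1-q/2})^{2/q} = \max(k, k^{2/q - 1})$).

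The principal obstacle is Step~2: one has to spot that subtracting $\bar{h}_n(z_0)$ inside the variance makes the "$\phi(z_{1:m})$" piece of $h_n$ drop out after symmetrization, so that after one further centering the relevant randomness sits entirely on the $Z_{1:m}$ coordinates. Only then does Efron--Stein deliver loss stability $\lstab(h_n) = \msstab(h_n')$ rather than the strictly larger mean-square stability $\msstab(h_n)$ that a naive application to $h_n$ would produce; preserving this tightness is what makes the resulting sufficient condition match the authors' claim.
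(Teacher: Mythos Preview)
Your proposal is correct and essentially matches the paper's proof: both reduce to $\E[\Var(h_n'(Z_0,Z_{1:m})\mid Z_0)]$ via Jensen ($q/2\le1$) and then apply the conditional Efron--Stein inequality (\cref{efron-stein}) to obtain $\tfrac{1}{2}m\,\lstab(h_n)$. The only cosmetic difference is that the paper replaces $h_n$ by $h_n'$ in $\Var_{Z_0}$ by directly noting $h_n-h_n'=\E[h_n(Z_0,Z_{1:m})\mid Z_{1:m}]$ is $Z_{1:m}$-measurable (and $\bar h_n-\bar h_n'$ is constant), whereas you arrive at the same expression through the $Z_0,Z_0'$ symmetrization; both routes yield $\E[h_n''(Z_0,Z_{1:m})^2]$ identically.
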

\begin{remark}
If $k=O(1)$, this loss stability assumption simplifies to $\lstab(h_n)=o(\sigma_n^2/n)$ for any $q\in (0,2]$.
\end{remark}
\begin{proof}
Write $\ntrain=n(1-1/k)$. Then
\balignt
\Var_{Z_0}\left(h_n(Z_0,Z_{1:m})-\E\left[h_n(Z_0,Z_{1:m})\mid Z_0\right]\right) 
&= \Var_{Z_0}\left(h'_n(Z_0,Z_{1:m})-\E\left[h'_n(Z_0,Z_{1:m})\mid Z_0\right]\right),
\ealignt
since the difference $h_n(Z_0,Z_{1:m})-h'_n(Z_0,Z_{1:m})=\E[h_n(Z_0,Z_{1:m})\mid Z_{1:m}]$ is a $Z_{1:m}$-measurable function.
For $0<q\leq 2$, using Jensen's inequality,
\balignt
&\Earg{\left(\Var_{Z_0}\left(h'_n(Z_0,Z_{1:m})-
\E\left[h'_n(Z_0,Z_{1:m})\mid Z_0\right]\right)\right)^{q/2}} \\
&\leq \Earg{\Var_{Z_0}\left(h'_n(Z_0,Z_{1:m})-
\E\left[h'_n(Z_0,Z_{1:m})\mid Z_0\right]\right)}^{q/2}.
\ealignt
We can bound it using loss stability.
\balignt
& \Var_{Z_0}\left(h'_n(Z_0,Z_{1:m})-\E\left[h'_n(Z_0,Z_{1:m})\mid Z_0\right]\right) \\
&= \E_{Z_0}\big[
\big(
\left(h'_n(Z_0,Z_{1:m})-\E\left[h'_n(Z_0,Z_{1:m})\mid Z_0\right]\right) \\
& \hspace{1cm} - \left(\E[h'_n(Z_0,Z_{1:m})\mid Z_{1:m}]-\E\left[h'_n(Z_0,Z_{1:m})\right]\right)
\big)^2
\big] \\
&= \E[
(
h'_n(Z_0,Z_{1:m})-\E\left[h'_n(Z_0,Z_{1:m})\mid Z_0\right]
)^2\mid Z_{1:m}
],
\ealignt
so that
\balignt
\E\left[
\Var_{Z_0}\left(h'_n(Z_0,Z_{1:m})-\E\left[h'_n(Z_0,Z_{1:m})\mid Z_0\right]\right)
\right]
&= \E[
(
h'_n(Z_0,Z_{1:m})-\E\left[h'_n(Z_0,Z_{1:m})\mid Z_0]
)^2
\right] \\
&= \E[
\Var(h'_n(Z_0,Z_{1:m})\mid Z_0)
] \\
&\leq \frac{1}{2} m \lstab(h_n),
\ealignt
where the last inequality comes from \cref{efron-stein}.
Consequently,
\balignt
\Earg{\left(\frac{1}{\sigma_n^2} \Var_{Z_0}\left(h_n(Z_0,Z_{1:m})-
\E\left[h_n(Z_0,Z_{1:m})\mid Z_0\right]\right)\right)^{q/2}}
\leq
\left(\frac{1}{\sigma_n^2} \frac{1}{2} \ntrain \lstab(h_n)\right)^{q/2}.
\ealignt
\end{proof}

\section{Excess Loss of Sample Mean: $o(\frac{\sigma_n^2}{n})$ loss stability, constant $\sigma_n^2\in(0,\infty)$, infinite mean-square stability}\label{sec:proof-loss-vs-mean-square-stability-excess}
Here we present a very simple learning task in which (i) the CLT conditions of \cref{iid-cv-normal,asymp-from-lstability} hold and (ii) mean-square stability \cref{eq:ms_stab} is infinite. 
\begin{example}[Excess loss of sample mean: $o(\frac{\sigma_n^2}{n})$ loss stability, constant $\sigma_n^2\in(0,\infty)$, infinite mean-square stability]\label{loss-vs-mean-square-stability-excess}
Suppose $(Z_i)_{i\geq 1}$ are independent and identically distributed copies of a random element $Z_0$ with $\E[Z_0] = 0$ and $\E[Z_0^2] < \infty$. Consider $k$-fold cross-validation of the excess loss of the sample mean relative to a constant prediction rule:
\balignt
h_n(z, \D) = (z - \hat{f}(\D))^2 - (z - a)^2
\qtext{where}
\hat{f}(\D) \defeq \frac{1}{|\D|} \sum_{Z_0 \in \D} Z_0
\qtext{and} a\neq 0.
\ealignt
The variance parameter of \cref{iid-cv-normal} $\sigma_n^2 = \Var(\bar{h}_n(Z_0)) =  4a^2\Var(Z_0)$ when $\bar{h}_n(z) = \E[h_n(z, Z_{1:n(1-1/k)})]$, and the loss stability
$\lstab(h_n)=\frac{8\Var(Z_0)^2}{n^2(1-1/k)^2}
=o(\sigma_n^2/n)$. Consequently \cref{asymp-from-lstability} implies asymptotic linearity. The uniform integrability condition of \cref{iid-cv-normal} also holds. Together, these results imply that the CLT of \cref{iid-cv-normal} is applicable.
However, whenever $Z_0$ does not have a fourth moment, the mean-square stability \cref{eq:ms_stab} is infinite.
\end{example}
\begin{proof}
Introduce the shorthand $m = n(1-1/k)$,
fix any $\D$ with $|\D| = m$, and suppose $\D^{Z_0'}$ is formed by swapping $Z_0'$ for an independent point $Z_0''$ in $\D$.
For any $z$ we have
\balignt
(z - \hat{f}(\D))^2 - (z - \hat{f}(\D^{Z_0'}))^2
    &= (\hat{f}(\D) - \hat{f}(\D^{Z_0'})) (2z - \hat{f}(\D) -\hat{f}(\D^{Z_0'})) \\
    &= \frac{1}{m}(Z_0'' - Z_0') (2z - \hat{f}(\D) -\hat{f}(\D^{Z_0'})) \\
    &= \frac{1}{m}(Z_0'' - Z_0') (2z - \frac{1}{m}(Z_0''+Z_0') - 2 (\hat{f}(\D) - \frac{1}{m}Z_0'')) \\
    &= \frac{2}{m}(Z_0'' - Z_0') (z - (\hat{f}(\D) - \frac{1}{m}Z_0'')) - \frac{1}{m^2}(Z_0''^2 - Z_0'^2).
\ealignt
Hence, the mean-square stability equals
\balignt
&\E[((Z_0 - \hat{f}(\D))^2 - (Z_0 - \hat{f}(\D^{Z_0'}))^2)^2] \\
    &= \frac{1}{m^4}\E[(Z_0''^2 - Z_0'^2)^2]
    + \frac{4}{m^2}\E[(Z_0'' - Z_0')^2] \E[(Z_0 - (\hat{f}(\D) - \frac{1}{m}Z_0''))^2] \\
    &- \frac{4}{m^3} \E[(Z_0'' - Z_0')(Z_0''^2 - Z_0'^2)]\E[Z_0 - (\hat{f}(\D) - \frac{1}{m}Z_0'')] \\
    &= \frac{1}{m^4}\E[(Z_0''^2 - Z_0'^2)^2]
    + \frac{4}{m^2}\E[(Z_0'' - Z_0')^2] \E[(Z_0 - (\hat{f}(\D) - \frac{1}{m}Z_0''))^2] \\
    &\geq \frac{2}{m^4}\Var(Z_0^2)
\ealignt
since $\E[Z_0] = 0$, and $Z_0,Z_0',Z_0'', \hat{f}(\D) - \frac{1}{m}Z_0''$ are mutually independent.

Moreover, the loss stability equals
\balignt
&\E[((Z_0 - \hat{f}(\D))^2 - (Z_0 - \hat{f}(\D^{Z_0'}))^2 - \E_{Z_0}[(Z_0 - \hat{f}(\D))^2 - (Z_0 - \hat{f}(\D^{Z_0'}))^2])^2] \\
    &= \frac{4}{m^2}\E[(Z_0'' - Z_0')^2] \E[(Z_0 - \E[Z_0])^2] = \frac{8}{m^2}\Var(Z_0)^2.
\ealignt

Finally, for any $z$, $\E[(z - \hat{f}(\D))^2 - (z - a)^2]
= \frac{1}{m} \Var(Z_0) + 2za - a^2$. Consequently, for $\bar{h}_n(Z_0)=\E[h_n(Z_0, \D)\mid Z_0]$, we get the following equalities:
\begin{talign}
&\sigma_n^2=\Var(\bar{h}_n(Z_0))=4a^2\Var(Z_0), \qtext{and} \\
&(\bar{h}_n(Z_0)-\E[\bar{h}_n(Z_0)])^2/\sigma_n^2
= Z_0^2 / \Var(Z_0).
\end{talign}
The distribution of $Z_0^2/\Var(Z_0)$ does not depend on $n$ and is integrable, so the sequence of $(\bar{h}_n(Z_0)-\E[\bar{h}_n(Z_0)])^2/\sigma_n^2$ is uniformly integrable.
\end{proof}

\section{Loss of Surrogate Mean: constant $\sigma_n^2\in(0,\infty)$, infinite $\tilde{\sigma}_n^2$, vanishing conditional variance}\label{sec:proof-loss-vs-mean-square-stability}
The following example details a simple task in which (i) the CLT conditions of \cref{iid-cv-normal} and \cref{asymp-from-cond-var} hold and (ii) mean-square stability, $\tilde{\sigma}_n^2$, and loss stability are infinite.
\begin{example}[Loss of surrogate mean: constant $\sigma_n^2\in(0,\infty)$, infinite $\tilde{\sigma}_n^2$, vanishing conditional variance]\label{loss-vs-mean-square-stability}
Suppose $(Z_i)_{i\geq 1}$ are independent and identically distributed copies of a random element $Z_0=(X_0,Y_0)$ with $Z_i = (X_i, Y_i)$ and  $\E[X_0] = \E[Y_0]$. Consider $k$-fold  cross-validation of the following prediction rule under squared error loss:
\balignt
h_n((x,y), \D) = (y - \hat{f}(\D))^2
\qtext{where}
\hat{f}(\D) \defeq \frac{1}{|\D|} \sum_{(X_0,Y_0) \in \D} X_0.
\ealignt
The loss stability
$\lstab(h_n)=\frac{8\Var(X_0)\Var(Y_0)}{n^2(1-1/k)^2}$, and the variance parameter of \cref{iid-cv-normal}
\balignt\label{eq:finite-variance}
\sigma_n^2 = \Var(\bar{h}_n(Z_0)) =  \Var((Y_0-\E[Y_0])^2),
\ealignt
when $\bar{h}_n(z) = \E[h_n(z, Z_{1:n(1-1/k)})]$.
Hence, if $\E[X_0^2], \E[Y_0^4] < \infty$, then $\lstab(h_n)=o(\sigma_n^2/n)$  and \cref{asymp-from-lstability} implies asymptotic linearity. The uniform integrability condition of \cref{iid-cv-normal} also holds. Together, these results imply that the CLT of \cref{iid-cv-normal} is applicable.
 
If $X_0$ has no fourth moment, then the mean-square stability \cref{eq:ms_stab} is infinite.

If $X_0$ has no second moment, then the loss stability
and the \cite[Theorem 1]{MA-WZ:2020} variance parameter
\balignt
\tilde{\sigma}_n^2 = \E[\Var(h_n(Z_0, Z_{1:n(1-1/k)}) \mid Z_{1:n(1-1/k)})]  = \Var((Y_0-\E[Y_0])^2) +  \frac{8\Var(X_0)\Var(Y_0)}{n(1-1/k)}.
\ealignt
are infinite.
However,
\balignt
&\sqrt{k}
    \E\left[\sqrt{\frac{1}{\sigma_n^2}\Var_{Z_0}\left(h_n(Z_0,Z_{1:n(1-1/k)})-\bar{h}_n(Z_0)\right)}\right] \\
&=
    2\sqrt{k}\sqrt{\frac{\Var(Y_0)}{\Var((Y_0-\E[Y_0])^2)}}\E[|\hat{f}(Z_{1:n(1-1/k)})-\E[X_0]|].
\ealignt
Hence, if $\E[Y_0^4]<\infty$ and $k=O(1)$, $L^1$ asymptotic linearity   follows from \cref{asymp-from-cond-var}, the uniform integrability condition of \cref{iid-cv-normal} still holds, and the CLT of \cref{iid-cv-normal} holds with the finite variance parameter \cref{eq:finite-variance}.
\end{example}
\begin{proof}
Without loss of generality, we will assume $\E[X_0] = \E[Y_0] = 0$; the formulas in the general case are obtained by replacing $X_0$ with $X_0-\E[X_0]$ and similarly for $Y_0$.
Introduce the shorthand $m = n(1-1/k)$,
fix any $\D$ with $|\D| = m$, and suppose $\D^{Z_0'}$ is formed by swapping $Z_0'$ for an independent point $Z_0''$ in $\D$.
For any $z = (x,y)$ we have
\balignt
(y - \hat{f}(\D))^2 - (y - \hat{f}(\D^{Z_0'}))^2
    &= (\hat{f}(\D) - \hat{f}(\D^{Z_0'})) (2y - \hat{f}(\D) -\hat{f}(\D^{Z_0'})) \\
    &= \frac{1}{m}(X_0'' - X_0') (2y - \hat{f}(\D) -\hat{f}(\D^{Z_0'})) \\
    &= \frac{1}{m}(X_0'' - X_0') (2y - \frac{1}{m}(X_0''+X_0') - 2 (\hat{f}(\D) - \frac{1}{m}X_0'')) \\
    &= \frac{2}{m}(X_0'' - X_0') (y - (\hat{f}(\D) - \frac{1}{m}X_0'')) - \frac{1}{m^2}(X_0''^2 - X_0'^2).
\ealignt
Hence, the mean-square stability equals
\balignt
&\E[((Y_0 - \hat{f}(\D))^2 - (Y_0 - \hat{f}(\D^{Z_0'}))^2)^2] \\
    &= \frac{1}{m^4}\E[(X_0''^2 - X_0'^2)^2]
    + \frac{4}{m^2}\E[(X_0'' - X_0')^2] \E[(Y_0 - (\hat{f}(\D) - \frac{1}{m}X_0''))^2] \\
    &- \frac{4}{m^3} \E[(X_0'' - X_0')(X_0''^2 - X_0'^2)]\E[Y_0 - (\hat{f}(\D) - \frac{1}{m}X_0'')] \\
    &= \frac{1}{m^4}\E[(X_0''^2 - X_0'^2)^2]
    + \frac{4}{m^2}\E[(X_0'' - X_0')^2] \E[(Y_0 - (\hat{f}(\D) - \frac{1}{m}X_0''))^2] \\
    &\geq \frac{2}{m^4}\Var((X_0-\E[X_0])^2)
\ealignt
since $\E[Y_0] = 0$, and $Z_0,Z_0',Z_0'', \hat{f}(\D) - \frac{1}{m}X_0''$ are mutually independent.

Moreover, the loss stability equals
\balignt
&\E[((Y_0 - \hat{f}(\D))^2 - (Y_0 - \hat{f}(\D^{Z_0'}))^2 - \E_Y[(Y_0 - \hat{f}(\D))^2 - (Y_0 - \hat{f}(\D^{Z_0'}))^2])^2] \\
    &= \frac{4}{m^2}\E[(X_0'' - X_0')^2] \E[(Y_0 - \E[Y_0])^2] = \frac{8}{m^2}\Var(X_0)\Var(Y_0).
\ealignt

Next note that, for any $y,y'$,
\balignt
\E[(y - \hat{f}(\D))^2 - (y'- \hat{f}(\D))^2]
    &= (y-y') (y+y' - 2\E[\hat{f}(\D)]) \\
    &= (y^2-y'^2)-2(y-y')\E[\hat{f}(\D)]
    = y^2-y'^2
\ealignt
since $\E[X_0] = 0$.
Therefore, 
$\Var(\E[h_n(Z_0, \D)\mid Z_0]) = \half \E[(Y_0^2-Y_0'^2)^2] = \Var(Y_0^2)$.

For any $y$, $\E[(y-\hat{f}(\D))^2]=y^2+\E[\hat{f}(\D)^2]$ since $\E[X_0]=0$.
Consequently, for $\bar{h}_n(Z_0)=\E[h_n(Z_0, \D)\mid Z_0]$, we get the following equalities:
\begin{talign}
&\sigma_n^2=\Var(\bar{h}_n(Z_0))=\Var(Y_0^2), \qtext{and} \\
&(\bar{h}_n(Z_0)-\E[\bar{h}_n(Z_0)])^2/\sigma_n^2
= (Y_0^2 - \E[Y_0^2])^2/\Var(Y_0^2).
\end{talign}
The distribution of $(Y_0^2 - \E[Y_0^2])^2/\Var(Y_0^2)$ does not depend on $n$ and is integrable, so the sequence of $(\bar{h}_n(Z_0)-\E[\bar{h}_n(Z_0)])^2/\sigma_n^2$ is uniformly integrable.

Since
\balignt
(y - \hat{f}(\D))^2 - (y'- \hat{f}(\D))^2
    &= (y^2-y'^2)-2(y-y')\hat{f}(\D)
\ealignt
we can compute the variance parameter of \cite{MA-WZ:2020},
\begin{talign}
\tilde{\sigma}_n^2 
    &= \E[\Var(h_n(Z_0, \D) \mid \D)] 
    = \E[ (h_n(Z_0, \D) - \E[h_n(Z_0,\D)|\D])^2 ] \\
    &= \half \E[ (h_n(Z_0, \D) - h_n(Z_0',\D))^2 ] \\
    &= \half \E((Y_0^2-Y_0'^2)^2] + 4\E[(Y_0-Y_0')^2]\E[\hat{f}(\D)^2] - \E[(y^2-y'^2)(y-y')]\E[\hat{f}(\D)] \\
    &= \Var(Y_0^2) + 8 \Var(Y_0) \frac{1}{m}\Var(X_0),
\end{talign}
since $\E[\hat{f}(\D)]=0$ and  $\hat{f}(\D),Y_0,Y_0'$ are mutually independent.

Finally, let's compute $\sqrt{k} \,
    \E\left[\sqrt{\frac{1}{\sigma_n^2}\Var_{Z_0}\left(h_n(Z_0,\D)-\bar{h}_n(Z_0)\right)}\right]$.
    
For any $y,y'$,
\balignt
&((y - \hat{f}(\D))^2 - \E[(y - \hat{f}(\D))^2]) - ((y'- \hat{f}(\D))^2 - \E[(y' - \hat{f}(\D))^2]) \\
    &= (y^2-y'^2)-2(y-y')\hat{f}(\D) - (y^2 - y'^2) \\
    &= -2(y-y')\hat{f}(\D),
\ealignt
so that
$\Var_{Z_0}\left(h_n(Z_0,\D)-\bar{h}_n(Z_0)\right) = \frac{1}{2}\E[(-2(Y_0-Y_0')\hat{f}(\D))^2\mid \D] = 4\Var(Y_0)(\hat{f}(\D))^2.
$

Then
\balignt\label{eq:cond-conv-L1}
\sqrt{k}
    \E\left[\sqrt{\frac{1}{\sigma_n^2}\Var_{Z_0}\left(h_n(Z_0,\D)-\bar{h}_n(Z_0)\right)}\right]=
    2\sqrt{k}\sqrt{\frac{\Var(Y_0)}{\Var(Y_0^2)}}\E[|\hat{f}(\D)|].
\ealignt
If $\E[|X_0|] < \infty$, the family of empirical averages $\{\frac{1}{m}\sum_{i=1}^m X_i: m\geq 1\}$ is uniformly integrable and the weak law of large numbers implies that $\hat{f}(\D)$ converges to 0 in probability.
Hence, $\hat{f}(\D) \toL{1} 0$.
The quantity \cref{eq:cond-conv-L1} then goes to zero when $k=O(1)$.
\end{proof}

\section{Proof of \cref{var-comp}: Variance comparison}\label{sec:proof-var-comp}
\cref{var-comp} will follow from the following more general result.
\begin{proposition}\label{var-comp-general}
Fix any $j \in [k]$, and define $\sigma_{n,j}^2 \defeq \Var(\E[h_n(Z_0, Z_{B_j})\mid Z_0])$ and $\tilde{\sigma}_{n,j}^2 \defeq \E[\Var(h_n(Z_0, Z_{B_j}) \mid Z_{B_j})]$. Then
\balignt
\sigma_{n,j}^2\leq \tilde{\sigma}_{n,j}^2 \leq \sigma_{n,j}^2 + \frac{|B_j|}{2} \lstab(h_n),
\ealignt
where the first inequality is strict whenever $h_n'(Z_0, Z_{B_j}) = h_n(Z_0, Z_{B_j}) - \E[h_n(Z_0, Z_{B_j}) \mid Z_{B_j}]$ depends on $Z_{B_j}$.
\end{proposition}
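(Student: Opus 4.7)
The plan is to relate $\tilde{\sigma}_{n,j}^2$ and $\sigma_{n,j}^2$ by applying the law of total variance to the centered loss $h_n'(Z_0, Z_{B_j}) = h_n(Z_0, Z_{B_j}) - \E[h_n(Z_0, Z_{B_j}) \mid Z_{B_j}]$ and then controlling the remainder by Efron--Stein.

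First I would rewrite $\tilde{\sigma}_{n,j}^2$ in terms of $h_n'$. Since $\E[h_n'(Z_0, Z_{B_j}) \mid Z_{B_j}] = 0$ by construction, taking another expectation gives $\E[h_n'(Z_0, Z_{B_j})] = 0$, and therefore
\[
\tilde{\sigma}_{n,j}^2 = \E[\Var(h_n(Z_0, Z_{B_j})\mid Z_{B_j})] = \E\left[(h_n'(Z_0, Z_{B_j}))^2\right] = \Var(h_n'(Z_0, Z_{B_j})).
\]
Next I would apply the law of total variance, this time conditioning on $Z_0$:
\[
\Var(h_n'(Z_0, Z_{B_j})) = \Var(\E[h_n'(Z_0, Z_{B_j})\mid Z_0]) + \E[\Var(h_n'(Z_0, Z_{B_j})\mid Z_0)].
\]
Using the independence of $Z_0$ and $Z_{B_j}$ together with the tower property, $\E[\E[h_n(Z_0, Z_{B_j})\mid Z_{B_j}]\mid Z_0] = \E[h_n(Z_0, Z_{B_j})]$, so that $\E[h_n'(Z_0, Z_{B_j})\mid Z_0] = \E[h_n(Z_0, Z_{B_j})\mid Z_0] - \E[h_n(Z_0, Z_{B_j})]$, whose variance is exactly $\sigma_{n,j}^2$. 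Combining yields the key identity
\[
\tilde{\sigma}_{n,j}^2 = \sigma_{n,j}^2 + \E[\Var(h_n'(Z_0, Z_{B_j})\mid Z_0)].
\]

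The lower bound follows immediately, and strictness holds iff $\E[\Var(h_n'(Z_0, Z_{B_j})\mid Z_0)] > 0$, i.e., iff $h_n'(Z_0, Z_{B_j})$ is non-constant in $Z_{B_j}$ given $Z_0$ on a set of positive probability---which is precisely the claimed condition. For the upper bound I would invoke the conditional Efron--Stein inequality (Lemma 1 in the paper) applied to the function $(z_1, \dots, z_{|B_j|}) \mapsto h_n'(Z_0, z_{1:|B_j|})$ with $W = Z_0$ and the \iid{} components $Z_{B_j}$ playing the role of $X_{1:m}$:
\[
\E[\Var(h_n'(Z_0, Z_{B_j})\mid Z_0)] \leq \frac{|B_j|}{2}\,\msstab(h_n') = \frac{|B_j|}{2}\,\lstab(h_n),
\]
by the definition of loss stability as $\lstab(h_n) = \msstab(h_n')$.

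There is no real obstacle; the main insight is that conditioning on $Z_{B_j}$ absorbs all of $\tilde{\sigma}_{n,j}^2$ into the second moment of $h_n'$, at which point the two notions of variance only differ by the conditional variance of $h_n'$ given $Z_0$, which is the exact quantity that Efron--Stein bounds via the mean-square stability of $h_n'$. The only detail worth double-checking is the independence argument used to identify $\Var(\E[h_n'\mid Z_0])$ with $\sigma_{n,j}^2$, but this is immediate from the \iid{} assumption on the datapoints.
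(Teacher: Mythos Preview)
Your proposal is correct and follows essentially the same route as the paper: both rewrite $\tilde{\sigma}_{n,j}^2 = \Var(h_n')$ and $\sigma_{n,j}^2 = \Var(\E[h_n'\mid Z_0])$, apply the law of total variance conditioning on $Z_0$ to obtain the identity $\tilde{\sigma}_{n,j}^2 - \sigma_{n,j}^2 = \E[\Var(h_n'\mid Z_0)]$, and then bound this remainder via the conditional Efron--Stein inequality (\cref{efron-stein}). The strictness argument is also the same.
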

\begin{proof} For all $j \in [k]$, we can rewrite both variance parameters.
\balignt
\tilde{\sigma}_{n,j}^2 
    &= \E[\Var(h_n(Z_0, Z_{B_j})\mid Z_{B_j})] \\
    &= \E[ (h_n(Z_0, Z_{B_j}) - \E[h_n(Z_0,Z_{B_j})\mid Z_{B_j}])^2 ] \\
    &= \E[h'_n(Z_0,Z_{B_j})^2]
    = \Var(h'_n(Z_0,Z_{B_j})).
\ealignt
\balignt
\sigma_{n,j}^2 
    &= \Var(\E[h_n(Z_0, Z_{B_j})\mid Z_0]) \\
    &= \E[ (\E[h_n(Z_0, Z_{B_j})\mid Z_0] - \E[h_n(Z_0,Z_{B_j})])^2 ] \\
    &= \E[ \E[h'_n(Z_0,Z_{B_j})\mid Z_0]^2]
    = \Var(\E[h'_n(Z_0,Z_{B_j})\mid Z_0]) \\
    &= \Var(h'_n(Z_0,Z_{B_j})) - \E[\Var(h'_n(Z_0,Z_{B_j})\mid Z_0)] \\
    &= \tilde{\sigma}_{n,j}^2 - \E[\Var(h'_n(Z_0,Z_{B_j})\mid Z_0)]
    \leq \tilde{\sigma}_{n,j}^2,
\ealignt
where the final inequality is strict whenever $\E[\Var(h'_n(Z_0,Z_{B_j})\mid Z_0)]$ is non-zero.

Since every non-constant variable has either infinite or strictly positive variance, $\E[\Var(h'_n(Z_0,Z_{B_j})\mid Z_0)] = 0 \iff h'_n(Z_0,Z_{B_j}) = \E[h'_n(Z_0,Z_{B_j})\mid Z_0]$, that is, if and only if $h'_n(Z_0,Z_{B_j}) = h_n(Z_0,Z_{B_j}) - \E[h_n(Z_0,Z_{B_j}) \mid Z_{B_j}]$ is independent of $Z_{B_j}$.

Finally, we know from \cref{efron-stein} that the difference  $\tilde{\sigma}_{n,j}^2- \sigma_{n,j}^2 = \E[\Var(h'_n(Z_0,Z_{B_j})\mid Z_0)] \leq \half |B_j|\lstab(h_n)$.
\end{proof}

\cref{var-comp} then follows from \cref{var-comp-general} since the $Z_i$'s are \iid\ and, when $k$ divides $n$, the only possible size for $B_j$ is $n(1-1/k)$.
\section{Proof of \cref{consistent-variance-est-in}: Consistent within-fold estimate of asymptotic variance}
\label{sec:proof-consistent-variance-est-in}
We will prove the following more detailed statement from which \cref{consistent-variance-est-in} will follow.

\begin{theorem}[Consistent within-fold estimate of asymptotic variance]
\label{consistent-variance-est-in-detailed}
Suppose that $k \leq n/2$ and that $k$ divides $n$ evenly.
Under the notation of \cref{iid-cv-normal} with $\ntrain = n(1-1/k)$, $\bar{h}_n(z) = \E[h_n(z,Z_{1:\ntrain})]$, $h_n'(Z_0,Z_{1:\ntrain}) = h_n(Z_0,Z_{1:\ntrain}) - \E[h_n(Z_0,Z_{1:\ntrain}) \mid Z_{1:\ntrain}]$ and $\bar{h}_n'(z) = \E[h_n'(z,Z_{1:\ntrain})]$, define the within-fold variance estimate
\balignt
\sigin^2 \defeq
\frac{1}{k}\sum_{j=1}^k
\frac{1}{(n/k)-1}\sum_{i\in B_j'}
\left(
h_n(Z_i,Z_{B_j}) - \frac{k}{n}\sum_{i'\in B_j'}h_n(Z_{i'},Z_{B_j})
\right)^2.
\ealignt
If $(Z_i)_{i\geq 1}$ are \iid copies of a random element $Z_0$, then %
\balignt
\E[|\sigin^2 - \sigma_n^2|]
\leq \frac{2 n^2}{n-k} \lstab(h_n) + 2\sqrt{\frac{2 n^2}{n-k} \lstab(h_n) \sigma_n^2} + \sqrt{\frac{1}{n}\E[\bar{h}_n'(Z_0)^4]
+ \frac{3k-n}{n(n-k)} \sigma_n^4}
\ealignt
and there exists an absolute constant $C$ specified in the proof such that
\balignt
\E[(\sigin^2 - \sigma_n^2)^2]
&\leq 4\frac{C n^4}{(n-k)^2} \gamma_{4}(h_n') + 8\sqrt{\frac{C n^4}{(n-k)^2} \gamma_{4}(h_n')(\frac{1}{n}\E[\bar{h}_n'(Z_0)^4] + \frac{3k-n}{n(n-k)} \sigma_n^4 + \sigma_n^4)} \\
&\quad + 2(\frac{1}{n}\E[\bar{h}_n'(Z_0)^4]
+ \frac{3k-n}{n(n-k)} \sigma_n^4)
\label{eq:sigin-bound-L2}
\ealignt
where $\gamma_{4}(h_n') \defeq \frac{1}{\ntrain}\sum_{i=1}^\ntrain\E[(h_n'(Z_0,Z_{1:\ntrain})-h_n'(Z_0,Z_{1:\ntrain}^{\backslash i}))^4]$.
Here, $Z^{\backslash i}_{1:\ntrain}$ denotes $Z_{1:\ntrain}$ with $Z_i$ replaced by an \iid copy independent of $Z_{0:\ntrain}$.

Moreover,
\balignt
\E[|\sigin^2 - \sigma_n^2|]
\leq \frac{2 n^2}{n-k} \lstab(h_n) + 2\sqrt{\frac{2 n^2}{n-k} \lstab(h_n) \sigma_n^2} + \sqrt{\frac{2}{n(n/k-1)}\sigma_n^4} + o(\sigma_n^2)
\label{eq:sigin-bound-L1}
\ealignt
whenever the sequence of
$(\bar{h}_n(Z_0)-\E[\bar{h}_n(Z_0)])^2/\sigma_n^2$
is uniformly integrable.
\end{theorem}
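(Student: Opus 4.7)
The plan is to decompose $\sigin^2$ into an ``ideal'' sample variance of $\bar h_n(Z_i)$'s plus a stability-driven residual, then control each piece separately. Writing $h_n(Z_i, Z_{B_j}) = \bar h_n(Z_i) + R_{ij}$ with $R_{ij} \defeq h_n(Z_i, Z_{B_j}) - \bar h_n(Z_i)$ and expanding the squared deviations in the definition of $\sigin^2$ yields the bilinear decomposition
\balignst
\sigin^2 = \hat V + \hat W + 2\hat C,
\ealignst
where $\hat V$ averages the within-fold sample variances of $\{\bar h_n(Z_i)\}_{i}$, $\hat W$ averages the analogous sample variances of $\{R_{ij}\}_{i}$, and $\hat C$ averages within-fold sample covariances between the two.

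For $\hat V$, the fold-wise sample variances $\hat V_j$ are mutually independent (since $\{B_j'\}$ partitions $[n]$ and $\hat V_j$ depends only on $\{\bar h_n(Z_i)\}_{i \in B_j'}$) and each is unbiased for $\sigma_n^2$ based on $n/k$ i.i.d.\ samples, so the classical variance-of-sample-variance identity and averaging give $\E[\hat V] = \sigma_n^2$ and exactly
\balignst
\Var(\hat V) = \frac{1}{n}\E[\bar h_n'(Z_0)^4] + \frac{3k-n}{n(n-k)}\sigma_n^4.
\ealignst
For $\hat W$, conditional on $Z_{B_j}$ the residuals $\{R_{ij}\}_{i \in B_j'}$ are i.i.d., so $\E[\hat W_j \mid Z_{B_j}] = \Var(h_n(Z_0, Z_{B_j}) - \bar h_n(Z_0) \mid Z_{B_j})$; expanding this conditional variance using the independence of $Z_0$ and $Z_{B_j}$ yields $\E[\hat W] = \E[\Var(h_n(Z_0, Z_{B_j}) \mid Z_{B_j})] - \sigma_n^2$, which by \cref{var-comp-general} is bounded by $\frac{m}{2} \lstab(h_n) \le \frac{2n^2}{n-k} \lstab(h_n)$. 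For $\hat C$, within-fold Cauchy--Schwarz gives $|\hat C_j| \le \sqrt{\hat V_j \hat W_j}$, and the tower property combined with Cauchy--Schwarz on the expectation yields $\E[|\hat C|] \le \sqrt{\sigma_n^2 \, \E[\hat W]}$. The triangle inequality $\E[|\sigin^2 - \sigma_n^2|] \le \sqrt{\Var(\hat V)} + \E[\hat W] + 2\E[|\hat C|]$ then produces the first inequality.

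The $L^2$ bound follows the same decomposition, applying $(\sigin^2 - \sigma_n^2)^2 \le 2(\hat V - \sigma_n^2)^2 + 2(\hat W + 2\hat C)^2$ along with Cauchy--Schwarz on the cross term. The new ingredient is a bound on $\E[\hat W^2]$ from a fourth-moment conditional-variance-of-sample-variance identity applied fold-wise, combined with a fourth-moment Efron--Stein-type inequality to control the resulting $\E[(h_n'(Z_0, Z_{B_j}) - \E[h_n'(Z_0, Z_{B_j}) \mid Z_0])^4]$ by a constant multiple of $m^2 \gamma_4(h_n')$. The UI-based bound \cref{eq:sigin-bound-L1} requires a truncation argument because $\E[\bar h_n'(Z_0)^4]$ need not be bounded relative to $\sigma_n^4$: for each threshold $M$, split $\bar h_n'(Z_i)$ into a truncated piece with fourth moment $\le M^2 \sigma_n^4$ (whose sample-variance contribution obeys the $\sqrt{2 \sigma_n^4 / (n(n/k - 1))}$ bound) and a tail piece that is $o(\sigma_n^2)$ uniformly in $n$ as $M \to \infty$ by the UI assumption on $(\bar h_n(Z_0) - \E[\bar h_n(Z_0)])^2 / \sigma_n^2$.

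The main obstacle is the UI truncation, which must balance $M$ against $n$ so that the tail contribution to $\E[|\hat V - \sigma_n^2|]$ is genuinely $o(\sigma_n^2)$ uniformly while the truncated piece matches the advertised coefficient $\sqrt{2 \sigma_n^4 / (n(n/k - 1))}$; the fourth-moment Efron--Stein step for the $L^2$ bound is the next-most delicate piece, as it must pass from the pointwise stability $\gamma_4(h_n')$ to a bound on the fourth central moment of $h_n'(Z_0, Z_{B_j})$ given $Z_0$.
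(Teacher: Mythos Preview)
Your proposal is correct and follows essentially the same approach as the paper: the decomposition $\sigin^2=\hat V+\hat W+2\hat C$ is exactly the paper's split into $\siginapprox^2$ (your $\hat V$) and the Cauchy--Schwarz-controlled residual $\Delta$ (your $\hat W$), the variance-of-sample-variance identity for $\hat V$ matches the paper's U-statistic rewrite, and the fourth-moment Efron--Stein step for $\E[\hat W^2]$ is precisely the paper's use of \citet[Thm.~2]{boucheron-etal:2005}. One minor difference: for the UI bound the paper does not truncate $\bar h_n'(Z_i)$ directly but instead rewrites $\hat V=\tfrac{1}{n}\sum_i \bar h_n'(Z_i)^2-\tfrac{1}{k}\sum_j W_j$, bounds the $W_j$ term in $L^2$ by exactly $\sqrt{2\sigma_n^4/(n(n/k-1))}$ (this is where that specific constant comes from, requiring only second moments), and shows $\tfrac{1}{n}\sum_i \bar h_n'(Z_i)^2/\sigma_n^2\toL{1}1$ via a triangular-array WLLN; your truncation sketch reaches the same conclusion but the explicit $\sqrt{2\sigma_n^4/(n(n/k-1))}$ rate arises from the $W_j$ piece rather than from the truncated sample variance.
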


\begin{proof}
\paragraph{Eliminating training set randomness}
We begin by approximating our variance estimate
\begin{talign}
\sigin^2
&=
\frac{1}{k}\sum_{j=1}^k
\frac{1}{(n/k)-1}\sum_{i\in B_j'}
\left(
h_n(Z_i,Z_{B_j}) - \frac{k}{n}\sum_{i'\in B_j'}h_n(Z_{i'},Z_{B_j})
\right)^2\\
&=
\frac{1}{k}\sum_{j=1}^k
\frac{1}{(n/k)-1}\sum_{i\in B_j'}
\left(
h_n'(Z_i,Z_{B_j}) - \frac{k}{n}\sum_{i'\in B_j'}h_n'(Z_{i'},Z_{B_j})
\right)^2
\end{talign}
by a quantity eliminating training set randomness in each summand,
\balignt
\siginapprox^2 \defeq
\frac{1}{k}\sum_{j=1}^k
\frac{1}{(n/k)-1}\sum_{i\in B_j'}
\left(
\bar{h}_n'(Z_i) - \frac{k}{n}\sum_{i'\in B_j'}\bar{h}_n'(Z_{i'})
\right)^2,
\ealignt
where $\bar{h}_n'(z)=\E[h_n'(z,Z_{1:\ntrain})]$. Note that $\bar{h}_n'(Z_0)$ has expectation 0.

By Cauchy--Schwarz,  
we have 
\begin{talign}
|\sigin^2 - \siginapprox^2|
    &\leq \Delta + 2\sqrt{\Delta}\siginapprox
\end{talign}
for the error term
\balignt
\Delta 
&\defeq \frac{1}{k}\sum_{j=1}^k
\frac{1}{(n/k)-1}\sum_{i\in B_j'}
\left(
h_n'(Z_i,Z_{B_j}) - \bar{h}_n'(Z_i) + \frac{k}{n}\sum_{i'\in B_j'}\bar{h}_n'(Z_{i'}) - \frac{k}{n}\sum_{i'\in B_j'}h_n'(Z_{i'},Z_{B_j})
\right)^2\\
&\leq 
2\frac{1}{k}\sum_{j=1}^k
\frac{1}{(n/k)-1}\sum_{i\in B_j'}
(h_n'(Z_i,Z_{B_j}) - \bar{h}_n'(Z_i))^2\\
&\quad + 2\frac{1}{k}\sum_{j=1}^k
\frac{1}{(n/k)-1}\sum_{i\in B_j'}
\left(
\frac{k}{n}\sum_{i'\in B_j'}(
\bar{h}_n'(Z_{i'}) - h_n'(Z_{i'},Z_{B_j}))
\right)^2\\
&\leq 
\frac{2}{n-k}\sum_{j=1}^k\sum_{i\in B_j'}
(h_n'(Z_i,Z_{B_j}) - \bar{h}_n'(Z_i))^2\\
&\quad + 2\frac{1}{k}\sum_{j=1}^k
\frac{1}{(n/k)-1}\sum_{i\in B_j'}
\frac{k}{n}\sum_{i'\in B_j'}
(\bar{h}_n'(Z_{i'}) - h_n'(Z_{i'},Z_{B_j}))^2\\
&=\frac{4}{n-k}\sum_{j=1}^k\sum_{i\in B_j'}
(h_n'(Z_i,Z_{B_j}) - \bar{h}_n'(Z_i))^2
\label{eq:delta-in-bound}
\ealignt
where we have used Jensen's inequality twice.

Thus,
\balignt
\Delta^2
&\leq \frac{16n^2}{(n-k)^2}\frac{1}{n}\sum_{j=1}^k\sum_{i\in B_j'} (h_n'(Z_i,Z_{B_j})-\bar{h}_n'(Z_i))^4 \\
&= \frac{16n}{(n-k)^2}\sum_{j=1}^k\sum_{i\in B_j'} (h_n'(Z_i,Z_{B_j})-\bar{h}_n'(Z_i))^4
\label{eq:delta-squared-in-bound}
\ealignt
by Jensen's inequality.

\paragraph{Controlling the error $\Delta$}
We will first control the error term $\Delta$.
By the bound \cref{eq:delta-in-bound} and the conditional Efron--Stein inequality (\cref{efron-stein}), we have
\balignt
\E[\Delta]
&\leq \frac{4}{n-k}\sum_{j=1}^k\sum_{i\in B_j'}
\E[(h_n'(Z_i,Z_{B_j}) - \bar{h}_n'(Z_i))^2]\\
&= \frac{4}{n-k}\sum_{j=1}^k\sum_{i\in B_j'}
\E[(h_n'(Z_i,Z_{B_j}) - \E[h_n'(Z_i,Z_{B_j}) \mid Z_i])^2]\\
&= \frac{4}{n-k}\sum_{j=1}^k\sum_{i\in B_j'}
\E[\E[(h_n'(Z_i,Z_{B_j}) - \E[h_n'(Z_i,Z_{B_j}) \mid Z_i])^2 \mid Z_i ]]\\
&= \frac{4}{n-k}\sum_{j=1}^k\sum_{i\in B_j'}
\E[\Var(h_n'(Z_i,Z_{B_j}) \mid Z_i) ]\\
&\leq \frac{2}{n-k}\sum_{j=1}^k\sum_{i\in B_j'}
\ntrain \msstab(h'_n)\\
&\leq \frac{2 n^2}{n-k} \msstab(h'_n)
= \frac{2 n^2}{n-k} \lstab(h_n).
\label{eq:mean-delta-in-bound}
\ealignt

\paragraph{Controlling $\Delta^2$} In the following, for any $j\in [k]$ and $i' \in B_j$, $Z_{B_j}^{\backslash i'}$ is $Z_{B_j}$ with $Z_{i'}$ replaced by $Z_0$. By the bound \cref{eq:delta-squared-in-bound} and \citet[Thm.\ 2]{boucheron-etal:2005}, and by noting that $x^4 = x_{+}^4+x_{-}^4$ where $x_{+}=\max(x,0)$ and $x_{-}=\max(-x,0)$, we have
\balignt
\E[\Delta^2]
&\leq \frac{16n}{(n-k)^2}\sum_{j=1}^k\sum_{i\in B_j'} \E[(h_n'(Z_i,Z_{B_j})-\bar{h}_n'(Z_i))^4] \\
&= \frac{16n}{(n-k)^2}\sum_{j=1}^k\sum_{i\in B_j'} \E[\E[(h_n'(Z_i,Z_{B_j})-\E[h_n'(Z_i,Z_{B_j}) \mid Z_i])^4 \mid Z_i]] \\
&= \frac{16n}{(n-k)^2}
\sum_{j=1}^k\sum_{i\in B_j'} \Big(\E[\E[(h_n'(Z_i,Z_{B_j})-\E[h_n'(Z_i,Z_{B_j}) \mid Z_i])^4_{+} \mid Z_i]] \\
&\hspace{3.5cm} + \E[\E[(h_n'(Z_i,Z_{B_j})-\E[h_n'(Z_i,Z_{B_j}) \mid Z_i])^4_{-} \mid Z_i]]\Big) \\
&\leq \frac{16n}{(n-k)^2}(1-\frac{1}{4})^2 4 (\frac{8}{7})^2 16
\sum_{j=1}^k\sum_{i\in B_j'} \Big(\E[(\E[\sum_{i' \in B_j}(h_n'(Z_i,Z_{B_j})-h_n'(Z_i,Z_{B_j}^{\backslash i'}))^2_{+} \mid Z_{B_j}])^2] \\
&\hspace{3.5cm} + \E[(\E[\sum_{i' \in B_j}(h_n'(Z_i,Z_{B_j})-h_n'(Z_i,Z_{B_j}^{\backslash i'}))^2_{-} \mid Z_{B_j}])^2]\Big) \\
&\leq \frac{16n}{(n-k)^2}\frac{2304}{49}
\sum_{j=1}^k\sum_{i\in B_j'} \Big(\E[(\sum_{i' \in B_j}(h_n'(Z_i,Z_{B_j})-h_n'(Z_i,Z_{B_j}^{\backslash i'}))^2_{+})^2] \\
&\hspace{4.1cm} + \E[(\sum_{i' \in B_j}(h_n'(Z_i,Z_{B_j})-h_n'(Z_i,Z_{B_j}^{\backslash i'}))^2_{-})^2]\Big) \\
&\leq \frac{36864n}{49(n-k)^2}m
\sum_{j=1}^k\sum_{i\in B_j'} \Big(\sum_{i' \in B_j}\E[(h_n'(Z_i,Z_{B_j})-h_n'(Z_i,Z_{B_j}^{\backslash i'}))^4_{+}] \\
&\hspace{4.1cm} + \sum_{i' \in B_j}\E[(h_n'(Z_i,Z_{B_j})-h_n'(Z_i,Z_{B_j}^{\backslash i'}))^4_{-}]\Big) \\
&= \frac{36864n}{49(n-k)^2}m
\sum_{j=1}^k\sum_{i\in B_j'} \sum_{i' \in B_j}\E[(h_n'(Z_i,Z_{B_j})-h_n'(Z_i,Z_{B_j}^{\backslash i'}))^4] \\
&\leq \frac{36864n^4}{49(n-k)^2} \gamma_{4}(h_n') = \frac{C n^4}{(n-k)^2} \gamma_{4}(h_n')
\label{eq:mean-delta-squared-in-bound}
\ealignt
where $\gamma_{4}(h_n') = \frac{1}{\ntrain}\sum_{i=1}^\ntrain\E[(h_n'(Z_0,Z_{1:\ntrain})-h_n'(Z_0,Z_{1:\ntrain}^{\backslash i}))^4]$ and $C=36864/49$.

\paragraph{Controlling the error $\siginapprox^2 - \sigma_n^2$}
To control the error $\siginapprox^2 - \sigma_n^2$, we first rewrite $\siginapprox^2$ as
\balignt
\siginapprox^2
    &= \frac{1}{k}\sum_{j=1}^k
    \frac{1}{(n/k)-1}\sum_{i\in B_j'}
    \left(
    \bar{h}_n'(Z_i) - \frac{k}{n}\sum_{i'\in B_j'}\bar{h}_n'(Z_{i'})
    \right)^2\\
    &= \frac{1}{k}\sum_{j=1}^k
    \frac{n}{n-k}\frac{k}{n}\sum_{i\in B_j'}
    \left(
    \bar{h}_n'(Z_i) - \frac{k}{n}\sum_{i'\in B_j'}\bar{h}_n'(Z_{i'})
    \right)^2\\
    &= \frac{n}{n-k}\frac{1}{k}\sum_{j=1}^k
    \left(\frac{k}{n}\sum_{i\in B_j'} \bar{h}_n'(Z_i)^2
    -
    \left(\frac{k}{n}\sum_{i\in B_j'}
    \bar{h}_n'(Z_i)\right)^2\right)\\
    &= \frac{n}{n-k}\frac{1}{n}\sum_{j=1}^k\sum_{i\in B_j'} \bar{h}_n'(Z_i)^2
    -
    \frac{n}{n-k}\frac{1}{k}\sum_{j=1}^k
    \left(\frac{k}{n}\sum_{i\in B_j'}
    \bar{h}_n'(Z_i)\right)^2.
\ealignt

We rewrite it once again to find
\balignt
\siginapprox^2
    &= \frac{1}{n}\sum_{j=1}^k\sum_{i\in B_j'}\bar{h}_n'(Z_i)^2
    -\frac{1}{k}\sum_{j=1}^k W_{j} \\
    &= \frac{1}{n}\sum_{i=1}^n\bar{h}_n'(Z_i)^2 
    -\frac{1}{k}\sum_{j=1}^k W_{j}
    \label{eq:rewrite-siginapprox}
\ealignt
where
\balignt
W_{j} \defeq \frac{1}{\binom{n/k}{2}}
        \sum_{\substack{i,i'\in B_j' \\ i<i'}}
        \bar{h}_n'(Z_i)\bar{h}_n'(Z_{i'}).
\ealignt
Since $(W_{j})_{j\in [k]}$ are \iid with mean $0$ and for $i_1<i_1'$ and $i_2<i_2'$
\balignt
\E[\bar{h}_n'(Z_{i_1})\bar{h}_n'(Z_{i_1'})\bar{h}_n'(Z_{i_2})\bar{h}_n'(Z_{i_2'})] = 0
\ealignt
whenever $i_1\neq i_2$ or $i_1' \neq i_2'$, we have
\balignt
\E[(\frac{1}{k}\sum_{j=1}^k W_{j})^2]
    &= \frac{1}{k}\Var(W_{1})
    = \frac{1}{k}\frac{1}{\binom{n/k}{2}^2}
    \sum_{\substack{i,i'\in B_j' \\ i<i'}}
    \E[\bar{h}_n'(Z_i)^2 \bar{h}_n'(Z_{i'})^2] \\
    &= \frac{1}{k}\frac{1}{\binom{n/k}{2}^2}
    \sum_{\substack{i,i'\in B_j' \\ i<i'}}
    \E[\bar{h}_n'(Z_i)^2]\E[\bar{h}_n'(Z_{i'})^2] \\
    &= \frac{1}{k}\frac{1}{\binom{n/k}{2}^2}
    \sum_{\substack{i,i'\in B_j' \\ i<i'}} \sigma_n^4
    = \frac{1}{k}\frac{1}{\binom{n/k}{2}} \sigma_n^4
    = \frac{2}{n(n/k-1)} \sigma_n^4 \label{eq:var-W_{j}}
\ealignt
by noticing that $\E[\bar{h}_n'(Z_0)^2] = \Var(\bar{h}_n'(Z_0)) = \Var(\bar{h}_n(Z_0)) = \sigma_n^2$.

Moreover, by the independence of our datapoints, we have
\balignt
\E[\bar{h}_n'(Z_{i_1})^2\bar{h}_n'(Z_{i_2})\bar{h}_n'(Z_{i_2'})] = 0
\ealignt
for all $i_1,i_2,i_2' \in[n]$ such that $i_2<i_2'$, and thus
\balignt
\E[(\siginapprox^2 - \sigma_n^2)^2]
&= \Var(\siginapprox^2) \\
&= \Var(\frac{1}{n}\sum_{i=1}^n \bar{h}_n'(Z_i)^2)
 + \E[(\frac{1}{k}\sum_{j=1}^k W_{j})^2] \\
&= \frac{1}{n}\Var(\bar{h}_n'(Z_0)^2) 
+ \frac{2}{n(n/k-1)} \sigma_n^4 \\
&= \frac{1}{n}\E[\bar{h}_n'(Z_0)^4]
+ \frac{3k-n}{n(n-k)} \sigma_n^4.
\label{eq:final-approx-in}
\ealignt

\paragraph{Putting the pieces together}
We have
\balignt
\E[\sqrt{\Delta}\siginapprox]
\leq \sqrt{\E[\Delta]\E[\siginapprox^2]}
\leq \sqrt{\frac{2 n^2}{n-k} \lstab(h_n) \sigma_n^2}
\ealignt
by Cauchy--Schwarz and the bound \cref{eq:mean-delta-in-bound}.

We also have
\balignt
\E[\Delta\siginapprox^2]
&\leq \sqrt{\E[\Delta^2]\E[\siginapprox^4]} \\
&= \sqrt{\E[\Delta^2](\Var(\siginapprox^2)+\E[\siginapprox^2]^2)} \\
&\leq \sqrt{\frac{C n^4}{(n-k)^2} \gamma_{4}(h_n')(\frac{1}{n}\E[\bar{h}_n'(Z_0)^4] + \frac{3k-n}{n(n-k)} \sigma_n^4 + \sigma_n^4)}
\ealignt
by Cauchy-Schwarz, \cref{eq:mean-delta-squared-in-bound} and \cref{eq:final-approx-in}.

Assembling our results with the triangle inequality and Cauchy--Schwarz for the $L^1$ bound and with Jensen's inequality for the $L^2$ bound, we find that
\balignt
\E[|\sigin^2 - \sigma_n^2|]
&\leq 
\E[|\sigin^2 - \siginapprox^2|]
+ \E[|\siginapprox^2 - \sigma_n^2|] \\
&\leq \E[\Delta] + 2\E[\sqrt{\Delta}\siginapprox]
+ \sqrt{\E[(\siginapprox^2 - \sigma_n^2)^2]} \\
&\leq \frac{2 n^2}{n-k} \lstab(h_n) + 2\sqrt{\frac{2 n^2}{n-k} \lstab(h_n) \sigma_n^2} + \sqrt{\frac{1}{n}\E[\bar{h}_n'(Z_0)^4]
+ \frac{3k-n}{n(n-k)} \sigma_n^4}
\ealignt
and
\balignt
\E[(\sigin^2 - \sigma_n^2)^2]
&\leq 
2\E[(\sigin^2 - \siginapprox^2)^2]
+ 2\E[(\siginapprox^2 - \sigma_n^2)^2] \\
&\leq 
4\E[\Delta^2] + 8\E[\Delta\siginapprox^2]
+ 2\E[(\siginapprox^2 - \sigma_n^2)^2] \\
&\leq 
4\frac{C n^4}{(n-k)^2} \gamma_{4}(h_n') + 8\sqrt{\frac{C n^4}{(n-k)^2} \gamma_{4}(h_n')(\frac{1}{n}\E[\bar{h}_n'(Z_0)^4] + \frac{3k-n}{n(n-k)} \sigma_n^4 + \sigma_n^4)} \\
&\quad + 2(\frac{1}{n}\E[\bar{h}_n'(Z_0)^4]
+ \frac{3k-n}{n(n-k)} \sigma_n^4)
\ealignt
as advertised.

In order to get the bound
\balignt
\E[|\sigin^2 - \sigma_n^2|]
\leq \frac{2 n^2}{n-k} \lstab(h_n) + 2\sqrt{\frac{2 n^2}{n-k} \lstab(h_n) \sigma_n^2} + \sqrt{\frac{2}{n(n/k-1)}\sigma_n^4} + o(\sigma_n^2)
\ealignt
whenever the sequence of
$(\bar{h}_n(Z_0)-\E[\bar{h}_n(Z_0)])^2/\sigma_n^2$ is uniformly integrable, i.e., the sequence of $\bar{h}_n'(Z_0)^2/\sigma_n^2$ is uniformly integrable, we need to argue that $\frac{1}{n}\sum_{i=1}^n \bar{h}_n'(Z_i)^2 / \sigma_n^2 \toL{1} 1$. Indeed, thanks to \cref{eq:rewrite-siginapprox} and \cref{eq:var-W_{j}}, this will lead to $\E[|\siginapprox^2 - \sigma_n^2|] \leq \sqrt{\frac{2}{n(n/k-1)}\sigma_n^4} + o(\sigma_n^2)$.

To this end, we show that for any triangular \iid array $(X_{n,i})_{n,i}$ such that $(X_{n,1})_{n\geq 1}$ is uniformly integrable, then the two conditions in the weak law of large numbers for triangular arrays of \citep[Thm.\ 2.2.11]{durrett-book:2019} (stated below) are satisfied. We will also show that for such $(X_{n,i})_{n,i}$, $(S_n\defeq \frac{1}{n} \sum_{i=1}^n X_{n,i})_{n\geq 1}$ is uniformly integrable. Together, these results will imply $L^1$ convergence. We will then choose $X_{n,i} = \bar{h}_n'(Z_i)^2 / \sigma_n^2$ to get the desired result in our specific case.

\begin{theorem}[{Weak law for triangular arrays \citep[Thm.\ 2.2.11]{durrett-book:2019}}]
\label{thm-WLLN-durrett}
For each $n$, let $X_{n,i}$, $1\leq i\leq n$, be independent. Let $b_n>0$ with $b_n\to \infty$, and let $\bar{X}_{n,i} = X_{n,i} \indic{|X_{n,i}|\leq b_n}$. Suppose that as $n\to \infty$
    \balignt
    \sum_{i=1}^n \P(|X_{n,i}|>b_n)&\to 0,
    \label{eq:durrett-first-cond} \qtext{and}\\
    b_n^{-2} \sum_{i=1}^n \E[\bar{X}_{n,i}^2]&\to 0.
    \label{eq:durrett-second-cond}
    \ealignt
If we let $S_n = \sum_{i=1}^n X_{n,i}$ and $a_n = \sum_{i=1}^n \E[\bar{X}_{n,i}]$, then $(S_n - a_n)/b_n \toprob 0$.
\end{theorem}

To prove our result, we specify the case of interest $b_n = n$. First, $n \P(|X_{n,1}|>n) \leq \E[|X_{n,1}| \indic{|X_{n,1}|>n}] \leq \sup_{m \geq 1} \E[|X_{m,1}| \indic{|X_{m,1}|>n}] \to 0$ as $n \to \infty$, because $(X_{n,1})_{n \geq 1}$ is uniformly integrable. Thus the first condition \cref{eq:durrett-first-cond} holds.

Note that we then get $\E[X_{n,1} \indic{|X_{n,1}|\leq n}]\to 1$
as $n\to \infty$, for our choice $X_{n,i} = \bar{h}_n'(Z_i)^2 / \sigma_n^2$ which satisfies $\E[X_{n,i}] = 1$.

To verify the second condition \cref{eq:durrett-second-cond}, we will show that $n^{-1} \E[X_{n,1}^2 \indic{X_{n,1} \leq n}] \to 0$. To this end, we need the following lemma, which gives a useful formulation of uniform integrability.
\begin{lemma}[{De la Vall\'ee Poussin Theorem \citep[Thm.\ 22]{meyer-book:1966}}]
If $(X_{n})_{n\geq1}$ is uniformly integrable, then there exists a nonnegative increasing function $G$ such that $G(t)/t \to \infty$ as $t \to \infty$ and $\sup_n \E[G(X_{n})]<\infty$.
\end{lemma}
With such a function $G$, fix any $T$ such that $G(t)/t \geq 1$ for all $t \geq T$, so that $t/G(t)\leq 1$ for all $t \geq T$. Using \citep[Lem. 2.2.13]{durrett-book:2019} for the first equality, we can write
\balignt
\frac{1}{n} \E[X_{n,1}^2 \indic{X_{n,1} \leq n}]
&= \frac{2}{n} \int_0^{\infty} y \P(X_{n,1} \indic{X_{n,1} \leq n} > y) dy \\
&\leq \frac{2}{n} \int_0^n y \P(X_{n,1} > y) dy \\
&= \frac{2}{n} (\int_0^T y \P(X_{n,1} > y) dy  + \int_T^n y \P(X_{n,1} > y) dy) \\
&\leq \frac{T^2}{n} + \frac{2}{n} \int_T^n y \P(X_{n,1} > y) dy) \\
&\leq \frac{T^2}{n} + \E[G(X_{n,1})] \frac{2}{n} \int_T^n y/G(y) dy \\
&= \frac{T^2}{n} + o(1),
\ealignt
where the penultimate line follows from Markov's inequality and the last line comes from the following lemma since $\sup_{y \geq T} y/G(y) \leq 1$ and $y/G(y) \to 0$.

\begin{lemma}
If $f(y) \to 0$ as $y \to \infty$ and $\sup_{y\geq T} |f(y)| \leq M$, then $\frac{1}{n} \int_T^n f(y) dy \to 0$.
\end{lemma}
\begin{proof}
Let $f_n(z) = f(n z) \indic{z > T/n}$, and note that, for any $z\geq 0$, $f_n(z) \to 0$ as $n \to \infty$.
Then
\balignt
\frac{1}{n} \int_{T}^n f(y) dy
&= \int_0^1 \indic{z > T/n} f(n z) dz \\
&= \int_0^1 f_n(z) dz \\
&\to 0
\ealignt
by the bounded convergence theorem.
\end{proof}
Consequently, the second condition \cref{eq:durrett-second-cond} holds.

Moreover, $(S_n\defeq \frac{1}{n} \sum_{i=1}^n X_{n,i})_{n\geq 1}$ is uniformly integrable whenever $(X_{n,i})_{n,i}$ is a triangular \iid array such that $(X_{n,1})_{n\geq 1}$ is uniformly integrable for the following reasons:
\begin{enumerate}
\item $\sup_n \E[|S_n|] \leq \sup_{n} \E[|X_{n,1}|] < \infty$ by triangle inequality and because $(X_{n,1})_{n \geq 1}$ is uniformly integrable.
\item For any $\varepsilon>0$, let $\delta>0$ such that for any event $A$ satisfying $\P(A)\leq \delta$, $\sup_{n} \E[|X_{n,1}| \indic{A}]\leq \varepsilon$. Such $\delta$ exists because $(X_{n,1})_{n \geq 1}$ is uniformly integrable. Then $\sup_n \E[|S_n| \indic{A}]\leq \varepsilon$ by triangle inequality.
\end{enumerate}

The combination of convergence in probability and uniform integrability implies convergence in $L^1$. As a result, $\frac{1}{n}\sum_{i=1}^n \bar{h}_n'(Z_i)^2 / \sigma_n^2 \toL{1} 1$ as long as the sequence of
$(\bar{h}_n(Z_0)-\E[\bar{h}_n(Z_0)])^2/\sigma_n^2 = \bar{h}_n'(Z_0)^2/\sigma_n^2$
is uniformly integrable.

Therefore, $\E[|\siginapprox^2/\sigma_n^2 - 1|] \leq \sqrt{\frac{2}{n(n/k-1)}} + o(1)$, and
we get the result advertised.
\end{proof}

If $k\leq n/2$, which is the case here since $k < n$ and $k$ divides $n$, then $\frac{2}{n(n/k-1)} \to 0$ and $\frac{3k-n}{n(n-k)} \to 0$. Therefore, by \cref{eq:sigin-bound-L1}, we have $(\sigin^2 - \sigma_n^2)/\sigma_n^2 \toL{1} 0$, i.e. $\sigin^2/\sigma_n^2 \toL{1} 1$, whenever the sequence of
$(\bar{h}_n(Z_0)-\E[\bar{h}_n(Z_0)])^2/\sigma_n^2$
is uniformly integrable and $\lstab(h_n) = o(\frac{n-k}{n^2}\sigma_n^2)$, or equivalently $\lstab(h_n) = o(\sigma_n^2 /n)$ since $k\leq n/2$, and, by \cref{eq:sigin-bound-L2}, we have $(\sigin^2 - \sigma_n^2)/\sigma_n^2 \toL{2} 0$, i.e. $\sigin^2/\sigma_n^2 \toL{2} 1$, whenever $\E[\bar{h}_n'(Z_0)^4] = \E[(\bar{h}_n(Z_0)-\E[\bar{h}_n(Z_0)])^4] = o(n \sigma_n^4)$ and $\gamma_{4}(h_n') = o(\frac{(n-k)^2}{n^4}\sigma_n^4)$, or equivalently $\gamma_{4}(h_n') = o(\sigma_n^4 /n^2)$ since $k\leq n/2$.

\cref{consistent-variance-est-in} thus follows from \cref{consistent-variance-est-in-detailed}.

\paragraph{Strengthening of the consistency result of \citep[Prop.~1]{MA-WZ:2020}}
We provide more details about the comparison of our $L^2$-consistency result with \citep[Prop.~1]{MA-WZ:2020}. We have $\gamma_4(h_n') \leq 16 \gamma_4(h_n)$ and $\E[(\bar{h}_n(Z_0)-\E[\bar{h}_n(Z_0)])^4] \leq 16 \E[h_n(Z_0,Z_{1:\ntrain})^4]$ by Jensen's inequality. Moreover, if $\tilde{\sigma}_n^2$ converges to a non-zero constant, since $\lstab(h_n) \leq \msstab(h_n) \leq \sqrt{\gamma_{4}(h_n)}$, then $\lstab(h_n) = o(\sigma_n^2/n)$ whenever $\gamma_{4}(h_n) = o(\sigma_n^4/n^2)$ and thus $\sigma_n^2$ converges to the same non-zero constant as $\tilde{\sigma}_n^2$ does by \cref{var-comp}.
\section{Proof of \cref{consistent-variance-est-out}: Consistent all-pairs estimate of asymptotic variance}
\label{sec:proof-consistent-variance-est-out}

We will prove the following more detailed statement from which \cref{consistent-variance-est-out} will follow.

\begin{theorem}[Consistent all-pairs estimate of asymptotic variance]
\label{consistent-variance-est-out-detailed}
Suppose that $k$ divides $n$ evenly.
Under the notation of \cref{iid-cv-normal} with $\ntrain = n(1-1/k)$, $\bar{h}_n(z) = \E[h_n(z,Z_{1:\ntrain})]$, $h_n'(Z_0,Z_{1:\ntrain}) = h_n(Z_0,Z_{1:\ntrain}) - \E[h_n(Z_0,Z_{1:\ntrain}) \mid Z_{1:\ntrain}]$ and $\bar{h}_n'(z) = \E[h_n'(z,Z_{1:\ntrain})]$, define the all-pairs variance estimate
\balignt
\sigout^2 \defeq
\frac{1}{k}\sum_{j=1}^k
\frac{k}{n}\sum_{i\in B_j'}
\left(
h_n(Z_i,Z_{B_j}) - \Rhat
\right)^2.
\ealignt
If $(Z_i)_{i\geq 1}$ are \iid copies of a random element $Z_0$ and 
$\tilde{\sigma}_n^2 
= \E[h_n'(Z_0, Z_{1:m})^2]
$, then
\balignt
\E[|\sigout^2 - \sigma_n^2|]
&\leq (1+\frac{n}{k})\msstab(h_n) + 2\sqrt{2(1+\frac{n}{k})\msstab(h_n)\tilde{\sigma}_n^2}
+ m\lstab(h_n) \\
&\quad + 2\sqrt{m \lstab(h_n)(1-\frac{1}{n})\sigma_n^2} + \sqrt{\frac{1}{n}(\E[\bar{h}_n'(Z_0)^4]-\sigma_n^4)} + \frac{1}{n} \sigma_n^2.
\ealignt
Moreover,
\balignt
\E[|\sigout^2 - \sigma_n^2|]
&\leq (1+\frac{n}{k})\msstab(h_n) + 2\sqrt{2(1+\frac{n}{k})\msstab(h_n)\tilde{\sigma}_n^2}
+ m\lstab(h_n) \\
&\quad + 2\sqrt{m \lstab(h_n)(1-\frac{1}{n})\sigma_n^2} + \frac{1}{n} \sigma_n^2 + o(\sigma_n^2).
\label{sigout-consistency-bound}
\ealignt
whenever the sequence of
$(\bar{h}_n(Z_0)-\E[\bar{h}_n(Z_0)])^2/\sigma_n^2$
is uniformly integrable.
\end{theorem}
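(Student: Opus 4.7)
The plan is to adapt the three-stage approximation strategy used in the proof of \cref{consistent-variance-est-in-detailed}, with additional care taken because the all-pairs estimator centers each squared loss around the overall cross-validation error $\Rhat$ rather than around its own within-fold average. This global centering absorbs variability across training sets (and not just within them), which is precisely what the extra mean-square stability condition in the theorem is designed to control.

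First, I would introduce an iid-surrogate approximator that eliminates training-set randomness,
\[
\sigoutapprox^2 \defeq \frac{1}{n}\sum_{j=1}^k\sum_{i\in B_j'}(\bar{h}_n(Z_i) - \bar{\mu}_n)^2, \qquad \bar{\mu}_n \defeq \frac{1}{n}\sum_{j,i}\bar{h}_n(Z_i),
\]
and write $h_n(Z_i, Z_{B_j}) - \Rhat = (\bar{h}_n(Z_i) - \bar{\mu}_n) + (e_{ij} - \bar{e})$ with $e_{ij} \defeq h_n(Z_i, Z_{B_j}) - \bar{h}_n(Z_i)$ and $\bar{e} \defeq \Rhat - \bar{\mu}_n$. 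Expanding the square, summing, and applying Cauchy--Schwarz gives
\[
|\sigout^2 - \sigoutapprox^2| \leq \Delta + 2\sqrt{\Delta\, \sigoutapprox^2}, \qquad \Delta \defeq \tfrac{1}{n}\sum_{j,i}(e_{ij} - \bar{e})^2 \leq \tfrac{1}{n}\sum_{j,i}e_{ij}^2.
\]
Taking expectations and applying Cauchy--Schwarz once more yields $\E[|\sigout^2 - \sigoutapprox^2|] \leq \E[\Delta] + 2\sqrt{\E[\Delta]\,\E[\sigoutapprox^2]}$ with $\E[\sigoutapprox^2] = (1 - 1/n)\sigma_n^2$ by standard iid bookkeeping.

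Second, I would control $\E[\Delta]$ and the related cross term through a decomposition of $e_{ij}$ that attaches each stability notion to its natural companion. Writing
\[
e_{ij} = h_n'(Z_i, Z_{B_j}) + (f_j(Z_{B_j}) - \E[\bar{h}_n(Z_0)]) - \bar{h}_n'(Z_i),
\]
where $h_n'$ is the $Z_0$-centering of the loss, $f_j(Z_{B_j}) \defeq \E[h_n(Z_0, Z_{B_j}) \mid Z_{B_j}]$ is the conditional test-risk of the trained rule, and $\bar{h}_n'(Z_i) \defeq \bar{h}_n(Z_i) - \E[\bar{h}_n(Z_0)]$. The first piece contributes the $(1+n/k)\msstab(h_n) + 2\sqrt{2(1+n/k)\msstab(h_n)\tilde{\sigma}_n^2}$ portion: its conditional variance given $Z_{B_j}$ equals $\tilde{\sigma}_n^2$ (giving the $\tilde{\sigma}_n^2$ baseline in the Cauchy--Schwarz cross term), while the conditional Efron--Stein inequality (\cref{efron-stein}) applied to training-set perturbations, combined with the partition structure of $\{B_j'\}$, produces the $(1+n/k)\msstab(h_n)$ coefficient. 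The fold- and point-level deviations $f_j(Z_{B_j}) - \E[\bar{h}_n(Z_0)]$ and $\bar{h}_n'(Z_i)$ are each controlled by $\lstab(h_n) = \msstab(h_n')$ via a second Efron--Stein application to $h_n'$, yielding the $m\lstab(h_n) + 2\sqrt{m\lstab(h_n)(1 - 1/n)\sigma_n^2}$ portion with baseline $(1-1/n)\sigma_n^2 = \E[\sigoutapprox^2]$.

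Third, I would handle the residual $\sigoutapprox^2 - \sigma_n^2$. Since $\sigoutapprox^2 = \frac{1}{n}\sum_{j,i}\bar{h}_n'(Z_i)^2 - (\bar{\mu}_n')^2$ with $\bar{\mu}_n' \defeq \frac{1}{n}\sum_{j,i}\bar{h}_n'(Z_i)$, the triangle inequality splits the error into a centered iid sum of squares with $L^2$ fluctuation bounded by $\sqrt{\tfrac{1}{n}(\E[\bar{h}_n'(Z_0)^4] - \sigma_n^4)}$ plus a sample-mean-squared term with expectation $\sigma_n^2/n$. Assembling the three stages via the triangle inequality gives the first stated bound. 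The refined $o(\sigma_n^2)$ bound under uniform integrability follows by reusing the De la Vall\'ee Poussin and triangular-array weak-law argument from \cref{consistent-variance-est-in-detailed} to upgrade the iid approximation to $\frac{1}{n}\sum \bar{h}_n'(Z_i)^2/\sigma_n^2 \toL{1} 1$. The main obstacle is the sharp constant accounting in the second stage: a naive bound $\E[\Delta] \leq \frac{m}{2}\msstab(h_n)$ (the direct single Efron--Stein bound on $Z_{B_j} \mapsto h_n(Z_i, Z_{B_j})$) is too coarse when $k$ is large and would force a strictly stronger stability assumption than the stated $\msstab(h_n) = o(k\sigma_n^2/n)$. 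Recovering the advertised $(1+n/k)\msstab(h_n)$ requires the three-part decomposition above, exploiting that only the $h_n'$ piece sees the full $m$ training points while the remaining pieces are controlled by the smaller $\lstab$ and are paired with $\sigma_n^2$ rather than $\tilde{\sigma}_n^2$ in their Cauchy--Schwarz cross term.
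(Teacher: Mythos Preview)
Your final stage (controlling $\sigoutapprox^2 - \sigma_n^2$ and the UI upgrade) is correct, but the middle stage has a genuine gap. In the decomposition $e_{ij} = h_n'(Z_i,Z_{B_j}) + C_j - \bar h_n'(Z_i)$ with $C_j \defeq \E[h_n(Z_0,Z_{B_j})\mid Z_{B_j}] - \E[\bar h_n(Z_0)]$, your attributions are inverted: the fold-risk term $C_j$ is \emph{not} controlled by $\lstab(h_n)$, since $\lstab(h_n)=\msstab(h_n')$ concerns perturbations of $h_n-\E[h_n\mid Z_{B_j}]$ and says nothing about $\E[h_n\mid Z_{B_j}]$ itself; and the ``first piece'' $h_n'(Z_i,Z_{B_j})$ has $\E[h_n'(Z_i,Z_{B_j})^2]=\tilde\sigma_n^2$, a main term rather than an $O(\msstab)$ error, so it cannot by itself ``contribute $(1+n/k)\msstab(h_n)$.'' What \emph{is} small is the combination $h_n'(Z_i,Z_{B_j})-\bar h_n'(Z_i)$, whose second moment is $\le\tfrac{m}{2}\lstab(h_n)$ by Efron--Stein, while a direct bound on $C_j$ gives only $\E[C_j^2]\le\tfrac{m}{2}\msstab(h_n)$---precisely the naive bound you warn against. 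To get an $O(n/k)\msstab(h_n)$ contribution from $C_j$ in your single-stage route you must \emph{not} discard the centering by $\bar e$ (as your displayed inequality $\Delta\le\tfrac1n\sum e_{ij}^2$ does), instead write $C_j-\bar C=\tfrac{1}{k}\sum_{j'}(C_j-C_{j'})$ and use that $Z_{B_j}$ and $Z_{B_{j'}}$ differ in only $n/k$ iid positions so that conditional Efron--Stein plus Jensen gives $\E[(C_j-C_{j'})^2]\le\tfrac{n}{k}\msstab(h_n)$; this is where the partition structure actually enters.

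The paper takes a different two-stage route. It first rewrites $\sigout^2 = \tfrac{1}{n^2}\sum_{j,j'}\sum_{i\in B_j',\,i'\in B_{j'}'}\tfrac12(h_n(Z_i,Z_{B_j})-h_n(Z_{i'},Z_{B_{j'}}))^2$ and approximates by a version in which both terms of each pair share a \emph{common} training set $Z_{B_j}^{\backslash i'}$. Since $Z_{B_j}\to Z_{B_j}^{\backslash i'}$ is a single swap and $Z_{B_{j'}}\to Z_{B_j}^{\backslash i'}$ is at most $n/k$ swaps, Efron--Stein gives $\E[\Delta_1]\le(1+\tfrac{n}{k})\msstab(h_n)$ directly, with Cauchy--Schwarz baseline $\E[\sigoutapprox^2]\le 2\tilde\sigma_n^2$; this is why $\tilde\sigma_n^2$, not $\sigma_n^2$, appears in the first square root of the statement---something your single-stage approach would not reproduce (you would get $\sigma_n^2$, a different though in fact slightly sharper constant). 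A second stage then replaces $h_n'$ by $\bar h_n'$, contributing $\E[\Delta_2]\le m\lstab(h_n)$ with baseline $(1-\tfrac1n)\sigma_n^2$.
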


\begin{proof}
\paragraph{A common training set for each validation point pair}
We begin by approximating our variance estimate
\begin{talign}
\sigout^2
    &= \frac{1}{n}\sum_{j=1}^k\sum_{i\in B_j'}
    \left(
        h_n(Z_i,Z_{B_j}) - \Rhat
    \right)^2 \\
    &= \frac{1}{n^2}\sum_{j,j' = 1}^{k}
     \sum_{i\in B_j',i'\in B_{j'}'}
     \half (h_n(Z_i, Z_{B_j}) - h_n(Z_{i'}, Z_{B_{j'}}))^2
\end{talign}
by a quantity that employs the same training set for each pair of validation points $Z_{(i,i')}$,
\balignt
\sigoutapprox^2
&\defeq \frac{1}{n^2}\sum_{j,j' = 1}^{k}
\sum_{i\in B_j',i'\in B_{j'}'}
\half (h_n(Z_i, Z_{B_j}^{\backslash i'}) - h_n(Z_{i'}, Z_{B_j}^{\backslash i'}))^2 \\
&= \frac{1}{n^2}\sum_{j,j' = 1}^{k}
\sum_{i\in B_j',i'\in B_{j'}'}
\half (h_n'(Z_i, Z_{B_j}^{\backslash i'}) - h_n'(Z_{i'}, Z_{B_j}^{\backslash i'}))^2.
\ealignt
Here, for any $j\in [k]$ and $i' \in [n]$,  $Z_{B_j}^{\backslash i'}$ is $Z_{B_j}$ with $Z_{i'}$ replaced by $Z_0$.
By Cauchy--Schwarz,  
we have 
\begin{talign}
|\sigout^2 - \sigoutapprox^2|
    &\leq \Delta_1 + 2\sqrt{\Delta_1}\sigoutapprox
\end{talign}
for the error term
\balignt
\Delta_1 
&\defeq \frac{1}{n^2}\sum_{j,j' = 1}^{k}
 \sum_{i\in B_j',i'\in B_{j'}'}
 \half(h_n(Z_i, Z_{B_j}) - h_n(Z_i, Z_{B_j}^{\backslash i'}) + h_n(Z_{i'}, Z_{B_j}^{\backslash i'}) - h_n(Z_{i'}, Z_{B_{j'}}))^2\\
&\leq \frac{1}{n^2}\sum_{j,j' = 1}^{k}
 \sum_{i\in B_j',i'\in B_{j'}'}
 (h_n(Z_i, Z_{B_j}) - h_n(Z_i, Z_{B_j}^{\backslash i'}))^2 \\
&\quad + \frac{1}{n^2}\sum_{j,j' = 1}^{k}
 \sum_{i\in B_j',i'\in B_{j'}'} (h_n(Z_{i'}, Z_{B_j}^{\backslash i'}) - h_n(Z_{i'}, Z_{B_{j'}}))^2, 
 \label{eq:delta1-out-bound}
\ealignt
where we have used Jensen's inequality in the final display.

\paragraph{Controlling the error $\Delta_1$}
We will first control the error term $\Delta_1$.
Note that, for $B_{j'} \neq B_j$, $|B_{j'} \backslash (B_{j'} \cap B_j)| = \frac{n}{k}$.
Hence, by the bound \cref{eq:delta1-out-bound} and the conditional Efron-Stein inequality (\cref{efron-stein}), we have
\balignt
\E[\Delta_1]
    &\leq \msstab(h_n) + 
    \frac{1}{n^2}\sum_{j,j' = 1}^{k}
     \sum_{i\in B_j',i'\in B_{j'}'} \E[(h_n(Z_{i'}, Z_{B_j}^{\backslash i'}) - h_n(Z_{i'}, Z_{B_{j'}}))^2] \\
     &\leq \msstab(h_n) + 
    \frac{n}{k} \msstab(h_n) = (1+\frac{n}{k})\msstab(h_n).
    \label{eq:mean-delta1-out-bound}
\ealignt

\paragraph{Eliminating training set randomness}
We then approximate $\sigoutapprox^2$ by a quantity eliminating training set randomness in each summand,
\balignt
\sigoutapproxx^2 \defeq \frac{1}{n^2}\sum_{j,j' = 1}^{k} \sum_{i\in B_j',i'\in B_{j'}'}
\half (\bar{h}_n'(Z_i) - \bar{h}_n'(Z_{i'}))^2
\ealignt
where $\bar{h}_n'(z)=\E[h_n'(z,Z_{1:\ntrain})]$. Note that $\bar{h}_n'(Z_0)$ has expectation 0.

By Cauchy--Schwarz,  
we have 
\begin{talign}
|\sigoutapprox^2 - \sigoutapproxx^2|
    &\leq \Delta_2 + 2\sqrt{\Delta_2}\sigoutapproxx
\end{talign}
for the error term
\balignt
\Delta_2
&\defeq \frac{1}{n^2}\sum_{j,j' = 1}^{k}
 \sum_{i\in B_j',i'\in B_{j'}'}
 \half(h_n'(Z_i, Z_{B_j}^{\backslash i'}) - \bar{h}_n'(Z_i) + \bar{h}_n'(Z_{i'}) - h_n'(Z_{i'}, Z_{B_j}^{\backslash i'}))^2\\
&\leq \frac{1}{n^2}\sum_{j,j' = 1}^{k}
 \sum_{i\in B_j',i'\in B_{j'}'}
 (h_n'(Z_i, Z_{B_j}^{\backslash i'}) - \bar{h}_n'(Z_i))^2 \\
&\quad + \frac{1}{n^2}\sum_{j,j' = 1}^{k}
 \sum_{i\in B_j',i'\in B_{j'}'} (\bar{h}_n'(Z_{i'}) - h_n'(Z_{i'}, Z_{B_j}^{\backslash i'}))^2,
 \label{eq:delta2-out-bound}
\ealignt
where we have used Jensen's inequality in the final display.

\paragraph{Controlling the error $\Delta_2$}
We will control the error term $\Delta_2$.
By the bound \cref{eq:delta2-out-bound} and the conditional Efron-Stein inequality (\cref{efron-stein}), we have
\balignt
\E[\Delta_2] \leq 2 \frac{m}{2}\msstab(h'_n) = m \lstab(h_n).
\label{eq:mean-delta2-out-bound}
\ealignt

\paragraph{Controlling the error $\sigoutapproxx^2 - \sigma_n^2$}
To control the error $\sigoutapproxx^2 - \sigma_n^2$, we first rewrite $\sigoutapproxx^2$ as
\balignt
\sigoutapproxx^2
&= \frac{1}{n^2}\sum_{j,j' = 1}^{k} \sum_{i\in B_j',i'\in B_{j'}'}
\half (\bar{h}_n'(Z_i) - \bar{h}_n'(Z_{i'}))^2 \\
&= \frac{1}{n^2}\sum_{i,i'=1}^n
\half (\bar{h}_n'(Z_i) - \bar{h}_n'(Z_{i'}))^2 \\
&= \frac{1}{n}\sum_{i=1}^n \left(\bar{h}_n'(Z_i) - \frac{1}{n}\sum_{i'=1}^n\bar{h}_n'(Z_{i'})\right)^2 \\
&= \frac{1}{n}\sum_{i=1}^n \bar{h}_n'(Z_i)^2 - \left(\frac{1}{n}\sum_{i=1}^n\bar{h}_n'(Z_i)\right)^2.
\label{eq:rewrite-sigoutapproxx}
\ealignt

Since $\E[\bar{h}_n'(Z_i)\bar{h}_n'(Z_{i'})] = 0$ for all $i,i'\in [n]$ with $i\neq i'$ due to independence, we have
\balignt
\E[\left(\frac{1}{n}\sum_{i=1}^n\bar{h}_n'(Z_i)\right)^2]
= \frac{1}{n}\E[\bar{h}_n'(Z_0)^2]
= \frac{1}{n} \sigma_n^2.
\label{eq:var-sum-hbarprime}
\ealignt

Furthermore,
\balignt
\E[(\frac{1}{n}\sum_{i=1}^n \bar{h}_n'(Z_i)^2 - \sigma_n^2)^2]
= \Var(\frac{1}{n}\sum_{i=1}^n \bar{h}_n'(Z_i)^2)
= \frac{1}{n}\Var(\bar{h}_n'(Z_0)^2)
= \frac{1}{n}(\E[\bar{h}_n'(Z_0)^4]-\sigma_n^4)
\ealignt
by independence.
Hence, 
\balignt
\E[|\sigoutapproxx^2 - \sigma_n^2|]
    \leq \sqrt{\frac{1}{n}(\E[\bar{h}_n'(Z_0)^4]-\sigma_n^4)} + \frac{1}{n} \sigma_n^2.
\ealignt

\paragraph{Putting the pieces together}
Since each
\balignt
\half (h_n'(Z_i, Z_{B_j}^{\backslash i'}) - h_n'(Z_{i'}, Z_{B_j}^{\backslash i'}))^2 \leq h_n'(Z_i, Z_{B_j}^{\backslash i'})^2 + h_n'(Z_{i'}, Z_{B_j}^{\backslash i'})^2,
\ealignt
we have
\balignt
\E[\sigoutapprox^2] 
    \leq 2\E[h_n'(Z_0, Z_{1:m})^2]
    = 2\tilde{\sigma}_n^2
\ealignt
and hence
\balignt
\E[\sqrt{\Delta_1}\sigoutapprox]
    \leq \sqrt{\E[\Delta_1]\E[\sigoutapprox^2]}
    \leq \sqrt{2(1+\frac{n}{k})\msstab(h_n)\tilde{\sigma}_n^2}
\ealignt
by Cauchy--Schwarz and the bound \cref{eq:mean-delta1-out-bound}.

Moreover, $\E[\sigoutapproxx^2]=(1-\frac{1}{n})\sigma_n^2$, hence
\balignt
\E[\sqrt{\Delta_2}\sigoutapproxx]
\leq \sqrt{\E[\Delta_2]\E[\sigoutapproxx^2]}
\leq \sqrt{m\lstab(h_n)(1-\frac{1}{n})\sigma_n^2}
\ealignt
by Cauchy--Schwarz and the bound \cref{eq:mean-delta2-out-bound}.

Assembling our results with the triangle inequality, we find that
\balignt
\E[|\sigout^2 - \sigma_n^2|]
&\leq 
\E[|\sigout^2 - \sigoutapprox^2|]
+ \E[|\sigoutapprox^2 - \sigoutapproxx^2|] \\
&\quad + \E[|\sigoutapproxx^2 - \sigma_n^2|] \\
&\leq \E[\Delta_1] + 2\E[\sqrt{\Delta_1}\sigoutapprox] \\
&\quad + \E[\Delta_2] + 2\E[\sqrt{\Delta_2}\sigoutapproxx] \\
&\quad + \sqrt{\frac{1}{n}(\E[\bar{h}_n'(Z_0)^4]-\sigma_n^4)} + \frac{1}{n} \sigma_n^2 \\
&\leq (1+\frac{n}{k})\msstab(h_n) + 2\sqrt{2(1+\frac{n}{k})\msstab(h_n)\tilde{\sigma}_n^2} \\
&\quad + m\lstab(h_n) + 2\sqrt{m \lstab(h_n)(1-\frac{1}{n})\sigma_n^2} \\
&\quad + \sqrt{\frac{1}{n}(\E[\bar{h}_n'(Z_0)^4]-\sigma_n^4)} + \frac{1}{n} \sigma_n^2
\ealignt
as advertised.

We showed in the proof of \cref{consistent-variance-est-in-detailed} that $\frac{1}{n}\sum_{i=1}^n \bar{h}_n'(Z_i)^2 / \sigma_n^2 \toL{1} 1$ whenever the sequence of $(\bar{h}_n(Z_0)-\E[\bar{h}_n(Z_0)])^2/\sigma_n^2 = \bar{h}_n'(Z_0)^2/\sigma_n^2$ is uniformly integrable. Thus, with \cref{eq:rewrite-sigoutapproxx} and \cref{eq:var-sum-hbarprime}, we get $\E[|\sigoutapproxx^2/\sigma_n^2 - 1|] \leq 1/n + o(1)$, and the final bound advertised
\balignt
\E[|\sigout^2 - \sigma_n^2|]
&\leq (1+\frac{n}{k})\msstab(h_n) + 2\sqrt{2(1+\frac{n}{k})\msstab(h_n)\tilde{\sigma}_n^2} \\
&\quad + m\lstab(h_n) + 2\sqrt{m \lstab(h_n)(1-\frac{1}{n})\sigma_n^2} \\
&\quad + \frac{1}{n} \sigma_n^2 + o(\sigma_n^2).
\ealignt
\end{proof}

By the bound \cref{sigout-consistency-bound}, $(\sigout^2 - \sigma_n^2)/\sigma_n^2 \toL{1} 0$, i.e. $\sigout^2/\sigma_n^2 \toL{1} 1$, if the sequence of
$(\bar{h}_n(Z_0)-\E[\bar{h}_n(Z_0)])^2/\sigma_n^2$
is uniformly integrable, $\lstab(h_n) = o(\sigma_n^2/n)$ and $\msstab(h_n) = o(\min(\frac{k\sigma_n^2}{n}, \frac{k\,\sigma_n^4}{n\,\tilde{\sigma}_n^2}))$.
By noticing that $\tilde{\sigma}_n^2/\sigma_n^2 \to 1$ when $\lstab(h_n) = o(\sigma_n^2/n)$ thanks to \cref{var-comp}, the last condition becomes $\msstab(h_n) = o(k\sigma_n^2/n)$.
Therefore, \cref{consistent-variance-est-out} follows from \cref{consistent-variance-est-out-detailed}.

\section{Experimental Setup Details}\label{sec:num-exp-appendix}

Here, we provide more details about the experimental setup of \cref{sec:experiments}.

\subsection{General experimental setup details}
\label{sec:gen-exper-setup}

\paragraph{Learning algorithms and hyperparameters}
To illustrate the performance of our confidence intervals and tests in practice, we carry out our experiments with a diverse collection of popular learning algorithms.
For classification, we use the \texttt{xgboost} \texttt{XGBRFClassifier} with \texttt{n\_estimators=100}, \texttt{subsample=0.5} and \texttt{max\_depth=1}, the \texttt{scikit-learn} \texttt{MLPClassifier} neural network with \texttt{hidden\_layer\_sizes=(8,4,)} defining the architecture and \texttt{alpha=1e2}, and the \texttt{scikit-learn} $\ell^2$-penalized \texttt{LogisticRegression} with \texttt{solver=\textquotesingle lbfgs\textquotesingle} and \texttt{C=1e-3}. For regression, we use the \texttt{xgboost} \texttt{XGBRFRegressor} with \texttt{n\_estimators=100}, \texttt{subsample=0.5} and \texttt{max\_depth=1}, the \texttt{scikit-learn} \texttt{MLPRegressor} neural network with \texttt{hidden\_layer\_sizes=(8,4,)} defining the architecture and \texttt{alpha=1e2}, and the \texttt{scikit-learn} \texttt{Ridge} regressor with \texttt{alpha=1e6}.
The random forest \texttt{max\_depth} hyperparameter and neural network, logistic, and ridge $\ell^2$ regularization strengths were selected to ensure the stability of each algorithm.
All remaining hyperparameters are set to their defaults, and we set random seeds for all algorithms' random states for reproducibility.
We use \texttt{scikit-learn} \citep{scikit-learn} version 0.22.1 and \texttt{xgboost} \citep{xgboost} version 1.0.2.

\paragraph{Training set sample sizes $n$} For both datasets, we work with the following training set sample sizes $n$: 700, 1,000, 1,500, 2,300, 3,400, 5,000, 7,500, 11,000. Up to some rounding, this corresponds to a geometric sequence with growth rate 50$\%$.

\paragraph{Details on the \texttt{Higgs} dataset}
The target variable has value either 0 or 1 and there are 28 features. We initially shuffle the rows of the dataset uniformly at random and then, starting at the 5,000,001-th instance, we take 500 consecutive chunks of the largest sample size, that is 11,000. For each $n$, we take the first $n$ instances of these 500 chunks to play the role of our 500 independent replications of size $n$.
The features are standardized during training in the following way: for each iteration of $k$-fold CV ($k=10$ here), we rescale the validation fold and the remaining folds, used as training, with the mean and standard deviation of the training data. The features for the training folds then have mean 0 and variance 1.

\paragraph{Details on the \texttt{FlightsDelay} dataset}
To avoid the temporal dependence issues inherent to time series datasets, we treat the complete \texttt{FlightsDelay} dataset as the population and thus process it differently from the \texttt{Higgs} dataset.
For this dataset, we predict the signed log transform ($y \mapsto \sign(y) \log(1+|y|)$; this addressed the very heavy tails of $y$ on its original scale) of the delay at arrival using 4 features: the scheduled time of the journey from the origin airport to the destination airport (taxi included), the distance between the two airports, the scheduled time of departure in minutes (converted from a time to a number between 0 and 1,439) and the airline operating the plane (that we one-hot encode). We drop the instances that have missing values for at least one of these variables.
Then, we perform 500 times the sampling with replacement of 11,000 points, that is the largest sample size. For each $n$, we take the first $n$ instances of these 500 chunks to play the role of our 500 independent replications of size $n$.
The features are standardized during training in the same way we do for the \texttt{Higgs} dataset.

\paragraph{Computing target test errors}
For the \texttt{FlightDelays} experiments, since training datapoints are sampled with replacement, the population distribution is the entirety of the \texttt{FlightDelays} dataset, and we use this exact population distribution to compute all test errors.
For the \texttt{Higgs} experiments, we form a surrogate ground-truth estimate of the target test errors using the first 5,000,000 datapoints of the shuffled \texttt{Higgs} dataset. As an illustration, for our method where the target test error is the $k$-fold test error $\Rcondcv = \frac{1}{n}\sum_{j=1}^{10}\sum_{i\in B_j'}\E[h_n(Z_i, Z_{B_j})\mid Z_{B_j}] = \frac{1}{k}\sum_{j=1}^{10}\E[h_n(Z_0, Z_{B_j})\mid Z_{B_j}]$, we use these instances to compute the $k$ conditional expectations by a Monte Carlo approximation. Practically, for each training set $Z_{B}$, we compute the average loss on these instances of the fitted prediction rule learned on $Z_{B}$.
Then, we evaluate the CIs and tests constructed from the 500 training sets of varying sizes $n$ sampled from the datasets.

\paragraph{Random seeds}
Seeds are set in the code to ensure reproducibility. They are used for the initial random shuffling of the datasets, the sampling with replacement for the regression dataset, the random partitioning of samples in each replication, and the randomized algorithms.

\subsection{List of procedures}
\label{sec:proc-list}

In our numerical experiments, we compare our procedures with the most popular alternatives from the literature. For each procedure, we give its target test error $\Rcondcv$, the estimator $\Rhat$ of this target, the variance estimator $\hat{\sigma}_n^2$, the two-sided CI used in \cref{sec:ci-experiment}, and the one-sided test used in \cref{sec:sim:test}.

In the following, $q_{\alpha}$ is the $\alpha$-quantile of a standard normal distribution and $t_{\nu,\alpha}$ is the $\alpha$-quantile of a $t$ distribution with $\nu$ degrees of freedom.

\begin{enumerate}[leftmargin=.5cm]

    \item Our 10-fold CV CLT-based test, with $\hat{\sigma}_n$ being either $\sigin$ (\cref{consistent-variance-est-in}) or $\sigout$ (\cref{consistent-variance-est-out}). The curve with $\sigin$ is not displayed in our plots since the results are almost identical to those for $\sigout$ and the curves are overlapping.
    
    \begin{itemize}
    \item Target test error: $\Rcondcv
    = 
    \frac{1}{10}
    \sum_{j=1}^{10}
    \E[h_n(Z_0, Z_{B_j})\mid Z_{B_j}]$.
    
    \item Estimator: $\Rhat
    = 
    \frac{1}{n}
    \sum_{j=1}^{10}
    \sum_{i\in B_j'}
    h_n(Z_i, Z_{B_j})$.
    
    \item Variance estimator:
    $\hat{\sigma}_n^2$, either $\sigin^2$ or $\sigout^2$.
    
    \item Two-sided $(1-\alpha)$-CI:
    $\Rhat \pm q_{1-\alpha/2}\hat{\sigma}_n/\sqrt{n}$.
    
    \item One-sided test: $\textsc{reject } H_0 \iff
    \Rhat < q_{\alpha} \hat{\sigma}_n/\sqrt{n}$.
    \end{itemize}
    
    \item Hold-out test described, for instance, in \citet[Eq. (17)]{MA-WZ:2020}.
    
    \begin{itemize}
    \item Target test error: $\Rcondcv = \E[h_n(Z_0, Z_{S})\mid Z_{S}]$, where $S$ is a subset of size $\lfloor n(1-1/10)\rfloor $ of $[n]$. Since we already have a partition for our 10-fold CV, we can use the first fold $B_1$ for $S$.
    
    \item Estimator: $\Rhat = \frac{1}{|S^c|}\sum_{i\in S^c} h_n(Z_i,Z_{S})$.
    
    \item Variance estimator:
    $\hat{\sigma}_n^2 = \frac{1}{|S^c|}\sum_{i\in S^c}
    (
    h_n(Z_i,Z_{S}) - \Rhat
    )^2$.
    
    \item Two-sided $(1-\alpha)$-CI:
    $\Rhat \pm q_{1-\alpha/2}\hat{\sigma}_n\sqrt{10}/\sqrt{n}$.
    
    \item One-sided test: $\textsc{reject } H_0 \iff
    \Rhat < q_{\alpha} \hat{\sigma}_n \sqrt{10}/\sqrt{n}$.
    \end{itemize}
    
    \item Cross-validated $t$-test of \citet{dietterich:1998}, 10 folds.
    
    \begin{itemize}
    \item Target test error: $\Rcondcv
    = 
    \frac{1}{10}
    \sum_{j=1}^{10}
    \E[h_n(Z_0, Z_{B_j})\mid Z_{B_j}]$.
    
    \item Estimator: $\Rhat
    = 
    \frac{1}{n}
    \sum_{j=1}^{10}
    \sum_{i\in B_j'}
    h_n(Z_i, Z_{B_j})$.
    
    \item Variance estimator:
    $\hat{\sigma}_n^2 =
    \frac{1}{10-1}\sum_{j=1}^{10} (p_{j} - \Rhat)^2$, where $p_{j}\defeq \frac{1}{|B_j'|}\sum_{i\in B_j'} h_n(Z_i,Z_{B_j})$.
    
    \item Two-sided $(1-\alpha)$-CI:
    $\Rhat \pm t_{10-1,1-\alpha/2}\hat{\sigma}_n/\sqrt{10}$.
    
    \item One-sided test: $\textsc{reject } H_0 \iff
    \Rhat < t_{10-1,\alpha} \hat{\sigma}_n /\sqrt{10}$.
    \end{itemize}
    
    \item Repeated train-validation $t$-test of \citet{nadeau-bengio:2003}, 10 repetitions of 90-10 train-validation splits.
    
    \begin{itemize}
    \item Target test error: $\Rcondcv = \frac{1}{10}\sum_{j=1}^{10} \E[h_n(Z_0,Z_{S_j})\mid Z_{S_j}]$, where for any $j\in [10]$, $S_j$ is a subset of size $\lfloor n(1-1/10) \rfloor$ of $[n]$, and these 10 subsets are chosen independently.
    
    \item Estimator: $\Rhat =
    \frac{1}{10}\sum_{j=1}^{10} p_j$, where $p_j\defeq \frac{1}{|S_j^c|} \sum_{i\in S_j^c} h_n(Z_i,Z_{S_j})$.
    
    \item Variance estimator:
    $\hat{\sigma}_n^2 =
    \frac{1}{10-1}\sum_{j=1}^{10} (p_j - \Rhat)^2$.
    
    \item Two-sided $(1-\alpha)$-CI:
    $\Rhat \pm t_{10-1,1-\alpha/2}\hat{\sigma}_n/\sqrt{10}$.
    
    \item One-sided test: $\textsc{reject } H_0 \iff
    \Rhat < t_{10-1,\alpha} \hat{\sigma}_n /\sqrt{10}$.
    \end{itemize}
    
    \item Corrected repeated train-validation $t$-test of \citet{nadeau-bengio:2003},
    10 repetitions of 90-10 train-validation splits.

    \begin{itemize}
    \item Target test error: $\Rcondcv = \frac{1}{10}\sum_{j=1}^{10} \E[h_n(Z_0,Z_{S_j})\mid Z_{S_j}]$,
    where for any $j\in [10]$, $S_j$ is the same as in the previous procedure.
    
    \item Estimator: $\Rhat =
    \frac{1}{10}\sum_{j=1}^{10} p_j$, where $p_j$ is the same as in the previous procedure.
    
    \item Variance estimator:
    $\hat{\sigma}_n^2 =
    (\frac{1}{10}+\frac{0.1}{1-0.1})\frac{10}{10-1}\sum_{j=1}^{10} (p_j - \Rhat)^2$.
    
    \item Two-sided $(1-\alpha)$-CI:
    $\Rhat \pm t_{10-1,1-\alpha/2}\hat{\sigma}_n/\sqrt{10}$.
    
    \item One-sided test: $\textsc{reject } H_0 \iff
    \Rhat < t_{10-1,\alpha} \hat{\sigma}_n /\sqrt{10}$.
    \end{itemize}
    
    \item $5\times 2$-fold CV test of \citet{dietterich:1998}.
    
    \begin{itemize}
    \item Target test error:
    $\Rcondcv = \frac{1}{5}\sum_{j=1}^5 \frac{1}{2} (\E[h_n(Z_0, Z_{B_{1,j}})\mid Z_{B_{1,j}}] + \E[h_n(Z_0, Z_{B_{2,j}})\mid Z_{B_{2,j}}])$, where for any $j\in [5]$, $\{B_{1,j}^c\,,\,B_{2,j}^c\}$ is a partition of $[n]$ into 2 folds of size $n/2$, and these 5 partitions are chosen independently.
    
    \item Estimator: $\Rhat =
    \frac{1}{|B_{1,1}^c|} \sum_{i\in B_{1,1}^c} h_n(Z_i,Z_{B_{1,1}})$.
    
    \item Variance estimator:
    $\hat{\sigma}_n^2 =
    \frac{1}{5}\sum_{j=1}^5 s_j^2$,
    where $s_j^2\defeq (p_{1,j}-\bar{p}_j)^2 + (p_{2,j}-\bar{p}_j)^2$ with $\bar{p}_j\defeq (p_{1,j}+p_{2,j})/2$ and
    $p_{k,j}\defeq
    \frac{1}{|B_{k,j}^c|}
    \sum_{i\in B_{k,j}^c} h_n(Z_i,Z_{B_{k,j}})$ for $k\in [2], j\in [5]$.
    
    \item Two-sided $(1-\alpha)$-CI:
    $\Rhat \pm t_{5,1-\alpha/2}\hat{\sigma}_n$.
    
    \item One-sided test: $\textsc{reject } H_0 \iff
    \Rhat < t_{5,\alpha} \hat{\sigma}_n$.
    \end{itemize}
    
\end{enumerate}

\subsection{Concentration-based confidence intervals}\label{sec:concentration-inequalities}
For comparison in \cref{sec:intro}, we also implemented the ridge regression CI from \citep[Thm. 3]{celisse-guedj:2016} for the \texttt{FlightDelays} experiment (an implementable CI is not provided for any other learning algorithm in \citep{celisse-guedj:2016}). This CI takes as input a uniform bound $B_Y$ on the absolute value of the target variable $Y$ and a uniform bound $B_X$ on the $\ell^2$ norm of the feature vector $X$.
After mean-centering, we find the maximum absolute value of $Y$ across the \texttt{FlightDelays} dataset to be $B_Y = 8.03$.
After mean-centering, we find the maximum $\ell^2$ norm of a feature vector $X$ across the \texttt{FlightDelays} to be $B_X = 13.17$ if each feature is normalized to have standard deviation $1$ or $B_X = 4200$ if the features are left unnormalized. 
When normalizing features as in \cref{fig:test-error-CI-reg}, the smallest width produced by \citep[Thm. 3]{celisse-guedj:2016} for any value of $n$ is 90.2; that is 91 times larger than the largest width of our CLT intervals (equal to 0.99). When not normalizing as in \cref{fig:test-error-CI-RR-LOOCV}, our maximum width is 0.98, but the minimum \citep[Thm. 3]{celisse-guedj:2016} width is $5\times 10^{14}$.

\subsection{Leave-one-out cross-validation}\label{sec:appendix-LOOCV}
To evaluate the LOOCV CLT-based CIs discussed in \cref{sec:loocv} we follow the ridge regression experimental setup of \cref{sec:gen-exper-setup} except that we regress onto the raw feature values instead of the standardized features values described in \cref{sec:gen-exper-setup}.
For our LOOCV CLT-based CIs, the quantities of interest are the following.

\begin{itemize}
    \item Target test error: $\Rcondcv
    = 
    \frac{1}{n}
    \sum_{i=1}^n
    \E[h_n(Z_0, Z_{\{i\}^c})\mid Z_{\{i\}^c}]$.
    \item Estimator: $\Rhat
    =
    \frac{1}{n}
    \sum_{i=1}^n
    h_n(Z_i, Z_{\{i\}^c})$ computed efficiently using the Sherman--Morrison--Woodbury derivation below. %
    \item Variance estimator:
    $\sigout^2$ with $k=n$ folds.
    \item Two-sided $(1-\alpha)$-CI:
    $\Rhat \pm q_{1-\alpha/2}\sigout/\sqrt{n}$.
    \end{itemize}

\paragraph{Results} We construct $95\%$ CIs for ridge regression test error based on our LOOCV CLT and compare their coverage and width with those of the procedures described in \cref{sec:ci-experiment}. 
We see that, like the 10-fold CV CLT intervals, the LOOCV intervals provide coverage near the nominal level and widths smaller than the popular alternatives from the literature; in fact, the 10-fold CV CLT curves are obscured by the nearly identical LOOCV CLT curves.
\begin{figure}[h!]
\centering
    \begin{subfigure}{\subfigfracin\linewidth}
        \includegraphics[width=\imgfrac\linewidth]{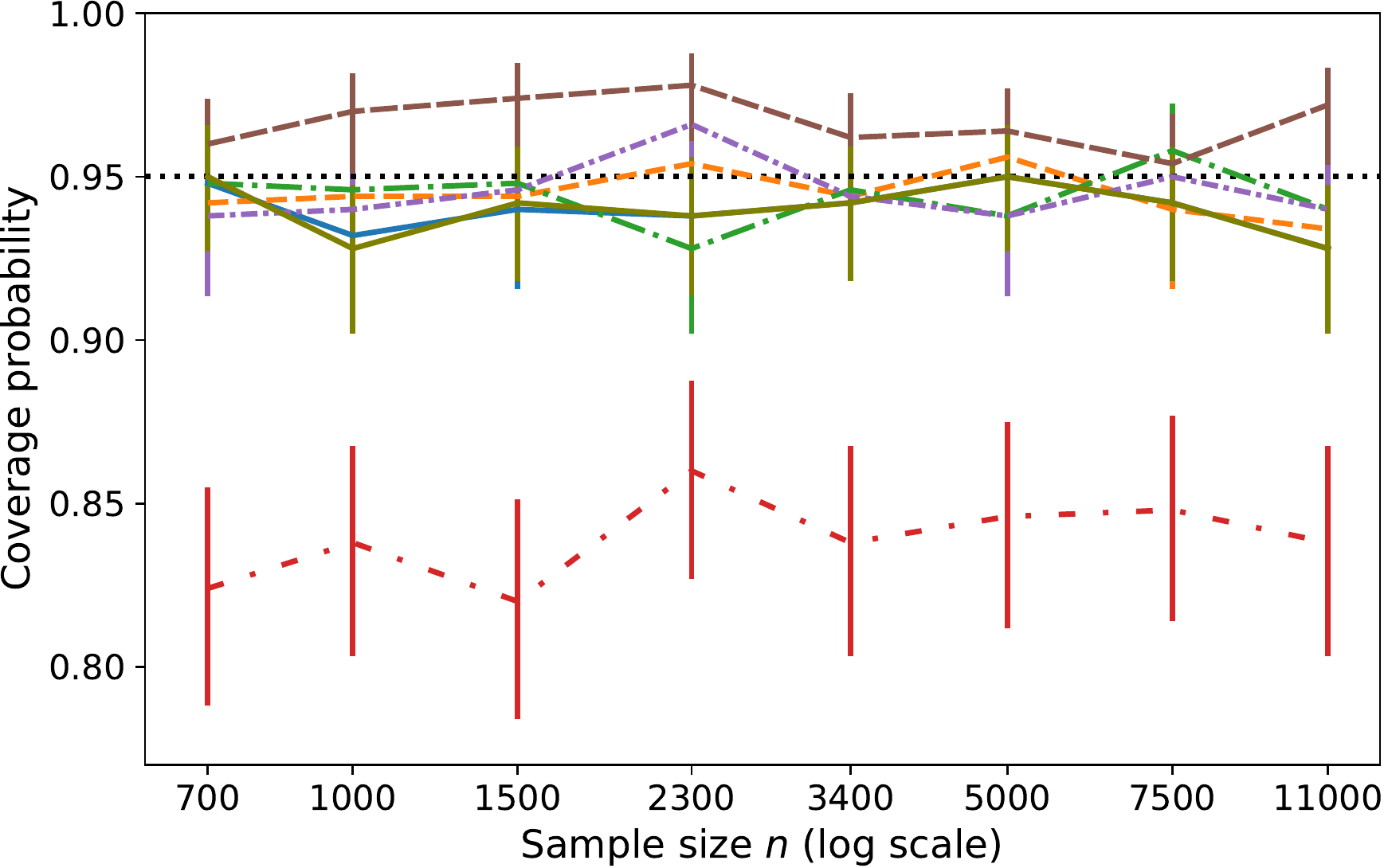}
    \end{subfigure}\hspace{\imgspace\linewidth}%
     \begin{subfigure}{\subfigfracin\linewidth}
             \includegraphics[width=\imgfrac\linewidth]{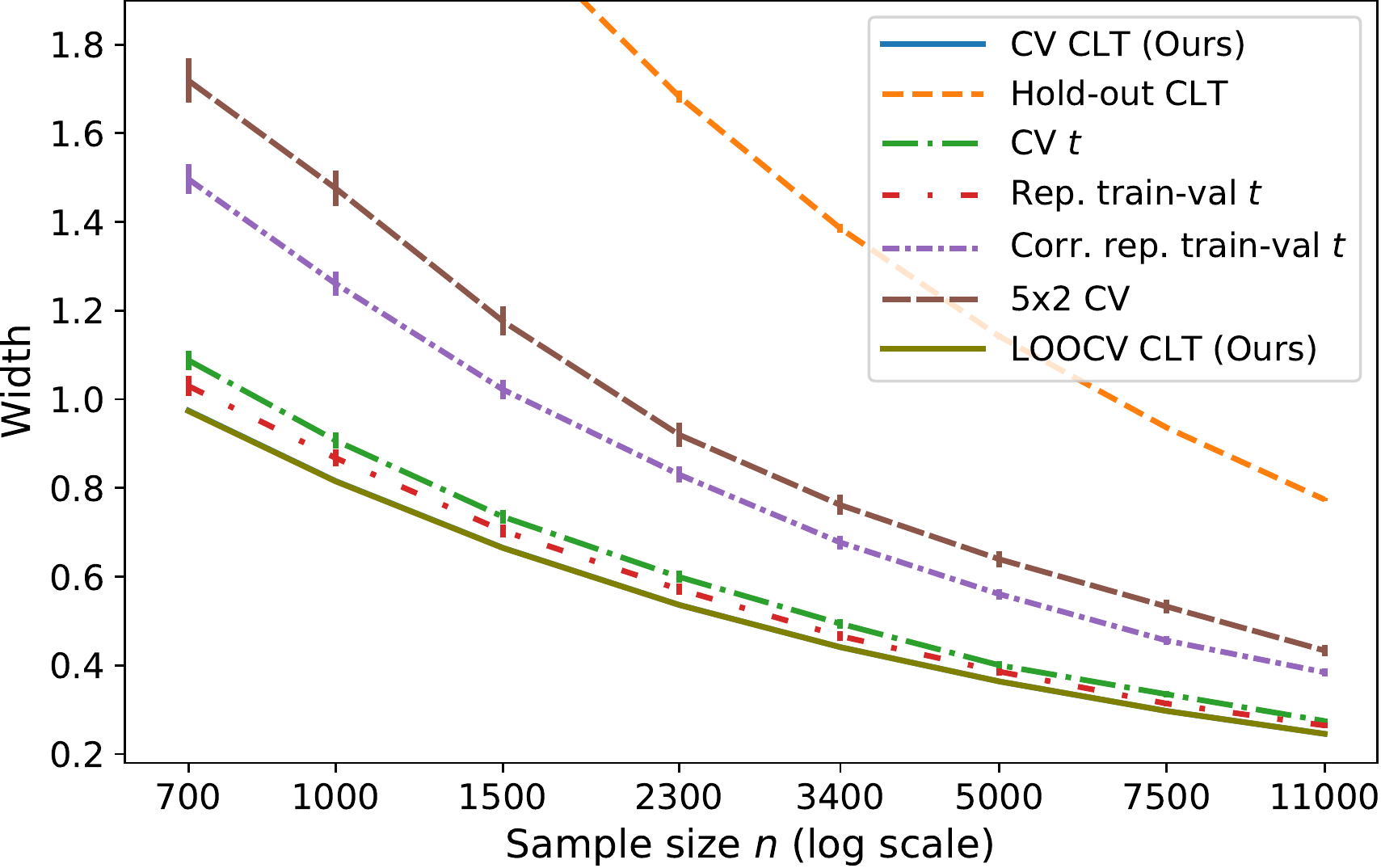}
    \end{subfigure}

    \caption{Test error coverage (left) and width (right) of $95\%$ confidence intervals for ridge regression, including leave-one-out CV intervals (see \cref{sec:loocv}).
    The CV CLT curves are obscured by the nearly identical LOOCV CLT curves.}
    \label{fig:test-error-CI-RR-LOOCV}
\end{figure}

\paragraph{Efficient computation} We explain here how the Sherman--Morrison--Woodbury formula can be used to efficiently compute the individual losses $h_n(Z_i, Z_{\{i\}^c})$, and therefore $\Rhat$ as well as $\sigout$, and the loss on the instances used to form a surrogate ground-truth estimate of the target error $\Rcondcv$.
Let $X\in \R^{n\times p}$ be the matrix of predictors, whose $i$-th row is $x_i^\top$, and $Y\in \R^n$ be the target variable. The weight vector estimate $\hat{w}$ minimizes
$\min_{w\in \R^p} \| Y-Xw \|_2^2 + \lambda \| w \|_2^2$, and is given by the closed-form formula
\begin{talign}
\hat{w} = (X^\top X + \lambda I_p)^{-1} X^\top Y.
\end{talign}
We precompute $M\defeq (X^\top X + \lambda I_p)^{-1}$ and $v\defeq X^\top Y$, that satisfy $\hat{w} = Mv$. Suppose that we have an additional set with covariate matrix $\tilde X$ and target variable $\tilde Y$, representing the instances used to form a surrogate ground-truth estimate of $\Rcondcv$. We also precompute $q\defeq \tilde X \hat{w}$ and $A\defeq \tilde X M$.

For the datapoint $i$, let $X^{(-i)}$ denote $X$ without its $i$-th row and $Y^{(-i)}$ denote $Y$ without its $i$-th element.
Let $M_i\defeq ({X^{(-i)}}^\top X^{(-i)} + \lambda I_p)^{-1}$, $v_i\defeq {X^{(-i)}}^\top Y^{(-i)}$ and $w_i\defeq M_i v_i$.
We can efficiently compute $M_i$ from $M$ based on the Sherman--Morrison--Woodbury formula.
\begin{talign}
M_i
&= ({X^{(-i)}}^\top X^{(-i)} + \lambda I_p)^{-1} \\
&= (X^\top X - x_i x_i^\top + \lambda I_p)^{-1} \\
&= M - M x_i x_i^\top M / (-1 + h_i),
\end{talign}
where $h_i\defeq x_i^\top M x_i$.

We can compute $v_i$ from $v$, with $v_i
= {X^{(-i)}}^\top Y^{(-i)}
= v - x_i y_i$.

Therefore, $w_i
= M_i v_i
= (M - M x_i x_i^\top M (-1 + h_i)^{-1}) (v - x_i y_i)
= \hat{w} + M x_i (\langle \hat{w},x_i \rangle - y_i) / (1-h_i)$ can be computed without fitting any additional prediction rule. Then $h_n(Z_i, Z_{\{i\}^c}) = (y_i - \langle w_i,x_i \rangle)^2$, and we use them to compute $\Rhat$ and $\sigout$.
To make predictions for the covariate matrix $\tilde X$, we efficiently compute $\tilde X w_i$ as
\begin{talign}
\tilde X w_i
&= \tilde X \hat{w} + \tilde X M x_i (\langle \hat{w},x_i \rangle - y_i) / (1-h_i) \\
&= q + A x_i (\langle \hat{w},x_i \rangle - y_i) / (1-h_i),
\end{talign}
and $\frac{1}{N} \|\tilde Y - \tilde X w_i\|_2^2$ is an estimate of $\E[h_n(Z_0, Z_{\{i\}^c})\mid Z_{\{i\}^c}]$, where $N$ is the size of the whole dataset.
An estimate of $\Rcondcv$ is then $\frac{1}{n}\sum_{i=1}^n \frac{1}{N} \|\tilde Y - \tilde X w_i\|_2^2$.

\section{Additional Experimental Results}\label{sec:additional-results}

This section reports the additional results of the experiments described in \cref{sec:experiments}.

\subsection{Additional results from  \cref{sec:ci-experiment}: Confidence intervals for test error}\label{sec:additional-results-CI}

The remaining results of the experiments described in \cref{sec:ci-experiment} are provided in \cref{fig:test-error-CI-clf,fig:test-error-CI-reg}.
We remind that each mean width estimate is displayed with a $\pm$ 2 standard error confidence band, while the confidence band surrounding each coverage estimate is a 95\% Wilson interval.
For all $6$ learning tasks, all procedures except the repeated train-validation $t$ interval provide near-nominal coverage, and our CV CLT intervals provide the smallest widths.

\begin{figure}[h!]
\centering
    \begin{subfigure}{\subfigfracin\linewidth}
        \includegraphics[width=\imgfrac\linewidth]{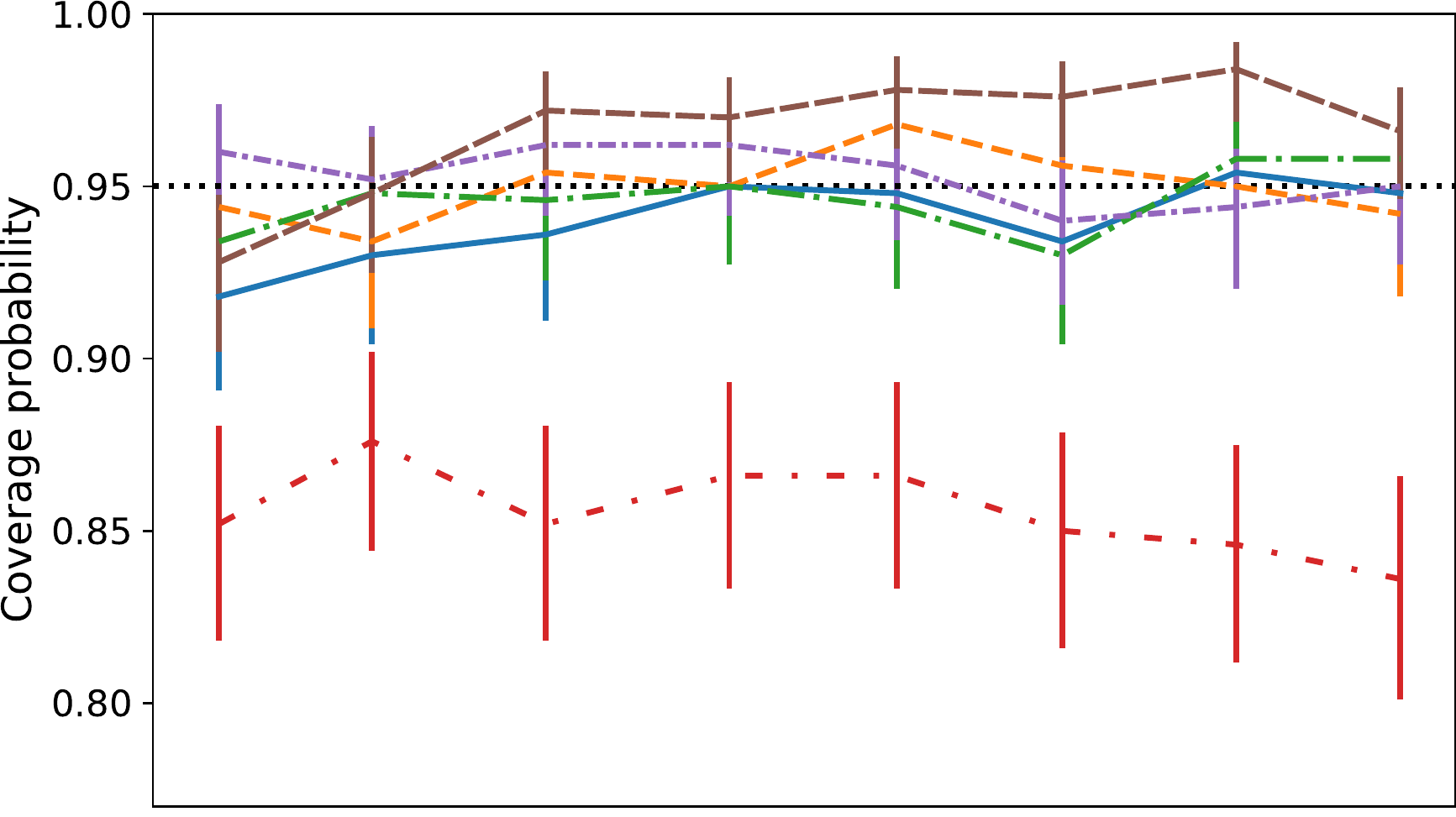}
    \end{subfigure}\hspace{\imgspace\linewidth}%
     \begin{subfigure}{\subfigfracin\linewidth}
             \includegraphics[width=\imgfrac\linewidth]{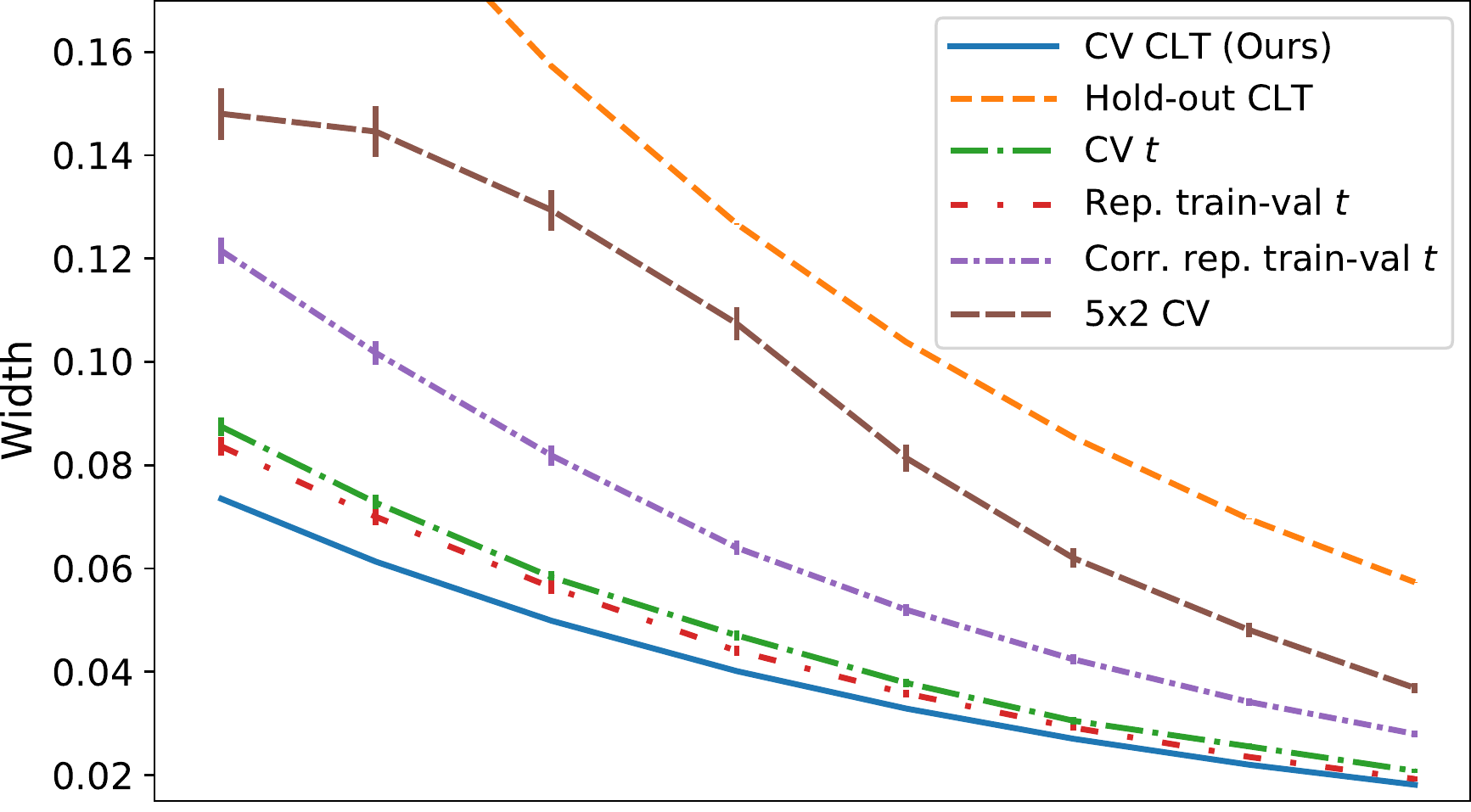}
    \end{subfigure}

    \vspace{\vgap\linewidth}
    
    \begin{subfigure}{\subfigfracin\linewidth}
        \includegraphics[width=\imgfrac\linewidth]{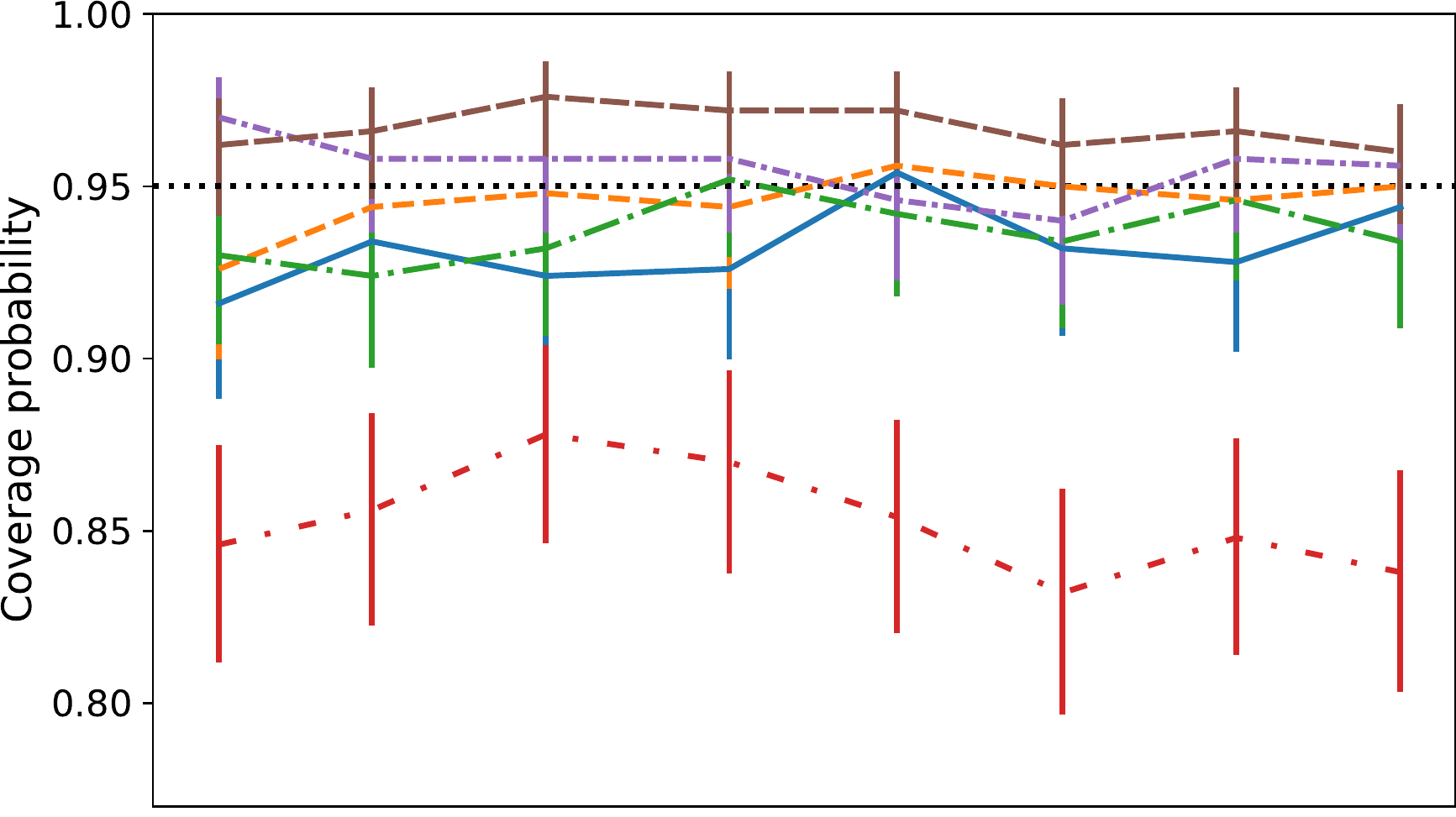}
    \end{subfigure}\hspace{\imgspace\linewidth}%
    \begin{subfigure}{\subfigfracin\linewidth}
        \includegraphics[width=\imgfrac\linewidth]{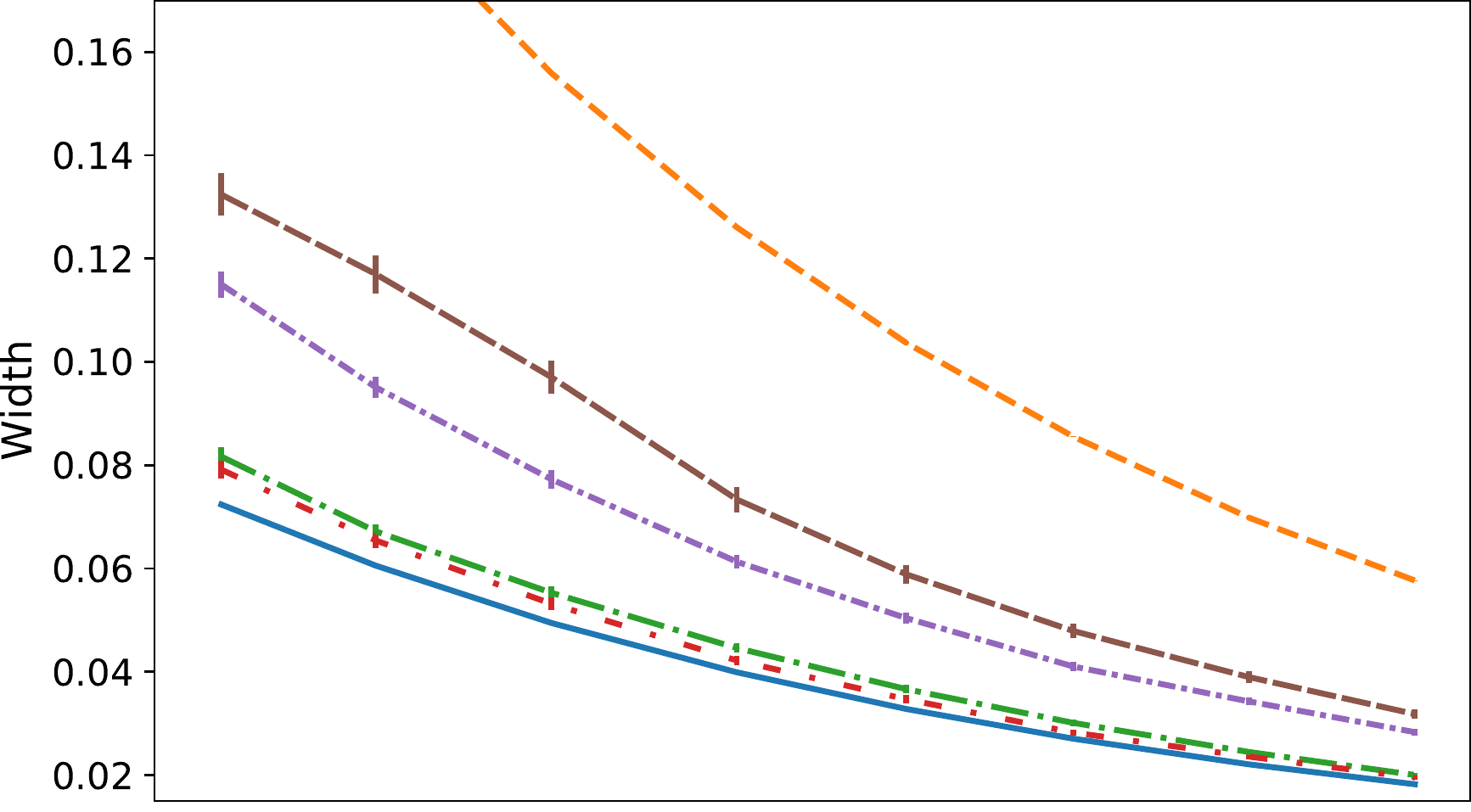}
    \end{subfigure}

    \vspace{\vgap\linewidth}
    
    \begin{subfigure}{\subfigfracin\linewidth}
        \includegraphics[width=\imgfrac\linewidth]{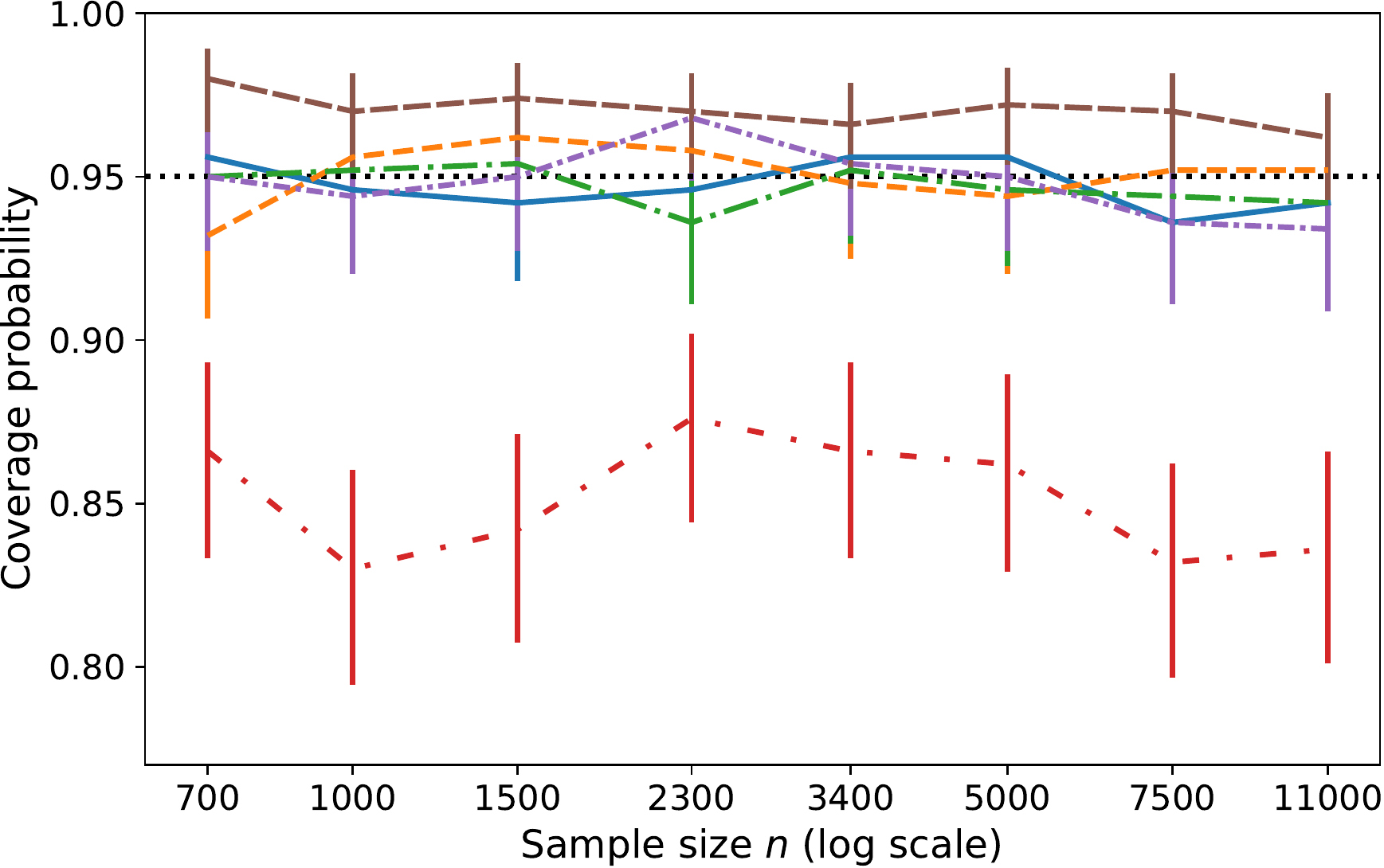}
    \end{subfigure}\hspace{\imgspace\linewidth}%
    \begin{subfigure}{\subfigfracin\linewidth}
        \includegraphics[width=\imgfrac\linewidth]{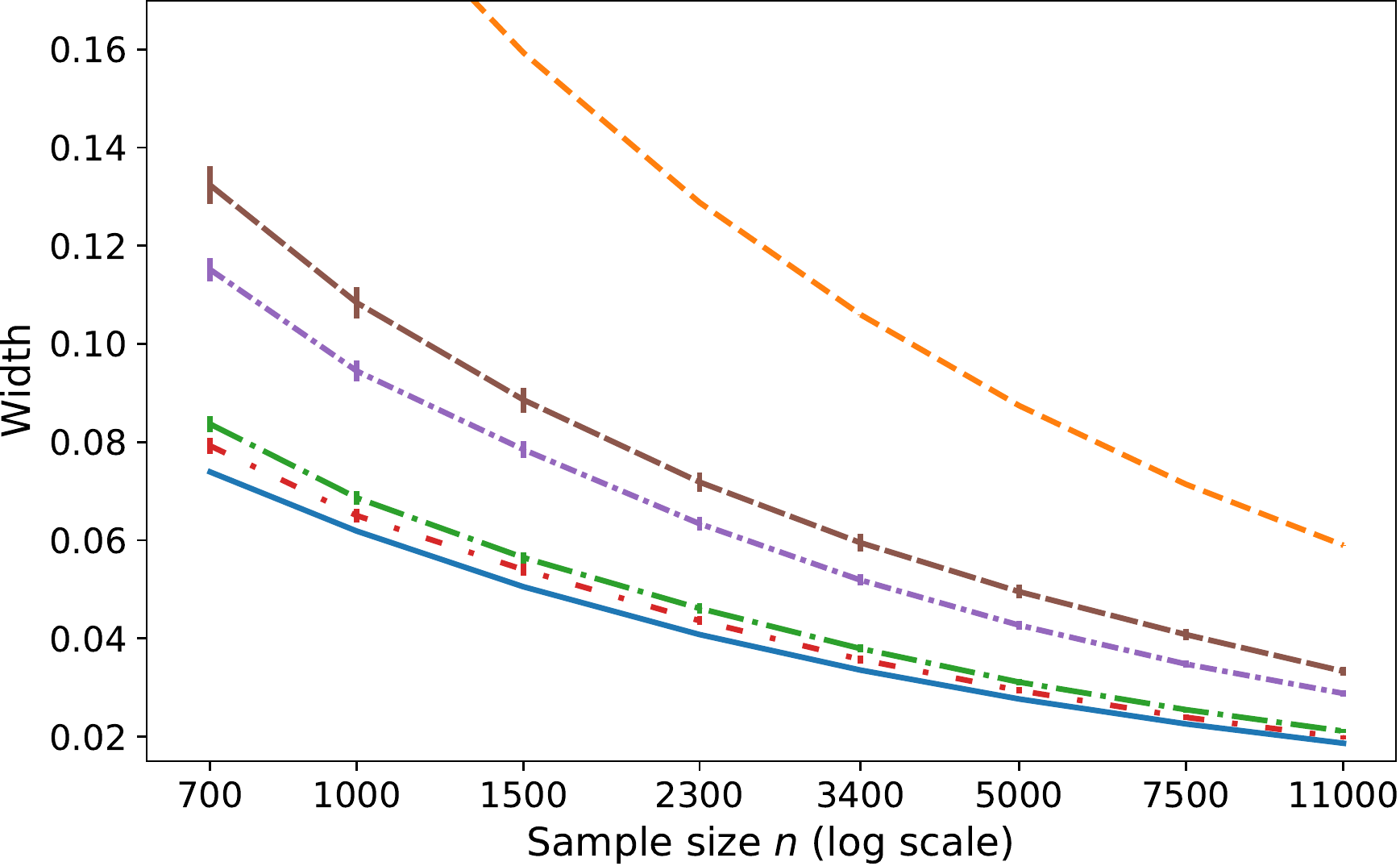}
    \end{subfigure}
    \caption{Test error coverage (left) and width (right) of $95\%$ confidence intervals (see \cref{sec:ci-experiment}). \tbf{Top:} $\ell^2$-regularized logistic regression classifier. \tbf{Middle:} Random forest classifier. \tbf{Bottom:} Neural network classifier.}
    \label{fig:test-error-CI-clf}
\end{figure}

\begin{figure}[h!]
\centering
    \begin{subfigure}{\subfigfracin\linewidth}
        \includegraphics[width=\imgfrac\linewidth]{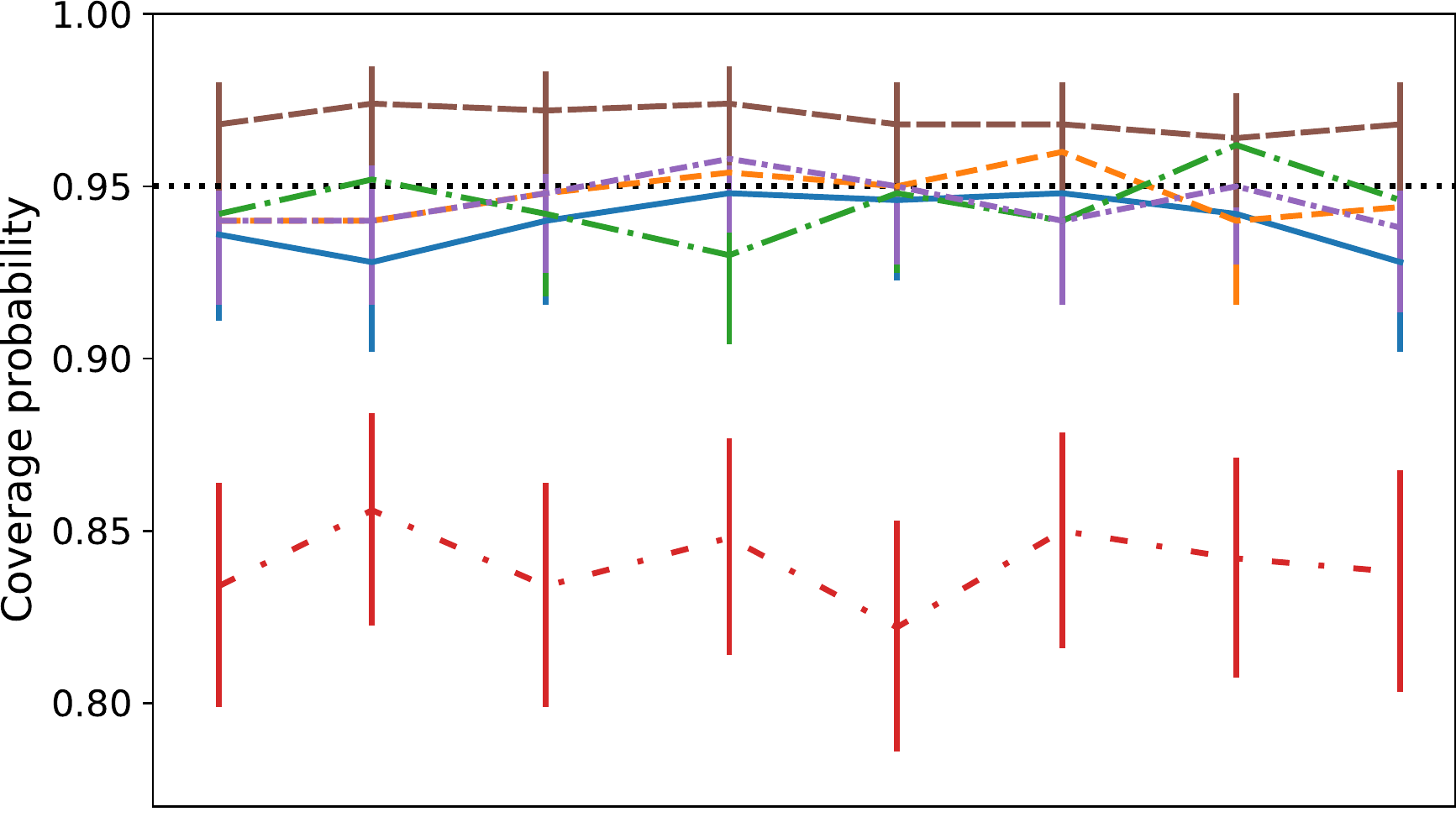}
    \end{subfigure}\hspace{\imgspace\linewidth}%
     \begin{subfigure}{\subfigfracin\linewidth}
             \includegraphics[width=\imgfrac\linewidth]{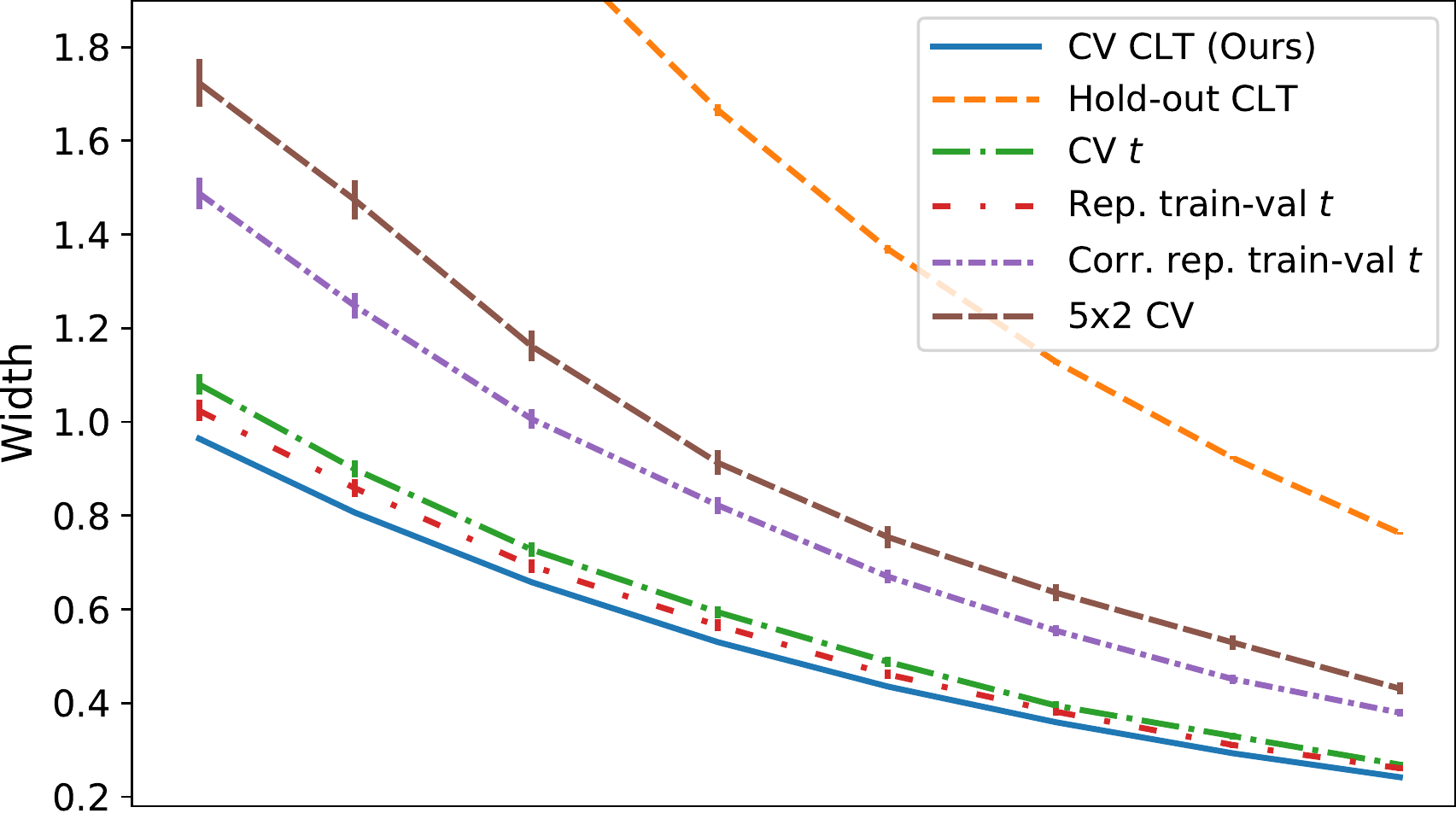}
    \end{subfigure}

    \vspace{\vgap\linewidth}
    
    \begin{subfigure}{\subfigfracin\linewidth}
        \includegraphics[width=\imgfrac\linewidth]{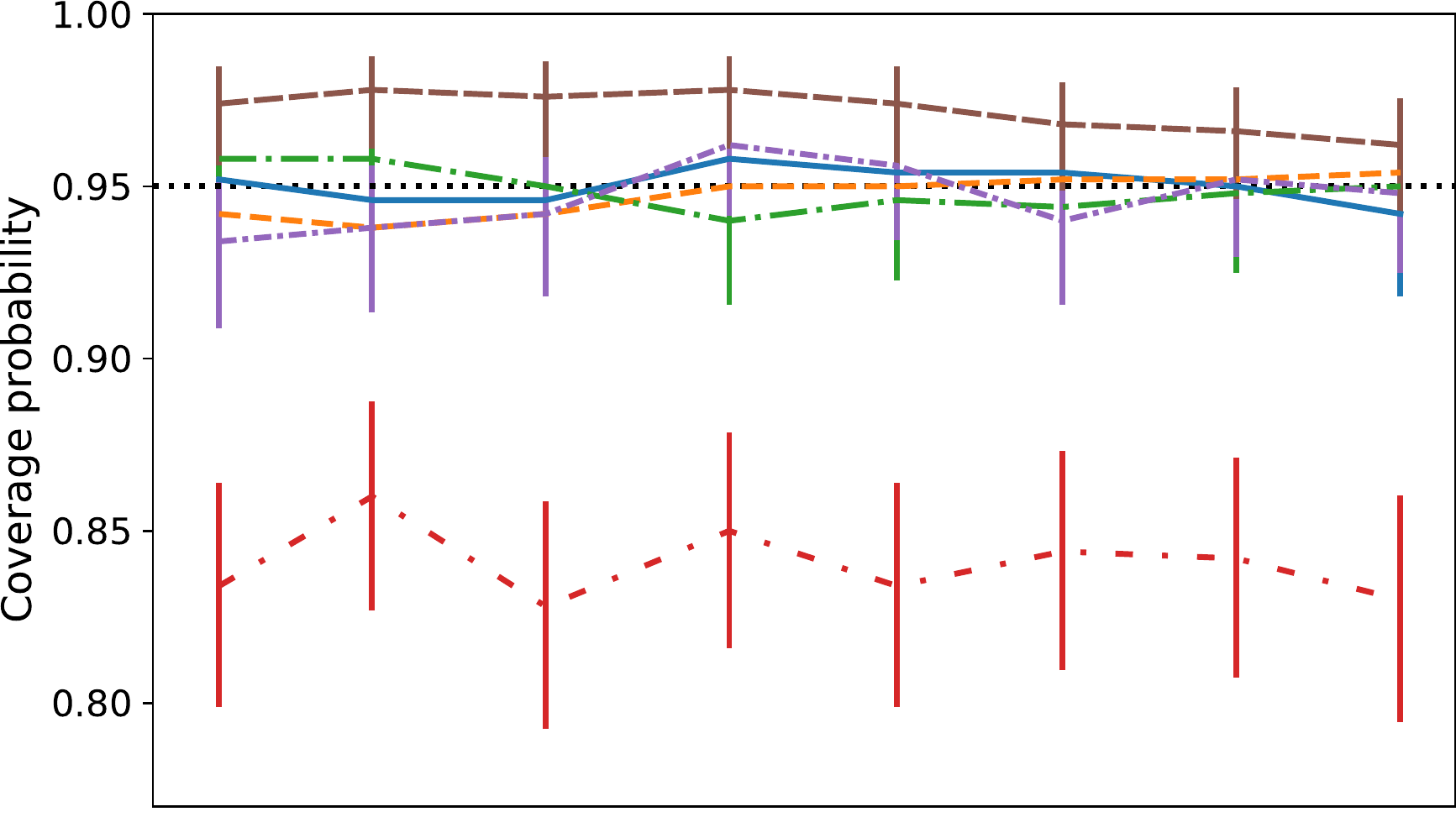}
    \end{subfigure}\hspace{\imgspace\linewidth}%
    \begin{subfigure}{\subfigfracin\linewidth}
        \includegraphics[width=\imgfrac\linewidth]{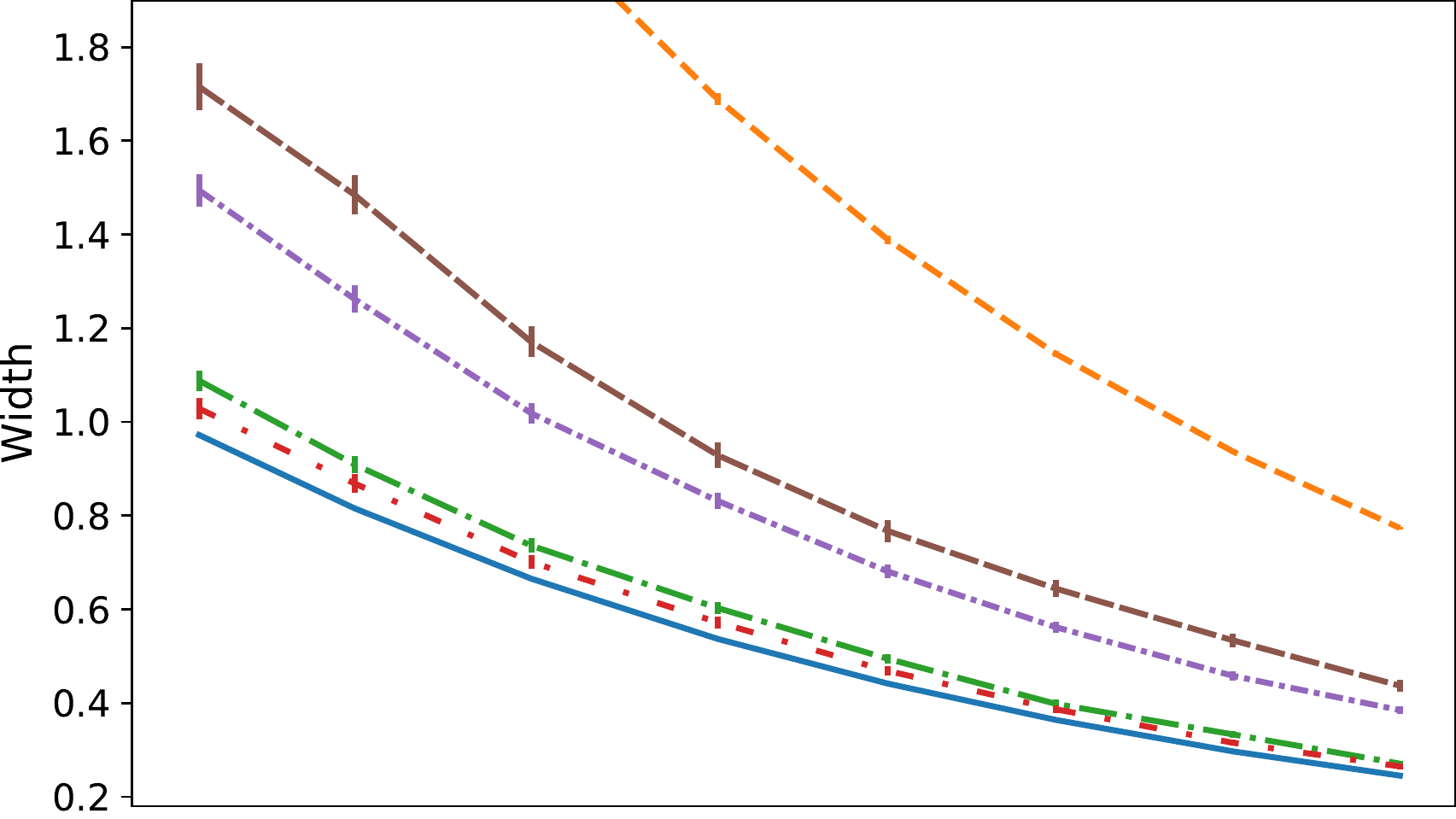}
    \end{subfigure}

    \vspace{\vgap\linewidth}
    
    \begin{subfigure}{\subfigfracin\linewidth}
        \includegraphics[width=\imgfrac\linewidth]{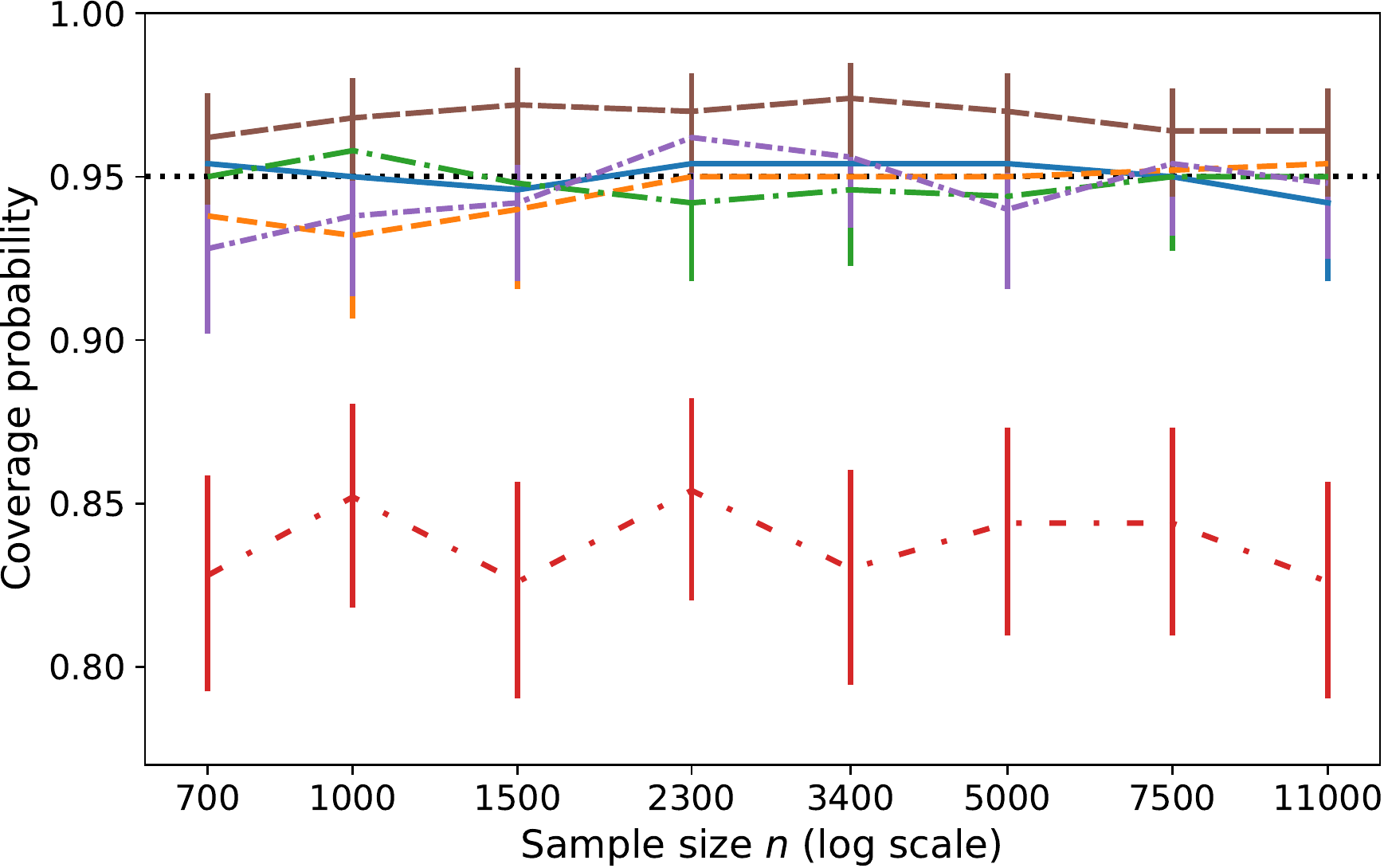}
    \end{subfigure}\hspace{\imgspace\linewidth}%
    \begin{subfigure}{\subfigfracin\linewidth}
        \includegraphics[width=\imgfrac\linewidth]{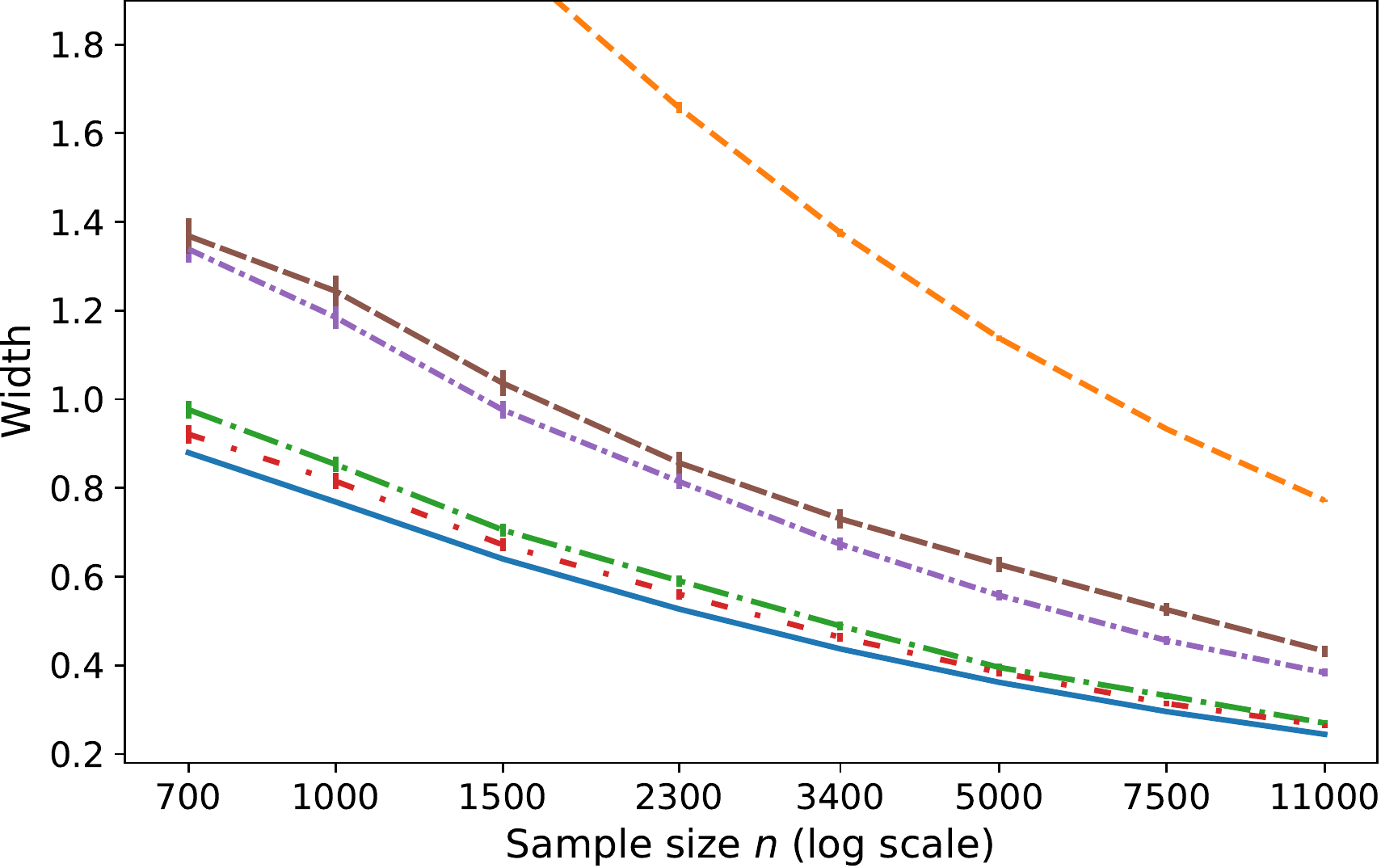}
    \end{subfigure}
    \caption{Test error coverage (left) and width (right) of $95\%$ confidence intervals (see \cref{sec:ci-experiment}). \tbf{Top:} Random forest regression. \tbf{Middle:} Ridge regression. \tbf{Bottom:} Neural network regression.}
    \label{fig:test-error-CI-reg}
\end{figure}

\subsection{Additional results from  \cref{sec:sim:test}: Testing for improved algorithm performance}\label{sec:additional-results-test}

In this section, we provide additional experimental details and results for the testing for improved algorithm performance experiments of \cref{sec:sim:test}.
We highlight that the aim of this assessment is not to establish power convergence or to assess power in an absolute sense but rather to verify whether, for a diversity of settings encountered in real learning problems, our proposed tests provide power comparable to or better than the most popular heuristics from the literature. 
For all testing experiments, we estimate size as $\frac{\text{\# of rejections in } H_0 \text{  replications}}{\text{\# } H_0 \text{  replications}}$ and power as $\frac{\text{\# of rejections in } H_1 \text{  replications}}{\text{\# } H_1 \text{  replications}}$, where each simulation is classified as $H_0$ or $H_1$ depending on which algorithm has smaller test error.
Moreover, a size point is only displayed if at least 25 replications were classified as $H_0$, and a power point is only displayed if at least 25 replications were classified as $H_1$.

The remaining results of the testing experiments described in \cref{sec:sim:test} are provided in \cref{fig:test-error-improvement-reg-RF-NN,fig:test-error-improvement-clf-LR-RF,fig:test-error-improvement-clf-LR-NN,fig:test-error-improvement-clf-NN-RF,fig:test-error-improvement-reg-RR-NN,fig:test-error-improvement-reg-RR-RF}.
In contrast to \cref{fig:test-error-improvement},\footnote{Recall that in \cref{fig:test-error-improvement} we identified the algorithm $\alg_1$ that more often had smaller test error across our simulations and displayed the power of $H_1: \err{1}{2}$ and the size of the level $\alpha=0.05$ test of $H_1: \err{2}{1}$.} in \cref{fig:test-error-improvement-reg-RF-NN,fig:test-error-improvement-clf-LR-RF,fig:test-error-improvement-clf-LR-NN,fig:test-error-improvement-clf-NN-RF,fig:test-error-improvement-reg-RR-NN,fig:test-error-improvement-reg-RR-RF} we plot the size and power of the level $\alpha=0.05$ test of $H_1: \err{1}{2}$ in the left column of each figure and the size and power of the level $\alpha$ test of $H_1: \err{2}{1}$ in the right column.
Notably, we only observe size estimates exceeding the level when the number of $H_0$ replications is very small (that is, when one algorithm improves upon the other so infrequently that the Monte Carlo error in the size estimate is large).

\subsection{Testing with synthetically generated labels}\label{sec:synthetic-test}
We complement the real-data hypothesis testing experiments of \cref{sec:sim:test} with a controlled experiment in which class labels are synthetically generated from a known logistic regression distribution.
Specifically, we replicate the exact classification experimental setup of \cref{sec:sim:test} to compare logistic regression and random forest classification and use the same \texttt{Higgs} dataset covariates, but we replace each datapoint label $Y_i$ with an independent draw from the logistic regression distribution
$Y_i \sim \Ber(\frac{1}{1+\exp({-\inner{X_i}{\beta}})})$ for $\beta$ a 28-dimensional vector with odd entries equal to $1$ and even entries equal to $-1$.
This experiment enables us to evaluate our hypothesis tests in a \emph{realizable} setting in which the true label generating distribution belongs to the logistic regression model family.
In \cref{fig:test-error-improvement-clf-RF-LR-simul}, we plot in the left column the size and power of the level $\alpha=0.05$ test of $H_1$: random forest improves upon $\ell^2$-regularized logistic regression classifier, and in the right column the size and power of the level $\alpha$ test of $H_1$: $\ell^2$-regularized logistic regression classifier improves upon random forest. 
As expected, almost all replications satisfy that $\ell^2$-regularized logistic regression improves upon random forest and we observe that in this setting as well, our method consistently outperforms other alternatives.

\begin{figure}[h!]
\centering
    \begin{subfigure}{\subfigfracin\linewidth}
        \includegraphics[width=\imgfrac\linewidth]{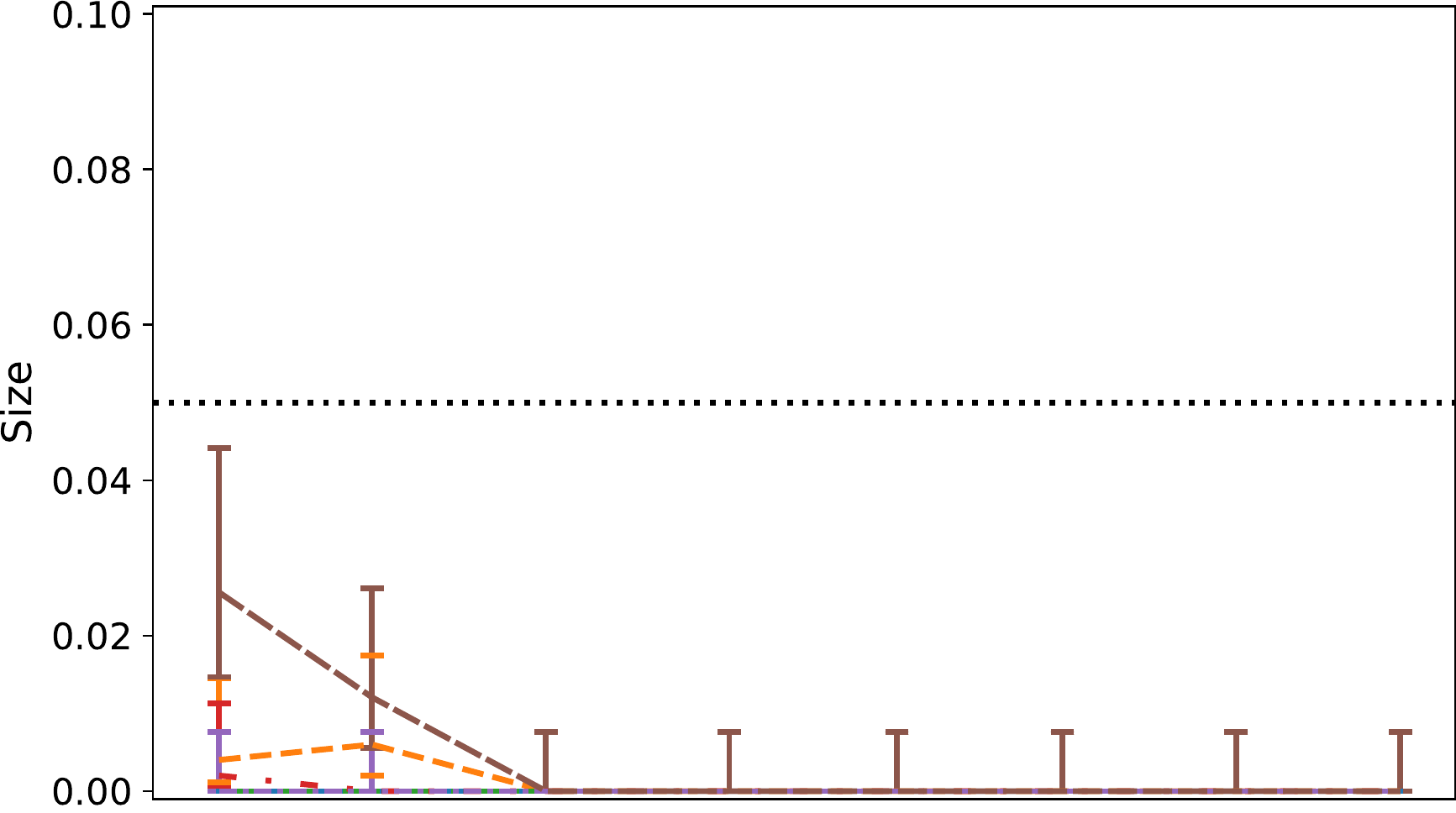}
    \end{subfigure}\hspace{\imgspace\linewidth}%
     \begin{subfigure}{\subfigfracin\linewidth}
             \includegraphics[width=\imgfrac\linewidth]{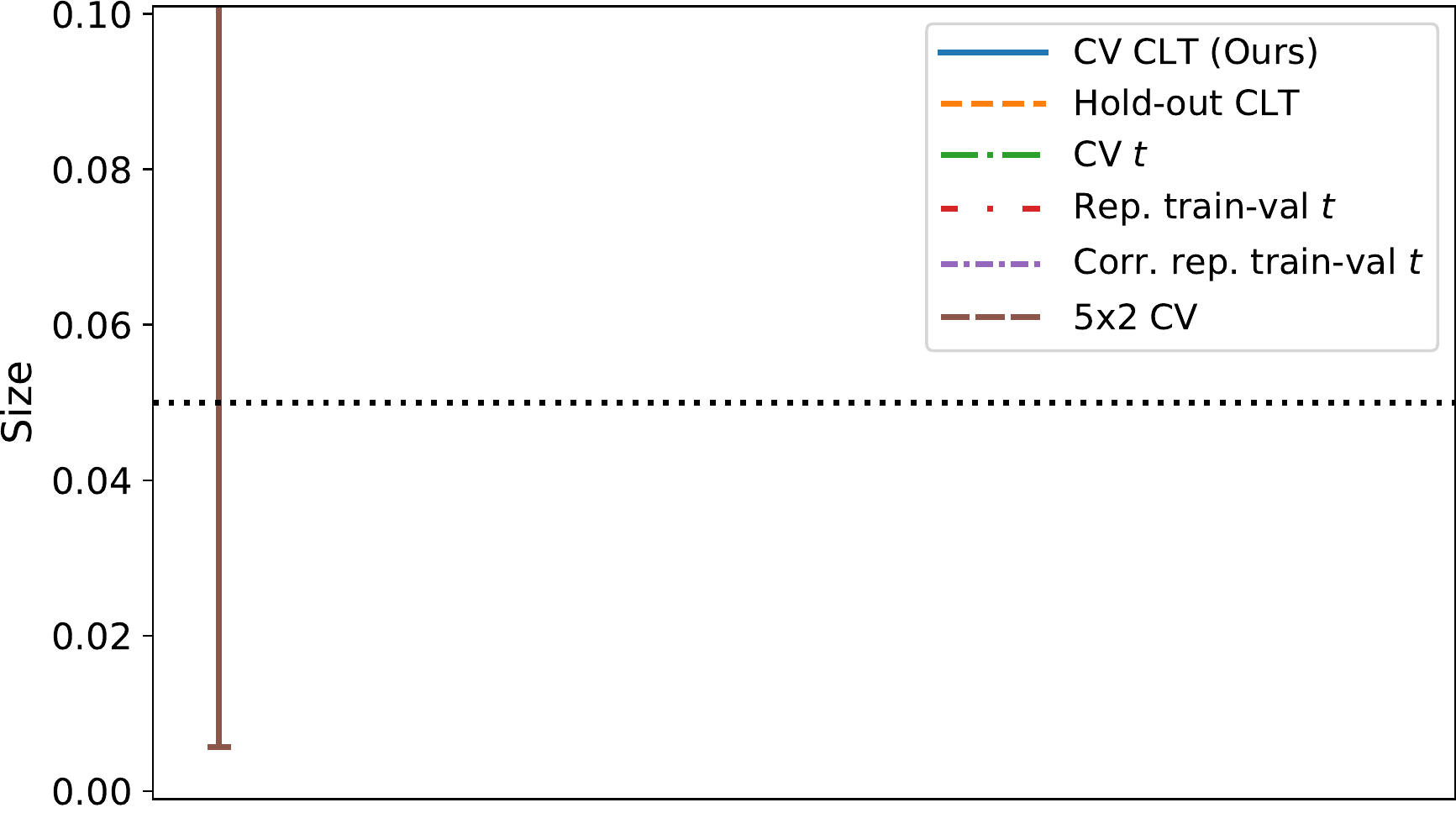}
    \end{subfigure}

    \vspace{\vgap\linewidth}
    
    \begin{subfigure}{\subfigfracin\linewidth}
        \includegraphics[width=\imgfrac\linewidth]{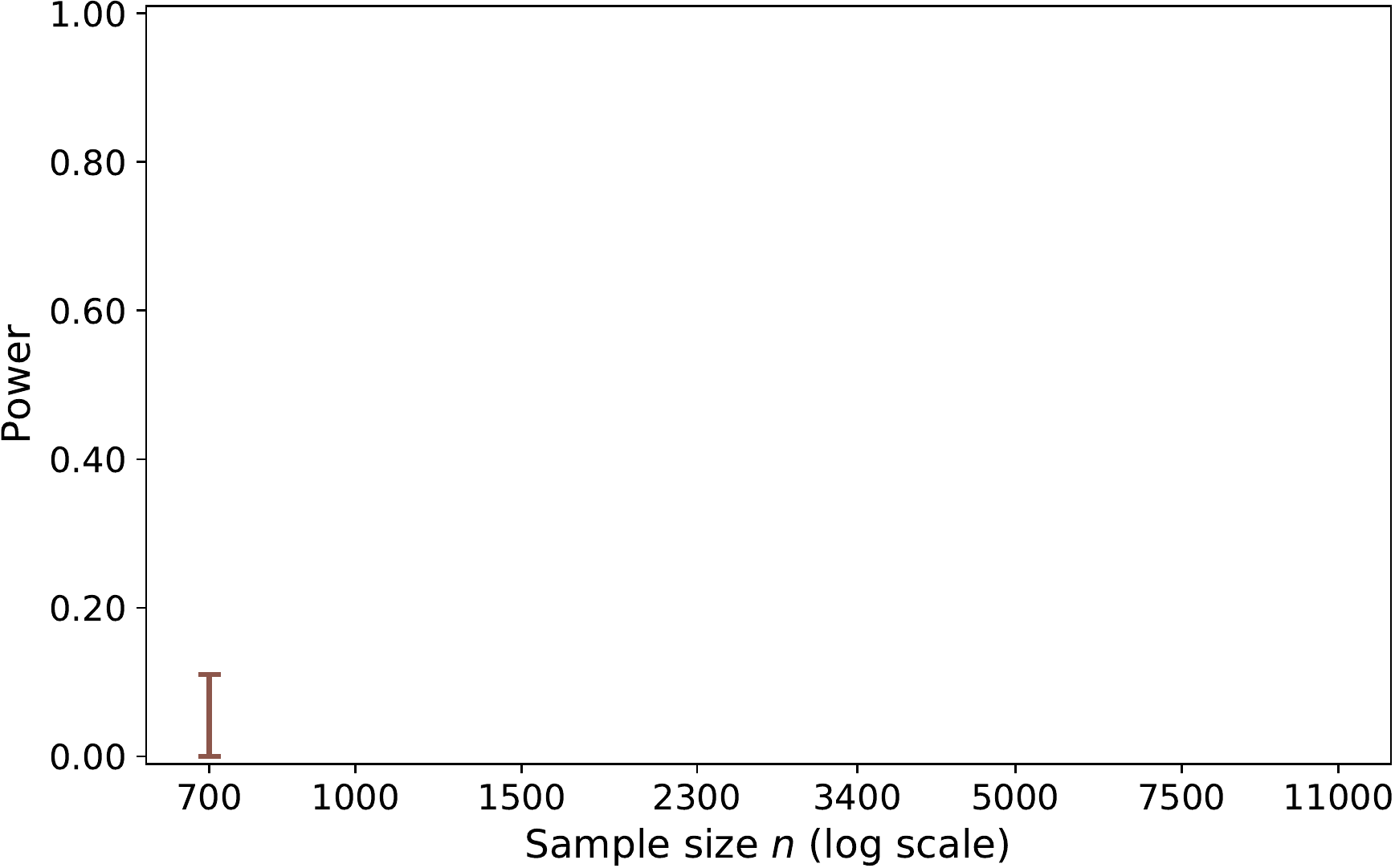}
    \end{subfigure}\hspace{\imgspace\linewidth}%
    \begin{subfigure}{\subfigfracin\linewidth}
        \includegraphics[width=\imgfrac\linewidth]{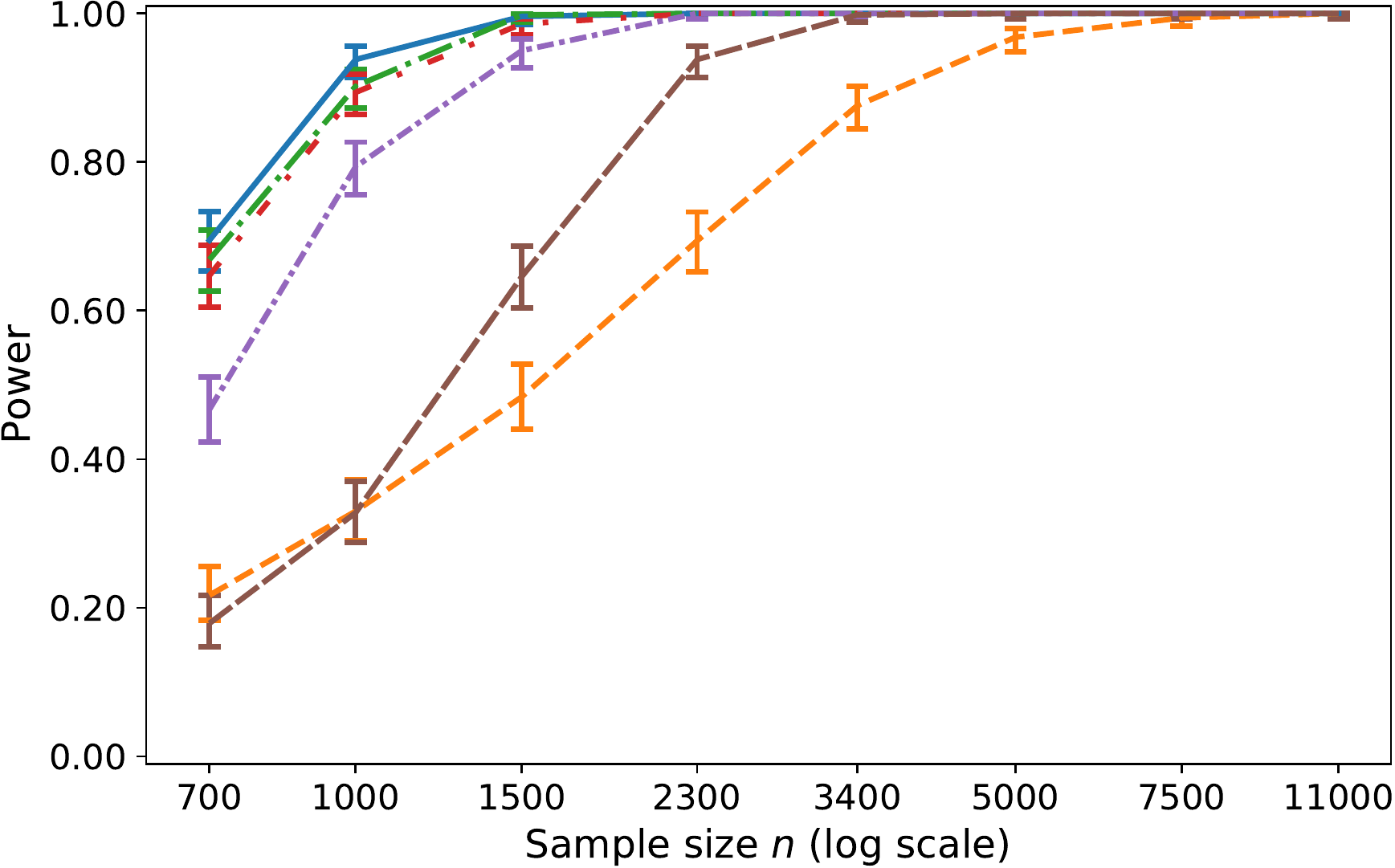}
    \end{subfigure}

    \caption{Size (top) and power (bottom) of level-$0.05$ tests for improved test error (see \cref{sec:sim:test}). \tbf{Left}: Testing $H_1$: neural network improves upon $\ell^2$-regularized logistic regression classifier. \tbf{Right}: Testing $H_1$: $\ell^2$-regularized logistic regression classifier improves upon neural network.}
    \label{fig:test-error-improvement-clf-LR-NN}
\end{figure}

\begin{figure}[h!]
\centering
    \begin{subfigure}{\subfigfracin\linewidth}
        \includegraphics[width=\imgfrac\linewidth]{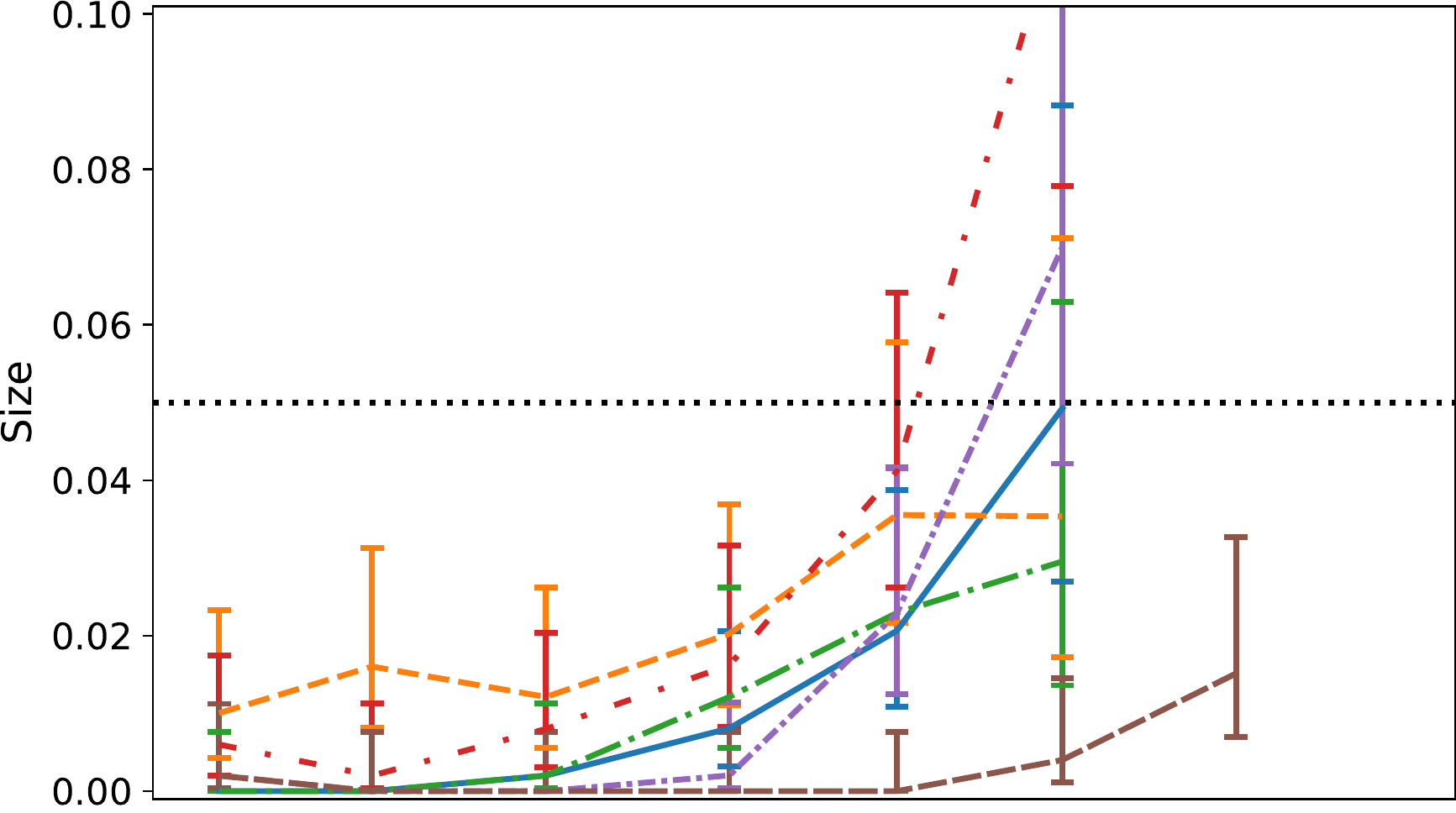}
    \end{subfigure}\hspace{\imgspace\linewidth}%
     \begin{subfigure}{\subfigfracin\linewidth}
             \includegraphics[width=\imgfrac\linewidth]{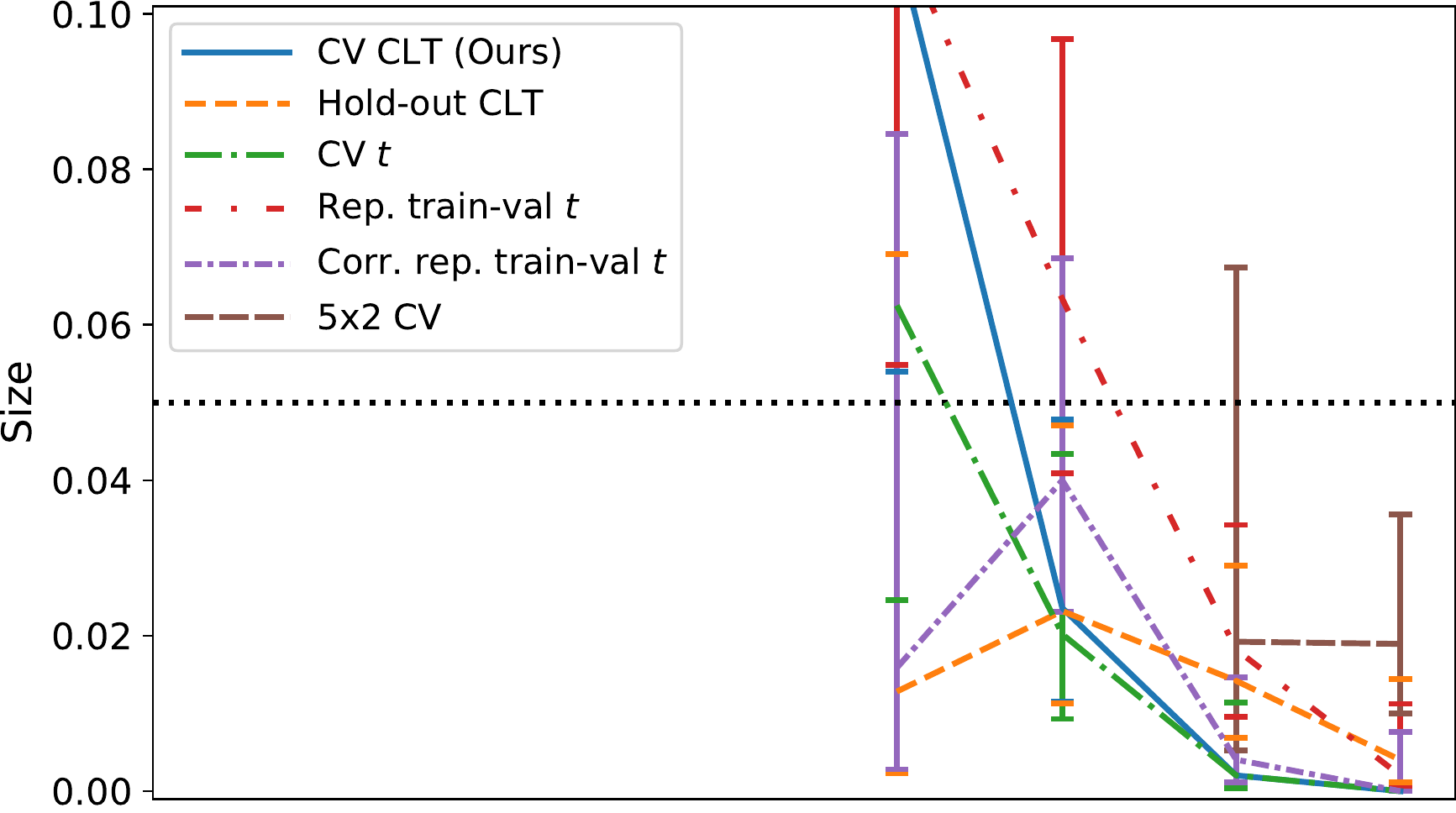}
    \end{subfigure}

    \vspace{\vgap\linewidth}
    
    \begin{subfigure}{\subfigfracin\linewidth}
        \includegraphics[width=\imgfrac\linewidth]{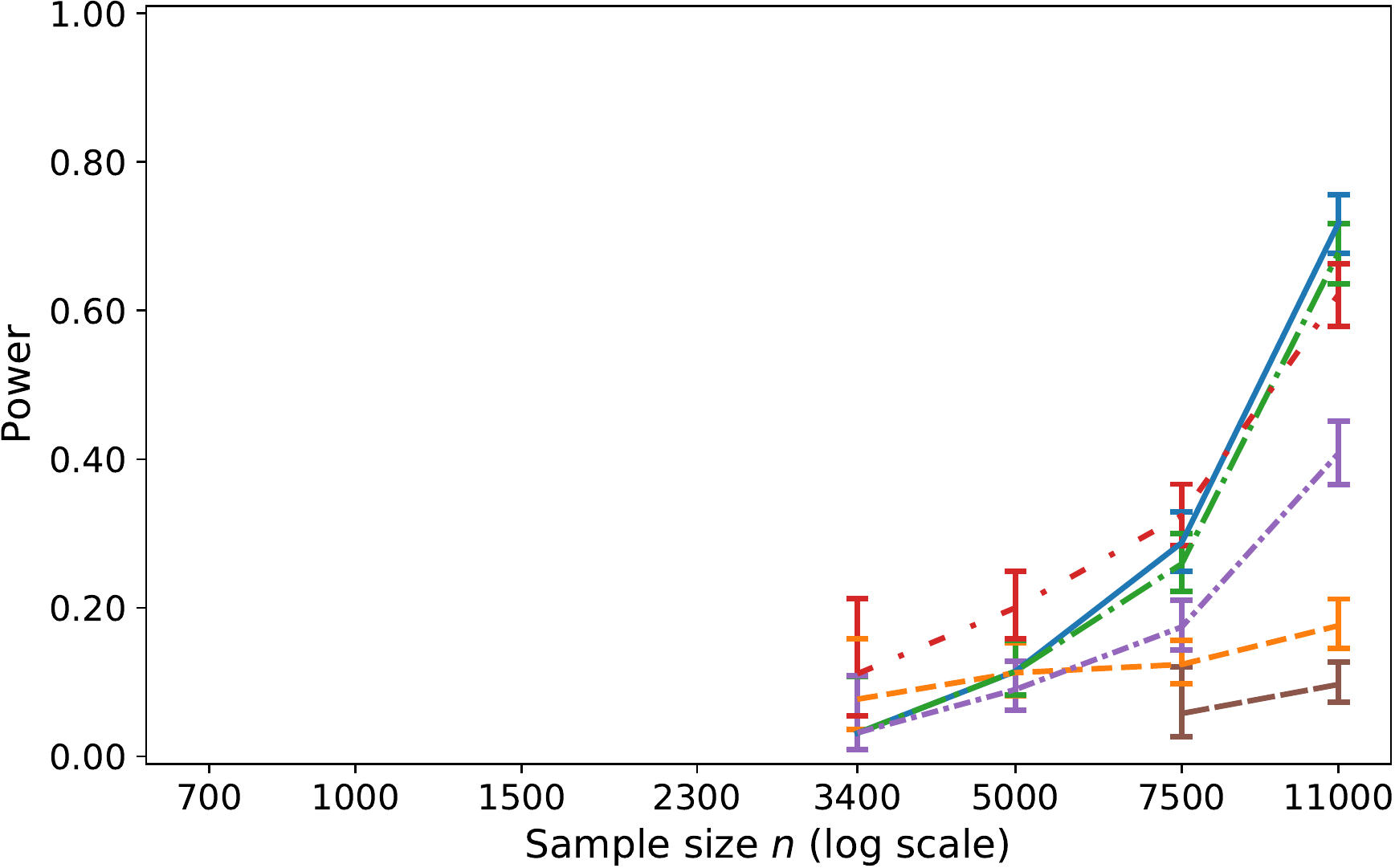}
    \end{subfigure}\hspace{\imgspace\linewidth}%
    \begin{subfigure}{\subfigfracin\linewidth}
        \includegraphics[width=\imgfrac\linewidth]{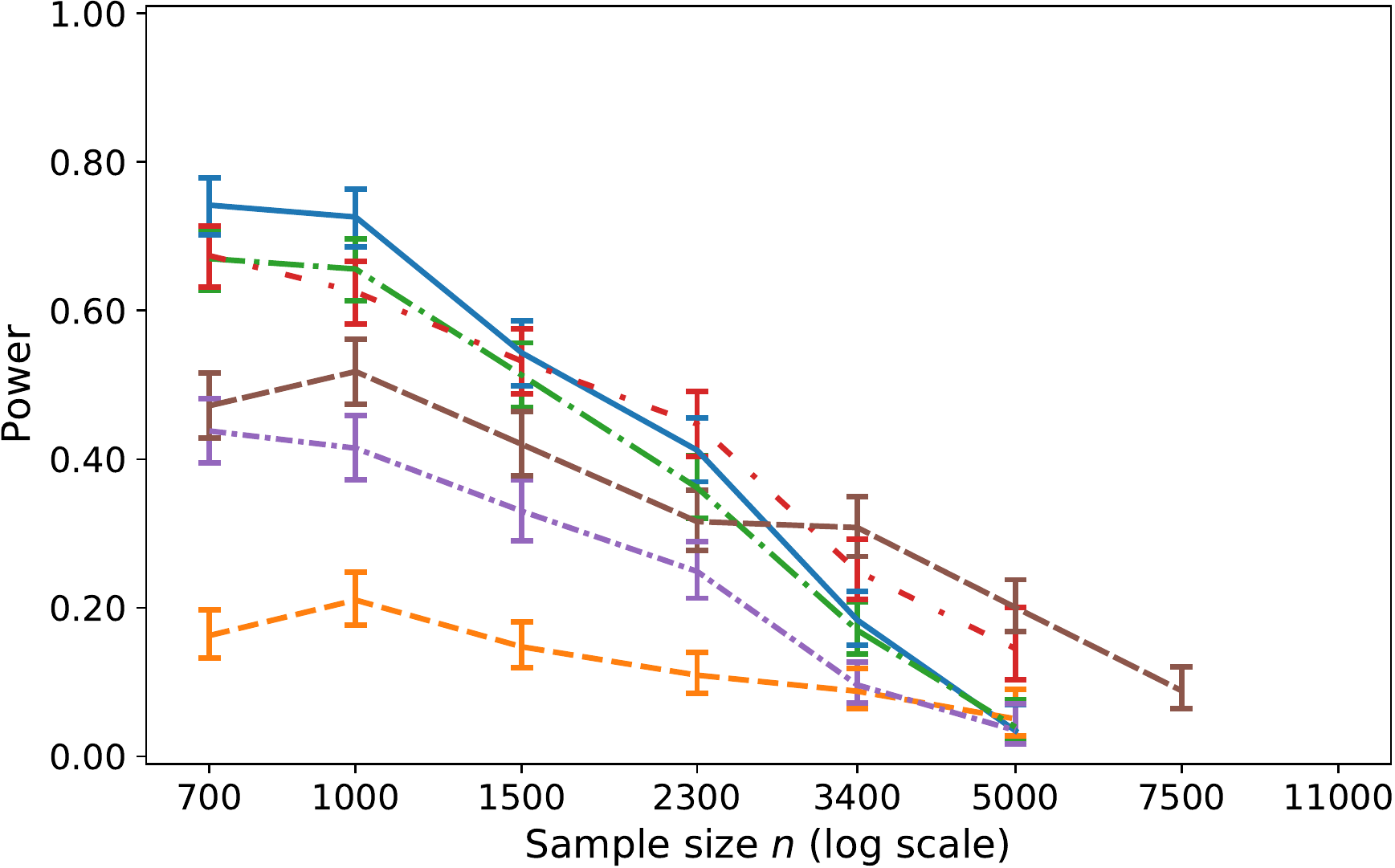}
    \end{subfigure}

    \caption{Size (top) and power (bottom) of level-$0.05$ tests for improved test error (see \cref{sec:sim:test}). \tbf{Left}: Testing $H_1$: $\ell^2$-regularized logistic regression classifier improves upon random forest. \tbf{Right}: Testing $H_1$: random forest improves upon $\ell^2$-regularized logistic regression classifier.}
    \label{fig:test-error-improvement-clf-LR-RF}
\end{figure}

\begin{figure}[h!]
\centering
    \begin{subfigure}{\subfigfracin\linewidth}
        \includegraphics[width=\imgfrac\linewidth]{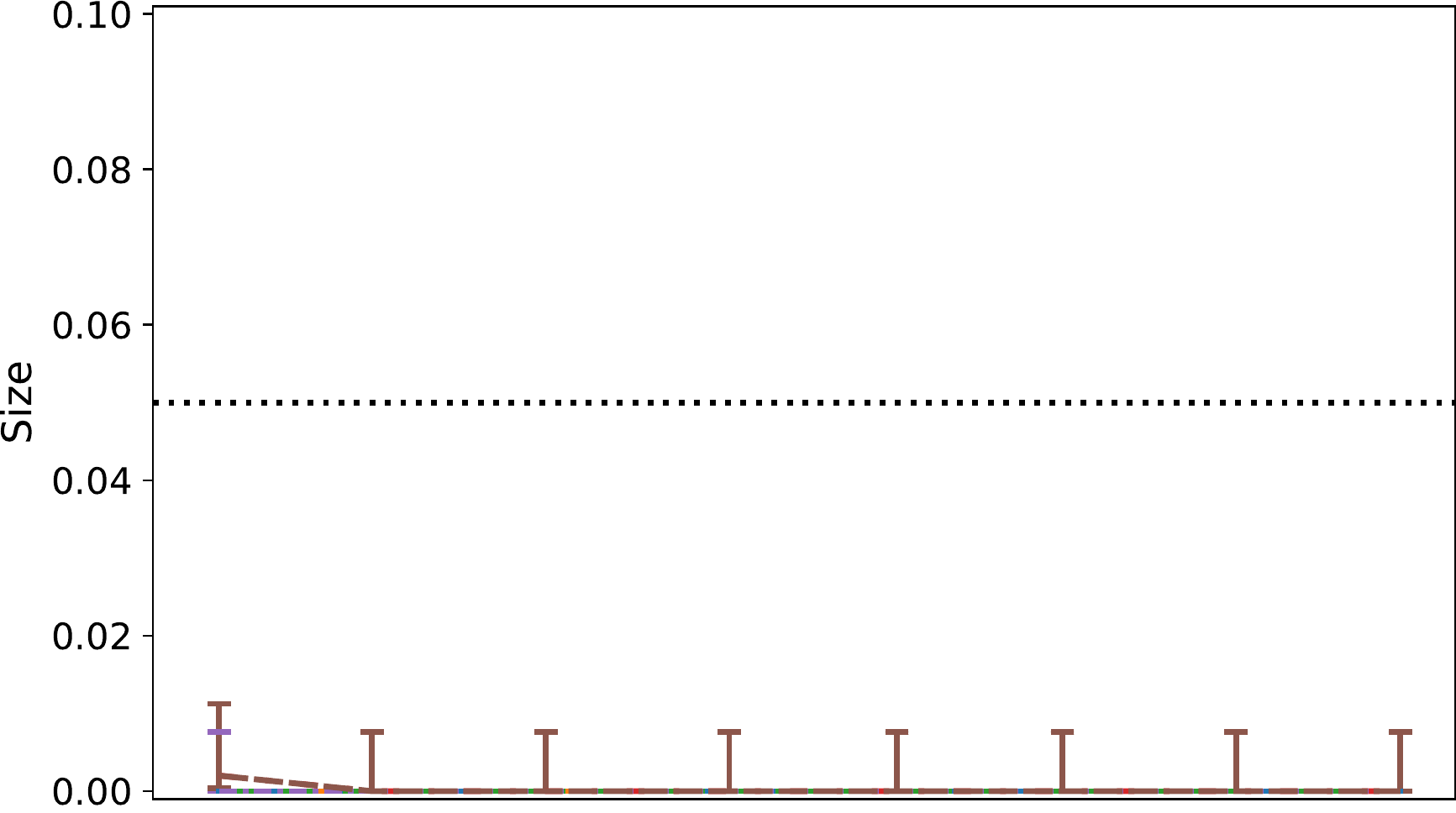}
    \end{subfigure}\hspace{\imgspace\linewidth}%
     \begin{subfigure}{\subfigfracin\linewidth}
             \includegraphics[width=\imgfrac\linewidth]{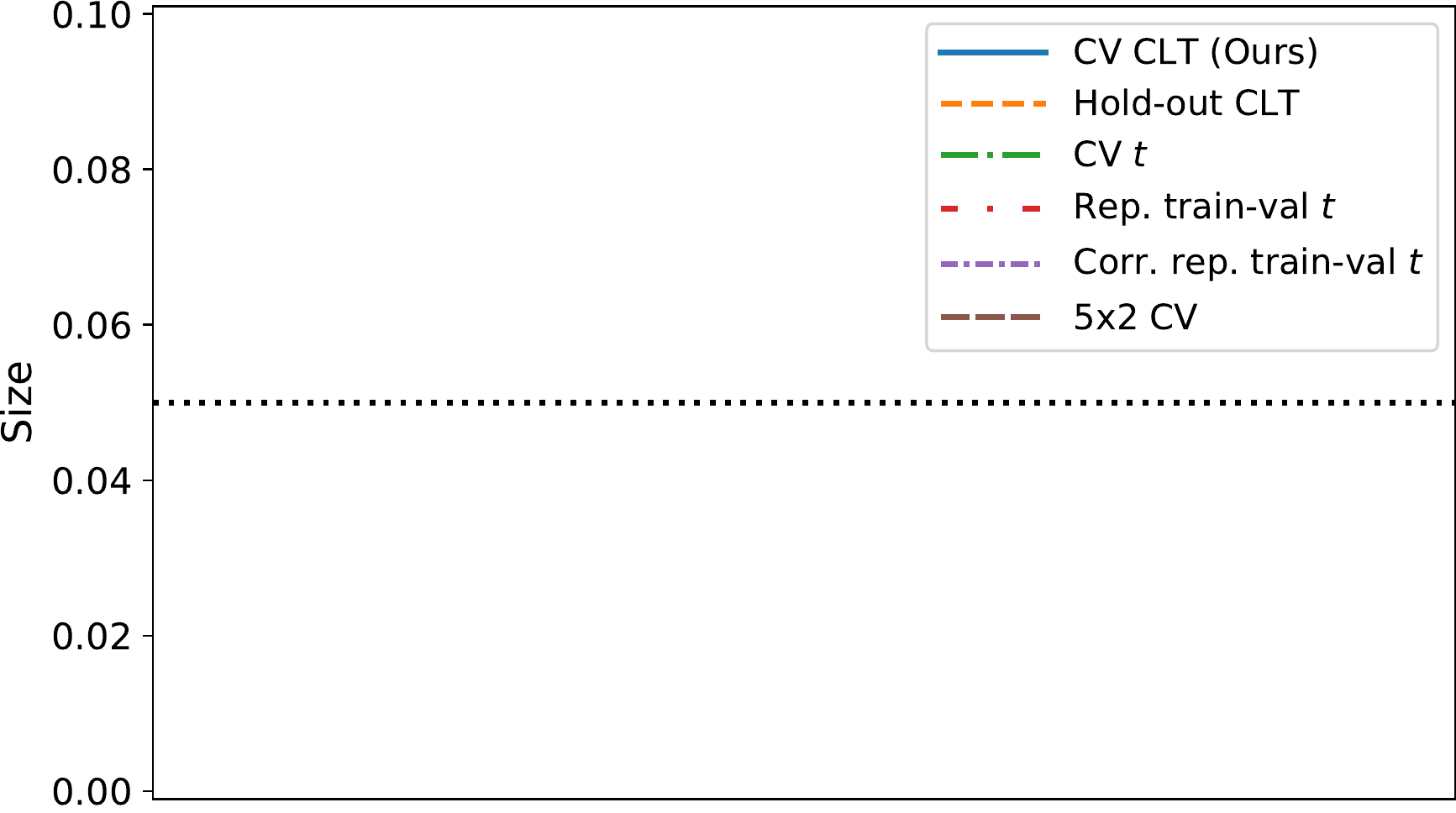}
    \end{subfigure}

    \vspace{\vgap\linewidth}
    
    \begin{subfigure}{\subfigfracin\linewidth}
        \includegraphics[width=\imgfrac\linewidth]{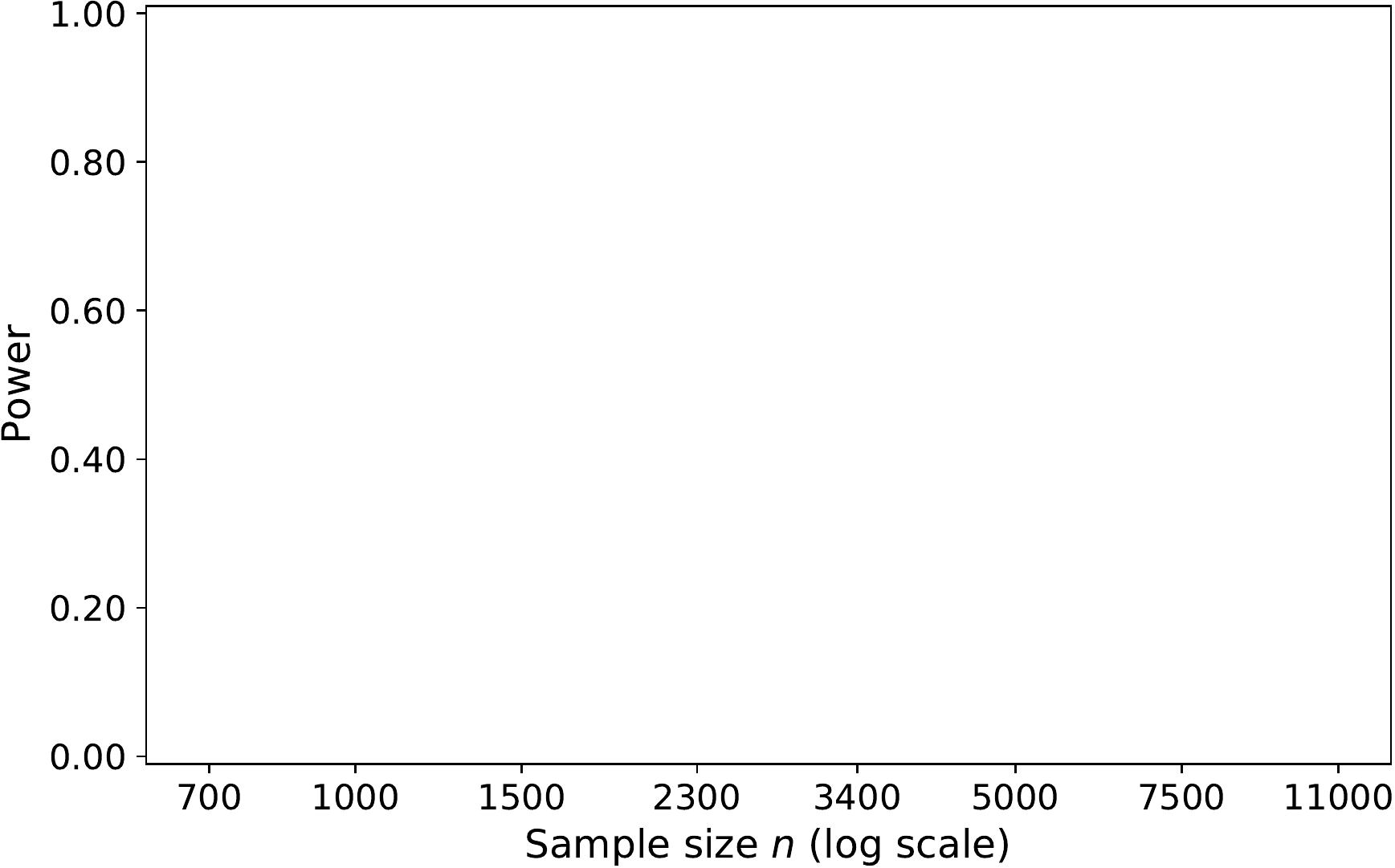}
    \end{subfigure}\hspace{\imgspace\linewidth}%
    \begin{subfigure}{\subfigfracin\linewidth}
        \includegraphics[width=\imgfrac\linewidth]{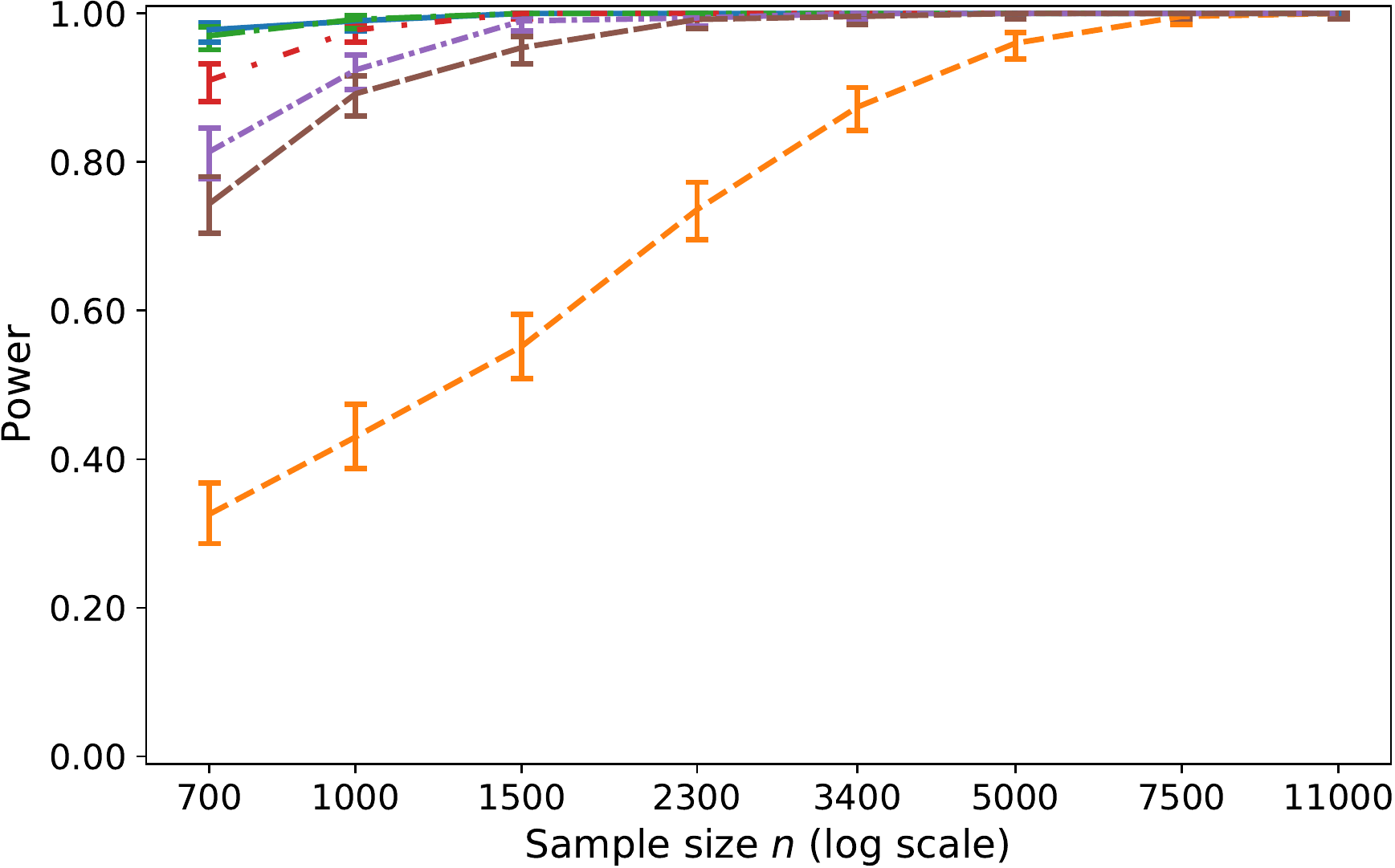}
    \end{subfigure}

    \caption{Size (top) and power (bottom) of level-$0.05$ tests for improved test error (see \cref{sec:sim:test}). \tbf{Left}: Testing $H_1$: neural network classifier improves upon random forest. \tbf{Right}: Testing $H_1$: random forest classifier improves upon neural network.}
    \label{fig:test-error-improvement-clf-NN-RF}
\end{figure}

\begin{figure}[h!]
\centering
    \begin{subfigure}{\subfigfracin\linewidth}
        \includegraphics[width=\imgfrac\linewidth]{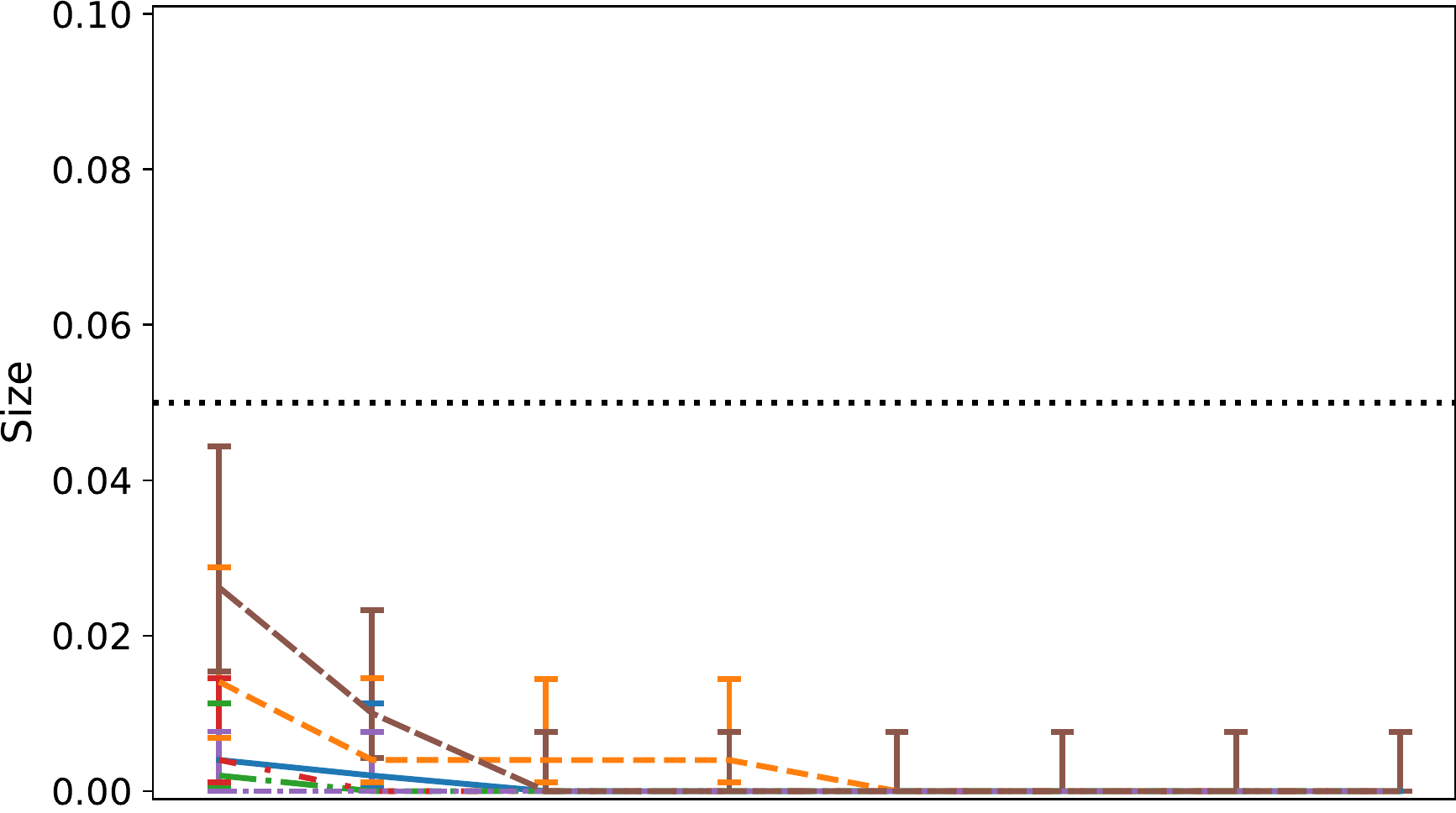}
    \end{subfigure}\hspace{\imgspace\linewidth}%
     \begin{subfigure}{\subfigfracin\linewidth}
             \includegraphics[width=\imgfrac\linewidth]{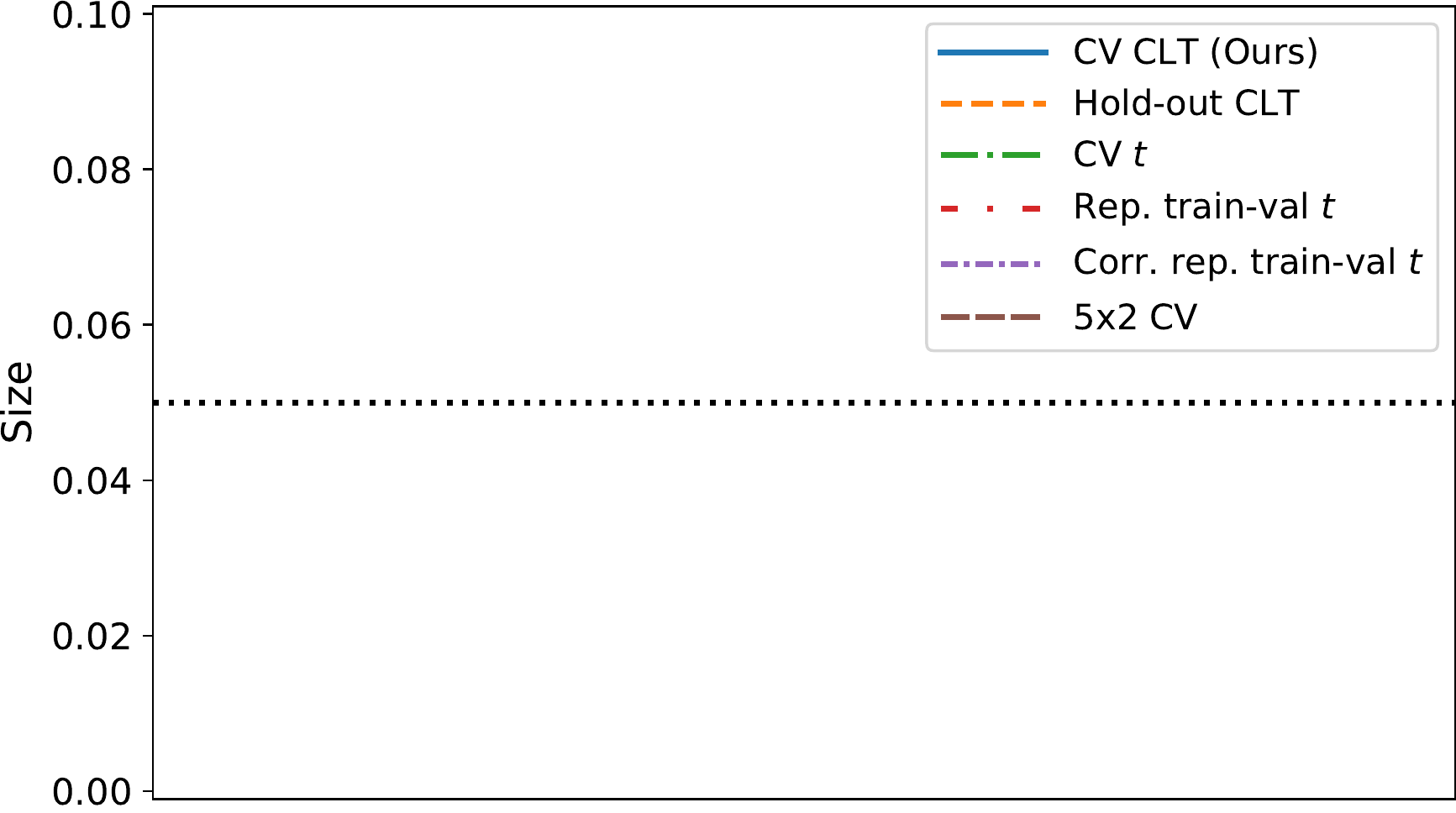}
    \end{subfigure}

    \vspace{\vgap\linewidth}
    
    \begin{subfigure}{\subfigfracin\linewidth}
        \includegraphics[width=\imgfrac\linewidth]{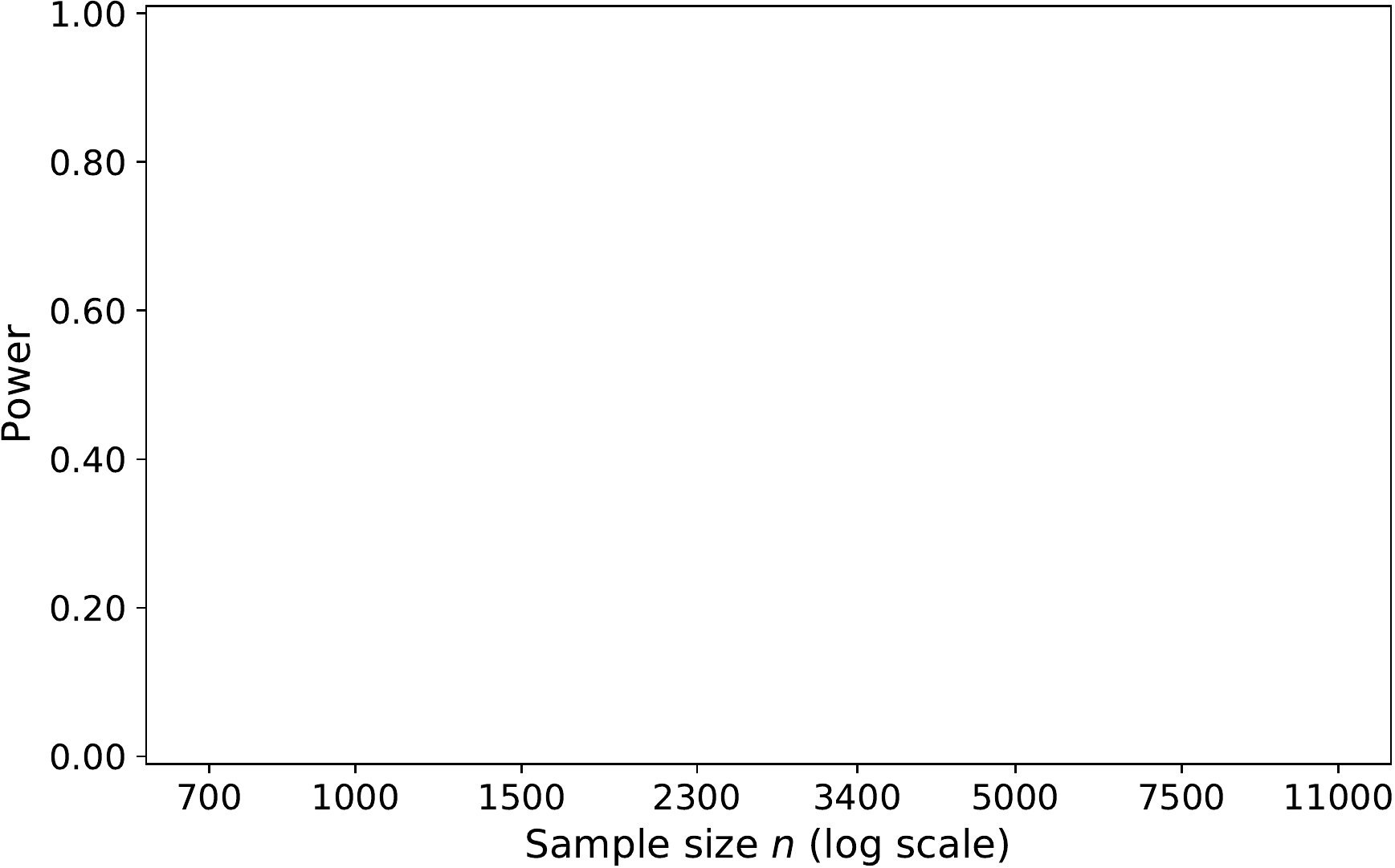}
    \end{subfigure}\hspace{\imgspace\linewidth}%
    \begin{subfigure}{\subfigfracin\linewidth}
        \includegraphics[width=\imgfrac\linewidth]{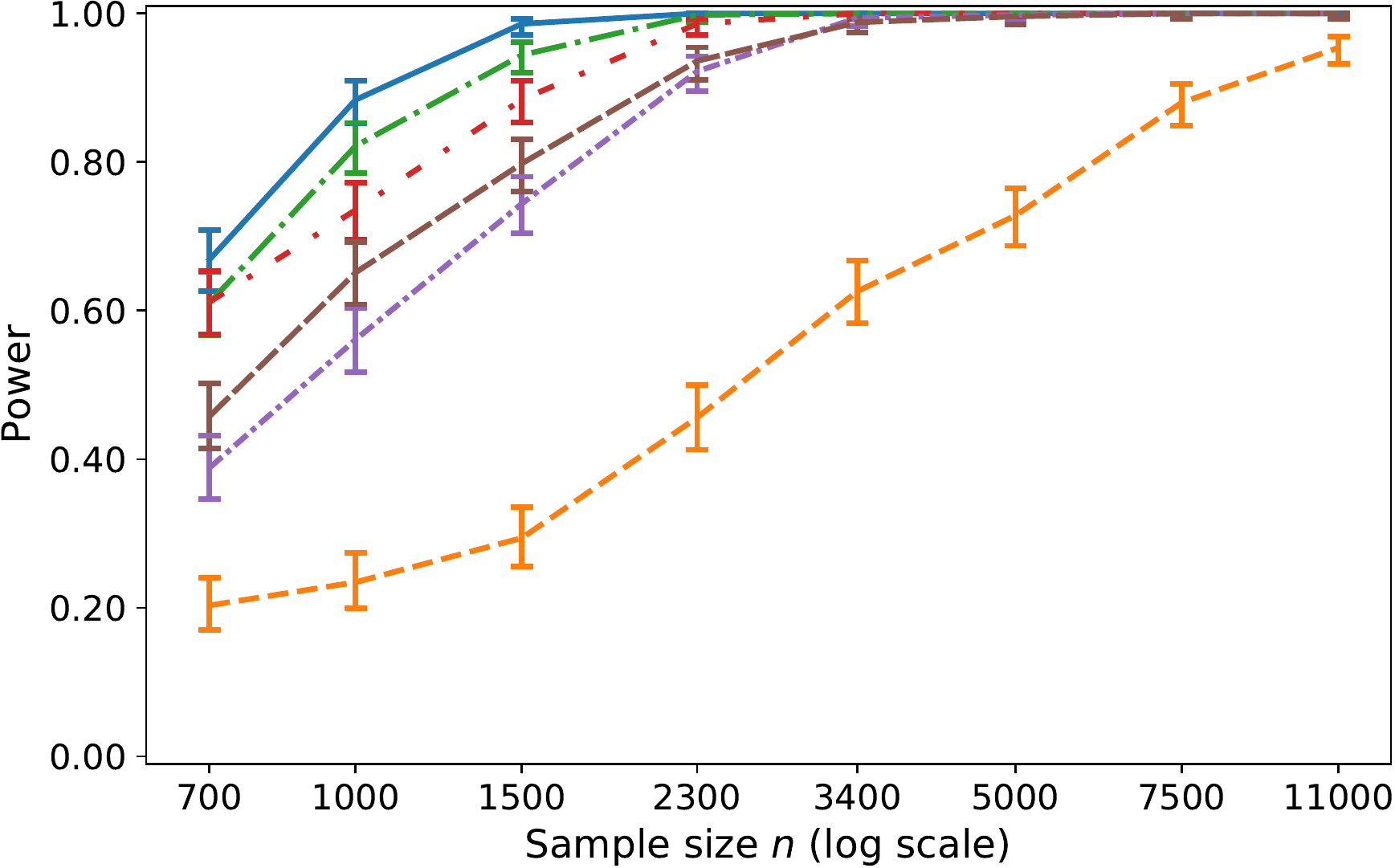}
    \end{subfigure}

    \caption{Size (top) and power (bottom) of level-$0.05$ tests for improved test error (see \cref{sec:sim:test}). \tbf{Left}: Testing $H_1$: ridge regression improves upon random forest. \tbf{Right}: Testing $H_1$: random forest improves upon ridge regression.}
    \label{fig:test-error-improvement-reg-RR-RF}
\end{figure}

\begin{figure}[h!]
\centering
    \begin{subfigure}{\subfigfracin\linewidth}
        \includegraphics[width=\imgfrac\linewidth]{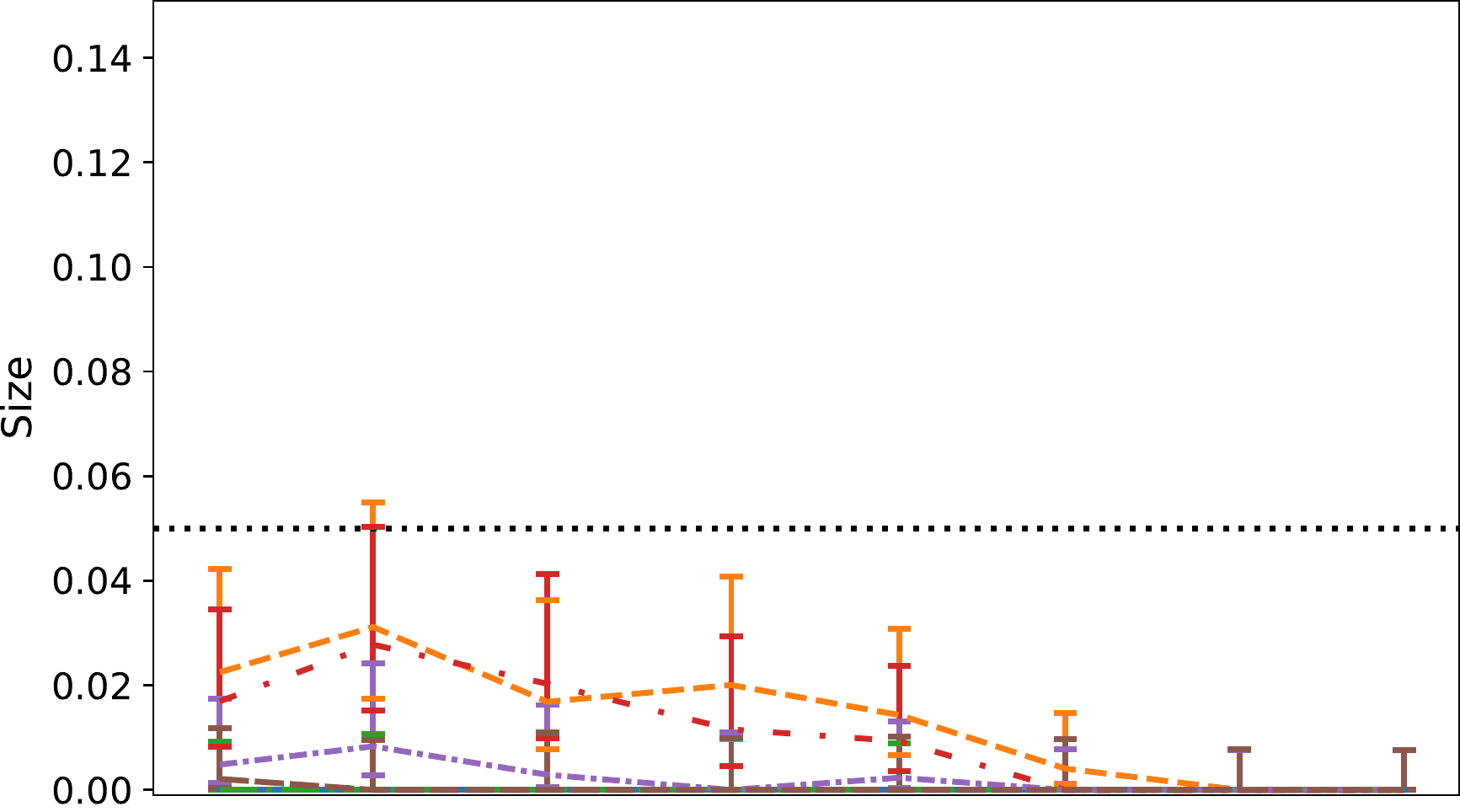}
    \end{subfigure}\hspace{\imgspace\linewidth}%
     \begin{subfigure}{\subfigfracin\linewidth}
             \includegraphics[width=\imgfrac\linewidth]{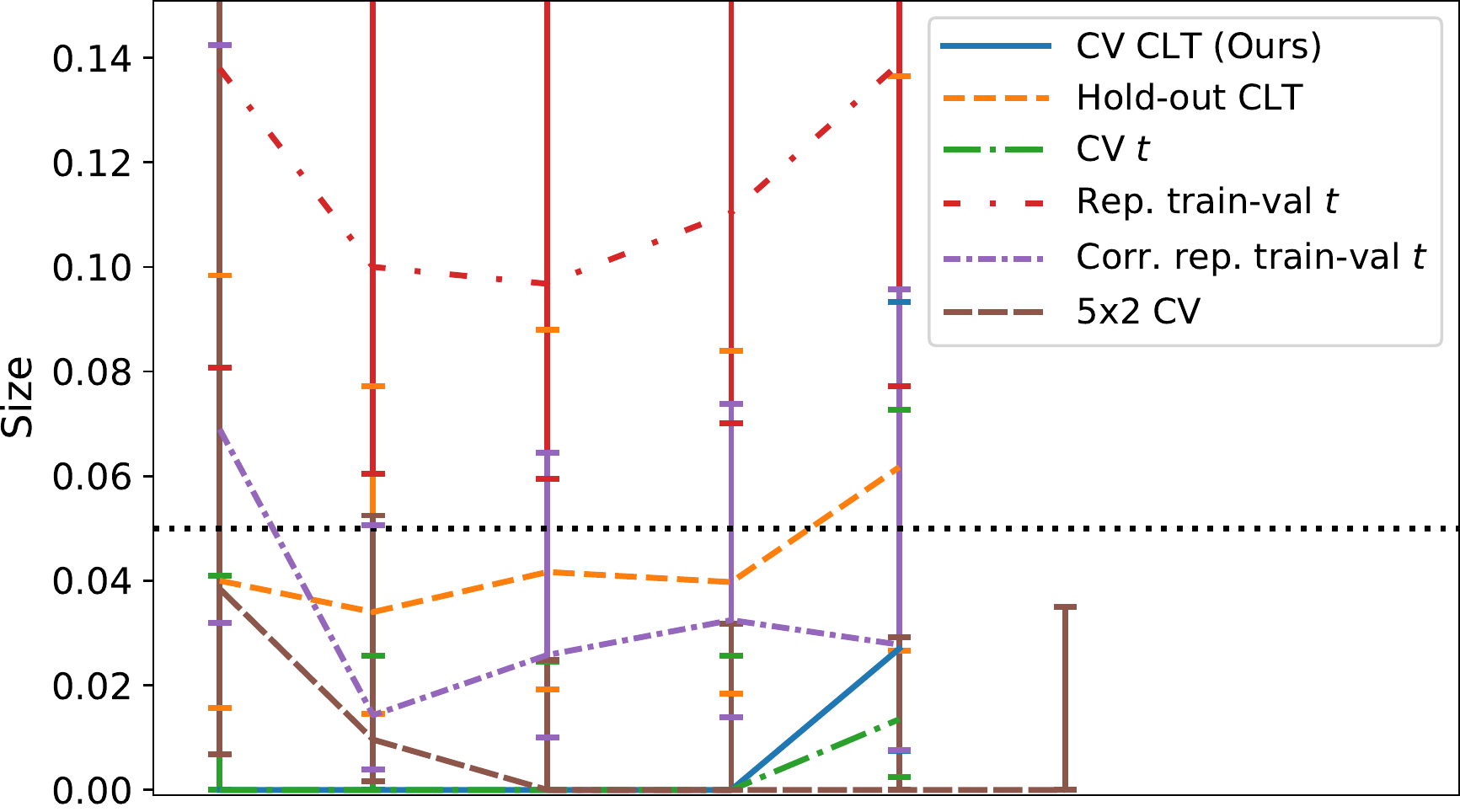}
    \end{subfigure}

    \vspace{\vgap\linewidth}
    
    \begin{subfigure}{\subfigfracin\linewidth}
        \includegraphics[width=\imgfrac\linewidth]{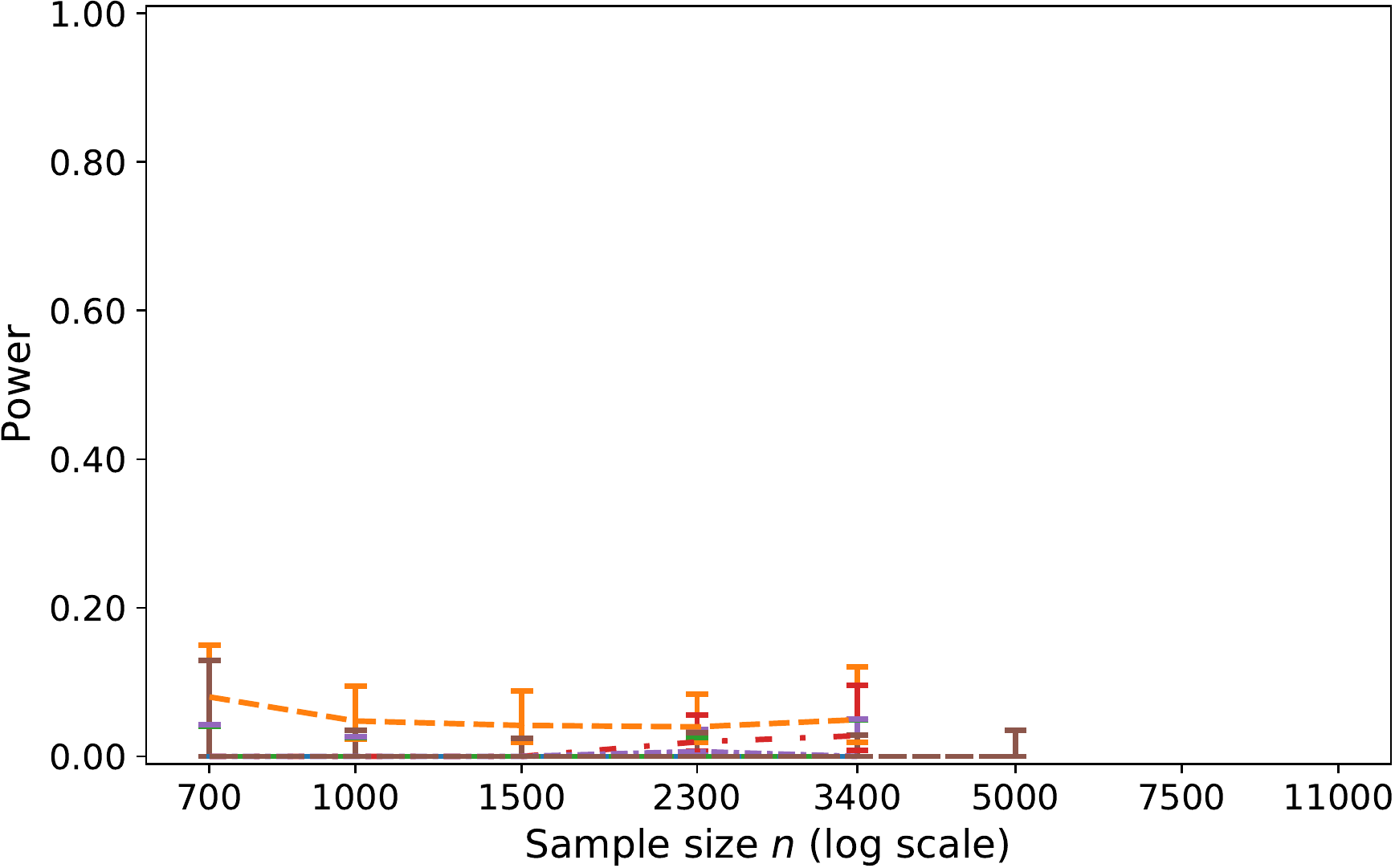}
    \end{subfigure}\hspace{\imgspace\linewidth}%
    \begin{subfigure}{\subfigfracin\linewidth}
        \includegraphics[width=\imgfrac\linewidth]{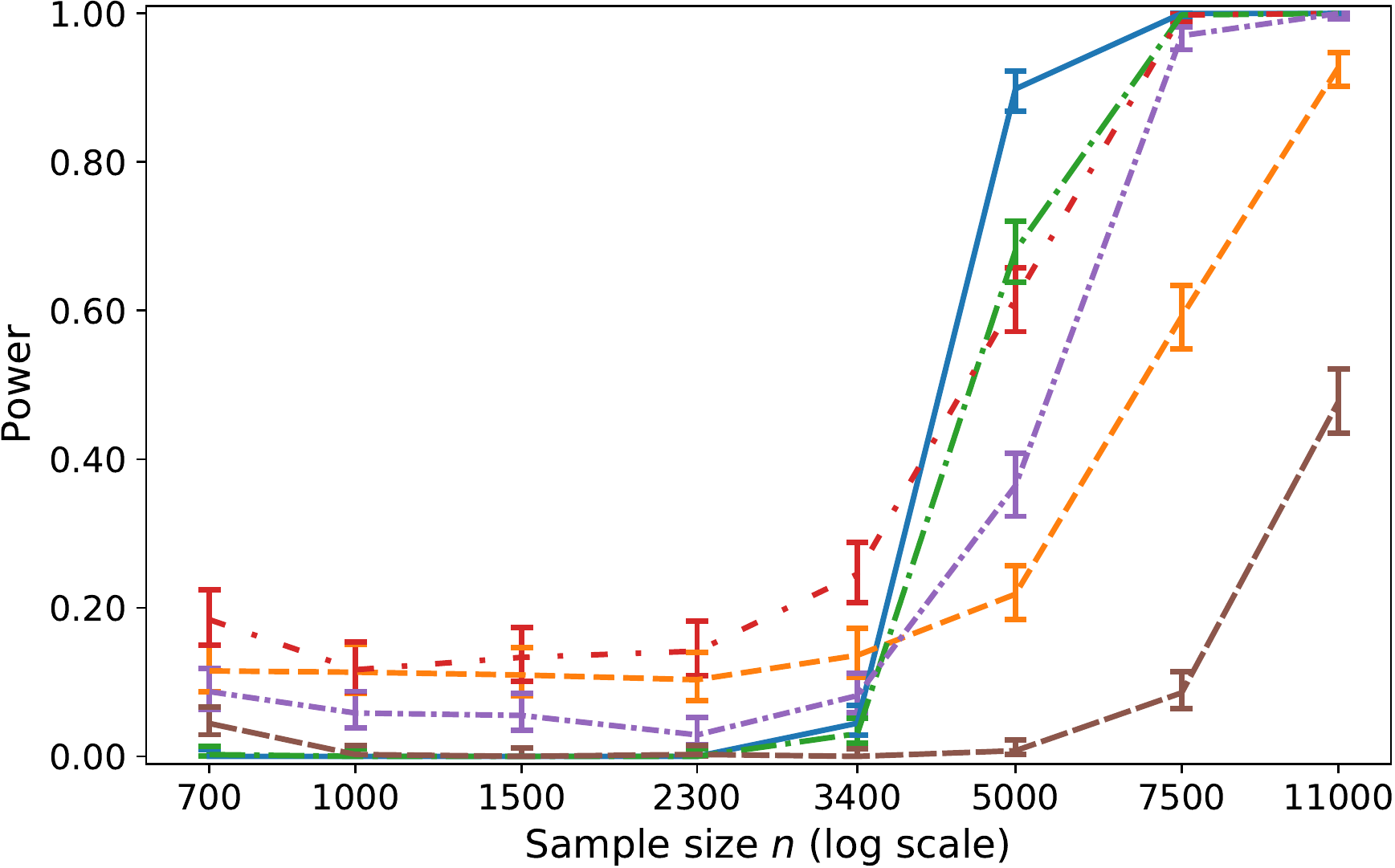}
    \end{subfigure}

    \caption{Size (top) and power (bottom) of level-$0.05$ tests for improved test error (see \cref{sec:sim:test}). \tbf{Left}: Testing $H_1$: neural network improves upon ridge regression. \tbf{Right}: Testing $H_1$: ridge regression improves upon neural network.}
    \label{fig:test-error-improvement-reg-RR-NN}
\end{figure}

\begin{figure}[h!]
\centering
    \begin{subfigure}{\subfigfracin\linewidth}
        \includegraphics[width=\imgfrac\linewidth]{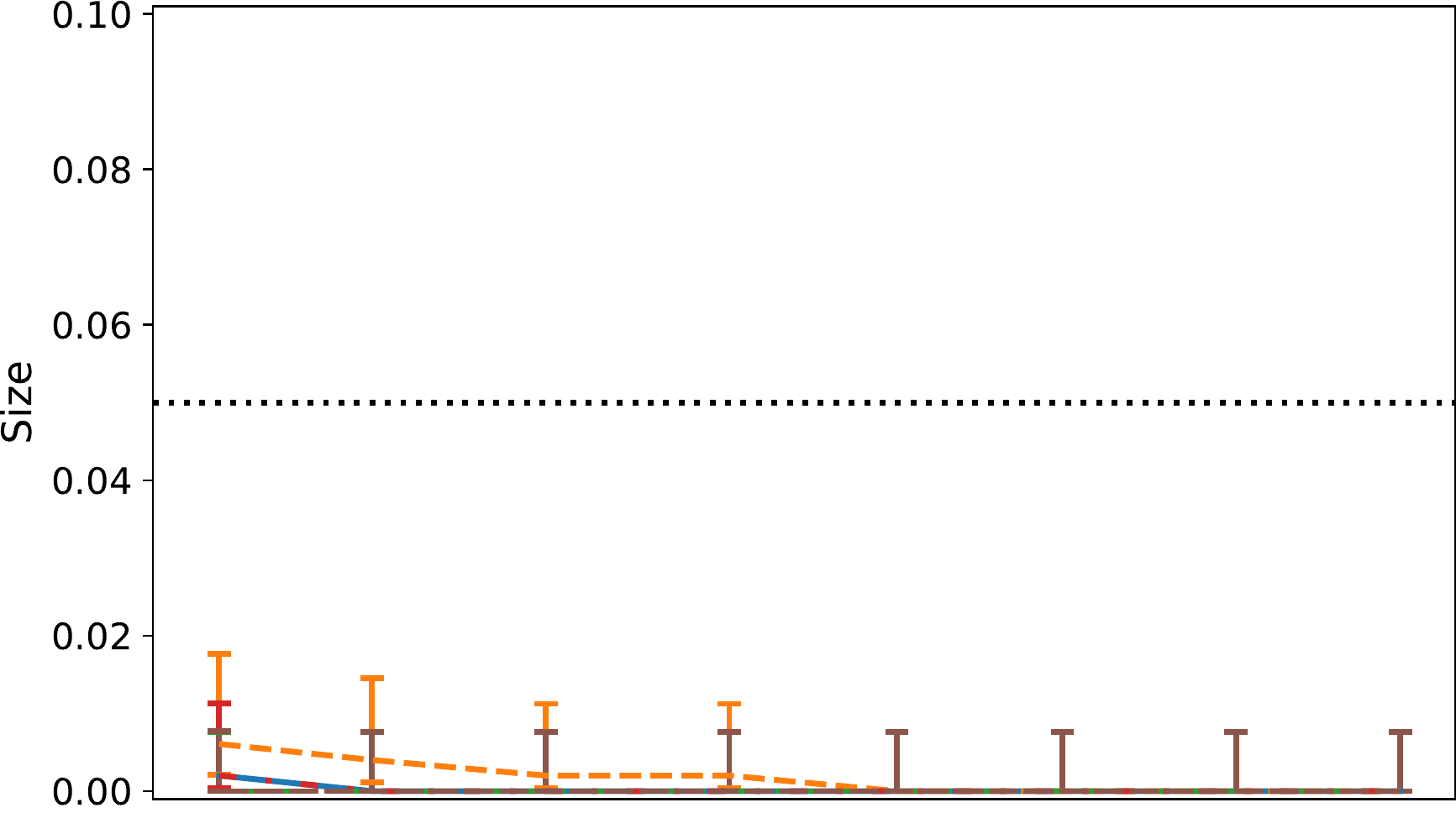}
    \end{subfigure}\hspace{\imgspace\linewidth}%
     \begin{subfigure}{\subfigfracin\linewidth}
             \includegraphics[width=\imgfrac\linewidth]{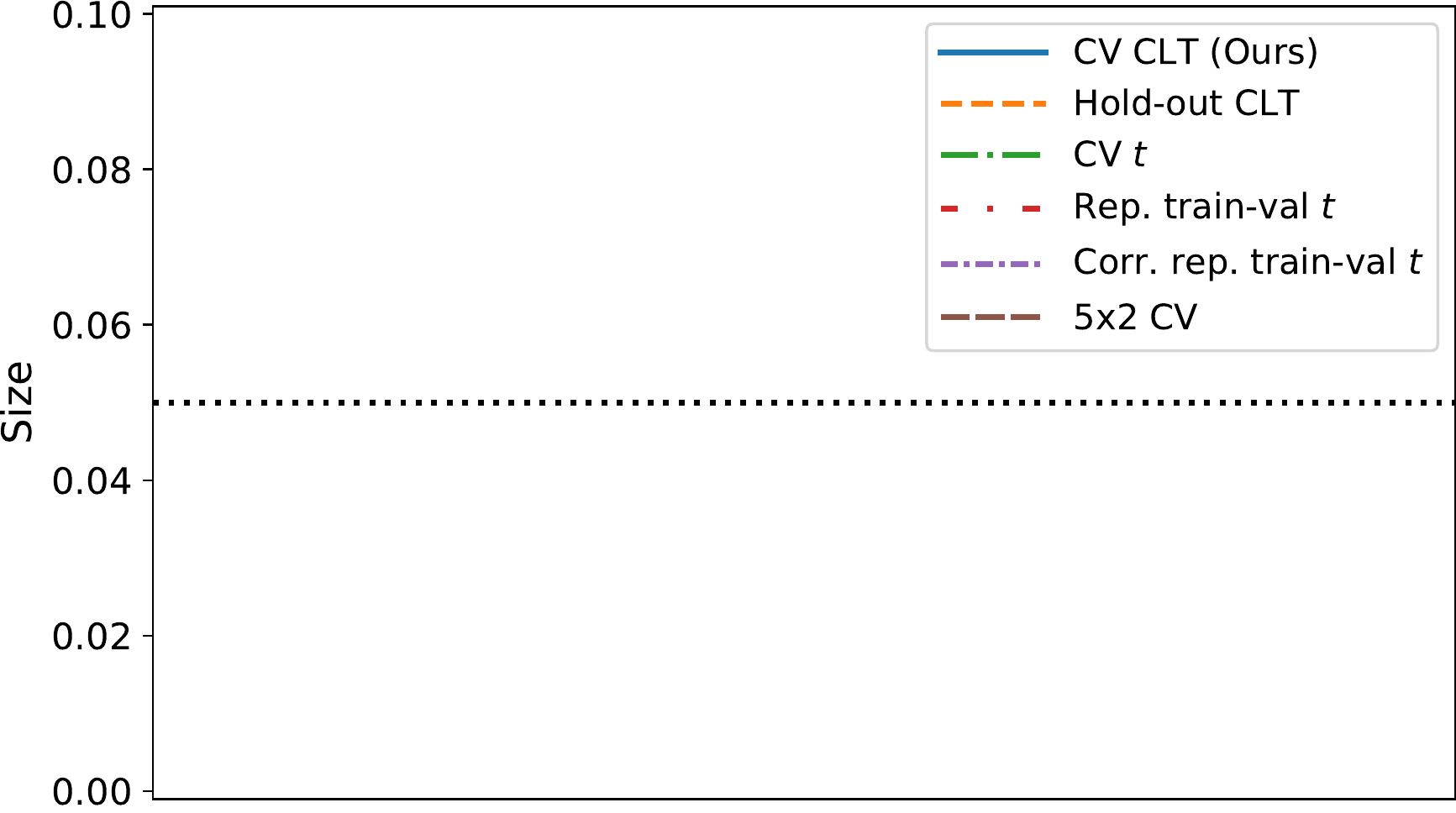}
    \end{subfigure}

    \vspace{\vgap\linewidth}
    
    \begin{subfigure}{\subfigfracin\linewidth}
        \includegraphics[width=\imgfrac\linewidth]{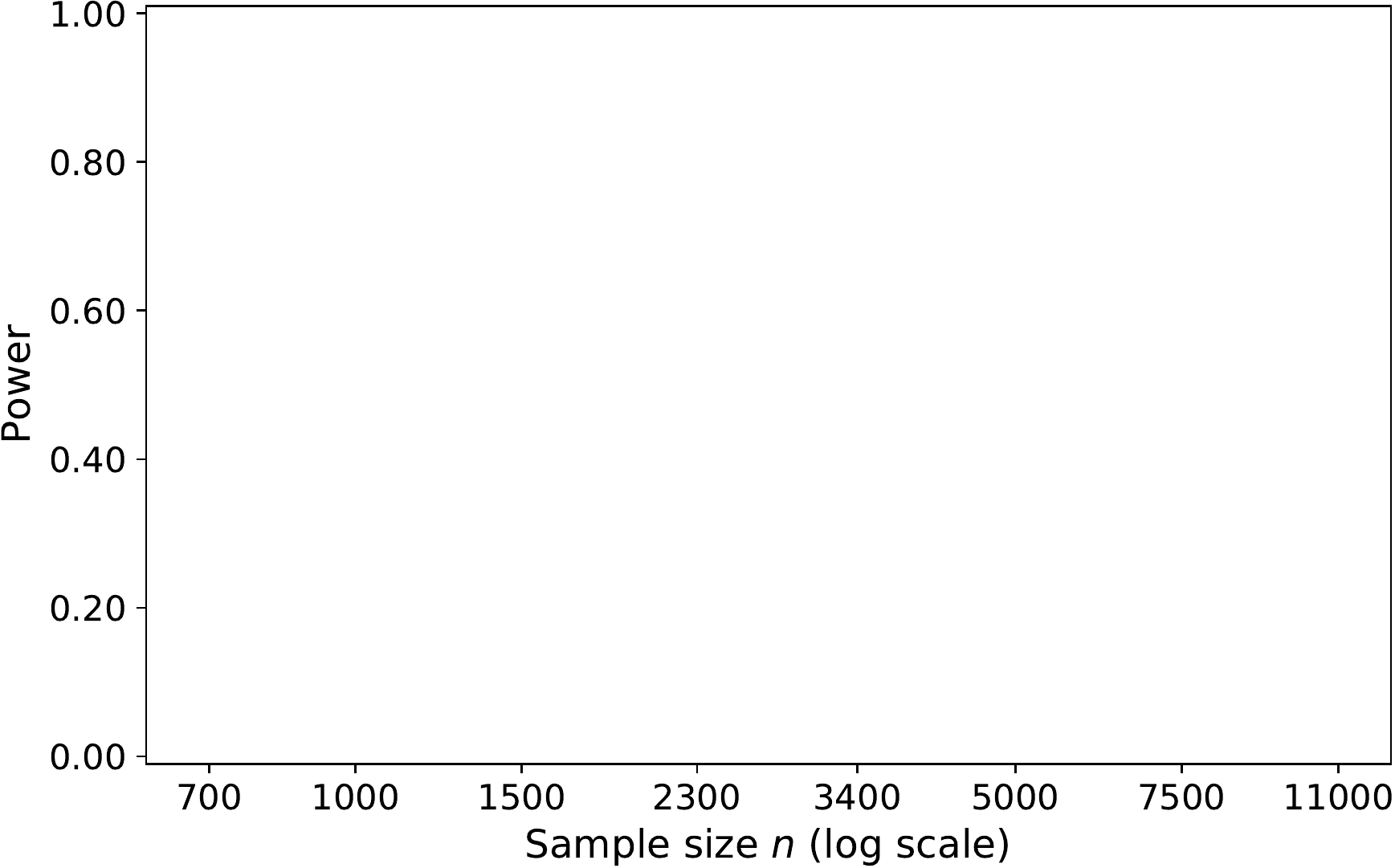}
    \end{subfigure}\hspace{\imgspace\linewidth}%
    \begin{subfigure}{\subfigfracin\linewidth}
        \includegraphics[width=\imgfrac\linewidth]{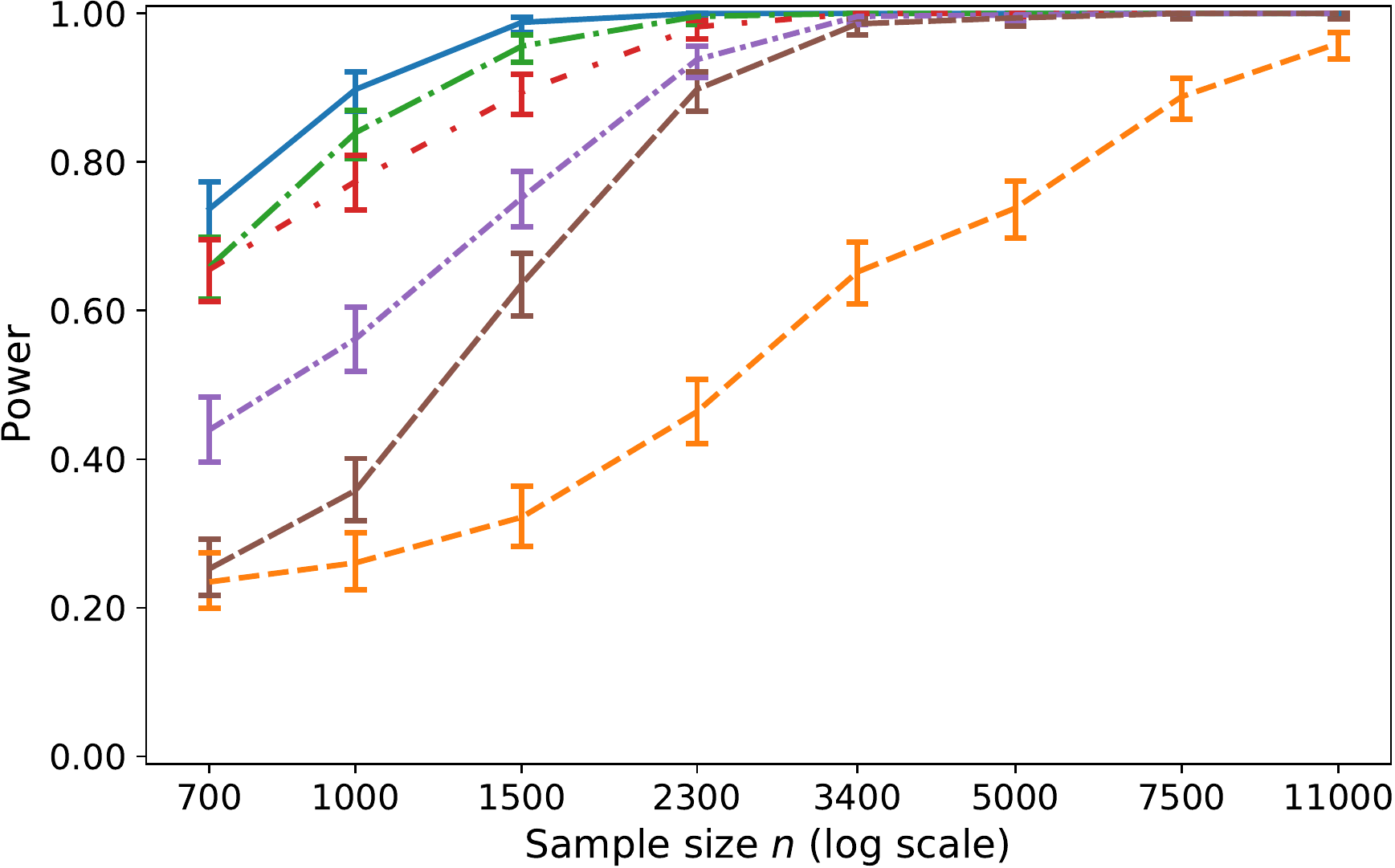}
    \end{subfigure}

    \caption{Size (top) and power (bottom) of level-$0.05$ tests for improved test error (see \cref{sec:sim:test}). \tbf{Left}: Testing $H_1$: neural network regression improves upon random forest. \tbf{Right}: Testing $H_1$: random forest regression improves upon neural network.}
    \label{fig:test-error-improvement-reg-RF-NN}
\end{figure}

\begin{figure}[h!]
\centering
    \begin{subfigure}{\subfigfracin\linewidth}
        \includegraphics[width=\imgfrac\linewidth]{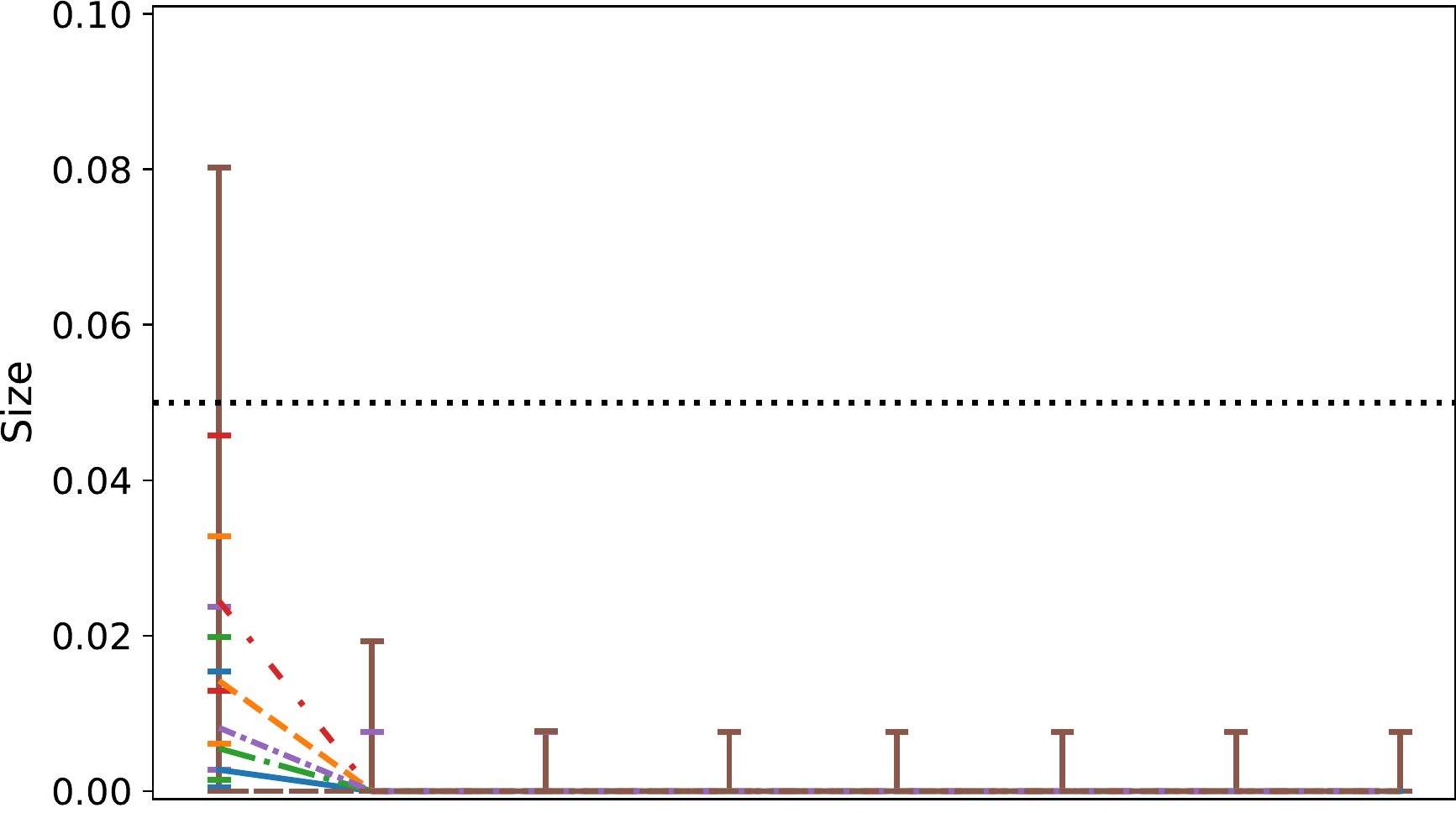}
    \end{subfigure}\hspace{\imgspace\linewidth}%
     \begin{subfigure}{\subfigfracin\linewidth}
             \includegraphics[width=\imgfrac\linewidth]{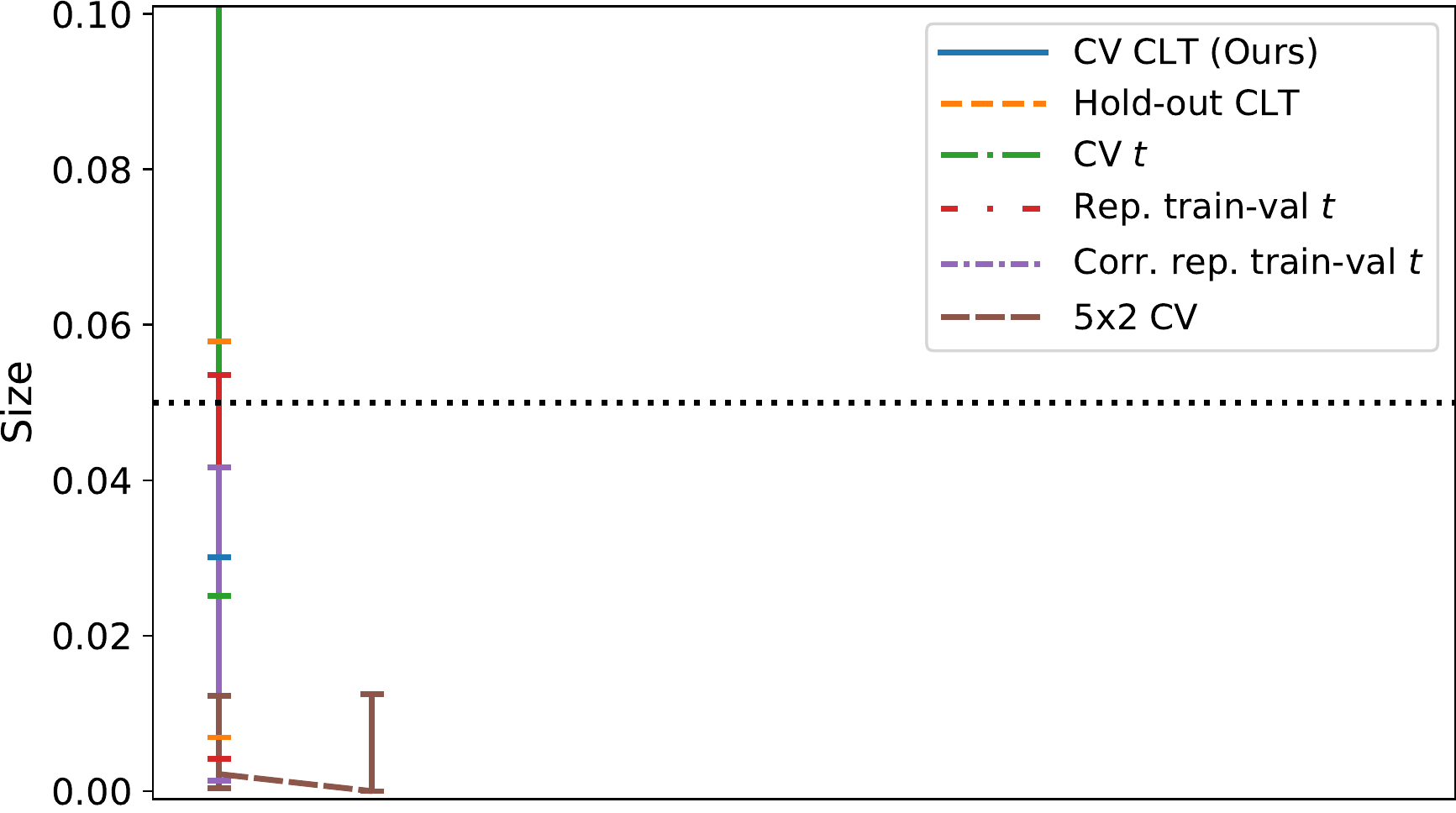}
    \end{subfigure}

    \vspace{\vgap\linewidth}
    
    \begin{subfigure}{\subfigfracin\linewidth}
        \includegraphics[width=\imgfrac\linewidth]{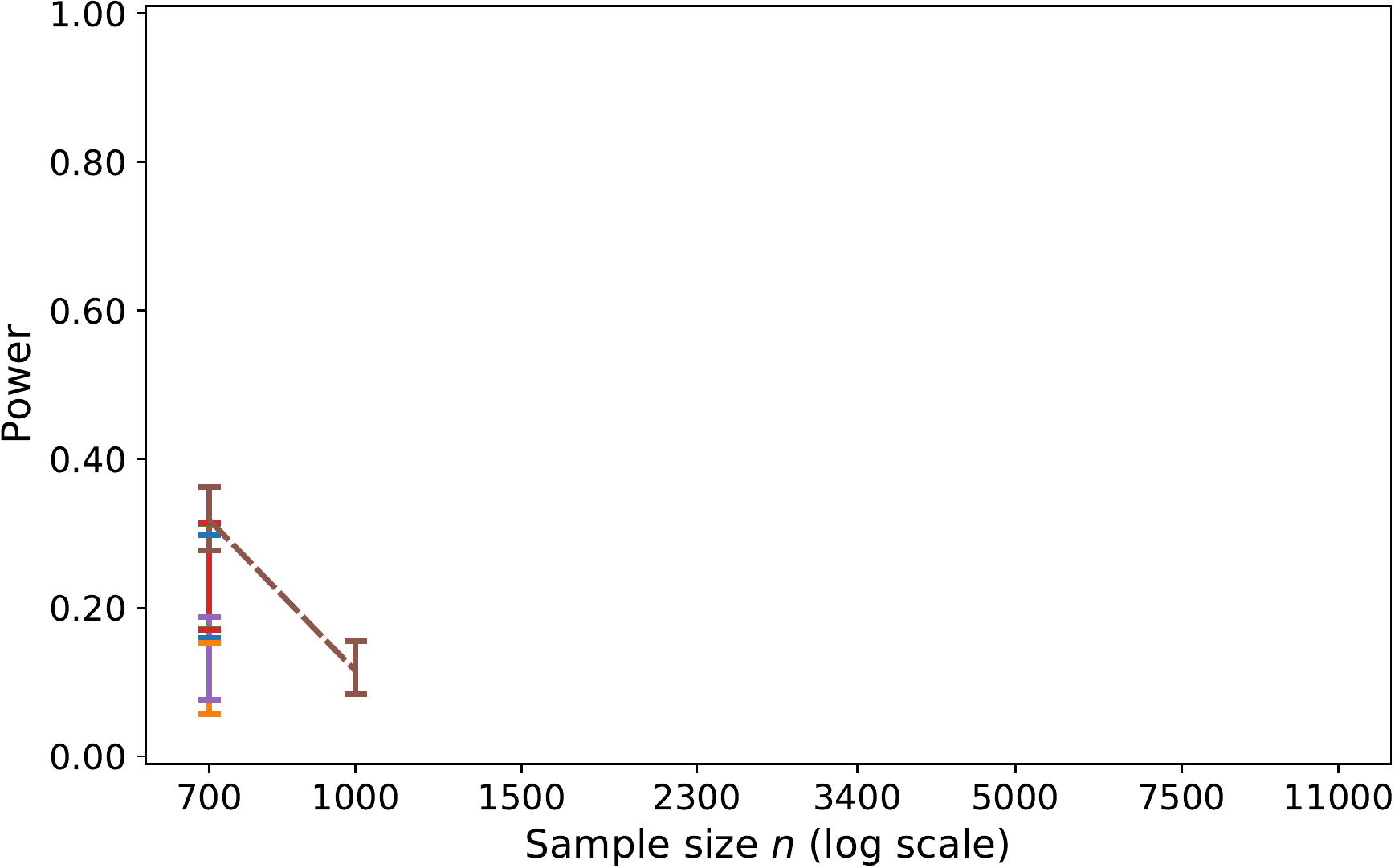}
    \end{subfigure}\hspace{\imgspace\linewidth}%
    \begin{subfigure}{\subfigfracin\linewidth}
        \includegraphics[width=\imgfrac\linewidth]{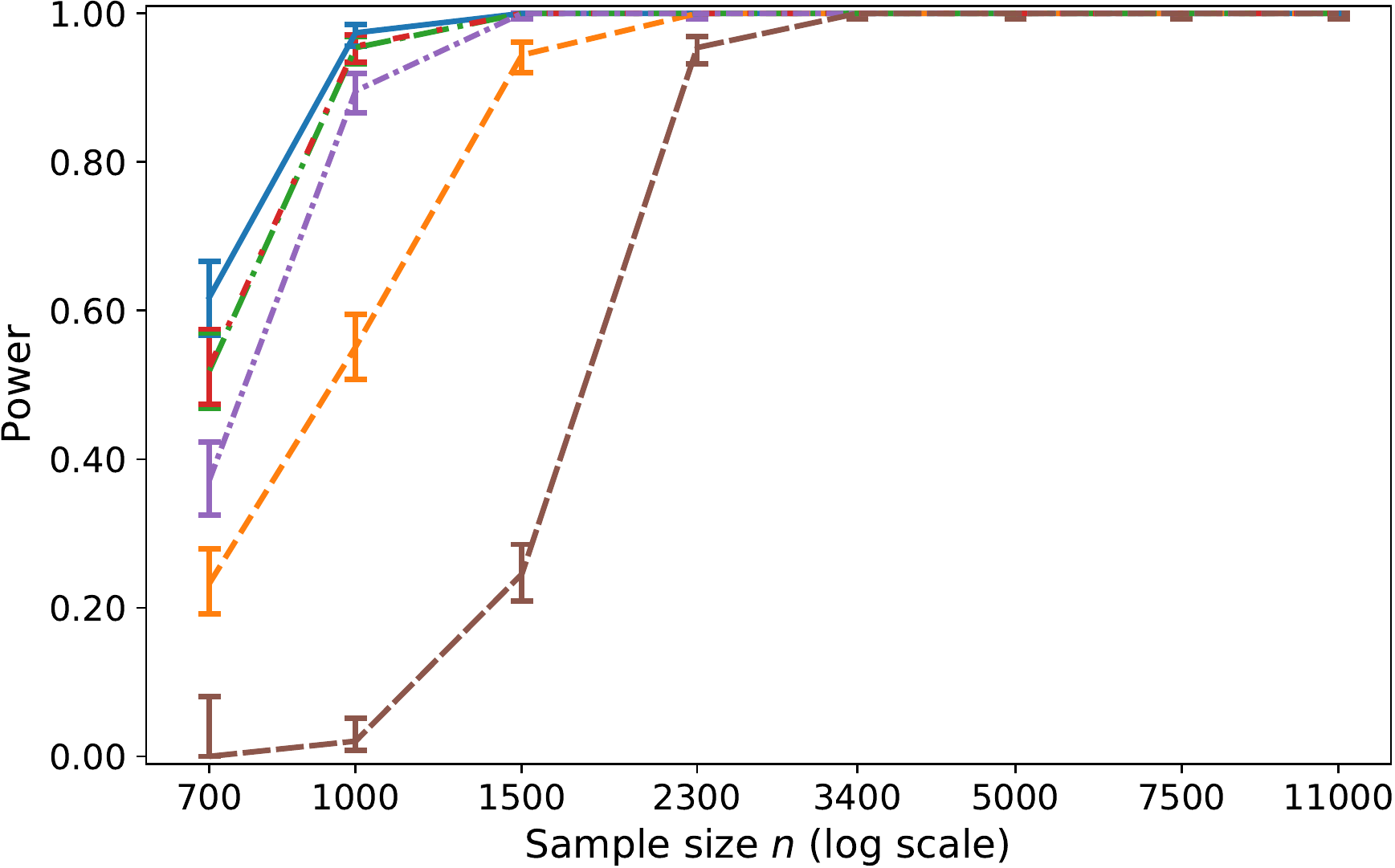}
    \end{subfigure}

    \caption{Size (top) and power (bottom) of level-$0.05$ tests for improved test error \tbf{with synthetic logistic regression labels} (see \cref{sec:synthetic-test}). \tbf{Left}: Testing $H_1$: random forest improves upon $\ell^2$-regularized logistic regression classifier. \tbf{Right}: Testing $H_1$: $\ell^2$-regularized logistic regression classifier improves upon random forest.}
    \label{fig:test-error-improvement-clf-RF-LR-simul}
\end{figure}

\subsection{Results for \cref{sec:import-stab}: Importance of stability}\label{sec:appendix-importance-stability}
In this section, we provide the figures (\cref{fig:bad-plots,fig:unstable-single-algos-plots,fig:comp-var-all}) and experimental details supporting the importance of stability experiment of \cref{sec:import-stab}.
Compared to the chosen hyperparameters described in \cref{sec:num-exp-appendix}, for this example, we used the default value of \texttt{max\_depth} for \texttt{XGBRFRegressor}, that is 6, and the default value of \texttt{alpha} for \texttt{MLPRegressor}, that is 1e-4. 
For \cref{fig:comp-var-to-one,fig:comp-var-to-one-single}, we obtain an estimate of $\sigma_n^2 = \Var(\bar h_n(Z_0))$ by computing a Monte Carlo approximation of $\bar h_n(Z_0) = \E[h_n(Z_0,Z_{1:\ntrain}) \mid Z_0]$ for each of 10,000 $Z_0$ values and then reporting the empirical variance of these 10,000 approximated values. For each value of $Z_0$ we employ the Monte Carlo approximation of 
\begin{align}
    \bar h_n(Z_0)
    \approx
    \frac{1}{500} \sum_{\ell = 1}^{500} \frac{1}{k}\sum_{j=1}^k h_n(Z_0, Z^{(\ell)}_{B_j})
\end{align}
where $(Z_{1:n}^{(\ell)})_{\ell=1}^{500}$ are the $500$  datasets of size $n$ described in \cref{sec:num-exp-appendix}.

\begin{figure}[h!]
\centering
    \begin{subfigure}{\subfigfracin\linewidth}
        \includegraphics[width=\imgfrac\linewidth]{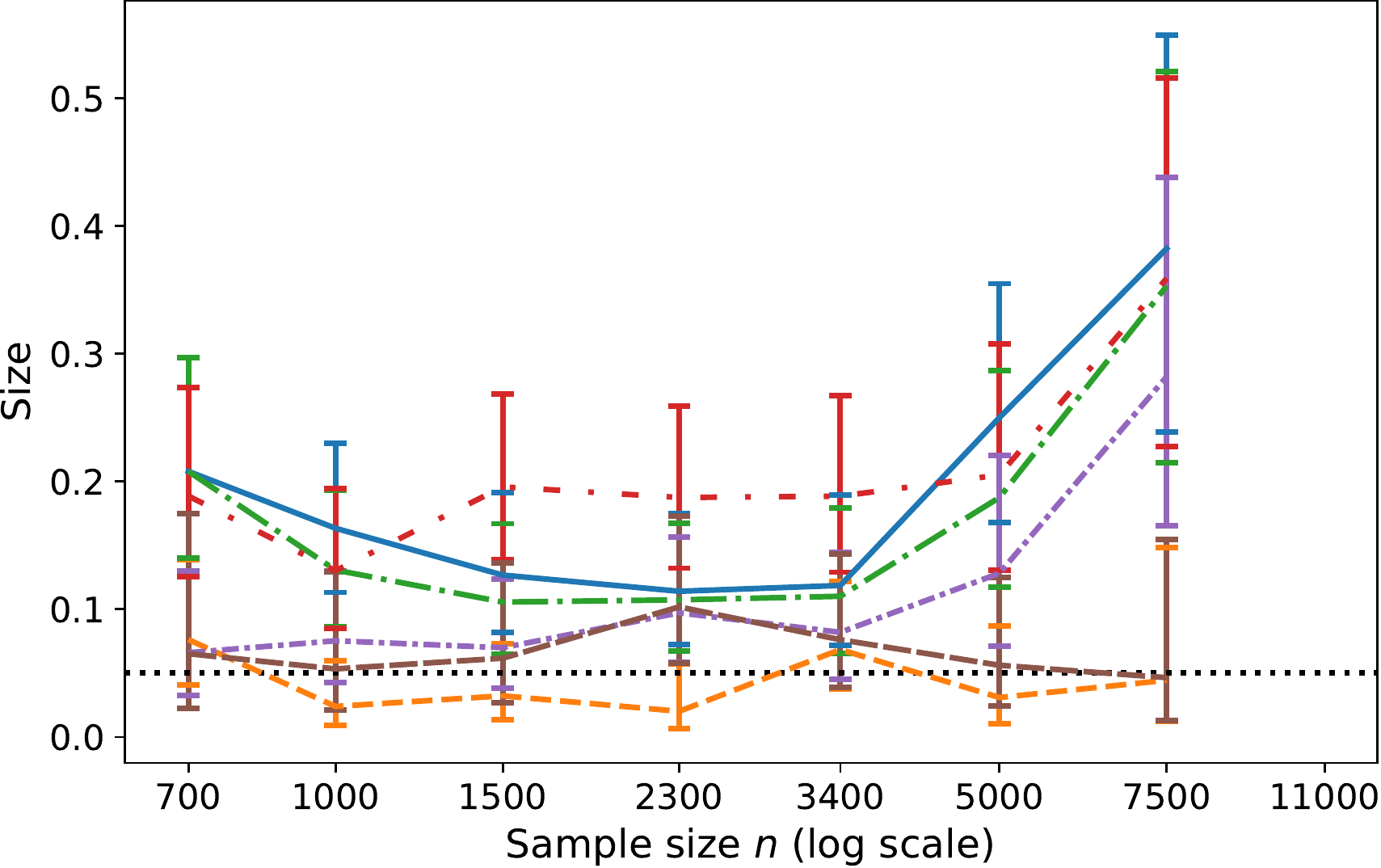}
    \end{subfigure}\hspace{\imgspace\linewidth}%
        \begin{subfigure}{\subfigfracin\linewidth}
        \includegraphics[width=\imgfrac\linewidth]{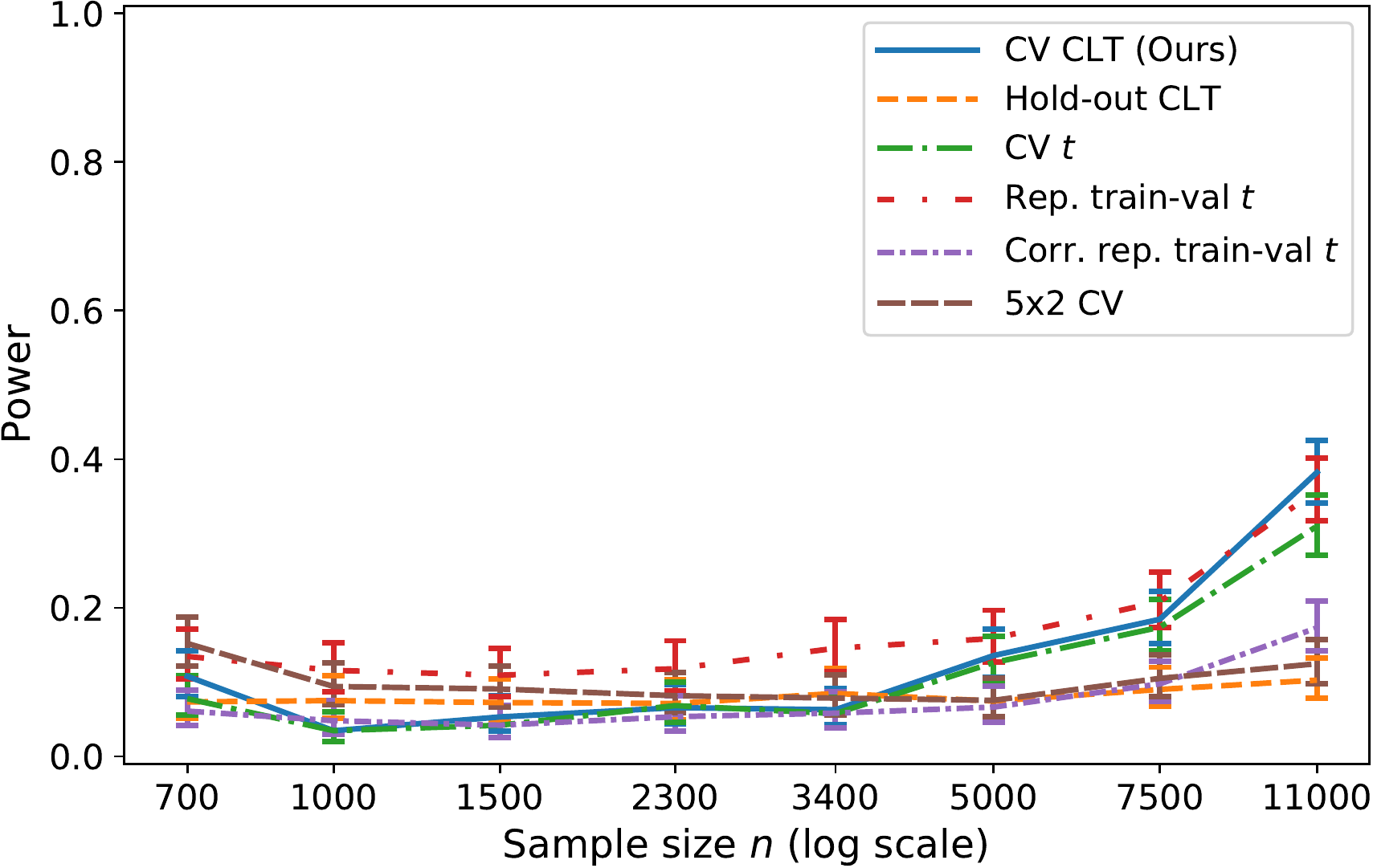}
    \end{subfigure}

    \caption{\tbf{Impact of instability} on size (left) and power (right) of level-$0.05$ tests for improved test error (see \cref{sec:import-stab}). Testing $H_1$: less stable neural network regression improves upon less stable random forest.}
    \label{fig:bad-plots}
\end{figure}

\begin{figure}[tb!]
\centering
    \begin{subfigure}{\subfigfracin\linewidth}
        \includegraphics[width=\imgfrac\linewidth]{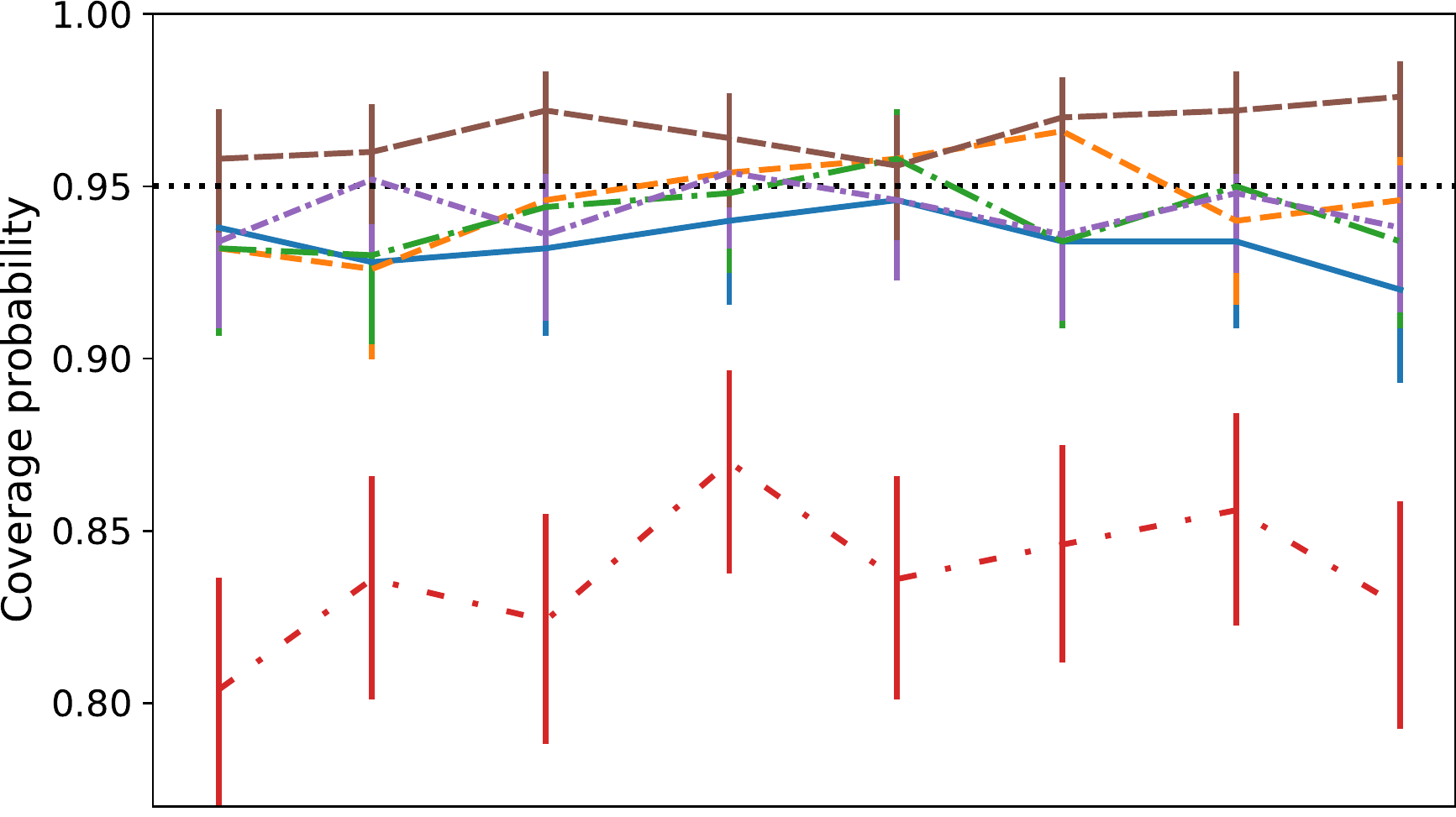}
    \end{subfigure}\hspace{\imgspace\linewidth}%
     \begin{subfigure}{\subfigfracin\linewidth}
             \includegraphics[width=\imgfrac\linewidth]{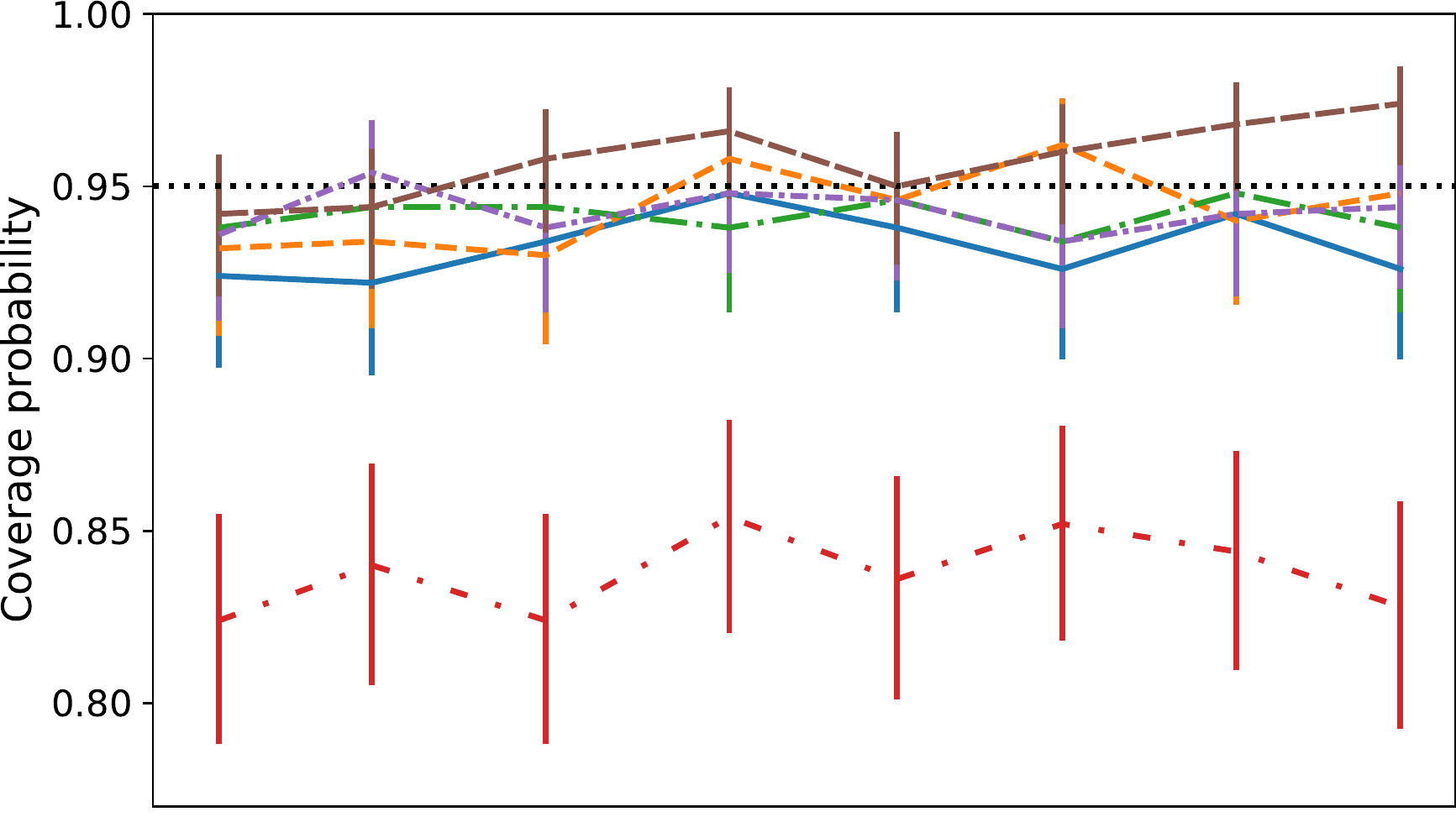}
    \end{subfigure}

    \vspace{\vgap\linewidth}
    \begin{subfigure}{\subfigfracin\linewidth}
        \includegraphics[width=\imgfrac\linewidth]{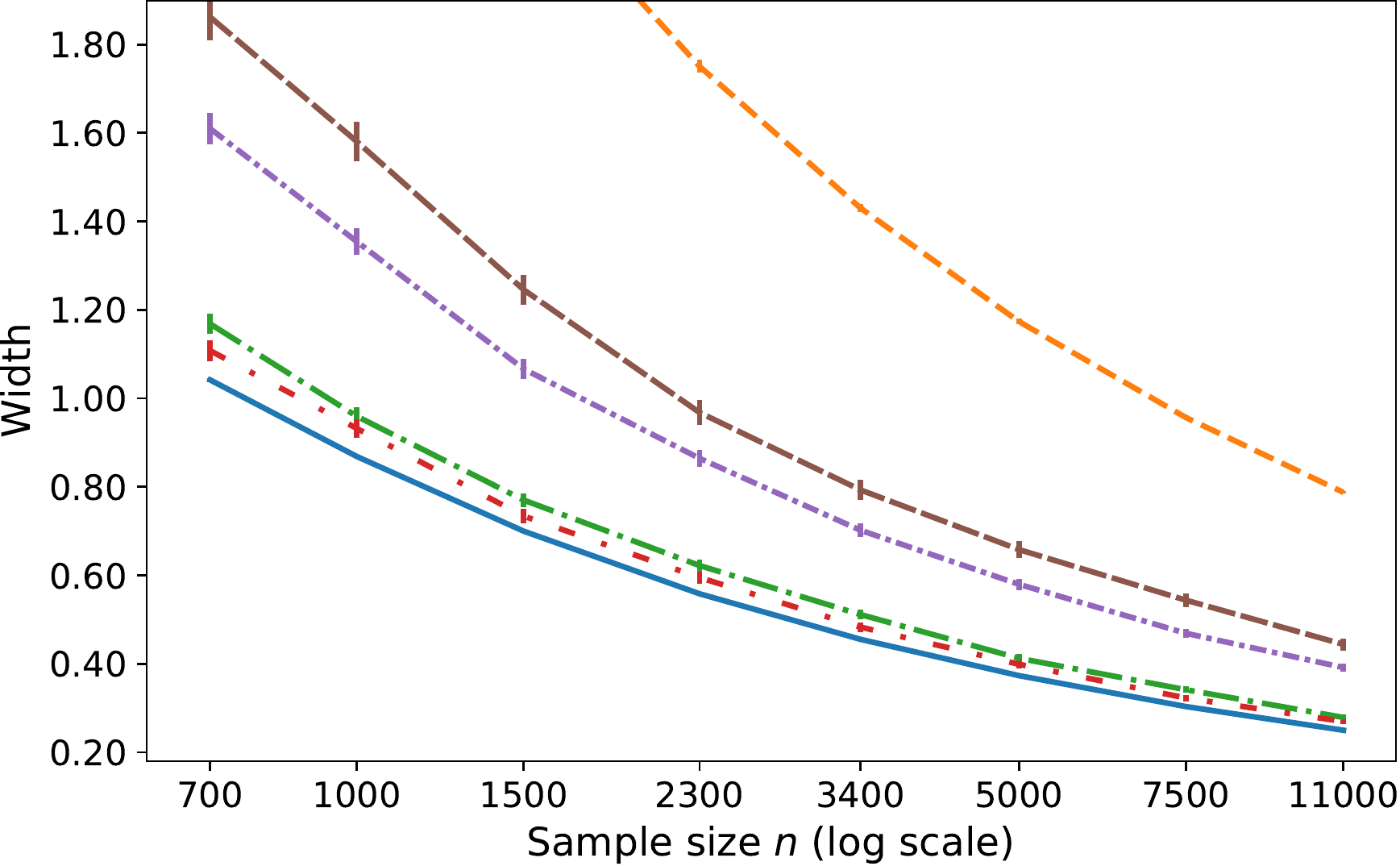}
    \end{subfigure}\hspace{\imgspace\linewidth}%
    \begin{subfigure}{\subfigfracin\linewidth}
        \includegraphics[width=\imgfrac\linewidth]{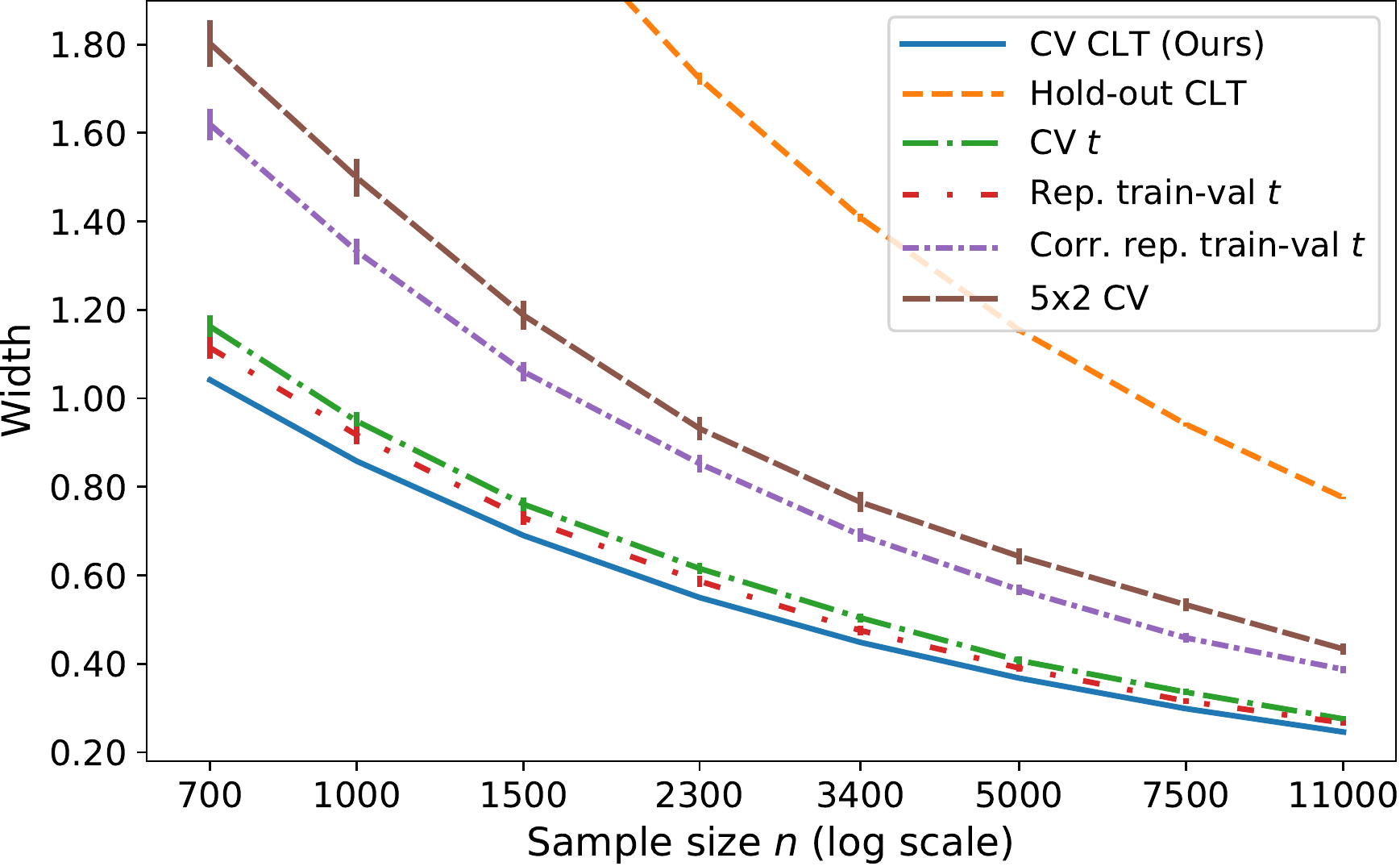}
    \end{subfigure}
    \caption{\tbf{Impact of instability} on test error coverage (top) and width (bottom) of $95\%$ confidence intervals (see \cref{sec:import-stab}). \tbf{Left:} Less stable neural network regression. \tbf{Right:} Less stable random forest regression.}
    \label{fig:unstable-single-algos-plots}
\end{figure}

\begin{figure}[h!]
\centering
\begin{subfigure}{\subfigfracin\linewidth}
\includegraphics[width=\imgfrac\linewidth]{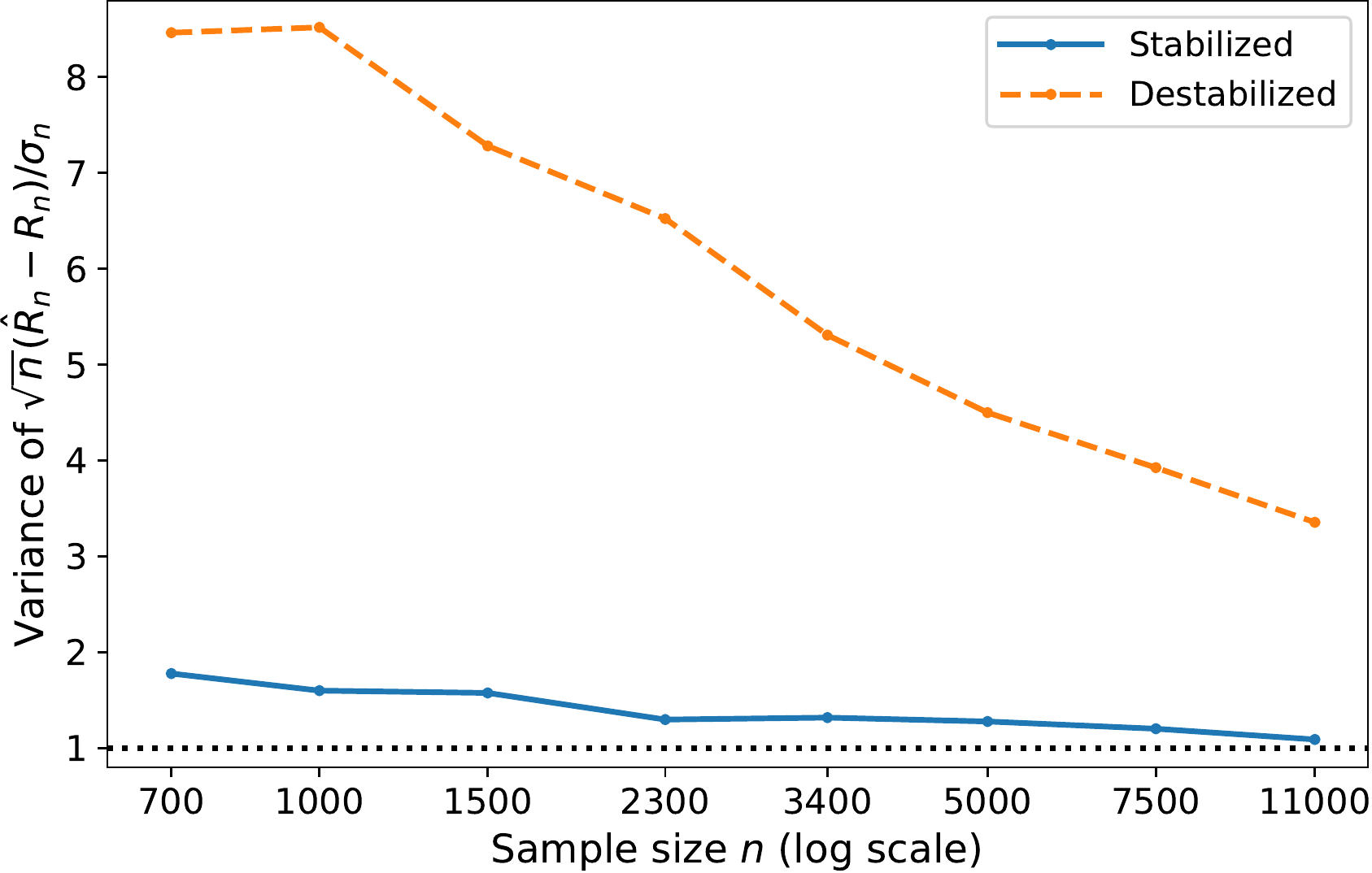}
\caption{{Algorithm comparison} 
}
\label{fig:comp-var-to-one}
\end{subfigure}\hspace{\imgspace\linewidth}%
\begin{subfigure}{\subfigfracin\linewidth}
\includegraphics[width=\imgfrac\linewidth]{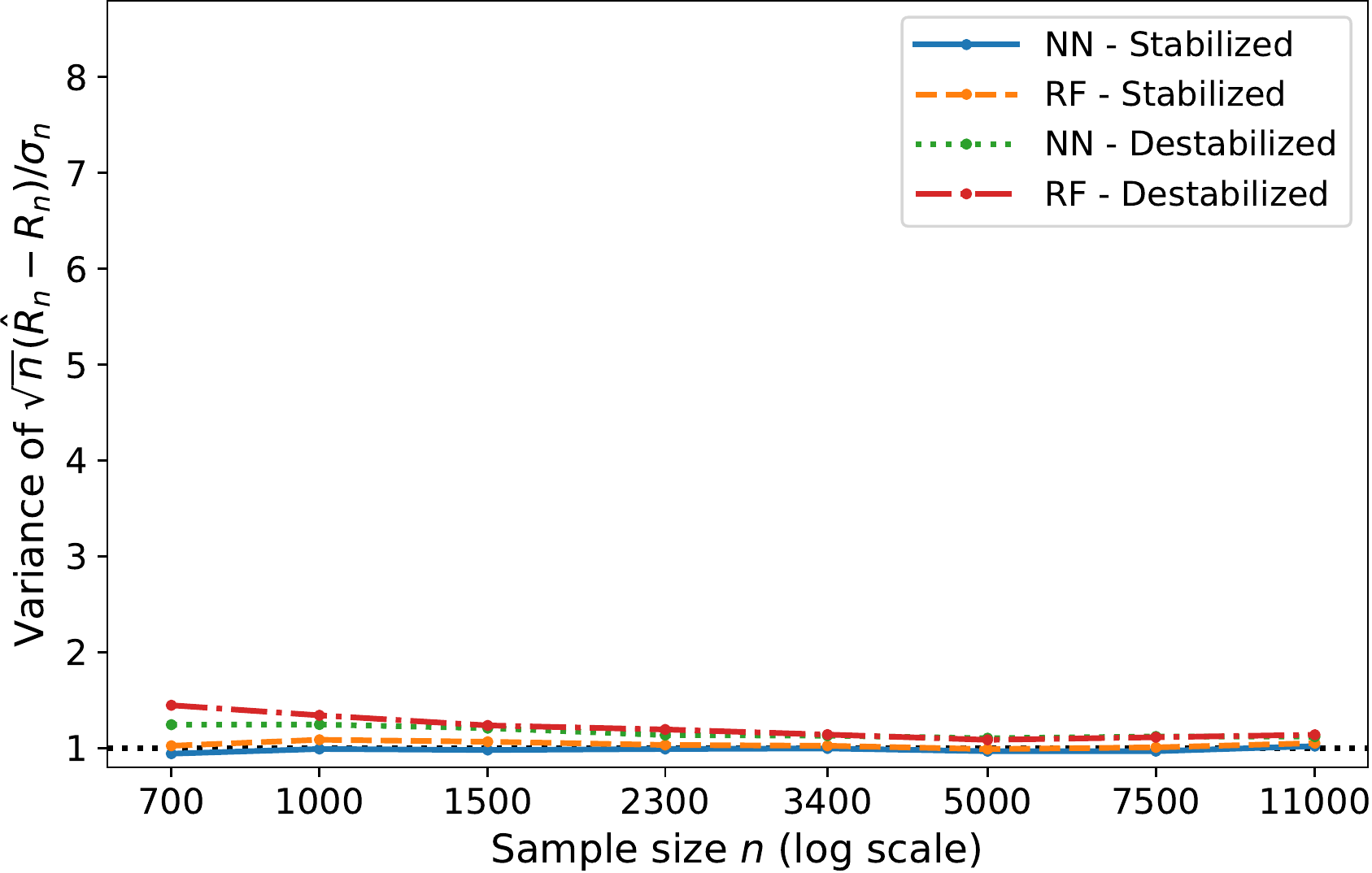}
\caption{
{Single algorithm assessment} 
}
\label{fig:comp-var-to-one-single}
\end{subfigure}
\caption{\tbf{Impact of instability} on variance of $\frac{\sqrt{n}}{\sigma_n}(\Rhat - \Rcondcv)$ (see \cref{sec:import-stab}).
\tbf{Left:} $h_n(Z_0, Z_B) = (Y_0 - \hat f_1(X_0; Z_B))^2 - (Y_0 - \hat f_2(X_0; Z_B))^2$
for neural network and random forest prediction rules, $\hat{f_1}$ and $\hat{f_2}$.
As predicted in \cref{iid-cv-normal,asymp-from-lstability}, the variance is close to $1$ when $h_n$ is stable, but the variance can be much larger when $h_n$ is unstable.
\tbf{Right:} $h_n(Z_0, Z_B) = (Y_0 - \hat f(X_0; Z_B))^2$
for neural network or random forest prediction rule, $\hat{f}$.
The same destabilized algorithms produce relatively stable $h_n$ in the context of single algorithm assessment, as the variance parameter $\sigma_n^2 = \Var(\bar{h}_n(Z_0))$ is larger.
}
\label{fig:comp-var-all}
\end{figure}

\end{document}